\documentclass[twoside,11pt]{book}

\usepackage{jmlr2e}
\usepackage{bm}

\usepackage[english]{babel}

\usepackage{imakeidx}
\makeindex[columns=2, title=Alphabetical Index]

\usepackage{stackengine}

\usepackage{enumitem}
\setenumerate[1]{itemsep=0pt,partopsep=0pt,parsep=\parskip,topsep=5pt}
\setitemize[1]{itemsep=0pt,partopsep=0pt,parsep=\parskip,topsep=5pt}
\setdescription{itemsep=0pt,partopsep=0pt,parsep=\parskip,topsep=5pt}


%
\usepackage{dsfont}

\usepackage[framemethod=tikz]{mdframed}


\pagestyle{empty} 

\usepackage{amsmath}
\usepackage{arydshln}

\numberwithin{equation}{section}

\usepackage{amsmath}
\usepackage{blkarray}

\usepackage[final]{pdfpages}

\usepackage[noframe]{showframe}
\usepackage{framed}
\usepackage{lipsum}
\definecolor{color0}  {RGB}{174,225,254} 
\definecolor{color1}  {RGB}{220,227,248} 
\definecolor{color2}  {RGB}{28,130,185} 
\definecolor{color3}  {RGB}{255,253,250} 

\definecolor{colormiddleright}  {RGB}{245,253,250} 
\definecolor{colorbottomleft}  {RGB}{255,243,250} 
\definecolor{coloruppermiddle}  {RGB}{255,253,230} 
\definecolor{colormiddleleft}  {RGB}{255,253,250}

\definecolor{colorcr}  {RGB}{249,253,232} 
\definecolor{colorreduction}  {RGB}{255,235,254} 
\definecolor{colorqr}  {RGB}{254,221,199} 
\definecolor{colorbiconjugate}  {RGB}{251,149,161} 

\definecolor{colorsvd}  {RGB}{215,247,235} 

\definecolor{colorupperright}  {RGB}{239,246,251} 
\definecolor{colorspectral}  {RGB}{206,226,243} 

\definecolor{colorbottomright}  {RGB}{220,224,236} 
\definecolor{coloreigenvalue}  {RGB}{197,203,224} 

\definecolor{colorupperleft}  {RGB}{235,243,240} 
\definecolor{colorsemidefinite}  {RGB}{217,232,226} 

\definecolor{colormiddle} {RGB}{235, 240,255}
\definecolor{colorlu}  {RGB}{220,227,255} 

\definecolor{colorals}  {RGB}{240,230,255} 
\definecolor{coloralsbkg}  {RGB}{248,243,255} 

\definecolor{canaryyellow}{rgb}{1.0, 0.75, 0.0}
\definecolor{bluepigment}{rgb}{0.0, 0.0, 1.0}

\definecolor{canarypurple}{RGB}{208, 13, 241}

{\endMakeFramed}
\definecolor{shadecolor}{gray}{0.75}

\usepackage{xcolor}

\usepackage{tcolorbox}

\usepackage{graphicx}  
\usepackage[hang]{subfigure}


\usepackage{algorithm}
\usepackage{algpseudocode}
\usepackage{amsmath}
\usepackage{graphics}
\usepackage{epsfig}

\usepackage{blkarray}

\usepackage{listings}

\usepackage{tikz}
\usetikzlibrary{mindmap,trees,backgrounds}
\usetikzlibrary{arrows.meta}

\usetikzlibrary{decorations.text}

\usetikzlibrary{calc,backgrounds}


\usepackage{changepage}                 

\newlength{\offsetpage}
\setlength{\offsetpage}{1.0cm}
	{\end{adjustwidth}}

\usepackage{hhline}

\usetikzlibrary{positioning,shapes,shadows,arrows}
\tikzstyle{condition}=[rectangle, draw=black, rounded corners, fill=colorqr, drop shadow,
text centered, anchor=north, text=black, text width=3cm]
\tikzstyle{abstract}=[rectangle, draw=black, rounded corners, fill=blue!30, drop shadow,
text centered, anchor=north, text=black, text width=3cm]
\tikzstyle{comment}=[rectangle, draw=black, rounded corners, fill=color1, drop shadow,
text centered, anchor=north, text=black, text width=3cm]
\tikzstyle{myarrow}=[->, >=open triangle 90, thick]
\tikzstyle{line}=[-, thick]

\usepackage{sidecap}
\sidecaptionvpos{figure}{t}
\usepackage{verbatimbox}

\usepackage[labelfont=bf]{caption}
\newcommand\myhrulefill[1]{\leavevmode\leaders\hrule height#1\hfill\kern0pt}
\DeclareCaptionFormat{myformat}{{\color[RGB]{0,0,0}\myhrulefill{0.08em}}\\#1#2#3}
\captionsetup[figure]{format=myformat}

\usepackage{graphicx}
\usepackage{pythonhighlight}

\usepackage{mathtools}
\DeclarePairedDelimiter\ceil{\lceil}{\rceil}
\DeclarePairedDelimiter\floor{\lfloor}{\rfloor}

\makeatletter \renewcommand*\cleardoublepage{
	\clearpage
	\if@twoside   
	\ifodd\c@page 
	\hbox{}\newpage
	\if@twocolumn\hbox{}   
	\newpage
	\fi
	\fi
	\fi
} \makeatother
\let\originalpart=\part
\def\part#1{\cleardoublepage\clearpage \pagecolor{\partcolor} \originalpart{#1}\nopagecolor }

\usepackage{yfonts,color,lettrine}
\definecolor{caligraphcolor}{HTML}{7F0000}

\usepackage{setspace}

\setcounter{DefaultLines}{3}
\AtBeginDocument{\setlength{\DefaultFindent}{0.5em}}
\setlength{\DefaultNindent}{0pt}

\usepackage[Bjornstrup]{fncychap}

\usepackage{fancyhdr, blindtext}
\newcommand{\changefontt}{%
	\fontsize{11}{13}\selectfont
}
\newcommand{\changefonts}{%
	\fontsize{9}{11}\selectfont
}

\pagestyle{fancy}
\fancyhf{}
\fancyhead[RO]{\color{winestain}\changefontt \leftmark}
\fancyhead[LO]{\color{winestain}\changefontt JUN LU}
\fancyhead[RE]{\color{winestain}\changefonts \rightmark}  
\fancyfoot[CE,CO]{\color{winestain}\thepage}

\fancypagestyle{plain}{%
	\fancyhf{}%
	\fancyfoot[C]{\color{winestain} \thepage}%
}

\usepackage[colorlinks]{hyperref}

\usepackage{color}   
\definecolor{winestain}{rgb}{0.5,0,0}
\definecolor{mydarkblue}{rgb}{0,0.08,0.45}
\usepackage{hyperref}
\hypersetup{
	colorlinks=true, 
	linktoc=all,     
	linkcolor=winestain,  
	anchorcolor=blue,
	citecolor=winestain,
}
\usepackage[hyperpageref]{backref} 

\usepackage{booktabs}  



\newcommand{\clearchapter}{%
	\ifodd\thepage\else\newpage\mbox{}\thispagestyle{empty}\fi
	\chapter
}

\makeatletter \renewcommand*\cleardoublepage{
	\clearpage
	\if@twoside   
	\ifodd\c@page 
	\hbox{}\newpage
	\if@twocolumn\hbox{}   
	\newpage
	\fi
	\fi
	\fi
} \makeatother
\let\originalpart=\part
\def\part#1{\cleardoublepage\clearpage \pagecolor{\partcolor} \originalpart{#1}\nopagecolor }


\newtheorem{definitionT}{Definition}[chapter]
\newmdenv[skipabove=7pt,
skipbelow=7pt,
rightline=false,
leftline=true,
topline=false,
bottomline=false,
linecolor=mydefinitionred,
innerleftmargin=5pt,
innerrightmargin=5pt,
innertopmargin=0pt,
leftmargin=2cm,
rightmargin=0cm,
linewidth=4pt,
innerbottommargin=0pt]{dBox}
\newenvironment{definition}{\begin{dBox}\begin{definitionT}}{\end{definitionT}\end{dBox}}	

\newtheorem{exerciseC}{Exercise}[chapter]
\newmdenv[skipabove=7pt,
skipbelow=7pt,
rightline=false,
leftline=true,
topline=false,
bottomline=false,
linecolor=mydarkgreen,
innerleftmargin=5pt,
innerrightmargin=5pt,
innertopmargin=0pt,
leftmargin=2cm,
rightmargin=0cm,
linewidth=4pt,
innerbottommargin=0pt]{eBox}
\newenvironment{exercise}{\begin{eBox}\begin{exerciseC}}{\end{exerciseC}\end{eBox}}

\usepackage{adforn}  
\newcommand{\xchaptertitle}{Chapter~\thechapter~}
\newcommand{\problemname}{Problems}

\makeatletter
\renewcommand{\l@section}{\@dottedtocline{1}{1.5em}{2.25em}}
\renewcommand{\l@subsection}{\@dottedtocline{2}{3.7em}{3.1em}}
\renewcommand{\l@subsubsection}{\@dottedtocline{3}{4.5em}{3.4em}}
\makeatother

\numberwithin{equation}{section}
\usepackage{minitoc}   
\setcounter{minitocdepth}{3}   
\let\cleardoublepage\clearpage
\usepackage[titletoc]{appendix}

\def\ceil#1{\lceil #1 \rceil}
\def\floor#1{\lfloor #1 \rfloor}
\def\1{\bm{1}}





















\newcommand{\bone}{\mathbf{1}}
\newcommand{\complex}{\mathbb{C}}

\newcommand{\rank}{\mathrm{rank}}
\newcommand{\trace}{\mathrm{tr}}

\newcommand{\rms}{\text{RMS}}

\newcommand{\real}{\mathbb{R}}

\newcommand{\cspace}{\mathcal{C}}
\newcommand{\nspace}{\mathcal{N}}

\newcommand{\leadto}{\qquad\underrightarrow{ \text{leads to} }\qquad}


\newcommand{\gap}{\,\,\,\,\,\,\,\,}

\definecolor{mylightbluetitle}{RGB}{60,113,183}
\definecolor{mylightbluetext}{RGB}{27,54,189}
\definecolor{structurecolorblue}{RGB}{60,113,183}
\definecolor{structurecolorgreen}{RGB}{63,145,182}
\colorlet{structurecolor}{structurecolorblue}
\definecolor{structurecolorelegant}{RGB}{60,113,183}
\definecolor{structurecolorlt}{RGB}{31,119,185}

\definecolor{mydarkblue}{rgb}{0,0.08,0.45}
\definecolor{mydarkred}{rgb}{0.70,0.00,0.00}
\definecolor{mydarkgreen}{rgb}{0.00,0.30,0.00}
\definecolor{mydarkyellow}{RGB}{197,151,13}
\definecolor{mydarkpurple}{RGB}{149,18,192}
\definecolor{mydarkgray}{RGB}{64,64,64}

\definecolor{mydefinitionred}{rgb}{0.30,0.00,0.00}



\newcommand{\bepsilon}{\boldsymbol\epsilon}
\newcommand{\bLambda}{\boldsymbol\Lambda}

\newcommand{\blambda}{\boldsymbol\lambda}

\newcommand{\bmu}{\boldsymbol\mu}

\newcommand\norm[1]{\left\lVert#1\right\rVert}
\newcommand{\diag}{\mathrm{diag}}

\newcommand{\ba}{\bm{a}}
\newcommand{\bA}{\bm{A}}
\newcommand{\bb}{\bm{b}}
\newcommand{\bB}{\bm{B}}
\newcommand{\bc}{\bm{c}}
\newcommand{\bC}{\bm{C}}
\newcommand{\bd}{\bm{d}}
\newcommand{\bD}{\bm{D}}
\newcommand{\be}{\bm{e}}

\newcommand{\bg}{\bm{g}}

\newcommand{\bH}{\bm{H}}

\newcommand{\bI}{\bm{I}}

\newcommand{\bL}{\bm{L}}
\newcommand{\bmm}{\bm{m}}
\newcommand{\bM}{\bm{M}}
\newcommand{\bn}{\bm{n}}

\newcommand{\bP}{\bm{P}}
\newcommand{\bq}{\bm{q}}
\newcommand{\bQ}{\bm{Q}}
\newcommand{\br}{\bm{r}}
\newcommand{\bR}{\bm{R}}
\newcommand{\bs}{\bm{s}}

\newcommand{\bu}{\bm{u}}

\newcommand{\bv}{\bm{v}}

\newcommand{\bw}{\bm{w}}

\newcommand{\bx}{\bm{x}}
\newcommand{\bX}{\bm{X}}
\newcommand{\by}{\bm{y}}
\newcommand{\bY}{\bm{Y}}
\newcommand{\bz}{\bm{z}}

\newcommand{\whbx}{\widehat{\bx}}
\newcommand{\whbd}{\widehat{\bd}}
\newcommand{\whbg}{\widehat{\bg}}
\newcommand{\whL}{\widehat{L}}

\newcommand{\mathcalV}{\mathcal{V}}

\newcommand{\bzero}{\mathbf{0}}




\def\sS{{\mathbb{S}}}

\jmlrheading{1}{2022}{1-48}{4/00}{10/00}{Jun Lu}
\firstpageno{1}
\begin{document}

\newcommand{\mytitle}{Gradient Descent, Stochastic Optimization, and Other Tales}
\title{\mytitle}

\frontmatter
\begingroup
\clearpage

\pagestyle{empty}
\begin{center}
	\bfseries
\vspace{1cm}
	\Huge
	{\Huge
		{\mytitle}}
	
\vspace{1cm}
	\normalsize

\vspace{1cm}
	
\vspace{1cm}
	
\vspace{1cm}
	\normalsize
	
\vspace{1cm}
\end{center}

\clearpage
\newpage\null\thispagestyle{empty}\newpage

\clearpage
\pagestyle{empty}
\begin{center}
	\bfseries
\vspace{1cm}
	\Huge
	{\Huge
		\mytitle}
	
	\vspace{1cm}
	\normalsize
	
\vspace{1cm}
	\small BY\\
	\Large Jun Lu\\[0.5em]
	
\vspace{1cm}

\vspace{1cm}
	\normalsize
	
\vspace{10cm}

\large
\vspace{1cm}
\end{center}

\clearpage
\pagestyle{empty}

\clearpage
\endgroup

\clearpage
\author{
\begin{center}
\name Jun Lu \\ 
\email jun.lu.locky@gmail.com
\end{center}
}
\clearpage
\maketitle
\chapter*{\centering \begin{normalsize}Preface\end{normalsize}}
The goal of this book is to debunk and dispel the magic behind the black-box optimizers and stochastic optimizers. It aims to build a solid foundation on how and why these techniques work. This
manuscript crystallizes this knowledge by deriving from simple intuitions, the mathematics
behind the strategies. This book doesn't shy away from addressing both the formal
and informal aspects of gradient descent and stochastic optimization methods. By doing so, it hopes to provide readers with a deeper understanding of these techniques as well as the when, the how and the why of applying these algorithms.

Gradient descent stands out as one of the most popular algorithms to perform optimization and by far the
most common way to optimize machine learning tasks. Its stochastic version receives attention in recent years, and this is particularly true for optimizing deep neural networks. In deep neural networks, the gradient followed by a single sample or a batch of samples is employed to save computational resources and escape from saddle points. In 1951, Robbins and Monro published \textit{A stochastic approximation method}, one of the first modern treatments on stochastic optimization that estimates local gradients with a new batch of samples. And now, stochastic optimization has become a core technology in machine learning, largely due to the development of the back propagation algorithm in fitting a neural network. The sole aim of this article is to give a self-contained introduction to concepts and mathematical tools in gradient descent and stochastic optimization.
However, we clearly realize our inability to cover all the useful and interesting results concerning optimization methods and given the paucity of scope to present this discussion, e.g., the separated analysis of trust region methods, convex optimization, and so on. We refer the reader to literature in the field of numerical optimization for a more detailed introduction to the related fields. 

The article is primarily a summary of purpose, significance of important concepts in optimization methods, e.g., vanilla gradient descent, gradient descent with momentum, conjugate descent, conjugate gradient, and the origin and rate of convergence of the methods, which shed light on their applications. The mathematical prerequisite is a first course in linear algebra and calculus. Other than this modest background, the development is self-contained, with rigorous proof provided throughout.




\paragraph{Keywords: }
Gradient descent, Stochastic gradient descent, Steepest descent, Conjugate descent and conjugate gradient, Learning rate annealing, Adaptive learning rate, Second-order methods.

\newpage
\begingroup
\hypersetup{
	linkcolor=winestain,
	linktoc=page,  
}
\dominitoc
\tableofcontents 
\listoffigures 
\endgroup

\newpage

\mainmatter

\newpage
\clearchapter{Background}
\newpage

In this chapter, we offer a concise overview of fundamental concepts in linear algebra and calculus.  
Additional significant concepts will be introduced and elaborated upon as needed for clarity.
Note that this chapter does not aim to provide an exhaustive treatment of these subjects. Interested readers wishing to delve deeper into these topics are advised to consult advanced texts on linear algebra and calculus.
For the sake of simplicity, we restrict our consideration to real matrices throughout this text. Unless explicitly stated otherwise, the eigenvalues of the matrices under discussion are assumed to be real as well.

\paragraph{The vector space $\real^n$ and matrix space $\real^{m\times n}$.}
The vector space $\real^n$ is the set of $n$-dimensional column vectors with real components. 
Throughout the book, our primary focus will be on problems within the $\real^n$ vector space. However, in a few instances, we will also explore other vector spaces, e.g., the nonnegative vector space.
Correspondingly, the matrix space $\real^{m\times n}$ represents the set of all real-valued matrices with dimensions $m\times n$.

In all cases, scalars will be denoted in a non-bold font, possibly with subscripts (e.g., $a$, $\alpha$, $\alpha_i$). We will use \textbf{boldface} lowercase letters, possibly with subscripts, to denote vectors (e.g., $\bmu$, $\bx$, $\bx_n$, $\bz$), and
\textbf{boldface} uppercase letters, possibly with subscripts, to denote matrices (e.g., $\bA$, $\bL_j$). 
The $i$-th element of a vector $\bz$ will be denoted by $z_i$ in the non-bold font.


Subarrays are formed when fixing a subset of indices.
The $i$-th row and $j$-th column value of matrix $\bA$ (referred to as entry ($i,j$) of $\bA$) will be denoted by  $a_{ij}$. 
Furthermore, it will be helpful to utilize the \textbf{Matlab-style notation}: the submatrix of  matrix $\bA$ spanning from the $i$-th row to the $j$-th row and the $k$-th column to the $m$-th column  will be denoted by $\bA_{i:j,k:m}$. 
A colon is used to indicate all elements of a dimension, e.g., $\bA_{:,k:m}$ denotes the $k$-th column to the $m$-th column of  matrix $\bA$, and $\bA_{:,k}$ denotes the $k$-th column of $\bA$. Alternatively, the $k$-th column of matrix $\bA$ may be denoted more compactly as $\ba_k$.

\index{Matlab-style notation}

When the index is non-continuous, given ordered subindex sets $I$ and $J$, $\bA[I, J]$ denotes the submatrix of $\bA$ obtained by extracting the rows and columns of $\bA$ indexed by $I$ and $J$, respectively; and $\bA[:, J]$ denotes the submatrix of $\bA$ obtained by extracting the columns of $\bA$ indexed by $J$, where again the colon operator implies all indices, and  the $[:, J]$ syntax in this expression selects all rows from $\bA$ and only the columns specified by the indices in $J$.

\begin{definition}[Matlab Notation]\label{definition:matlabnotation}
Suppose $\bA\in \real^{m\times n}$, and $I=[i_1, i_2, \ldots, i_k]$ and $J=[j_1, j_2, \ldots, j_l]$ are two index vectors, then $\bA[I,J]$ denotes the $k\times l$ submatrix
$$
\bA[I,J]=
\begin{bmatrix}
	a_{i_1,j_1} & a_{i_1,j_2} &\ldots & a_{i_1,j_l}\\
	a_{i_2,j_1} & a_{i_2,j_2} &\ldots & a_{i_2,j_l}\\
	\vdots & \vdots&\ddots & \vdots\\
	a_{i_k,j_1} & a_{i_k,j_2} &\ldots & a_{i_k,j_l}\\
\end{bmatrix}.
$$
Whilst, $\bA[I,:]$ denotes a $k\times n$ submatrix, and $\bA[:,J]$ denotes a $m\times l$ submatrix analogously.

We note that it does not matter whether the index vectors $I$ and $J$ are row vectors or column vectors. 
What's important is which axis they index (either rows of $\bA$ or columns of $\bA$).
Note that the ranges of the indices are given as folows:
$$
\left\{
\begin{aligned}
	0&\leq \min(I) \leq \max(I)\leq m;\\
	0&\leq \min(J) \leq \max(J)\leq n.
\end{aligned}
\right.
$$
\end{definition}

In all instances, vectors are presented in column form rather than as rows.
A row vector will be denoted by the transpose of a column vector such as $\ba^\top$. 
A column vector with specific values is separated by the semicolon symbol $``;"$, for instance, $\bx=[1;2;3]$ is a column vector in $\real^3$. 
Similarly, a  row vector with specific values is split by the comma symbol $``,"$, e.g., $\by=[1,2,3]$ is a row vector with 3 values. 
Additionally, a column vector can also be denoted by the transpose of a row vector, e.g., $\by=[1,2,3]^\top$ is also a column vector.

The transpose of a matrix $\bA$ will be denoted as $\bA^\top$, and its inverse will be denoted by $\bA^{-1}$. 
We will denote the $p \times p$ identity matrix by $\bI_p$ (or simply by $\bI$ when the size is clear from context). A vector or matrix of all zeros will be denoted by a \textbf{boldface} zero, $\bzero$, with the size clear from context, or we denote $\bzero_p$ to be the vector of all zeros with $p$ entries.
Similarly, a vector or matrix of all ones will be denoted by a \textbf{boldface} one $\bone$ whose size is clear from  context, or we denote $\bone_p$ to be the vector of all ones with $p$ entries.
We will frequently omit the subscripts of these matrices when the dimensions are clear from  context.

We will  use $\be_1, \be_2, \ldots, \be_n$ to represent the standard basis of $\real^n$, where $\be_i$ is the vector whose $i$-th component is one while all the others are zero.


%

\index{Eigenvalue}
\begin{definition}[Eigenvalue]
Given any vector space $E$ and any linear map $\bA: E \rightarrow E$ (or simply real matrix $\bA\in\real^{m\times n}$), a scalar $\lambda \in K$ is called an eigenvalue, or proper value, or characteristic value of $\bA$, if there exists some nonzero vector $\bu \in E$ such that
\begin{equation*}
	\bA \bu = \lambda \bu.
\end{equation*}
\end{definition}
In fact, real-valued matrices can have complex eigenvalues. However, all the eigenvalues of symmetric matrices are real (see Theorem~\ref{theorem:spectral_theorem}, p.~\pageref{theorem:spectral_theorem}).

\index{Spectrum}
\index{Spectral radius}
\begin{definition}[Spectrum and Spectral Radius]\label{definition:spectrum}
The set of all eigenvalues of $\bA$ is called the spectrum of $\bA$ and is denoted by $\Lambda(\bA)$. The largest magnitude of the eigenvalues is known as the spectral radius $\rho(\bA)$:
$$
\rho(\bA) = \mathop{\max}_{\lambda\in \Lambda(\bA)}  |\lambda|.
$$
\end{definition}

\index{Eigenvector}
\begin{definition}[Eigenvector]
A vector $\bu \in E$ is called an eigenvector, or proper vector, or characteristic vector of $\bA$, if $\bu \neq 0$ and if there exists some $\lambda \in K$ such that
\begin{equation*}
	\bA \bu  = \lambda \bu,
\end{equation*}
where the scalar $\lambda$ is then an eigenvalue. And we say that $\bu$ is an eigenvector associated with $\lambda$.
\end{definition}

Moreover, the tuple $(\lambda, \bu)$ mentioned above is termed an  \textbf{eigenpair}. Intuitively, these definitions imply that multiplying matrix $\bA$ by the vector $\bu$ results in a new vector that is in the same direction as $\bu$, but its magnitude scaled by a factor $\lambda$. For any eigenvector $\bu$, we can scale it by a scalar $s$ such that $s\bu$ remains an eigenvector of $\bA$; and for this reason we call the eigenvector an eigenvector of $\bA$ associated with eigenvalue $\lambda$. 
To avoid ambiguity, it is customary to assume that the eigenvector is normalized to have length $1$ and the first entry is positive (or negative) since both $\bu$ and $-\bu$ are eigenvectors.

In linear algebra, every vector space has a basis, and any vector within the space can be expressed as  a linear combination of the basis vectors. 
We then define the span and dimension of a subspace via the basis.

\index{Subspace}
\begin{definition}[Subspace]
A nonempty subset $\mathcal{V}$ of $\real^n$ is called a subspace if $x\ba+y\ba\in \mathcal{V}$ for every $\ba,\bb\in \mathcal{V}$ and every $x,y\in \real$.
\end{definition}

\index{Span}
\begin{definition}[Span]
If every vector $\bv$ in subspace $\mathcal{V}$ can be expressed as a linear combination of $\{\ba_1, \ba_2, \ldots,$ $\ba_m\}$, then $\{\ba_1, \ba_2, \ldots, \ba_m\}$ is said to span $\mathcal{V}$.
\end{definition}

\index{Linearly independent}
In this context, we will often use the idea of the linear independence of a set of vectors. Two equivalent definitions are given as follows.
\begin{definition}[Linearly Independent]
A set of vectors $\{\ba_1, \ba_2, \ldots, \ba_m\}$ is called linearly independent if there is no combination that  can yield $x_1\ba_1+x_2\ba_2+\ldots+x_m\ba_m=0$ unless all $x_i$'s are equal to  zero. An equivalent definition is that $\ba_1\neq \bzero$, and for every $k>1$, the vector $\ba_k$ does not belong to the span of $\{\ba_1, \ba_2, \ldots, \ba_{k-1}\}$.
\end{definition}


\index{Basis}
\index{Dimension}
\begin{definition}[Basis and Dimension]
A set of vectors $\{\ba_1, \ba_2, \ldots, \ba_m\}$ is called a basis of $\mathcal{V}$ if they are linearly independent and span $\mathcal{V}$. Every basis of a given subspace has the same number of vectors, and the number of vectors in any basis is called the dimension of the subspace $\mathcal{V}$. By convention, the subspace $\{\bzero\}$ is said to  have a dimension of zero. 
Furthermore, every subspace of nonzero dimension has a basis that is orthogonal, i.e., the basis of a subspace can be chosen orthogonal.
\end{definition}

\index{Column space}
\begin{definition}[Column Space (Range)]
If $\bA$ is an $m \times n$ real matrix, we define the column space (or range) of $\bA$ as the set spanned by its columns:
\begin{equation*}
	\mathcal{C} (\bA) = \{ \by\in \mathbb{R}^m: \exists\, \bx \in \mathbb{R}^n, \, \by = \bA \bx \}.
\end{equation*}
And the row space of $\bA$ is the set spanned by its rows, which is equivalent to the column space of $\bA^\top$:
\begin{equation*}
	\mathcal{C} (\bA^\top) = \{ \bx\in \mathbb{R}^n: \exists\, \by \in \mathbb{R}^m, \, \bx = \bA^\top \by \}.
\end{equation*}
\end{definition}

\index{Null space}
\begin{definition}[Null Space (Nullspace, Kernel)]
If $\bA$ is an $m \times n$ real matrix, we define the null space (or kernel, or nullspace) of $\bA$ as the set:
\begin{equation*}
	\nspace (\bA) = \{\by \in \mathbb{R}^n:  \, \bA \by = \bzero \}.
\end{equation*}
And the null space of $\bA^\top$ is defined as 	\begin{equation*}
	\nspace (\bA^\top) = \{\bx \in \mathbb{R}^m:  \, \bA^\top \bx = \bzero \}.
\end{equation*}
\end{definition}

Both the column space of $\bA$ and the null space of $\bA^\top$ are subspaces of $\real^m$. In fact, every vector in $\nspace(\bA^\top)$ is orthogonal  to vectors in $\cspace(\bA)$, and vice versa.\footnote{Every vector in $\nspace(\bA)$ is also perpendicular to vectors in $\cspace(\bA^\top)$, and vice versa.}

\index{Rank}
\begin{definition}[Rank]
The rank of a matrix $\bA\in \real^{m\times n}$ is the dimension of its column space. 
That is, the rank of $\bA$ is equal to the maximum number of linearly independent columns of $\bA$, and is also the maximum number of linearly independent rows of $\bA$. The matrix $\bA$ and its transpose $\bA^\top$ have the same rank. 
A matrix $\bA$ is considered to have full rank if its rank equals $min\{m,n\}$.
Specifically, given a vector $\bu \in \real^m$ and a vector $\bv \in \real^n$, then the $m\times n$ matrix $\bu\bv^\top$ obtained by the outer product of vectors is of rank 1. In short, the rank of a matrix is equal to:
\begin{itemize}
	\item number of linearly independent columns;
	\item number of linearly independent rows.
\end{itemize}
And remarkably, these two quantities are always equal (see \citet{lu2022matrix}).
\end{definition}

\index{Orthogonal complement}
\begin{definition}[Orthogonal Complement in General]
The orthogonal complement $\mathcalV^\perp\subseteq \real^m$ of a subspace $\mathcalV\subseteq\real^m$ contains any vector that is perpendicular to $\mathcalV$. That is,
$$
\mathcalV^\perp = \{\bv\in \real^m : \bv^\top\bu=0, \,\,\, \forall \bu\in \mathcalV  \}.
$$
These two subspaces are mutually exclusive yet collectively span the entire space.
The dimensions of $\mathcalV$ and $\mathcalV^\perp$ sum up to the dimension of the entire space. Furthermore, it holds that $(\mathcalV^\perp)^\perp=\mathcalV$.
\end{definition}

\index{Orthogonal complement}
\begin{definition}[Orthogonal Complement of Column Space]
If $\bA$ is an $m \times n$ real matrix, the orthogonal complement of $\mathcal{C}(\bA)$, $\mathcal{C}^{\bot}(\bA)$, is the subspace defined as:
\begin{equation*}
	\begin{aligned}
		\mathcal{C}^{\bot}(\bA) &= \{\by\in \mathbb{R}^m: \, \by^\top \bA \bx=\bzero, \, \forall \bx \in \mathbb{R}^n \} \\
		&=\{\by\in \mathbb{R}^m: \, \by^\top \bv = \bzero, \, \forall \bv \in \mathcal{C}(\bA) \}.
	\end{aligned}
\end{equation*}
\end{definition}
We can then identify the four fundamental spaces associated with any matrix $\bA\in \real^{m\times n}$ of rank $r$:
\begin{itemize}
\item $\cspace(\bA)$: Column space of $\bA$, i.e., linear combinations of columns with dimension $r$;

\item $\nspace(\bA)$: Null space of $\bA$, i.e., all $\bx$ with $\bA\bx=\bzero$ with dimension $n-r$;

\item  $\cspace(\bA^\top)$: Row space of $\bA$, i.e., linear combinations of rows with dimension $r$;

\item  $\nspace(\bA^\top)$: Left null space of $\bA$, i.e., all $\by$ with $\bA^\top \by=\bzero$ with dimension $m-r$, 
\end{itemize}
where $r$ represents the rank of the matrix. Furthermore, $\nspace(\bA)$ is the orthogonal complement of $\cspace(\bA^\top)$, and $\cspace(\bA)$ is the orthogonal complement of $\nspace(\bA^\top)$. The proof can be found in \citet{lu2022matrix}.

\index{Fundamental subspace}

\index{Orthogonal matrix}
\begin{definition}[Orthogonal Matrix, Semi-Orthogonal Matrix]
A real square matrix $\bQ\in\real^{n\times n}$ is an orthogonal matrix if the inverse of $\bQ$ is equal to its  transpose, that is, $\bQ^{-1}=\bQ^\top$ and $\bQ\bQ^\top = \bQ^\top\bQ = \bI$. 
In other words, suppose $\bQ=[\bq_1, \bq_2, \ldots, \bq_n]$, where $\bq_i \in \real^n$ for all $i \in \{1, 2, \ldots, n\}$, then $\bq_i^\top \bq_j = \delta(i,j)$, where $\delta(i,j)$ is the Kronecker delta function. 
For any vector $\bx$, the orthogonal matrix will preserve the length: $\norm{\bQ\bx}= \norm{\bx}$.

If $\bQ$ contains only $\gamma$ of these columns with $\gamma<n$, then $\bQ^\top\bQ = \bI_\gamma$ stills holds, where $\bI_\gamma$ is the $\gamma\times \gamma$ identity matrix. 
But $\bQ\bQ^\top=\bI$ will not hold. 
In this case, $\bQ$ is called semi-orthogonal.

\end{definition}

\index{Normal matrix}
\index{Hermitian matrix}
\index{Orthogonal matrix}
\index{Unitary matrix}

From an introductory course on linear algebra, we have the following remark regarding  the equivalent claims of nonsingular matrices.
\begin{remark}[List of Equivalence of Nonsingularity for a Matrix]
For a square matrix $\bA\in \real^{n\times n}$, the following claims are equivalent:
\begin{itemize}
	\item $\bA$ is nonsingular;~\footnote{The source of the name is a result of the singular value decomposition (SVD).}
	\item $\bA$ is invertible, i.e., $\bA^{-1}$ exists; 
	\item $\bA\bx=\bb$ has a unique solution $\bx = \bA^{-1}\bb$;
	\item $\bA\bx = \bzero$ has a unique, trivial solution: $\bx=\bzero$;
	\item Columns of $\bA$ are linearly independent;
	\item Rows of $\bA$ are linearly independent;
	\item $\det(\bA) \neq 0$; 
	\item $\dim(\nspace(\bA))=0$;
	\item $\nspace(\bA) = \{\bzero\}$, i.e., the null space is trivial;
	\item $\cspace(\bA)=\cspace(\bA^\top) = \real^n$, i.e., the column space or row space span the whole $\real^n$;
	\item $\bA$ has full rank $r=n$;
	\item The reduced row echelon form is $\bR=\bI$;
	\item $\bA^\top\bA$ is symmetric positive definite;
	\item $\bA$ has $n$ nonzero (positive) singular values;
	\item All eigenvalues are nonzero.
\end{itemize}
\end{remark}

It will be shown important to take the above equivalence into mind. On the other hand, the following remark  shows the equivalent claims for singular matrices.
\begin{remark}[List of Equivalence of Singularity for a Matrix]
For a square matrix $\bA\in \real^{n\times n}$ with eigenpair $(\lambda, \bu)$, the following claims are equivalent:
\begin{itemize}
	\item $(\bA-\lambda\bI)$ is singular; 
	\item $(\bA-\lambda\bI)$ is not invertible;
	\item $(\bA-\lambda\bI)\bx = \bzero$ has nonzero $\bx\neq \bzero$ solutions, and $\bx=\bu$ is one of such solutions;
	\item $(\bA-\lambda\bI)$ has linearly dependent columns;
	\item $\det(\bA-\lambda\bI) = 0$; 
	\item $\dim(\nspace(\bA-\lambda\bI))>0$;
	\item Null space of $(\bA-\lambda\bI)$ is nontrivial;
	\item Columns of $(\bA-\lambda\bI)$ are linearly dependent;
	\item Rows of $(\bA-\lambda\bI)$ are linearly dependent;
	\item $(\bA-\lambda\bI)$ has rank $r<n$;
	\item Dimension of column space = dimension of row space = $r<n$;
	\item $(\bA-\lambda\bI)^\top(\bA-\lambda\bI)$ is symmetric semidefinite;
	\item $(\bA-\lambda\bI)$ has $r<n$ nonzero (positive) singular values;
	\item Zero is an eigenvalue of $(\bA-\lambda\bI)$.
\end{itemize}
\end{remark}


\index{Vector norm}
\index{Matrix norm}
\begin{definition}[Vector $\ell_2$-Norm]\label{definition:vec_l2_norm}
For a vector $\bx\in\real^n$, the \textbf{$\ell_2$ vector norm} is defined as $\norm{\bx}_2 = \sqrt{x_1^2+x_2^2+\ldots+x_n^2}$.
\end{definition}

For a matrix $\bA\in\real^{m\times n}$, we  define the (matrix) Frobenius norm as follows.
\begin{definition}[Matrix Frobenius Norm\index{Frobenius norm}]\label{definition:frobernius-in-svd}
The \textbf{Frobenius norm} of a matrix $\bA\in \real^{m\times n}$ is defined as 
$$
\norm{\bA}_F = \sqrt{\sum_{i=1,j=1}^{m,n} (a_{ij})^2}=\sqrt{\trace(\bA\bA^\top)}=\sqrt{\trace(\bA^\top\bA)} = \sqrt{\sigma_1^2+\sigma_2^2+\ldots+\sigma_r^2}, 
$$
where $\sigma_1, \sigma_2, \ldots, \sigma_r$ are nonzero singular values of $\bA$.
\end{definition}

The spectral norm is defined as follows.
\begin{definition}[Matrix Spectral Norm]\label{definition:spectral_norm}
The \textbf{spectral norm} of a matrix $\bA\in \real^{m\times n}$ is defined as 
$$
\norm{\bA}_2 = \mathop{\max}_{\bx\neq\bzero} \frac{\norm{\bA\bx}_2}{\norm{\bx}_2}  =\mathop{\max}_{\bu\in \real^n: \norm{\bu}_2=1}  \norm{\bA\bx}_2 ,
$$
which is also the maximal singular value of $\bA$, i.e., $\norm{\bA}_2 = \sigma_1(\bA)$.
\end{definition}

We note that the Frobenius norm serves as the matrix counterpart of vector $\ell_2$-norm.
For simplicity, we do not give the full subscript of the norm for the vector $\ell_2$-norm or Frobenius norm when it is clear from the context which one we are referring to: $\norm{\bA}=\norm{\bA}_F$ and $\norm{\bx}=\norm{\bx}_2$.
However, for the spectral norm, the subscript $\norm{\bA}_2$ should \textbf{not} be omitted.


\subsection*{Differentiability and Differential Calculus}

\begin{definition}[Directional Derivative, Partial Derivative]\label{definition:partial_deri}
Given a function $f$ defined over a set $\sS\subseteq \real^n$ and a nonzero vector $\bd\in\real^n$. Then the \textbf{directional derivative} of $f$ at $\bx$ w.r.t. the direction $\bd$ is given by, if the limit exists, 
$$
\mathop{\lim}_{t\rightarrow 0^+}
\frac{f(\bx+t\bd) - f(\bx)}{t}.
$$
And it is denoted by $f^\prime(\bx; \bd)$ or $D_{\bd}f(\bx)$. 
The directional derivative is sometimes called the \textbf{G\^ateaux derivative}.

For any $i\in\{1,2,\ldots,n\}$, the directional derivative at $\bx$ w.r.t. the direction of the $i$-th standard basis $\be_i$ is called the $i$-th \textbf{partial derivative} and is denoted by $\frac{\partial f}{\partial x_i} (\bx)$, $D_{\be_i}f(\bx)$, or $\partial_i f(\bx)$.
\end{definition}

If all the partial derivatives of a function $f$ exist at a point $\bx\in\real^n$, then the \textit{gradient} of $f$ at $\bx$ is defined as the column vector containing all the partial derivatives:
$$
\nabla f(\bx)=
\begin{bmatrix}
\frac{\partial f}{\partial x_1} (\bx)\\
\frac{\partial f}{\partial x_2} (\bx)\\
\vdots \\
\frac{\partial f}{\partial x_n} (\bx)	
\end{bmatrix}
\in \real^n.
$$
A function $f$ defined over an open set $\sS\subseteq \real^n$ is called \textit{continuously differentiable} over $\sS$ if all the partial derivatives exist and are continuous on $\sS$.
In the setting of continuous differentiability, the directional derivative and gradient have the following relationship:
\begin{equation}
f^\prime(\bx; \bd) = \nabla f(\bx)^\top \bd, \gap \text{for all }\bx\in\sS \text{ and }\bd\in\real^n.
\end{equation} 
And in the setting of continuously differentiability, we also have
\begin{equation}
\mathop{\lim}_{\bd\rightarrow \bzero}
\frac{f(\bx+\bd) - f(\bx) - \nabla f(\bx)^\top \bd}{\norm{\bd}} = 0\gap 
\text{for all }\bx\in\sS,
\end{equation}
or 
\begin{equation}
f(\by) = f(\bx)+\nabla f(\bx)^\top (\by-\bx) + o(\norm{\by-\bx}),
\end{equation}
where $o(\cdot): \real_+\rightarrow \real$ is a one-dimensional function satisfying $\frac{o(t)}{t}\rightarrow 0$ as $t\rightarrow 0^+$.

The partial derivative $\frac{\partial f}{\partial x_i} (\bx)$ is also a real-valued function of $\bx\in\sS$ that can be partially differentiated. The $j$-th partial derivative of $\frac{\partial f}{\partial x_i} (\bx)$ is defined as 
$$
\frac{\partial^2 f}{\partial x_j\partial x_i} (\bx)=
\frac{\partial \left(\frac{\partial f}{\partial x_i} (\bx)\right)}{\partial x_j} (\bx).
$$
This is called the ($j,i$)-th \textit{second-order partial derivative} of function $f$.
A function $f$ defined over an open set $\sS\subseteq$ is called \textit{twice continuously differentiable} over $\sS$ if all the second-order partial derivatives exist and are continuous over $\sS$. In the setting of twice continuously differentiability, the second-order partial derivative are symmetric:
$$
\frac{\partial^2 f}{\partial x_j\partial x_i} (\bx)=
\frac{\partial^2 f}{\partial x_i\partial x_j} (\bx).
$$
The \textit{Hessian} of the function $f$ at a point $\bx\in\sS$ is defined as the symmetric $n\times n$ matrix 
$$
\nabla^2f(\bx)=
\begin{bmatrix}
\frac{\partial^2 f}{\partial x_1^2} (\bx) & 
\frac{\partial^2 f}{\partial x_1\partial x_2} (\bx) & \ldots &
\frac{\partial^2 f}{\partial x_1\partial x_n} (\bx)\\
\frac{\partial^2 f}{\partial x_2\partial x_1} (\bx) & 
\frac{\partial^2 f}{\partial x_2\partial x_2} (\bx) & \ldots &
\frac{\partial^2 f}{\partial x_2\partial x_n} (\bx)\\
\vdots & 
\vdots & \ddots &
\vdots\\
\frac{\partial^2 f}{\partial x_n\partial x_1} (\bx) & 
\frac{\partial^2 f}{\partial x_n\partial x_2} (\bx) & \ldots &
\frac{\partial^2 f}{\partial x_n^2} (\bx)
\end{bmatrix}.
$$
We provide a simple proof of Taylor's expansion in Appendix~\ref{appendix:taylor-expansion} (p.~\pageref{appendix:taylor-expansion}) for one-dimensional functions. 
In the case of high-dimensional functions, we have the following two approximation results.
\begin{theorem}[Linear Approximation Theorem]\label{theorem:linear_approx}
Let $f(\bx):\sS\rightarrow \real$ be a twice continuously differentiable function over an open set $\sS\subseteq\real^n$, and given two points $\bx, \by$. Then there exists $\bx^\star\in[\bx,\by]$ such that 
$$
f(\by) = f(\bx)+ \nabla f(\bx)^\top (\by-\bx) + \frac{1}{2} (\by-\bx)^\top \nabla^2 f(\bx^\star) (\by-\bx).
$$ 
\end{theorem}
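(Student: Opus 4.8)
The plan is to reduce the multivariate statement to the one-dimensional Taylor expansion with Lagrange remainder already established in Appendix~\ref{appendix:taylor-expansion}. The idea is to restrict $f$ to the line segment joining $\bx$ and $\by$, turning the problem into a scalar one. Concretely, writing $\bd = \by-\bx$, I would introduce the auxiliary function $g:[0,1]\rightarrow\real$ defined by
$$
g(t) = f\big(\bx + t\bd\big),
$$
which is well-defined provided the segment $[\bx,\by]$ lies in $\sS$ (an implicit hypothesis). Since $t\mapsto \bx+t\bd$ is an affine map and $f$ is twice continuously differentiable, the composition $g$ is twice continuously differentiable on $[0,1]$, so the scalar theory applies to it.

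Next I would compute the first two derivatives of $g$ via the chain rule. Using the continuous-differentiability relationship $f^\prime(\bx;\bd)=\nabla f(\bx)^\top\bd$ established earlier, differentiating along the direction $\bd$ gives
$$
g^\prime(t) = \nabla f(\bx+t\bd)^\top\bd,
$$
and differentiating once more, applying the chain rule componentwise to the gradient, yields
$$
g^{\prime\prime}(t) = \bd^\top\nabla^2 f(\bx+t\bd)\,\bd.
$$
Applying the one-dimensional Taylor theorem with Lagrange remainder to $g$ on $[0,1]$ produces some $\xi\in(0,1)$ with $g(1)=g(0)+g^\prime(0)+\tfrac{1}{2}g^{\prime\prime}(\xi)$. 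Substituting $g(0)=f(\bx)$, $g(1)=f(\by)$, $g^\prime(0)=\nabla f(\bx)^\top\bd$, and $g^{\prime\prime}(\xi)=\bd^\top\nabla^2 f(\bx^\star)\bd$ with $\bx^\star = \bx+\xi\bd\in[\bx,\by]$, and recalling $\bd=\by-\bx$, delivers precisely the claimed identity.

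The main obstacle I anticipate is the rigorous justification of the second-derivative computation, i.e., confirming that $g^{\prime\prime}(t)=\bd^\top\nabla^2 f(\bx+t\bd)\bd$. This requires the continuity of the second-order partial derivatives (guaranteed by twice continuous differentiability) so that the chain rule may legitimately be differentiated a second time componentwise, and it implicitly invokes the symmetry of the Hessian noted above. A secondary point demanding care is ensuring that the entire segment $[\bx,\by]$ remains inside the open set $\sS$, which must be taken as part of the hypothesis in order for $g$ to be defined and twice continuously differentiable on all of $[0,1]$.
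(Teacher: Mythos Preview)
Your approach is correct and is precisely the standard reduction the paper is pointing toward: the paper states this theorem without proof, immediately after remarking that a one-dimensional Taylor expansion is supplied in Appendix~\ref{appendix:taylor-expansion}, so the intended argument is exactly the restriction-to-a-line method you describe. Your identification of the two technical caveats (the segment $[\bx,\by]\subset\sS$ and the legitimacy of differentiating the chain-rule expression a second time under twice continuous differentiability) is also appropriate.
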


\begin{theorem}[Quadratic Approximation Theorem]
Let $f(\bx):\sS\rightarrow \real$ be a twice continuously differentiable function over an open set $\sS\subseteq\real^n$, and given two points $\bx, \by$. Then it follows that 
$$
f(\by) = f(\bx)+ \nabla f(\bx)^\top (\by-\bx) + \frac{1}{2} (\by-\bx)^\top \nabla^2 f(\bx) (\by-\bx)
+
o(\norm{\by-\bx}^2).
$$ 
\end{theorem}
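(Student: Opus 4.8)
The plan is to derive the quadratic expansion directly from the Linear Approximation Theorem (Theorem~\ref{theorem:linear_approx}) by upgrading the Hessian, which that theorem evaluates at an unknown intermediate point, to the Hessian at $\bx$, and then absorbing the discrepancy into the $o(\norm{\by-\bx}^2)$ remainder via the continuity of $\nabla^2 f$. Fix $\bx$ and regard $\by$ as ranging over a neighborhood of $\bx$ inside the open set $\sS$. By Theorem~\ref{theorem:linear_approx}, there exists a point $\bx^\star\in[\bx,\by]$ (depending on $\by$) such that
$$
f(\by) = f(\bx)+ \nabla f(\bx)^\top (\by-\bx) + \frac{1}{2} (\by-\bx)^\top \nabla^2 f(\bx^\star) (\by-\bx).
$$
Subtracting the target expansion, I would define the remainder
$$
R(\by) := \tfrac{1}{2} (\by-\bx)^\top \big[\nabla^2 f(\bx^\star) - \nabla^2 f(\bx)\big] (\by-\bx),
$$
so that it remains only to show $R(\by)=o(\norm{\by-\bx}^2)$, i.e. $R(\by)/\norm{\by-\bx}^2\to 0$ as $\by\to\bx$.

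The key estimate bounds a quadratic form by the spectral norm of its matrix. Applying Cauchy--Schwarz together with Definition~\ref{definition:spectral_norm} with $\bM=\nabla^2 f(\bx^\star)-\nabla^2 f(\bx)$ and $\bv=\by-\bx$ gives
$$
\frac{|R(\by)|}{\norm{\by-\bx}^2} \leq \frac{1}{2}\norm{\nabla^2 f(\bx^\star) - \nabla^2 f(\bx)}_2.
$$
Because $\bx^\star\in[\bx,\by]$ we have $\norm{\bx^\star-\bx}\leq\norm{\by-\bx}$, so $\bx^\star\to\bx$ as $\by\to\bx$; and since $f$ is twice continuously differentiable, every second-order partial derivative is continuous, hence $\nabla^2 f(\bx^\star)\to\nabla^2 f(\bx)$ and the spectral norm on the right tends to $0$. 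This forces $R(\by)/\norm{\by-\bx}^2\to 0$, which is exactly the claim $R(\by)=o(\norm{\by-\bx}^2)$, completing the argument.

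I expect the main obstacle to be the bookkeeping around the $o(\cdot)$ notation rather than any deep difficulty: one must be explicit that $\bx^\star$ depends on $\by$, that the sandwiching $\norm{\bx^\star-\bx}\leq\norm{\by-\bx}$ is what forces $\bx^\star\to\bx$, and that the continuity invoked is precisely the \emph{continuity} of the second-order partial derivatives (not merely their existence), which is exactly what twice continuous differentiability supplies. The quadratic-form estimate itself is routine, since $|\bv^\top\bM\bv|\leq\norm{\bv}_2\norm{\bM\bv}_2\leq\norm{\bM}_2\norm{\bv}_2^2$ holds for any matrix $\bM$ by Cauchy--Schwarz and Definition~\ref{definition:spectral_norm}, so the symmetry of the Hessian plays no role in the bound.
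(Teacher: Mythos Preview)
Your argument is correct and complete: reducing to Theorem~\ref{theorem:linear_approx}, isolating the remainder $R(\by)$, and controlling it by $\tfrac{1}{2}\norm{\nabla^2 f(\bx^\star)-\nabla^2 f(\bx)}_2$ via the spectral-norm bound, with continuity of $\nabla^2 f$ forcing this to vanish as $\by\to\bx$, is exactly the standard route. There is nothing to compare against, however, because the paper states the Quadratic Approximation Theorem without proof (it appears immediately after Theorem~\ref{theorem:linear_approx} and the chapter ends right afterward), so your proposal in fact supplies an argument the paper omits.
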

\newpage
\clearchapter{Gradient Descent}\label{chapter:gradient-descent} 
\begingroup
\hypersetup{linkcolor=winestain,
linktoc=page,  
}
\minitoc \newpage
\endgroup

\index{Gradient descent}
\section{Gradient Descent}\label{section:gradient-descent-all} 
\lettrine{\color{caligraphcolor}G}
The \textit{gradient descent} (GD) method is employed to find the minimum of a differentiable, convex or non-convex function, commonly referred to as the ``cost" or ``loss" function (also known as the ``objective" function).
It stands out as one of the most popular algorithms to perform optimization and by far the
most common way to optimize machine learning, deep learning, and various optimization problems. 
And this is particularly true for optimizing neural networks. 
In the context of machine learning, the cost function measures the difference between the predicted output of a model and the actual output.
The neural networks or machine learning in general find the set of parameters $\bx\in \real^d$ (also known as weights) 
in order to optimize an objective function $L(\bx)$. 
The gradient descent aims to find a sequence of parameters:
\begin{equation}
\bx_1, \bx_2, \ldots, \bx_T,
\end{equation}
such that as $T\rightarrow \infty$, the objective function $L(\bx_T)$ attains the optimal minimum value.
At each iteration $t$, a step $\Delta \bx_t$ is applied to update the parameters. Denoting the parameters at the $t$-th iteration by $\bx_t$. 
Then the update rule becomes 
\begin{equation}
\bx_{t+1} = \bx_t + \Delta \bx_t.
\end{equation}

\index{Learning rate}
\index{Strict SGD}
\index{Mini-batch SGD}
\index{Local minima}
The most straightforward gradient descents is the \textit{vanilla update}: the parameters move in the opposite direction of the gradient, which finds the steepest descent direction since the gradients are orthogonal to level curves (also known as  level surfaces, see Lemma~\ref{lemm:direction-gradients}): 
\begin{equation}\label{equation:gd-equaa-gene}
\Delta \bx_{t} = -\eta \bg_t= -\eta \frac{\partial L(\bx_t)}{\partial \bx_t} = -\eta \nabla L(\bx_t),
\end{equation}
where the positive value $\eta$ denotes the \textit{learning rate} and depends on specific problems, and $\bg_t=\frac{\partial L(\bx^t)}{\partial \bx^t} \in 
\real^d$ represents the gradient of the parameters.
The learning rate $\eta$ controls how large of a step to take in the direction of negative gradient so that we can reach a (local) minimum.
While if we follow the negative gradient of a single sample or a batch of samples iteratively, the local estimate of the direction can be obtained and is known as the \textit{stochastic gradient descent} (SGD) \citep{robbins1951stochastic}.
The SGD can be categorized  into two types:
\begin{itemize}
\item \textbf{The strict SGD:} Computes the gradient for only one randomly chosen data point in each iteration.
\item \textbf{The mini-batch SGD:} Represents a compromise between GD and strict SGD, using a subset (mini-batch) of the dataset to compute the gradient.
\end{itemize}
The SGD method is particular useful when the number of training entries (i.e., the data used for updating the model, while the data used for final evaluation is called the test entries or test data) are substantial, resulting in that the gradients from different input samples may cancel out and the final update is small.
In the SGD framework, the objective function is stochastic, composed of a sum of subfunctions evaluated at different subsamples of the data. However, the drawback of the vanilla update (both GD and SGD) lies in its susceptibility to getting trapped in local minima \citep{rutishauser1959theory}.

For a small step size, gradient descent makes a monotonic improvement at every iteration, ensuring convergence, albeit to a local minimum. However, the speed of the vanilla gradient descent method is generally slow, and it can exhibit an exponential rate in case of poor curvature conditions. 
While choosing a rate higher than this  may lead to divergence in terms of the objective function. 
Determining an optimal learning rate (whether global or per-dimension) becomes more of an art than science for many problems. 
Previous work has been done to alleviate the need for selecting a global learning rate \citep{zeiler2012adadelta}, while it remains sensitive to other hyper-parameters.

In the following sections, we embark on a multifaceted exploration of the gradient descent method, unraveling its intricacies and adaptations through distinct lenses.
This comprehensive journey is designed to foster a nuanced comprehension of the algorithms, elucidating its various formulations, challenges, and contextual applications.

\section{Gradient Descent by Calculus}


An intuitive analogy for gradient descent is to envision a river's course flowing from a mountaintop.
The objective of gradient descent aligns with the river's goal—to descend to the lowest point at the foothill from the mountain's summit.

To restate the problem, the objective function is $L(\bx)$, where $\bx$ is a $d$-dimensional input variable; our goal is to use an algorithm to get the (local) minimum of $L(\bx)$.
To provide further precision, let's think about what happens when we move the ball a small amount $\Delta x_1$ in the $x_1$ direction, a small amount $\Delta x_2$ in the $x_2$ direction, \ldots, and a small amount $\Delta x_d$ in the $x_d$ direction. Calculus informs us of the variation in the objective function  $L(\bx)$ as follows:
$$
\Delta L(\bx) \approx \frac{\partial L}{\partial x_1}\Delta x_1 + \frac{\partial L}{\partial x_2}\Delta x_2 + \ldots + \frac{\partial L}{\partial x_d}\Delta x_d.
$$
In this sense, the challenge is to determine $\Delta x_1$, \ldots, $\Delta x_d$ in a manner that induces a negative change in $\Delta L(\bx)$, i.e., we'll make the objective function decrease, aiming for minimization.
Let $\Delta \bx=[\Delta x_1,\Delta x_2, \ldots , \Delta x_d]^\top$ represent the vector of  changes in $\bx$, and  $\nabla L(\bx) = \frac{\partial L(\bx)}{\partial \bx}=[\frac{\partial L}{\partial x_1},\frac{\partial L}{\partial x_2}, \ldots , \frac{\partial L}{\partial x_d}]^\top$ denote the gradient vector of $L(\bx)$ \footnote{Note the difference between $\Delta L(\bx)$ and $\nabla L(\bx)$.}. Then it follows that
$$
\Delta L(\bx) \approx \frac{\partial L}{\partial x_1}\Delta x_1 + \frac{\partial L}{\partial x_2}\Delta x_2 +\ldots  + \frac{\partial L}{\partial x_d}\Delta x_d = \nabla L(\bx)^\top \Delta \bx.
$$ 
In the  context of descent, our objective is to ensure that $\Delta L(\bx)$ is negative. This condition ensures that a step $\bx_{t+1} = \bx_t+\Delta \bx_t$ (from $t$-th iteration to $(t+1)$-th iteration) results in a decrease of the loss function $L(\bx_{t+1}) = L(\bx_t) + \Delta L(\bx_t)$, given that $\Delta L(\bx_t) \leq 0$.
It can be demonstrated that if the update step is defined as $\Delta \bx_t=-\eta \nabla L(\bx_t)$, where $\eta$ is the learning rate, the following relationship holds:
$$
\Delta  L(\bx_t) \approx -\eta \nabla L(\bx_t)^\top\nabla L(\bx_t) = -\eta\norm{\nabla L}_2^2 \leq 0. 
$$
To be precise,  $\Delta  L(\bx_t) < 0$ in the above equation; otherwise, we would have reached the optimal point with zero gradients.
This analysis confirms the validity of gradient descent. The update rule for the next parameter $\bx_{t+1}$ is given by:
$$
\bx_{t+1} = \bx_t - \eta \nabla L(\bx_t).
$$
This update rule will make the objective function drop to the minimum point steadily in a convex setting or local minima in a non-convex setting.

\index{Descent condition}
\index{Descent direction}
\index{Search direction}
\index{Taylor's formula}
\begin{remark}[Descent Condition]\label{remark:descent_condition}
In the above construction, we define $\Delta \bx_t = -\eta \nabla L(\bx_t)$, where $-\nabla L(\bx_t)$ is the \textit{descent direction} such that $\Delta L \approx -\eta \nabla L(\bx_t)^\top \nabla L(\bx_t) < 0$ (assuming $\nabla L(\bx_t) \neq \bzero $). More generally, any \textit{search direction} $\bd_t \in \real^d{\setminus}\{\bzero\}$ that satisfies the \textit{descent condition} can be chosen as the descent direction:
$$
\frac{d L(\bx_t + \eta \bd_t)}{d \eta}\bigg|_{\eta=0} = \nabla L(\bx_t)^\top \bd_t <0.
$$
In other words, according to  Taylor's formula (Appendix~\ref{appendix:taylor-expansion}, p.~\pageref{appendix:taylor-expansion}),
$$
L(\bx_t+\eta\bd_t) \approx L(\bx_t) + \eta \nabla L(\bx_t)^\top\bd_t 
$$
implies $L(\bx_t+\eta\bd_t)  < L(\bx_t)$ when $\eta$ is sufficiently small. When $\bd_t = -\nabla L(\bx_t)$, the descent direction is known as the \textit{steepest descent direction}. When the learning rate $\eta$ is not fixed and decided by exact line search, the method is called \textit{steepest descent method} (see Section~\ref{section:quadratic-in-steepestdescent}, p.~\pageref{section:quadratic-in-steepestdescent}).
\end{remark}

\index{Convex functions}
\index{Jensen's inequality}
\subsection*{Gradient Descent in Convex Problems}
We further consider the application of gradient descent in convex problems. The notion of convexity for a function is defined as follows.
\begin{definition}[Convex Functions]
A function $f: \sS \rightarrow \real$ defined on a convex set $\sS \subseteq \real^n$ is called convex if 
$$
f(\lambda \bx +(1-\lambda)\by) \leq \lambda f(\bx) +(1-\lambda) f(\by),
$$
where $\bx,\by\in \sS$, and $\lambda\in[0,1]$.
And the function $f$ is called strictly convex if 
$$
f(\lambda \bx +(1-\lambda)\by) < \lambda f(\bx) +(1-\lambda) f(\by),
$$
where $\bx\neq \by\in \sS$, and $\lambda\in(0,1)$.
\end{definition}

There are several inequalities in convex functions.
\begin{lemma}[Inequalities in Convex Functions]
A convex function satisfies the following inequalities \citep{beck2017first}.
\paragraph{Jensen's inequality.}
Let $f: \sS \rightarrow \real$ be a convex function defined on a convex set $\sS\subseteq \real^n$. Then, given any $\bx_1, \bx_2, \ldots, \bx_k\in\sS$ and $\blambda\in\Delta_k$,  it follows that 
$$
f\left(\sum_{i=1}^{k} \lambda_i\bx_i\right) \leq \sum_{i=1}^{k}\lambda_if(\bx_i).
$$
\paragraph{Gradient inequality.}
Suppose further $f$ is continuously differentiable. Then, given any $\bx,\by\in\sS$, $f$ is convex over $\sS$ if and only if 
$$
f(\bx) -f(\by) \leq  \nabla f(\bx)^\top (\bx-\by).
$$
Given any $\bx\neq \by\in\sS$, $f$ is strictly convex over $\sS$ if and only if 
$$
f(\bx) -f(\by) <  \nabla f(\bx)^\top (\bx-\by).
$$
\paragraph{Monotonicity of the gradient.} Suppose again $f$ is continuously differentiable. Then, given any $\bx,\by\in\sS$, $f$ is convex over $\sS$ if and only if 
$$
(\nabla f(\bx) - \nabla f(\by))^\top (\bx-\by)\geq 0.
$$
\end{lemma}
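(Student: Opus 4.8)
The plan is to prove the three parts in sequence, letting each later part lean on the earlier ones. For Jensen's inequality I would induct on $k$. The base case $k=2$ is exactly the two-point definition of convexity. For the inductive step, assuming $\lambda_k\neq 1$, I would write $\sum_{i=1}^{k}\lambda_i\bx_i = \lambda_k\bx_k + (1-\lambda_k)\sum_{i=1}^{k-1}\tfrac{\lambda_i}{1-\lambda_k}\bx_i$, note that the rescaled coefficients $\tfrac{\lambda_i}{1-\lambda_k}$ sum to one and hence lie in $\Delta_{k-1}$, apply the two-point definition once to peel off $\bx_k$, and then invoke the inductive hypothesis on the remaining $(k-1)$-term combination. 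The degenerate case $\lambda_k=1$ is immediate.

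For the gradient inequality I would handle the two implications separately. Forward (convexity $\Rightarrow$ inequality): fix $\bx,\by$, and for $\lambda\in(0,1]$ apply convexity to $\bx+\lambda(\by-\bx)=(1-\lambda)\bx+\lambda\by$ to get $f(\bx+\lambda(\by-\bx))-f(\bx)\le\lambda\big(f(\by)-f(\bx)\big)$; dividing by $\lambda$ and letting $\lambda\to0^+$, the left side converges to the directional derivative, which in the continuously differentiable setting equals $\nabla f(\bx)^\top(\by-\bx)$ by the directional-derivative/gradient relationship recorded before Theorem~\ref{theorem:linear_approx}. Rearranging gives $f(\bx)-f(\by)\le\nabla f(\bx)^\top(\bx-\by)$. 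Converse (inequality $\Rightarrow$ convexity): set $\bz=\lambda\bx+(1-\lambda)\by$ and use the assumed inequality in its tangent-underestimator form $f(\ba)\ge f(\bz)+\nabla f(\bz)^\top(\ba-\bz)$ (obtained by taking $\bz$ as the differentiation point) at $\ba=\bx$ and $\ba=\by$; combining the two with weights $\lambda$ and $1-\lambda$ makes the gradient terms cancel, since $\lambda\bx+(1-\lambda)\by-\bz=\bzero$, leaving precisely the definition of convexity.

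The strict statements are where I expect the first genuine obstacle: a limit of strict inequalities is only weakly strict, so the forward argument cannot simply pass to the limit. Instead I would apply the already-proved non-strict gradient inequality from base point $\bx$ to the midpoint $\bw=\tfrac12(\bx+\by)$, obtaining $f(\bw)\ge f(\bx)+\tfrac12\nabla f(\bx)^\top(\by-\bx)$, and chain it against the strict two-point inequality $f(\bw)<\tfrac12 f(\bx)+\tfrac12 f(\by)$; this yields the strict gradient inequality. The strict converse repeats the non-strict converse with strict inequalities, valid because $\bz\neq\bx$ and $\bz\neq\by$ whenever $\bx\neq\by$ and $\lambda\in(0,1)$.

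Finally, for monotonicity of the gradient, the forward direction is a one-line consequence: add the gradient inequality written at $\bx$ and at $\by$, namely $f(\bx)-f(\by)\le\nabla f(\bx)^\top(\bx-\by)$ and $f(\by)-f(\bx)\le\nabla f(\by)^\top(\by-\bx)$, and the function values cancel to give $(\nabla f(\bx)-\nabla f(\by))^\top(\bx-\by)\ge0$. The converse is the hardest step and the technical crux of the lemma. Here I would introduce the scalar function $g(t)=f(\bx+t(\by-\bx))$ on $[0,1]$, with $g'(t)=\nabla f(\bx+t(\by-\bx))^\top(\by-\bx)$, and write $f(\by)-f(\bx)=\int_0^1 g'(t)\,dt$ by the fundamental theorem of calculus (legitimate since $g'$ is continuous). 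Applying monotonicity to the pair $\bx+t(\by-\bx)$ and $\bx$ and dividing by $t>0$ gives $g'(t)\ge\nabla f(\bx)^\top(\by-\bx)$ for all $t$, so the integral is bounded below by $\nabla f(\bx)^\top(\by-\bx)$; this recovers the gradient inequality, which by the previous part is equivalent to convexity. The integral representation together with the sign bookkeeping when dividing by $t$ is the delicate part to get right.
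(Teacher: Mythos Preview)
Your proof plan is correct and complete in all three parts; the induction for Jensen, the limiting argument and tangent-line converse for the gradient inequality (including the midpoint trick to recover strictness without passing a strict inequality through a limit), and the symmetric addition plus one-variable integral for monotonicity are all standard and sound.

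The paper, however, does not prove this lemma at all. It is stated with a citation to \citet{beck2017first} and then used immediately; no proof environment follows. So there is nothing to compare your argument against: you are supplying a full proof where the paper simply imports the result from the literature. Your write-up would be a genuine addition rather than a duplication.
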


If the objective function $L(\bx)$ is (continuously differentiable) convex, then the relationship $\nabla L(\bx_t)^\top(\bx_{t+1}-\bx_{t})\geq 0$ implies $L(\bx_{t+1}) \geq L(\bx_t)$. This can be derived from the gradient inequality of a continuously differentiable convex function, i.e., $L(\bx_{t+1})- L(\bx_{t})\geq L(\bx_t)^\top(\bx_{t+1}-\bx_t)$. 

In this sense, to ensure a reduction in the objective function, it is imperative to ensure  $\nabla L(\bx_t)^\top (\bx_{t+1}-\bx_{t})\leq 0$. 
In the context of  gradient descent, the choice of $\Delta \bx_t = \bx_{t+1}-\bx_t$ aligns with the negative gradient $-\nabla L(\bx_t)$. However, there are many other descent methods, such as \textit{steepest descent}, \textit{normalized steepest descent}, \textit{Newton step}, and so on. The main idea of these methods are undertaken to ensure $\nabla L(\bx_t)^\top(\bx_{t+1}-\bx_{t})= \nabla L(\bx_t)^\top \Delta \bx_t \leq 0$ if the objective function is convex.

\index{Greedy search}
\section{Gradient Descent by Greedy Search}\label{section:als-gradie-descent-taylor}

We now consider  the greedy search such that $\bx_{t+1}    \leftarrow \mathop{\arg \min}_{\bx_t} L(\bx_t)$. Suppose we want to approximate $\bx_{t+1}$ by a linear update on $\bx_t$, the expression takes the following form:
$$
\bx_{t+1} = \bx_t + \eta \bv.
$$
The problem now revolves around finding a solution for $\bv$ to minimize the expression:
$$
\bv=\mathop{\arg \min}_{\bv} L(\bx_{t} + \eta \bv) .
$$
By Taylor's formula (Appendix~\ref{appendix:taylor-expansion}, p.~\pageref{appendix:taylor-expansion}), $L(\bx_t  + \eta \bv)$ can be approximated by 
$$
L(\bx_t + \eta \bv) \approx L(\bx_t ) + \eta \bv^\top \nabla  L(\bx_t ),
$$
when $\eta$ is sufficiently small. Considering the condition $\norm{\bv}=1$ for positive $\eta$, we formulate the descent search as:
$$
\bv=\mathop{\arg \min}_{\norm{\bv}=1} L(\bx_{t} + \eta \bv) \approx\mathop{\arg \min}_{\norm{\bv}=1}
\left\{L(\bx_{t} ) + \eta \bv^\top \nabla  L(\bx_{t} )\right\}.
$$
This is known as the \textit{greedy search}. This process leads to the optimal $\bv$ determined by
$$
\bv = -\frac{\nabla L(\bx_{t} )}{\norm{\nabla L(\bx_{t} )}},
$$
i.e., $\bv$ lies in the opposite direction of $\nabla L(\bx_{t} )$. Consequently, the update for $\bx_{t+1}$ is reasonably expressed as:
$$
\bx_{t+1} =\bx_{t} + \eta \bv = \bx_{t} - \eta \frac{\nabla L(\bx_{t})}{\norm{\nabla L(\bx_{t} )}},
$$
which is usually called the \textit{gradient descent}, as aforementioned. If we further absorb the denominator into the step size $\eta$, the gradient descent can be simplified to the trivial way:
$$
\bx_{t+1} = \bx_{t} - \eta {\nabla L(\bx_{t})}.
$$

\section{Geometrical Interpretation of Gradient Descent} 
\begin{lemma}[Direction of Gradients]\label{lemm:direction-gradients}
An important fact is that gradients are orthogonal to level curves (also known as level surfaces).
\end{lemma}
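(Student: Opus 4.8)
The plan is to exploit the defining property of a level set together with the multivariate chain rule. Fix a point $\bx_0$ and set $c = L(\bx_0)$, so that the level curve (or level surface) passing through $\bx_0$ is the set $\{\bx\in\real^d : L(\bx) = c\}$. Here ``orthogonal to the level set'' is taken to mean orthogonal to every vector tangent to that set at $\bx_0$, so the first task is to produce those tangent vectors explicitly.

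First I would consider an arbitrary smooth curve $\bx(t)$ lying entirely within the level set and passing through $\bx_0$ at $t=0$; that is, $L(\bx(t)) = c$ for all $t$ in a neighborhood of $0$, with $\bx(0) = \bx_0$. By definition, the velocity $\bx^\prime(0)$ is a tangent vector to the level set at $\bx_0$, and as the curve ranges over all such admissible curves these velocities sweep out the tangent directions of the level set.

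Next I would differentiate the identity $L(\bx(t)) = c$ with respect to $t$. Since the left-hand side is the composition of the differentiable function $L$ with the curve $\bx(t)$, the chain rule gives
$$
\frac{d}{dt} L(\bx(t)) = \nabla L(\bx(t))^\top \bx^\prime(t) = 0,
$$
the last equality holding because $c$ is constant. Evaluating at $t=0$ yields $\nabla L(\bx_0)^\top \bx^\prime(0) = 0$, which says precisely that $\nabla L(\bx_0)$ is orthogonal to the tangent vector $\bx^\prime(0)$. As this holds for every curve through $\bx_0$ lying in the level set, the gradient is perpendicular to every tangent direction, i.e., orthogonal to the level curve/surface.

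The main obstacle is not the computation, which is a one-line chain-rule application, but making the geometric claim rigorous: one must argue that the velocities $\bx^\prime(0)$ obtained this way actually fill out the entire tangent space of the level set rather than a proper subset. In the regular case $\nabla L(\bx_0) \neq \bzero$, this follows from the implicit function theorem, which shows the level set is locally a smooth $(d-1)$-dimensional surface whose tangent space is spanned by such velocities. At the level of rigor of the surrounding text, I would either invoke this fact or simply adopt ``tangent to the level curve'' to mean the velocity of a curve lying in the level set, and present the chain-rule identity above as the desired orthogonality.
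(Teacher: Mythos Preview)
Your proposal is correct and takes essentially the same approach as the paper: parametrize the level set by a curve and differentiate the constant composite $L(\bx(t))=c$ via the chain rule to obtain $\nabla L(\bx_0)^\top \bx'(0)=0$. The paper additionally warms up with a two-dimensional version using $y=y(x)$ as an implicit function before doing the same $t$-parametrization in $\real^n$, and it does not dwell on the implicit-function-theorem caveat you raise; otherwise the arguments coincide.
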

\begin{proof}[of Lemma~\ref{lemm:direction-gradients}]
This is equivalent to proving that the gradient is orthogonal to the tangent of the level curve. For simplicity, let's first look at the two-dimensional case. Suppose the level curve has the form $f(x,y)=c$. This implicitly establishes a relationship between $x$ and $y$ such that $y=y(x)$, where $y$ can be thought of as a function of $x$. Therefore, the level curve can be written as 
$$
f(x, y(x)) = c.
$$
The chain rule indicates
$$
\frac{\partial f}{\partial x} \underbrace{\frac{dx}{dx}}_{=1} + \frac{\partial f}{\partial y} \frac{dy}{dx}=0.
$$
Therefore, the gradient is perpendicular to the tangent:
$$
\left\langle \frac{\partial f}{\partial x}, \frac{\partial f}{\partial y}\right\rangle
\cdot 
\left\langle \frac{dx}{dx}, \frac{dy}{dx}\right\rangle=0.
$$
Let's now treat the problem in full generality, consider the level curve of a vector $\bx\in \real^n$: $f(\bx) = f(x_1, x_2, \ldots, x_n)=c$. Each variable $x_i$ can be regarded as a function of a variable $t$ on the level curve $f(\bx)=c$: $f(x_1(t), x_2(t), \ldots, x_n(t))=c$. Differentiate the equation with respect to $t$ by chain rule:
$$
\frac{\partial f}{\partial x_1} \frac{dx_1}{dt} + \frac{\partial f}{\partial x_2} \frac{dx_2}{dt}
+\ldots + \frac{\partial f}{\partial x_n} \frac{dx_n}{dt}
=0.
$$
Therefore, the gradients is perpendicular to the tangent in $n$-dimensional case:
$$
\left\langle \frac{\partial f}{\partial x_1}, \frac{\partial f}{\partial x_2}, \ldots, \frac{\partial f}{\partial x_n}\right\rangle
\cdot 
\left\langle \frac{dx_1}{dt}, \frac{dx_2}{dt}, \ldots \frac{dx_n}{dt}\right\rangle=0.
$$
This completes the proof.
\end{proof}
The lemma above offers a profound geometrical interpretation of gradient descent. 
In the pursuit of minimizing a convex function $L(\bx)$, the gradient descent strategically navigates in the direction opposite to the gradient that can decrease the loss. Figure~\ref{fig:alsgd-geometrical} depicts a two-dimensional scenario, where $-\nabla L(\bx)$ pushes the loss to decrease for the convex function $L(\bx)$. 

\begin{figure}[h]
\centering  
\vspace{-0.35cm} 
\subfigtopskip=2pt 
\subfigbottomskip=2pt 
\subfigcapskip=-5pt 
\subfigure[A two-dimensional convex function $L(\bx)$.]{\label{fig:alsgd1}
	\includegraphics[width=0.47\linewidth]{./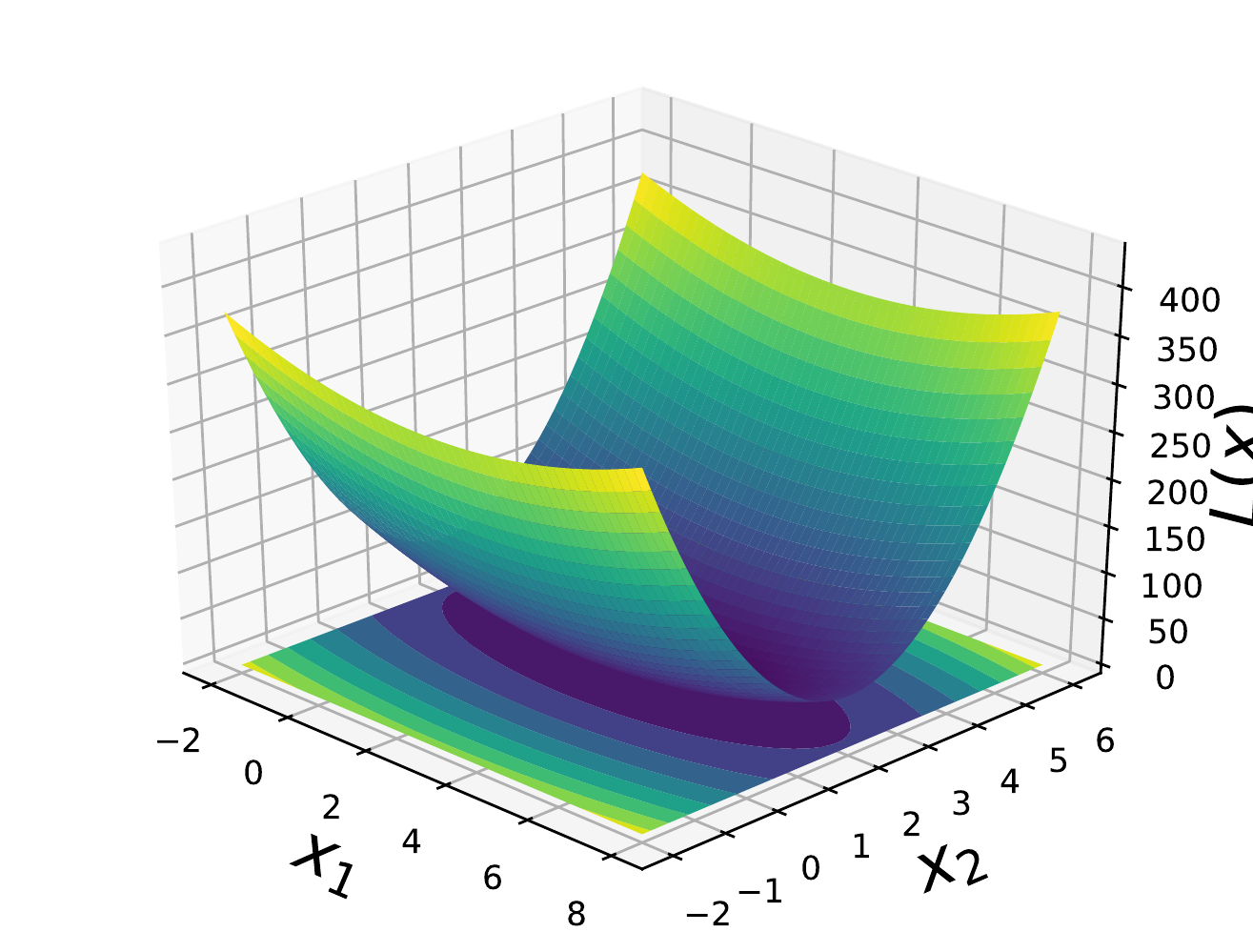}}
\subfigure[$L(\bx)=c$ is a constant.]{\label{fig:alsgd2}
	\includegraphics[width=0.44\linewidth]{./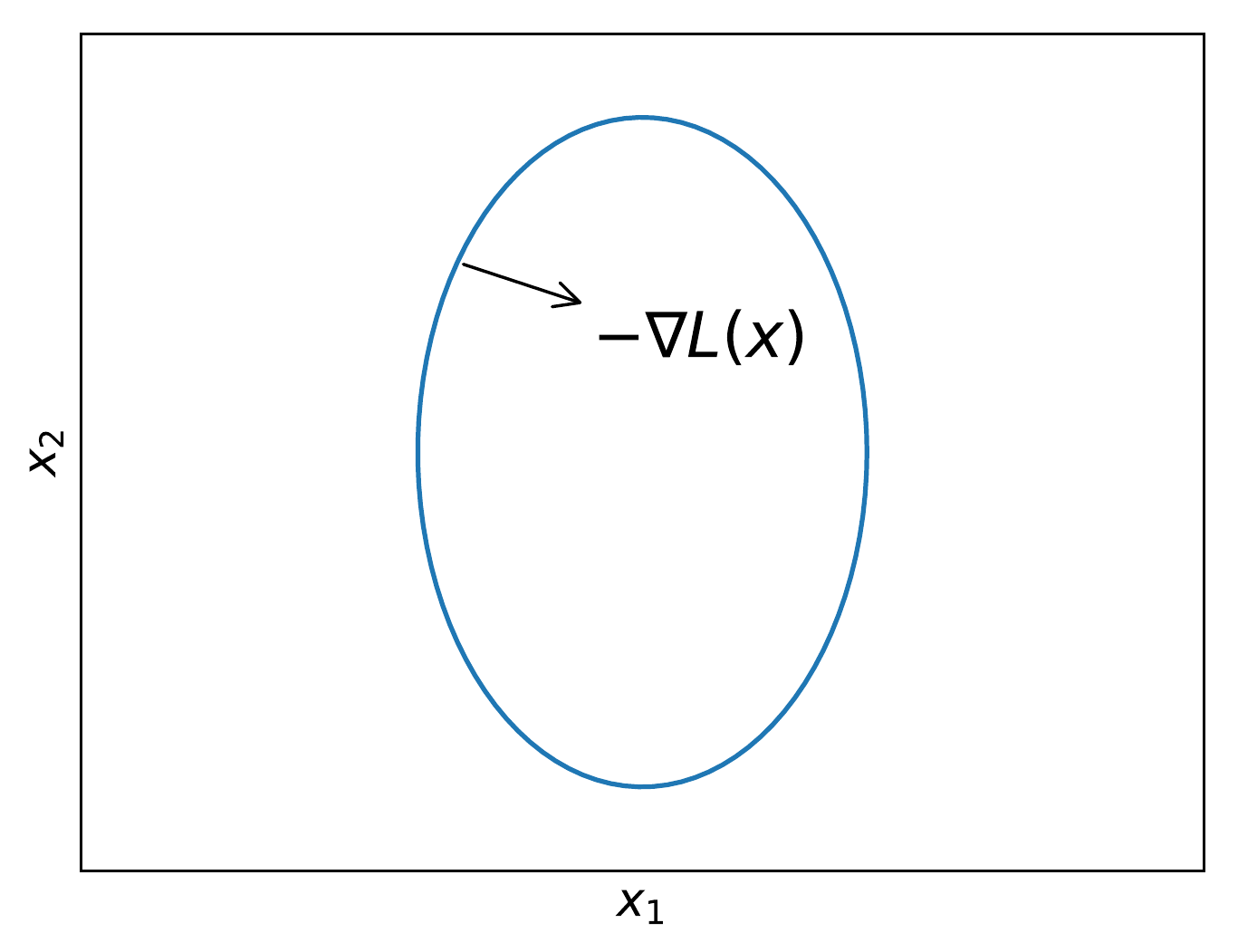}}
\caption{Figure~\ref{fig:alsgd1} shows a convex function surface plot and its contour plot (\textcolor{mylightbluetext}{blue}=low, \textcolor{mydarkyellow}{yellow}=high), where the upper graph is the surface plot, and the lower one is the projection of it (i.e., contour). Figure~\ref{fig:alsgd2}: $-\nabla L(\bx)$ pushes the loss to decrease for the convex function $L(\bx)$.}
\label{fig:alsgd-geometrical}
\end{figure}

\index{Regularization}
\section{Regularization: A Geometrical Interpretation}
\begin{figure}[h]
\centering
\includegraphics[width=0.95\textwidth]{./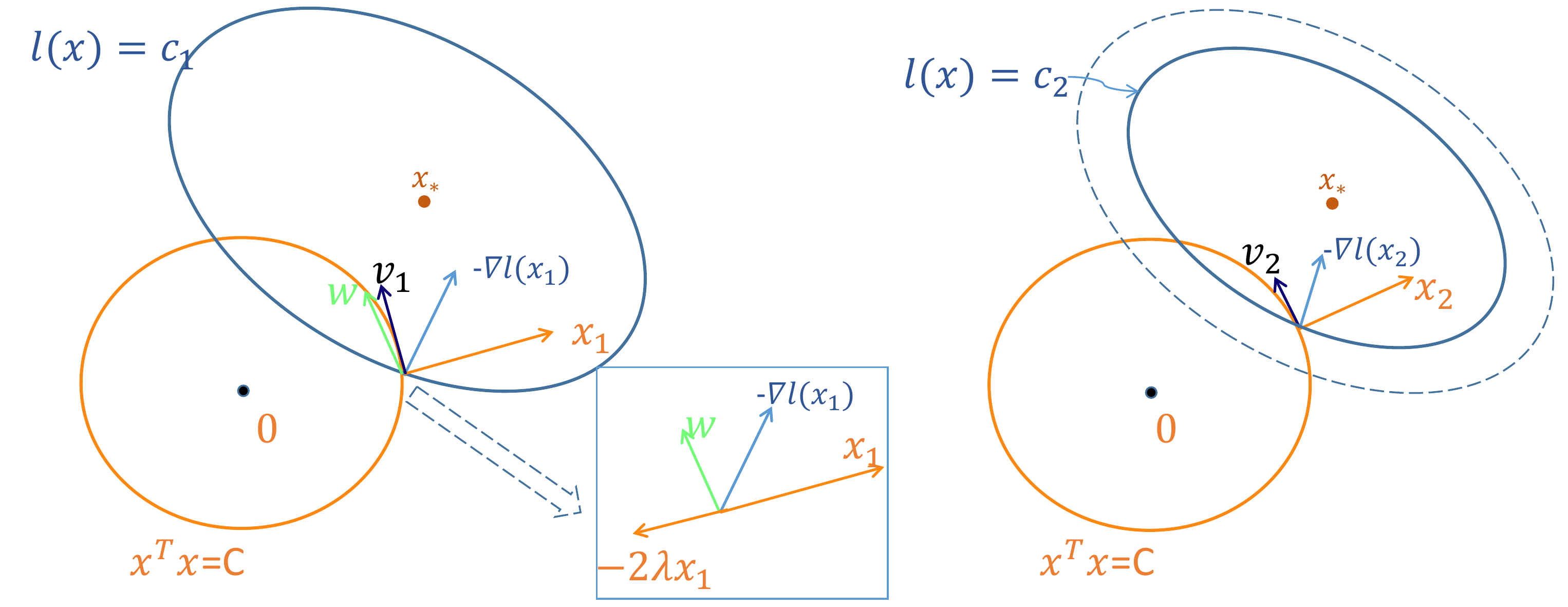}
\caption{Constrained gradient descent with $\bx^\top\bx\leq C$. The \textcolor{mydarkgreen}{green} vector $\bw$ is the projection of $\bv_1$ into $\bx^\top\bx\leq C$ where $\bv_1$ is the component of $-\nabla l(\bx)$ perpendicular to $\bx_1$. The right picture is the next step after the update in the left picture. $\bx_\star$ denotes the optimal solution of \{$\min l(\bx)$\}.}
\label{fig:alsgd3}
\end{figure}
The gradient descent also unveils the geometrical significance of regularization. 
To avoid confusion, we denote the loss function without regularization by $l(\bz)$ and the loss with regularization by $L(\bx) = \l(\bx)+\lambda_x \norm{\bx}^2$, where $l(\bx): \real^d \rightarrow \real$ (this notation is exclusive to this section). When minimizing $l(\bx)$, the descent method will search in $\real^d$ for a solution. 
However, in machine learning, an exhaustive search across the entire space may lead to overfitting. A partial remedy involves searching within a subset of the vector space, such as searching in $\bx^\top\bx < C$ for some constant $C$. That is,
$$
\mathop{\arg\min}_{\bx} \,\, l(\bx), \gap  \text{s.t.,} \gap \bx^\top\bx\leq C.
$$
We will see that this constrained search helps prevent overfitting by introducing regularization through the addition of a penalty term in the optimization process.
In the previous discussion, a trivial gradient descent approach proceeds in the direction of $-\nabla l(\bx)$,  updating $\bx$ by $\bx\leftarrow \bx-\eta \nabla l(\bx)$ for a small step size $\eta$. 
When the level curve is $l(\bx)=c_1$ and the descent approach is situated at $\bx=\bx_1$, where $\bx_1$ is the intersection of $\bx^\top\bx=C$ and $l(\bx)=c_1$, the descent direction $-\nabla l(\bx_1)$ will be perpendicular to the level curve of $l(\bx_1)=c_1$, as shown in the left picture of Figure~\ref{fig:alsgd3}. 
However, if we further restrict that the optimal value can only be in the subspace $\bx^\top\bx\leq C$, the trivial descent direction $-\nabla l(\bx_1)$ will lead $\bx_2=\bx_1-\eta \nabla l(\bx_1)$ outside of $\bx^\top\bx\leq C$. 
To address this,  the step $-\nabla l(\bx_1)$ is decomposed  into 
$$
-\nabla l(\bx_1) = a\bx_1 + \bv_1,
$$ 
where $a\bx_1$ is the component perpendicular to the curve of $\bx^\top\bx=C$, and $\bv_1$ is the component parallel to the curve of $\bx^\top\bx=C$. Keeping only the step $\bv_1$, then the update 
$$
\bx_2 = \text{project}(\bx_1+\eta \bv_1) = \text{project}\left(\bx_1 + \eta
\underbrace{(-\nabla l(\bx_1) -a\bx_1)}_{\bv_1}\right)\footnote{where the project($\bx$) 
will project the vector $\bx$ to the closest point inside $\bx^\top\bx\leq C$. Notice here the direct update $\bx_2 = \bx_1+\eta \bv_1$ can still make $\bx_2$ outside the curve of $\bx^\top\bx\leq C$.}
$$ 
will lead to a smaller loss from $l(\bx_1)$ to $l(\bx_2)$ while still satisfying the prerequisite of $\bx^\top\bx\leq C$. 
This technique is known as the \textit{projection gradient descent}. It is not hard to see that the update $\bx_2 = \text{project}(\bx_1+\eta \bv_1)$ is equivalent to finding a vector $\bw$ (depicted in \textcolor{mydarkgreen}{green} vector in the left panel of Figure~\ref{fig:alsgd3}) such that $\bx_2=\bx_1+\bw$ lies inside the curve of $\bx^\top\bx\leq C$. Mathematically, the $\bw$ can be obtained as $-\nabla l(\bx_1) -2\lambda \bx_1$ for some $\lambda$, as shown in the middle panel of Figure~\ref{fig:alsgd3}. This aligns with the negative gradient of $L(\bx)=l(\bx)+\lambda\norm{\bx}^2$ such that 
$$
-\nabla L(\bx) = -\nabla l(\bx) - 2\lambda \bx,
$$
and 
$$
\begin{aligned}
\bw &= -\nabla L(\bx) \leadto \bx_2 &= \bx_1+ \bw =\bx_1 -  \nabla L(\bx).
\end{aligned}
$$
And in practice, a small step size $\eta$ can be applied to prevent crossing  the curve boundary of $\bx^\top\bx\leq C$:
$$
\bx_2  =\bx_1 -  \eta\nabla L(\bx).
$$

\index{Quadratic form}
\index{Fisher information matrix}
\index{Positive definite}
\index{Positive semidefinite}
\index{Symmetry}
\section{Quadratic Form in Gradient Descent}\label{section:quadratic_vanilla_GD}

We delve deeper into (vanilla) gradient descent applied to the simplest model, the convex quadratic function,
\begin{equation}\label{equation:quadratic-form-general-form}
L(\bx) = \frac{1}{2} \bx^\top \bA \bx - \bb^\top \bx + c, \gap \bx\in \real^d,
\end{equation}
where $\bA\in \real^{d\times d}$, $\bb \in \real^d$, and $c$ is a scalar constant. Though the quadratic form in Eq.~\eqref{equation:quadratic-form-general-form} is an extremely simple model, it is rich enough to approximate many other functions, e.g., the Fisher information matrix \citep{amari1998natural}, and capture key features of pathological curvature. The gradient of $L(\bx)$ at point $\bx$ is given by 
\begin{equation}\label{equation:unsymmetric_gd_gradient}
\nabla L(\bx) = \frac{1}{2} (\bA^\top +\bA) \bx - \bb.
\end{equation}
The unique minimum of the function is the solution of the linear system $\frac{1}{2} (\bA^\top +\bA) \bx=  \bb $:
\begin{equation}\label{equation:gd_solution_unsymmetric}
\bx_\star = 2(\bA^\top +\bA)^{-1}\bb.
\end{equation}
If $\bA$ is symmetric (for most of our discussions, we will restrict to symmetric $\bA$ or even \textit{positive definite}, see definition below), the equation reduces to 
\begin{equation}\label{equation:symmetric_gd_gradient}
\nabla L(\bx) = \bA \bx - \bb.
\end{equation}
Then the unique minimum of the function is the solution of the linear system $\bA\bx=\bb$~\footnote{This represents the \textit{first-order optimality condition} for local optima points. Note the proof of this condition for multivariate functions heavily relies on the first-order optimality conditions for one-dimensional functions, which is also known as the \textit{Fermat's theorem}. Refer to Exercise~\ref{problem:fist_opt}.}, where $\bA,\bb$ are known matrix or vector, $\bx$ is an unknown vector; and the optimal point of $\bx$ is thus given by 
$$
\bx_\star = \bA^{-1}\bb
$$
if $\bA$ is nonsingular.

\index{Fermat's theorem}
\begin{exercise}\label{problem:fist_opt}
\textbf{First-order optimality condition for local optima points.} 
Consider the \textit{Fermat's theorem}: for a one-dimensional function $g(\cdot)$ defined and differentiable over an interval ($a, b$), if a point $x^\star\in(a,b)$ is a local maximum or minimum, then $g^\prime(x^\star)=0$. 
Prove the first-order optimality conditions for multivariate functions based on this Fermat's theorem for one-dimensional functions.
That is, consider function $f: \sS\rightarrow \real$ as a function defined on a set $\sS\subseteq \real^n$. Suppose that $\bx^\star\in\text{int}(\sS)$, i.e., in the interior point of the set, is a local optimum point and that all the partial derivatives (Definition~\ref{definition:partial_deri}, p.~\pageref{definition:partial_deri}) of $f$ exist at $\bx^\star$. Then $\nabla f(\bx^\star)=\bzero$, i.e., the gradient vanishes at all local optimum points. (Note that, this optimality condition is a necessary condition; however, there could be vanished points which are not local maximum or minimum point.)
\end{exercise}

\paragraph{Symmetric matrices.} A symmetric matrix can be further categorized into positive definite, positive semidefinite, negative definite, negative semidefinite, and indefinite types as follows.
\begin{definition}[Positive Definite and Positive Semidefinite\index{Positive definite}\index{Positive semidefinite}]\label{definition:psd-pd-defini}
A matrix $\bA\in \real^{n\times n}$ is considered positive definite (PD) if $\bx^\top\bA\bx>0$ for all nonzero $\bx\in \real^n$.
And a matrix $\bA\in \real^{n\times n}$ is called positive semidefinite (PSD) if $\bx^\top\bA\bx \geq 0$ for all $\bx\in \real^n$. 
\footnote{
In this book a positive definite or a semidefinite matrix is always assumed to be symmetric, i.e., the notion of a positive definite matrix or semidefinite matrix is only interesting for symmetric matrices.
}
\footnote{Similarly, a complex matrix $\bA$ is said to be \textit{Hermitian positive definite} (HSD), if $\bA$ is Hermitian  and $\bz^\ast\bA\bz>0$ for all $\bz\in\complex^n$ with $\bz\neq \bzero$.}
\footnote{A symmetric matrix $\bA\in\real^{n\times n}$ is called \textit{negative definite} (ND) if $\bx^\top\bA\bx<0$ for all nonzero $\bx\in\real^n$; 
a symmetric matrix $\bA\in\real^{n\times n}$ is called \textit{negative semidefinite} (NSD) if $\bx^\top\bA\bx\leq 0$ for all $\bx\in\real^n$;
and a symmetric matrix $\bA\in\real^{n\times n}$ is called \textit{indefinite} (ID) if there exist $\bx$ and $\by\in\real^n$ such that $\bx^\top\bA\bx<0$ and $\by^\top\bA\by>0$.
}
\end{definition}

\begin{lemma}[Positive Definite Properties]
Given a negative definite matrix $\bA$, then $-\bA$ is a positive definite matrix; if $\bA$ is negative semidefintie matrix, then $-\bA$ is positive semidefinite.
Positive definite, positive semidefinite, and indefinite matrices admit the following properties.
\paragraph{Eigenvalue.} A matrix  is positive definite if and only if it has exclusively \textit{positive eigenvalues}. Similarly, a matrix  is positive semidefinite if and only if it exhibits solely \textit{nonnegative eigenvalues}. And a matrix is indefinite if and only if it possesses at least one positive eigenvalue and at least one negative eigenvalue.

\paragraph{Diagonals.} The diagonal elements of a positive definite matrix are all \textit{positive}. And similarly, the diagonal elements of a positive semidefinite matrix are all \textit{nonnegative}. And the diagonal elements of a indefinite matrix contains at least one positive diagonal and at least one negative diagonal.
\end{lemma}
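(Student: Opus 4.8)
The plan is to lean throughout on the standing convention (stated in Definition~\ref{definition:psd-pd-defini}) that every matrix in question is symmetric, so that the spectral theorem (Theorem~\ref{theorem:spectral_theorem}) furnishes real eigenvalues together with an orthonormal eigenbasis. With that in hand the lemma splits into three essentially independent pieces. The sign-flip claims are immediate from the definition of the quadratic form: if $\bA$ is negative definite, then $\bx^\top(-\bA)\bx = -\bx^\top\bA\bx > 0$ for every nonzero $\bx$, so $-\bA$ is positive definite, and replacing ``$>$'' by ``$\geq$'' disposes of the semidefinite case verbatim.

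For the eigenvalue characterization I would diagonalize. Writing $\bA = \bQ\bLambda\bQ^\top$ with $\bQ$ orthogonal and $\bLambda = \diag(\lambda_1,\ldots,\lambda_n)$, and setting $\by = \bQ^\top\bx$, the invertibility of $\bQ$ means $\bx$ ranges over $\real^n{\setminus}\{\bzero\}$ exactly as $\by$ does, while
$$
\bx^\top\bA\bx = \by^\top\bLambda\by = \sum_{i=1}^{n}\lambda_i y_i^2.
$$
Every equivalence then drops out of this single identity. If all $\lambda_i>0$ the sum is strictly positive whenever $\by\neq\bzero$, giving positive definiteness; conversely, probing the form with an eigenvector yields $\bu_i^\top\bA\bu_i = \lambda_i\norm{\bu_i}^2$, which forces $\lambda_i>0$ under positive definiteness. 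The nonnegative case is identical with ``$\geq$''. For indefiniteness, a negative value of the form forces some $\lambda_i<0$ and a positive value forces some $\lambda_j>0$ through the displayed sum, while the eigenvectors attached to one positive and one negative eigenvalue supply the two witnesses needed for the converse.

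The diagonal statements reduce to testing the quadratic form against standard basis vectors, since $a_{ii} = \be_i^\top\bA\be_i$: positive definiteness then yields $a_{ii}>0$ and positive semidefiniteness yields $a_{ii}\geq 0$ at once. The main obstacle — and the single place where I would depart from the stated phrasing — is the indefinite-diagonal claim. What follows cleanly from $a_{ii}=\be_i^\top\bA\be_i$ is the \emph{converse}: any matrix possessing both a positive and a negative diagonal entry is automatically indefinite. The forward implication as written fails, however, since the symmetric matrix $\left[\begin{smallmatrix}0&1\\1&0\end{smallmatrix}\right]$ has eigenvalues $\pm 1$, hence is indefinite by the eigenvalue characterization, yet both of its diagonal entries vanish. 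I would therefore prove the corrected statement — either phrasing this bullet as the (valid) converse implication or restricting it to matrices with nonzero diagonal — while the positive and semidefinite diagonal claims stand exactly as given.
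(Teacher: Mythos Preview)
Your argument is correct and, in fact, more careful than the paper itself: the paper does not supply a proof here at all but merely declares the lemma ``trivial'' and defers to \citet{lu2022matrix}. Your use of the spectral decomposition $\bA=\bQ\bLambda\bQ^\top$ together with the change of variables $\by=\bQ^\top\bx$ is the standard route, and the probes $\bu_i^\top\bA\bu_i=\lambda_i\norm{\bu_i}^2$ and $a_{ii}=\be_i^\top\bA\be_i$ are exactly what one expects.

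More importantly, you are right that the indefinite-diagonal claim, as stated in the lemma, is false: the symmetric matrix $\begin{bmatrix}0&1\\1&0\end{bmatrix}$ has eigenvalues $\pm 1$ and is therefore indefinite, yet both diagonal entries vanish. The paper simply asserts this claim without proof or caveat, so your counterexample is a genuine correction rather than a divergence in method. Your proposed fix---proving only the valid converse (a positive and a negative diagonal entry force indefiniteness), or restricting to matrices with nonzero diagonal---is the appropriate way to salvage that bullet.
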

The proof for the lemma is trivial and can be found in \citet{lu2022matrix}

\begin{figure}[htp]
\centering  
\vspace{-0.35cm} 
\subfigtopskip=2pt 
\subfigbottomskip=2pt 
\subfigcapskip=-5pt 
\subfigure[Positive definite matrix: $\bA = \begin{bmatrix}
	200 & 0 \\ 0 & 200
\end{bmatrix}$.]{\label{fig:quadratic_PD}
	\includegraphics[width=0.485\linewidth]{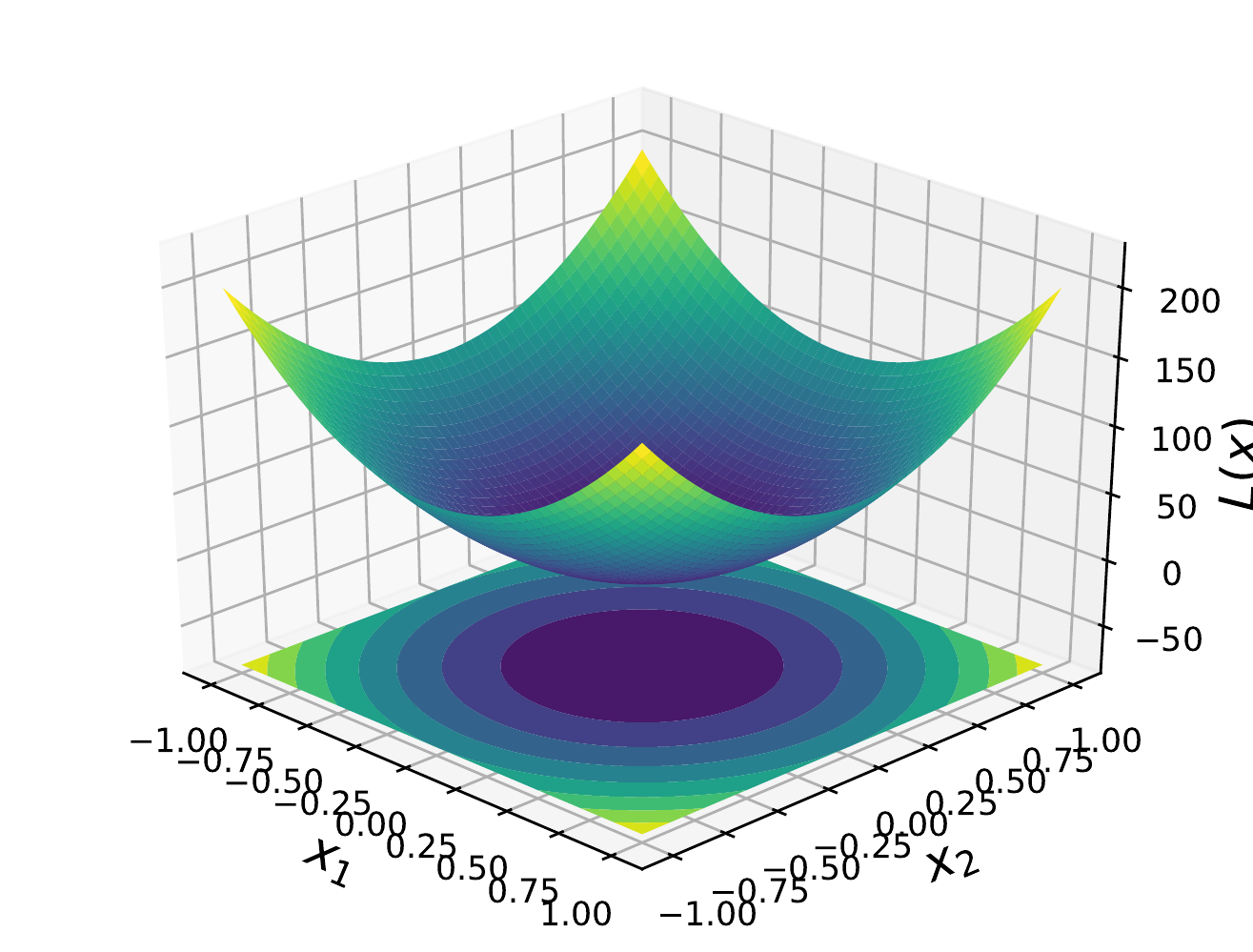}}
\subfigure[Negative definite matrix: $\bA = \begin{bmatrix}
	-200 & 0 \\ 0 & -200
\end{bmatrix}$.]{\label{fig:quadratic_ND}
	\includegraphics[width=0.485\linewidth]{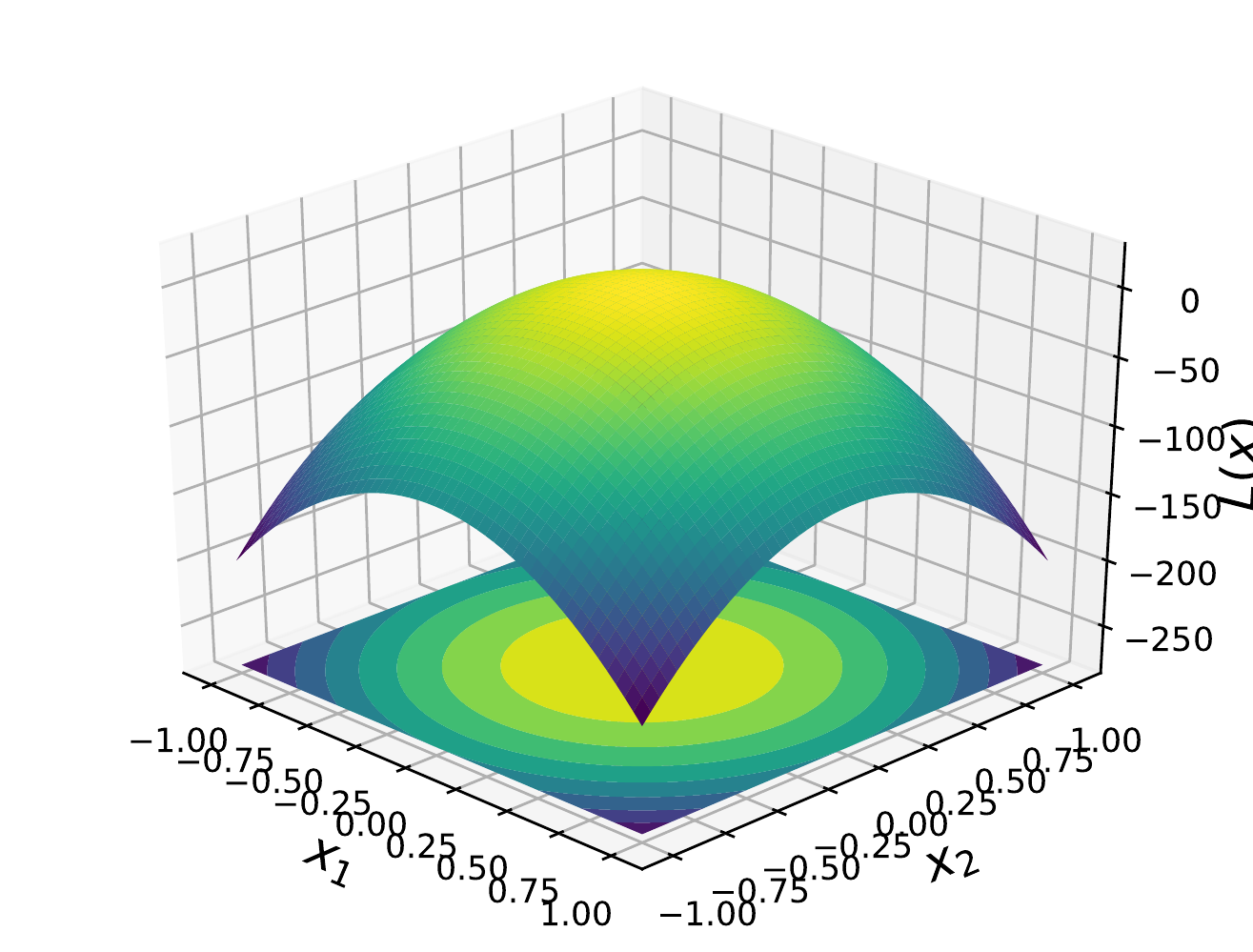}}
\subfigure[Semidefinite matrix: $\bA = \begin{bmatrix}
	200 & 0 \\ 0 & 0
\end{bmatrix}$. A line runs through the bottom of the valley is the set of solutions.]{\label{fig:quadratic_singular}
	\includegraphics[width=0.485\linewidth]{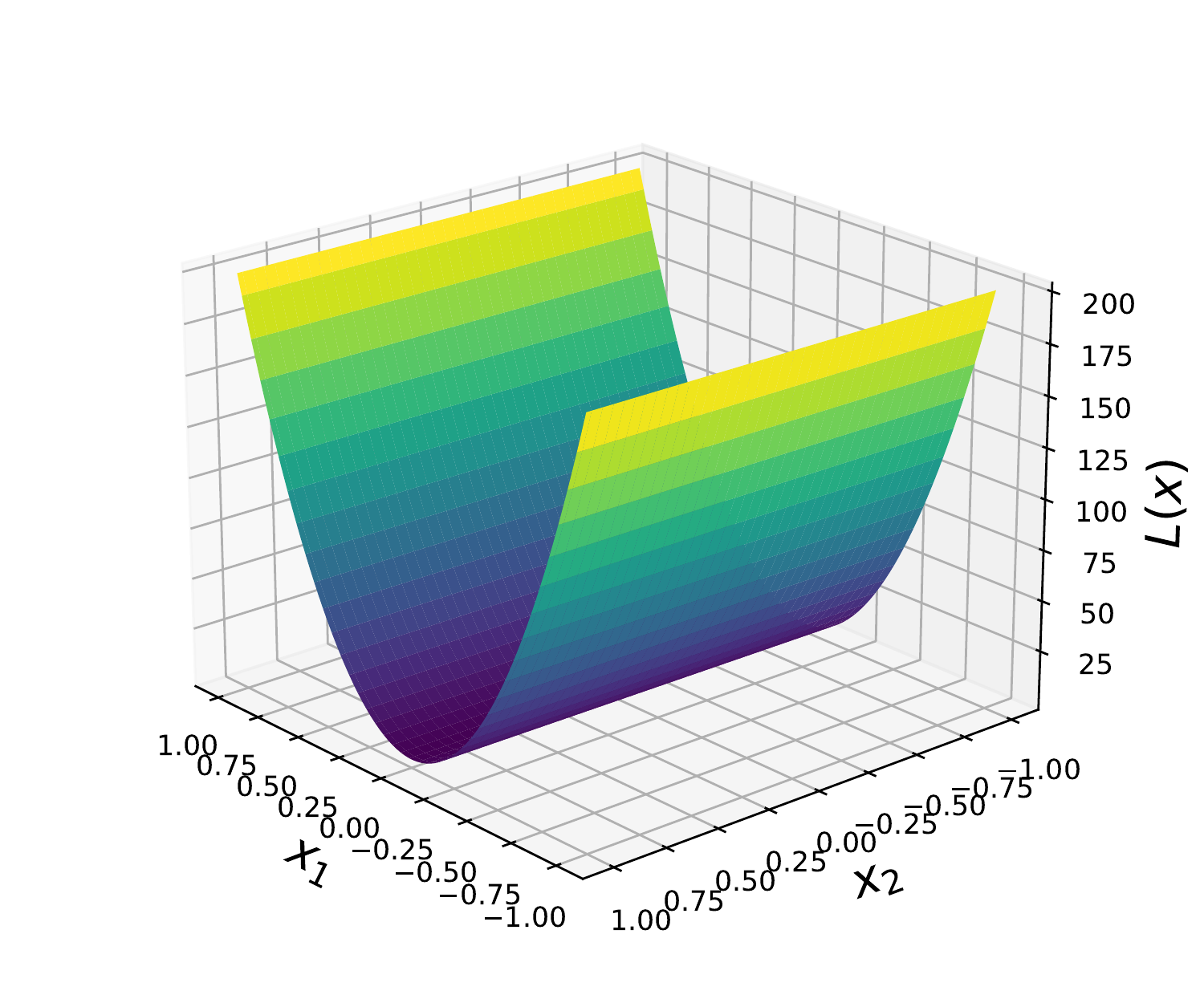}}
\subfigure[Indefinte matrix: $\bA = \begin{bmatrix}
	200 & 0 \\ 0 & -200
\end{bmatrix}$.]{\label{fig:quadratic_saddle}
	\includegraphics[width=0.485\linewidth]{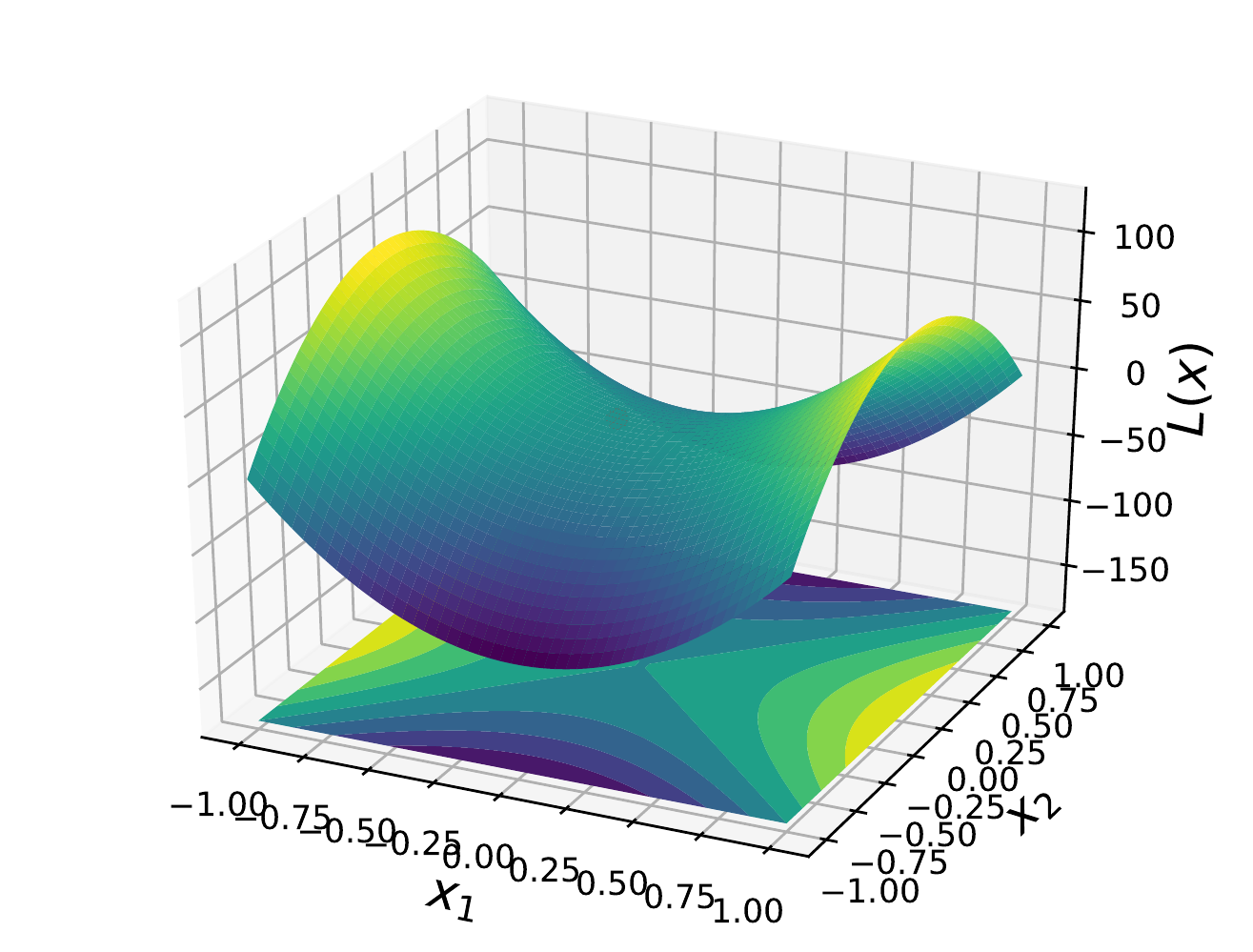}}
\caption{Loss surface for different quadratic forms.}
\label{fig:different_quadratics}
\end{figure}
For different types of matrix $\bA$, the loss surface of $L(\bx)$ will be different, as illustrated in Figure~\ref{fig:different_quadratics}. When $\bA$ is positive definite, the surface forms a convex bowl; when $\bA$ is negative definite, on the contrary, the surface becomes a concave bowl. $\bA$ also could be singular, in which case $\bA\bx-\bb=\bzero$ has more than one solution, and the set of solutions is a line (in the two-dimensional case ) or a hyperplane (in the high-dimensional case). This situation is similar to the case of a semidefinite quadratic form, as shown in Figure~\ref{fig:quadratic_singular}. If $\bA$ does not fall into any of these categories, a saddle point emerges (see Figure~\ref{fig:quadratic_saddle}), posing a challenge for gradient descent.  
In such cases, alternative methods, e.g., perturbed GD \citep{jin2017escape, du2017gradient}, can be employed to navigate away from saddle points.

\begin{figure}[htp]
\centering  
\vspace{-0.35cm} 
\subfigtopskip=2pt 
\subfigbottomskip=2pt 
\subfigcapskip=-5pt 
\subfigure[Contour and the descent direction. The red dot is the optimal point.]{\label{fig:quadratic_vanillegd_contour}
	\includegraphics[width=0.31\linewidth]{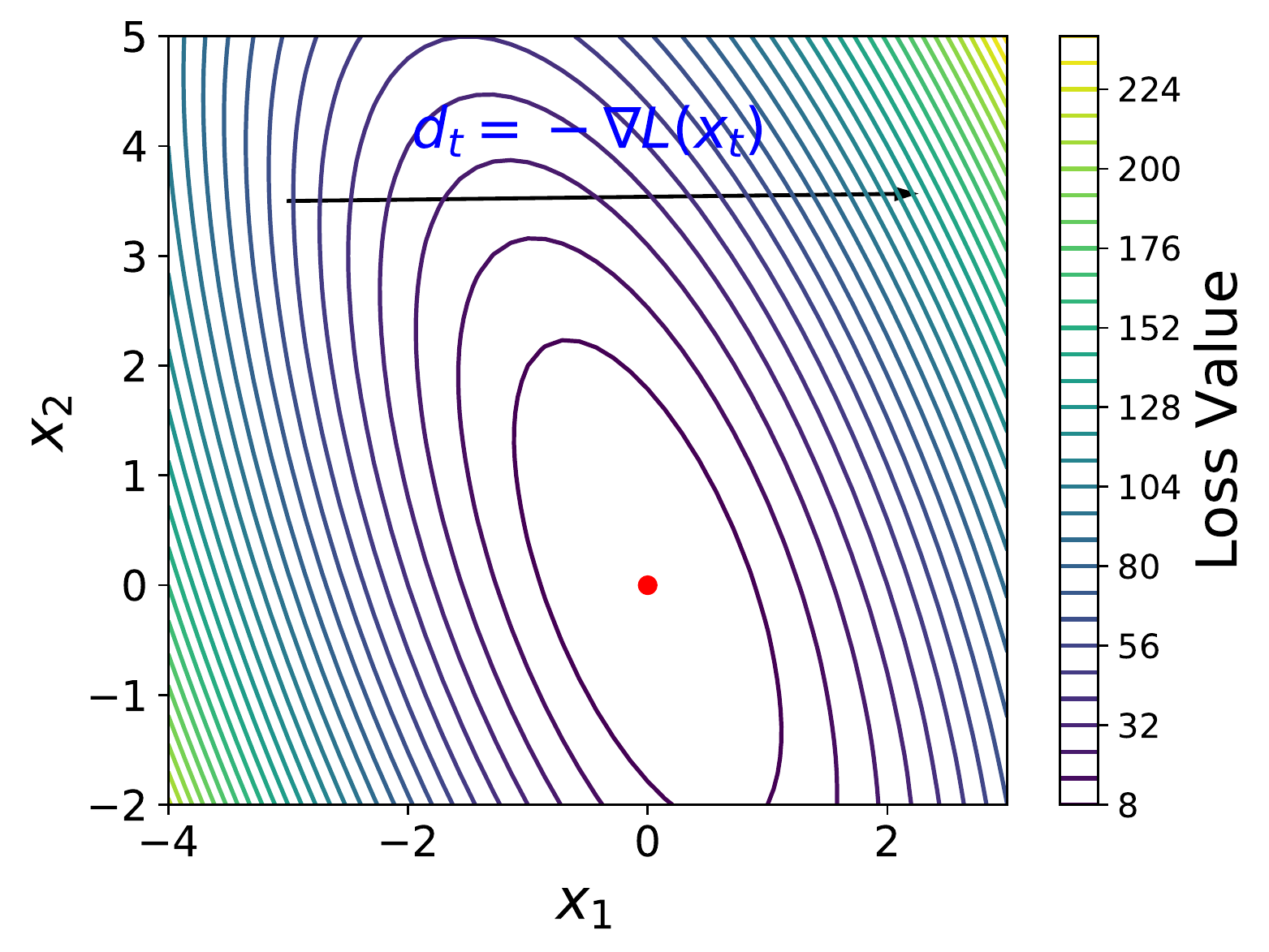}}
\subfigure[Vanilla GD, $\eta=0.02$.]{\label{fig:quadratic_vanillegd_contour2}
	\includegraphics[width=0.31\linewidth]{./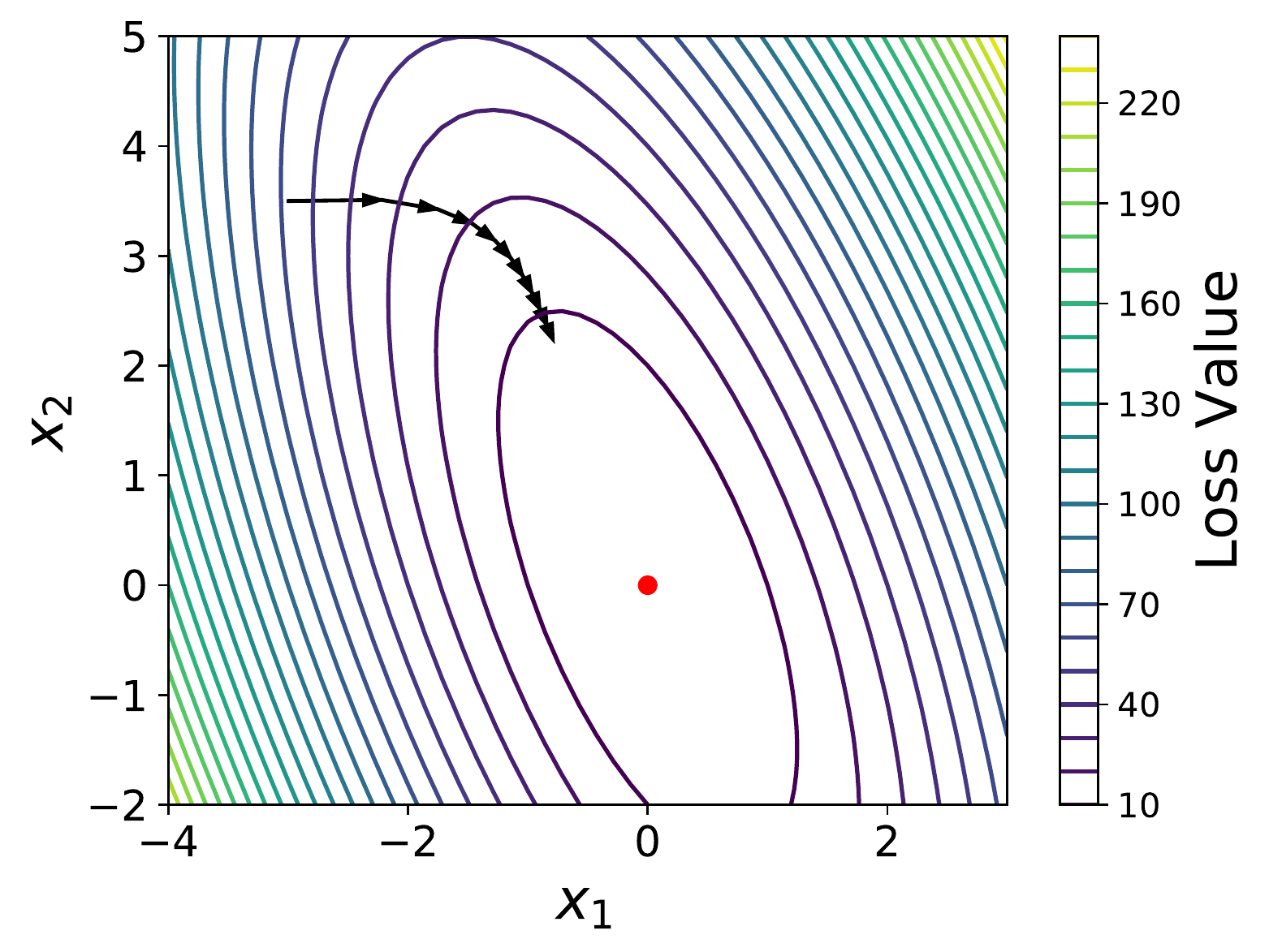}}
\subfigure[Vanilla GD, $\eta=0.08$.]{\label{fig:quadratic_vanillegd_contour8}
	\includegraphics[width=0.31\linewidth]{./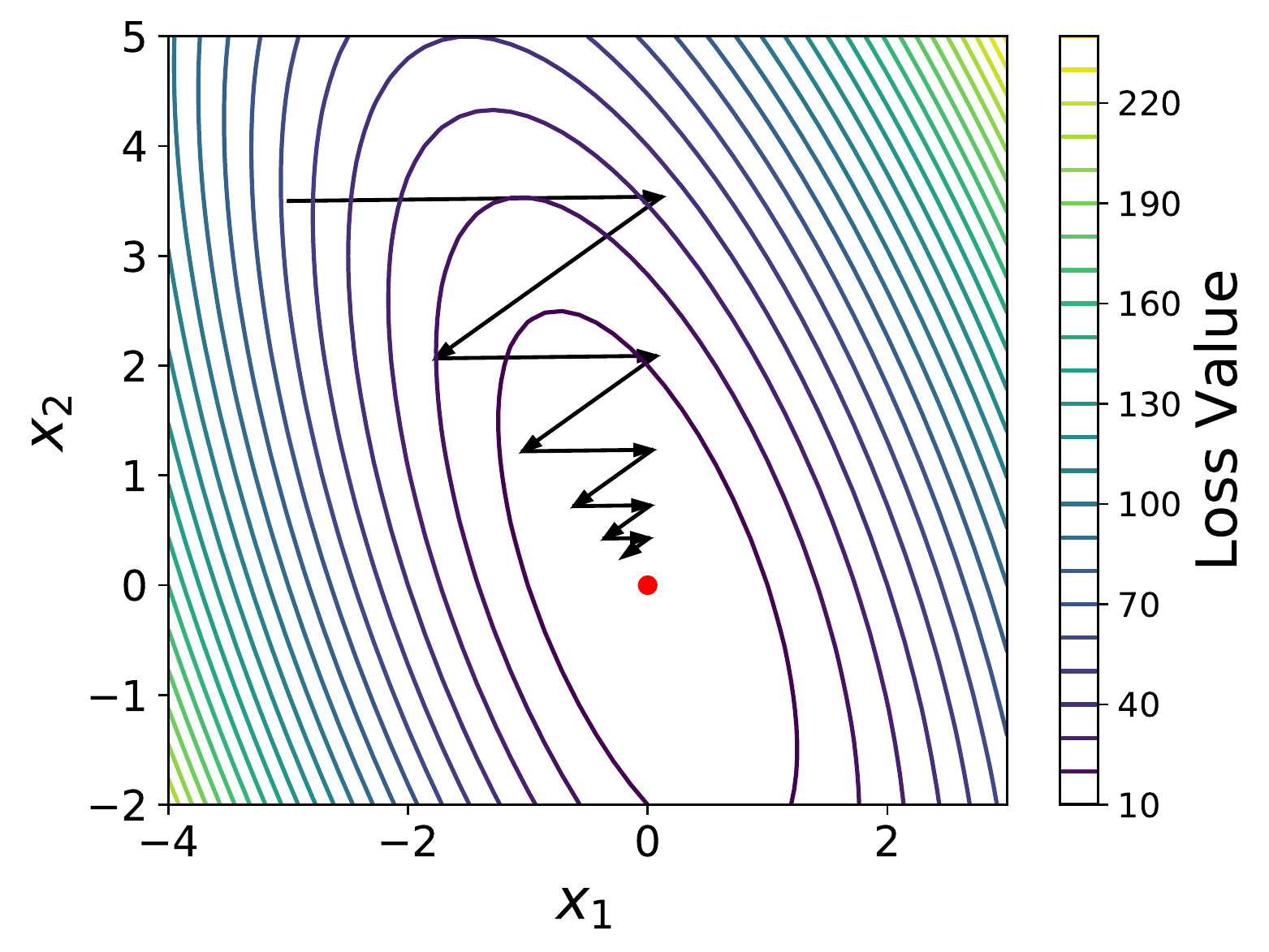}}
\caption{Illustration for the linear search of quadratic form with $\bA=\begin{bmatrix}
		20 & 7 \\ 5 & 5
	\end{bmatrix}$, $\bb=\bzero$, and $c=0$. The procedure is at $\bx_t=[-3,3.5]^\top$ for the $t$-th iteration.}
\label{fig:quadratic_vanillegd}
\end{figure}

\index{Quadratic form}
\index{Saddle point}

Note that in this context, gradient descent is not necessary; we can directly proceed to the minimum (if the data matrix $\bA$ is nonsingular, and we have an algorithm to compute its inverse).
However, we are concerned with the iterative updates of the convex quadratic function. Suppose we pick up a starting point $\bx_1\in \real^d$ \footnote{In some texts, the starting point is denoted as $\bx_0$, however, we will take it as $\bx_1$ in this article.}. The trivial way for the update at time step $t$ involves fixing the learning rate $\eta$ and choosing a descent direction $\bd_t$; and the gradient descent update becomes:
$$
\bx_{t+1} =  \bx_t + \eta \bd_t. 
$$
This results in a monotonically decreasing sequence of $\{L(\bx_t)\}$. 
Specifically, when the descent direction is chosen to be the negative gradient $\bd_t=\bA\bx_t-\bb$ ($\bA$ is symmetric), the update becomes 
\begin{equation}\label{equation:vanilla-gd-update}
\text{Vanilla GD: \gap } \bx_{t+1} = \bx_t - \eta (\bA\bx_t-\bb).
\end{equation}
A concrete example is given in Figure~\ref{fig:quadratic_vanillegd}, where $\bA=\begin{bmatrix}
20 & 7 \\ 5 & 5
\end{bmatrix}$, $\bb=\bzero$, and $c=0$. Suppose at $t$-th iteration, $\bx_t=[-3,3.5]^\top$. Figure~\ref{fig:quadratic_vanillegd_contour} shows the descent direction given by the negative gradient at the point of $\bx_t$; Figure~\ref{fig:quadratic_vanillegd_contour2} and Figure~\ref{fig:quadratic_vanillegd_contour8} present 10 iterations afterwards with $\eta=0.02$ and $\eta=0.08$, respectively.

\index{Spectral decomposition}
\paragraph{Closed form for vanilla GD.}
When $\bA$ is symmetric, it admits spectral decomposition (Theorem 13.1 in \citet{lu2022matrix} or Appendix~\ref{appendix:spectraldecomp}, p.~\pageref{appendix:spectraldecomp}):
$$
\bA=\bQ\bLambda\bQ^\top \in \real^{d\times d} \leadto \bA^{-1} = \bQ\bLambda^{-1}\bQ^\top,
$$ 
where $\bQ = [\bq_1, \bq_2, \ldots , \bq_d]$ comprises mutually orthonormal eigenvectors of $\bA$, and $\bLambda = \diag(\lambda_1, \lambda_2, \ldots , \lambda_d)$ contains the corresponding real eigenvalues of $\bA$.  If we further assume $\bA$ is positive definite, then the eigenvalues are all positive. By convention, we order the eigenvalues such that $\lambda_1\geq \lambda_2\geq \ldots \geq \lambda_d$. Define the following iterate vector at iteration $t$ as
\begin{equation}\label{equation:vanilla-yt}
\by_t = \bQ^\top(\bx_t - \bx_\star),
\end{equation}
where $\bx_\star = \bA^{-1}\bb$ if we further assume $\bA$ is nonsingular, as aforementioned. 
It then follows that
$$
\begin{aligned}
\by_{t+1} &= \bQ^\top(\bx_{t+1} - \bx_\star) = \bQ^\top(\bx_{t} - \eta(\bA\bx_{t}-\bb) - \bx_\star) \gap &\text{($\bx_{t+1} = \bx_{t}-\eta\nabla L(\bx_{t})$)}\\
&=\bQ^\top(\bx_{t} - \bx_\star) - \eta \bQ^\top (\bA\bx_{t}-\bb) \\
&= \by_{t} - \eta \bQ^\top (\bQ\bLambda\bQ^\top\bx_{t}-\bb) \gap &\text{($\bA=\bQ\bLambda\bQ^\top$)}\\
&= \by_{t} - \eta  (\bLambda\bQ^\top\bx_{t}-\bQ^\top\bb) \\
&= \by_{t} - \eta \bLambda\bQ^\top (\bx_{t}-\bx_\star) = \by_{t} - \eta \bLambda \by_{t} \\
&= (\bI - \eta \bLambda)\by_{t}  = (\bI - \eta \bLambda)^t\by_{1} \\
\end{aligned}
$$
where the second equality is from Eq.~\eqref{equation:vanilla-gd-update}. This reveals the error term at each iteration:
\begin{equation}\label{equation:vanilla-gd-closedform}
\norm{\bx_{t+1} - \bx_\star}^2 = \norm{\bQ\by_{t+1}}^2 = \norm{\bQ(\bI - \eta \bLambda)^t\by_{1}}^2 = \bigg|\bigg|\sum_{i=1}^{d} y_{1,i} \cdot (1-\eta \lambda_i)^t \bq_i\bigg|\bigg|^2,
\end{equation}
where $\by_1$ depends on the initial parameter $\bx_1$, and $y_{1,i}$ is the $i$-th element of $\by_1$. An intuitive interpretation for $\by_{t+1}$ is the error in the $\bQ$-basis at iteration $t+1$. By Eq.~\eqref{equation:vanilla-gd-closedform}, we realize that the learning rate should be chosen such that 
\begin{equation}\label{equation:vanillagd-quandr-rate-chgoices}
|1-\eta\lambda_i| \leq 1, \gap \forall \,\, i\in \{1,2,\dots, d\}.
\end{equation}
And the error is a sum of $d$ terms, each has its own dynamics and depends on the rate of $1-\eta\lambda_i$; the closer the rate is to 1, the slower it converges in that dimension \citep{shewchuk1994introduction, o2015adaptive, goh2017momentum}.

\begin{figure}[h]
\centering
\includegraphics[width=0.5\textwidth]{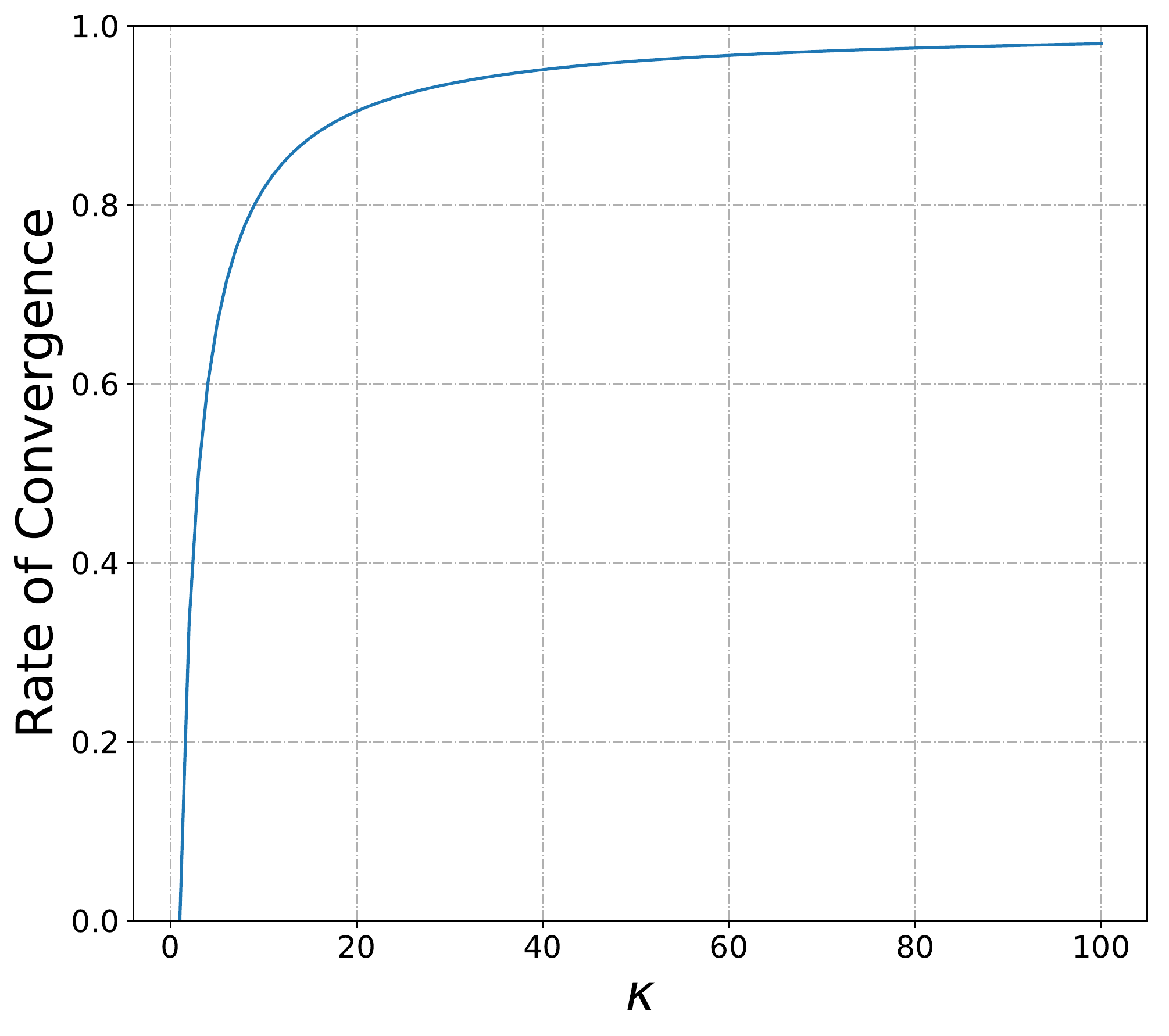}
\caption{Rate of convergence (per iteration) in vanilla GD method. The $y$-axis is $\frac{\kappa-1}{\kappa+1}$.}
\label{fig:rate_convergen_vanillaGD}
\end{figure}
\index{Rate of convergence}

To ensure convergence, the learning rate must satisfy that $|1-\eta\lambda_i| \leq 1$. This condition implies $0<\eta\lambda_i <2$ for $i$ in $\{1,2,\ldots, d\}$. Therefore, the overall rate of convergence is determined by the slowest component:
$$
\text{rate}(\eta) = \max \{|1-\eta\lambda_1|,  |1-\eta\lambda_d|\},
$$
since $\lambda_1\geq \lambda_2\geq \ldots \geq \lambda_d$. The optimal learning rate occurs when the first and the last eigenvectors converge at the same rate, i.e., $\eta\lambda_1-1 =1- \eta\lambda_d$:
\begin{equation}\label{equation:eta-vanilla-gd}
\text{optimal } \eta = \underset{\eta}{\arg\min} \text{ rate}(\eta) = \frac{2}{\lambda_1+\lambda_d},
\end{equation}
and 
\begin{equation}\label{equation:vanialla-gd-rate}
\text{optimal rate}  = \underset{\eta}{\min} \text{ rate}(\eta) =
\frac{\lambda_1/\lambda_d - 1}{\lambda_1/\lambda_d + 1}
=\frac{\kappa - 1}{\kappa + 1},
\end{equation}
where $\kappa = \frac{\lambda_1}{\lambda_d}$ is known as the \textit{condition number} (see \citet{lu2021numerical} for more information). When $\kappa=1$, the convergence is fast with just one step; as the condition number increases, the gradient descent becomes slower. The rate of convergence (per iteration) is plotted in Figure~\ref{fig:rate_convergen_vanillaGD}. The more \textit{ill-conditioned} the matrix, i.e., the larger its condition number, the slower the convergence of vanilla GD.

\newpage
\clearchapter{Line Search}
\begingroup
\hypersetup{linkcolor=winestain,
linktoc=page,  
}
\minitoc \newpage
\endgroup

\index{Line search}
\index{Steepest descent}
\section{Line Search}\label{section:line-search}
\lettrine{\color{caligraphcolor}I}
In the last section, we derive the gradient descent, where the update step at step $t$ is $ -\eta\bg_t:=-\eta \nabla  L(\bx_t)$, and the learning rate $\eta$ controls how large of a step to take in the direction of negative gradient. Line search is a method that directly determines the optimal learning rate in order to provide the most significant improvement in the gradient movement. Formally, the line search solves the following problem at the $t$-th step of gradient descent:
$$
\eta_t = \underset{\eta}{\arg\min}\,\,   L(\bx_t - \eta \bg_t).
$$
After performing the gradient update $\bx_{t+1} = \bx_t - \eta_t \bg_t$, the gradient is computed at $\bx_{t+1}$ for the next step $t+1$. More generally, let $\bd_t$ be the descent direction; then, the gradient descent with line search (to differentiate, we call it \textit{steepest descent} when $\bd_t=-\bg_t$ in this article, and the fixed learning rate GD is known as the \textit{vanilla GD}) can be described by:
$$
\eta_t = \underset{\eta}{\arg\min}\,\,   L(\bx_t + \eta \bd_t).
$$

\begin{lemma}[Orthogonality in Line Search]\label{lemm:linear-search-orghonal}
The gradient of optimal point $\bx_{t+1}=\bx_t + \eta_t \bd_t $ of a line search is orthogonal to the current update direction $\bd_t$:
$$
\nabla L(\bx_{t+1})^\top \bd_t = 0.
$$
\end{lemma}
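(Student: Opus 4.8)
The plan is to reduce the multivariate orthogonality statement to a one-dimensional optimality condition via the chain rule. First I would define the scalar auxiliary function $\phi(\eta) := L(\bx_t + \eta \bd_t)$, which traces out the values of $L$ along the ray emanating from $\bx_t$ in the search direction $\bd_t$. By the very definition of the line search, $\eta_t = \arg\min_\eta \phi(\eta)$, so $\eta_t$ is a local (indeed global, along this line) minimizer of the one-dimensional function $\phi$.

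Next I would apply the first-order optimality condition for one-dimensional functions (Fermat's theorem, as invoked in Exercise~\ref{problem:fist_opt}): since $\eta_t$ is a minimizer of the differentiable function $\phi$ and is assumed to be an interior point of the admissible range of step sizes, it follows that $\phi^\prime(\eta_t) = 0$.

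It then remains to identify $\phi^\prime$ in terms of the gradient of $L$. Since $L$ is (continuously) differentiable, the chain rule gives
$$
\phi^\prime(\eta) = \frac{d}{d\eta} L(\bx_t + \eta \bd_t) = \nabla L(\bx_t + \eta \bd_t)^\top \bd_t,
$$
which is precisely the directional-derivative relation $f^\prime(\bx;\bd) = \nabla f(\bx)^\top \bd$ recorded earlier in the background chapter. Evaluating at $\eta = \eta_t$ and using $\bx_{t+1} = \bx_t + \eta_t \bd_t$ yields
$$
0 = \phi^\prime(\eta_t) = \nabla L(\bx_{t+1})^\top \bd_t,
$$
which is the claimed orthogonality.

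There is no serious computational obstacle here; the argument is essentially a packaging of Fermat's theorem plus the chain rule. The only point requiring care is the implicit regularity hypothesis: I would want $L$ to be differentiable along the ray and $\eta_t$ to be attained at an interior point (so that the stationarity condition $\phi^\prime(\eta_t)=0$ is legitimate rather than a boundary condition). Under the standing assumption that $L$ is (continuously) differentiable and that the exact line search attains its minimum at a positive, finite step size, these conditions hold and the proof goes through cleanly.
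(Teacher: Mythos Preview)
Your proof is correct and is essentially the direct route: define the one-dimensional function $\phi(\eta)=L(\bx_t+\eta\bd_t)$, invoke Fermat's theorem at the minimizer $\eta_t$, and use the chain rule to identify $\phi'(\eta_t)=\nabla L(\bx_{t+1})^\top\bd_t$. The paper's proof reaches the same conclusion but by a slightly different packaging: it argues by contradiction via a first-order Taylor expansion, showing that if $\nabla L(\bx_{t+1})^\top\bd_t\neq 0$ one could decrease $L$ further by moving $\pm\delta\bd_t$, contradicting optimality of $\eta_t$. In effect the paper is re-deriving Fermat's theorem on the spot using Taylor's formula, whereas you invoke it as a black box; indeed, immediately after its proof the paper rephrases the result exactly as you do (via $J(\eta)=L(\bx_t+\eta\bd_t)$ and $J'(\eta_t)=0$), so your argument matches the paper's own second derivation verbatim.
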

\begin{proof}[of Lemma~\ref{lemm:linear-search-orghonal}]
Suppose $\nabla L(\bx_{t+1})^\top \bd_t \neq 0$, then there exists a $\delta$ and it follows by Taylor's formula (Appendix~\ref{appendix:taylor-expansion}, p.~\pageref{appendix:taylor-expansion}) that 
\begin{equation}\label{equation:orthogonal-line-search}
	L(\bx_t +\eta_t \bd_t \pm \delta \bd_t) \approx L(\bx_t +\eta_t \bd_t)\pm \delta \bd_t^\top \nabla L(\bx_t +\eta_t \bd_t).
\end{equation}
Since $\bx_t +\eta_t \bd_t$ is the optimal move such that $ L(\bx_t +\eta_t \bd_t) \leq L(\bx_t +\eta_t \bd_t \pm \delta \bd_t)$ and $\delta \neq 0$. This leads to the claim 
$$
\bd_t^\top \nabla L(\bx_t +\eta_t \bd_t)=0.$$
We complete the proof.
\end{proof}

In line search methods, the loss function at iteration $t$ can be  expressed in terms of $\eta$ as follows:
$$
J(\eta) = L(\bx_t + \eta \bd_t).
$$
Consequently, the problem can be formulated as finding
$$
\eta_t = \underset{\eta}{\arg\min} \,\, L(\bx_t + \eta \bd_t)=\underset{\eta}{\arg\min} \,\,  J(\eta).
$$
This indicates that the (local) minimum of $\eta$ can be obtained by finding the solution of $J^\prime(\eta)=0$ if $J(\eta)$ is differentiable, according to Fermat's theorem (see Exercise~\ref{problem:fist_opt}, p.~\pageref{problem:fist_opt}). The solution then follows that
\begin{equation}\label{equation:j_eta_ajmijo}
J^\prime(\eta) =\bd_t^\top  \nabla L(\bx_t+\eta\bd_t)=0,
\end{equation}
which reaffirms  Lemma~\ref{lemm:linear-search-orghonal}.
When $\eta=0$, we have (by Remark~\ref{remark:descent_condition}, p.~\pageref{remark:descent_condition})
\begin{equation}\label{equation:linesearc-eta0}
J^\prime (0) = \bd_t^\top \bg_t\leq 0.
\end{equation}
A crucial property in typical line search settings is that the loss function
$J(\eta)$, when expressed in terms of $\eta$, is often a unimodal function. 
If a value $\eta_{\max}$ is identified such that $J^\prime(\eta_{\max}) > 0$, the optimal learning rate is then in the range of $[0, \eta_{\max}]$. Line search methods are followed to find the optimal $\eta$ within this range, satisfying the optimal condition $J^\prime(\eta)=0$.
Now, we introduce some prominent line search approaches: bisection linear search, Golden-Section line search, and Armijo rule search.


\index{Bisection line search}
\section{Bisection Line Search}
In the \textit{bisection line search} method, we start by setting the interval $[a,b]$ as $[\eta_{\min}, \eta_{\max}]$, where $\eta_{\min}$ and $ \eta_{\max}$ serve as the lower and upper bounds, respectively, for the learning rate $\eta$ ($\eta_{\min}$ can be set to 0 as specified by Eq.~\eqref{equation:linesearc-eta0}). The bisection line search involves evaluating the loss function $J(\eta)$ at the midpoint $\frac{a+b}{2}$. Given the information that $J^\prime(a)<0$ and $J^\prime(b)>0$, the bisection line search follows that
$$
\left\{
\begin{aligned}
\text{set } a &:= \frac{a+b}{2} \text{, \gap if $J^\prime\left(\frac{a+b}{2}\right)<0$}; \\
\text{set } b &:= \frac{a+b}{2} \text{, \gap if $J^\prime\left(\frac{a+b}{2} \right)>0$}.
\end{aligned}
\right.
$$
The procedure is repeated until the interval between $a$ and $b$ becomes sufficiently small.

The bisection line search is also known as the \textit{binary line search}. And in some cases, the derivative of $J(\eta)$ cannot be easily obtained; then the interval is narrowed by evaluating the objective function at two closely spaced
points around $\frac{a+b}{2} $. To be more concrete, assume $J(\eta)$ is convex (since we are in the descent setting), we evaluate the loss function at $\frac{a+b}{2}$ and $\frac{a+b}{2}+\epsilon$, where $\epsilon$ is a numerically small value, e.g.,  $\epsilon=1e-8$.  
This allows us to evaluate whether the function is increasing or decreasing at $\frac{a+b}{2}$ by determining which of the two evaluations is larger. If the function is increasing
at $\frac{a+b}{2}$, the interval is narrowed to $[a,\frac{a+b}{2}+\epsilon]$; otherwise, it is narrowed to
$[\frac{a+b}{2}, b]$.
$$
\left\{
\begin{aligned}
\text{set } b &= \frac{a+b}{2}+\epsilon,  &\text{ \gap if increasing at $\frac{a+b}{2}$};\\
\text{set } a &= \frac{a+b}{2}, &\text{ \gap otherwise}. \\
\end{aligned}
\right.
$$
This iterative process continues until the range is sufficiently small or the required level of accuracy is achieved in the interval.

\begin{figure}[h]
\centering  
\vspace{-0.35cm} 
\subfigtopskip=2pt 
\subfigbottomskip=2pt 
\subfigcapskip=-5pt 
\subfigure[$\eta=a$ yields the minimum.]{\label{fig:golden_1}
	\includegraphics[width=0.23\linewidth]{./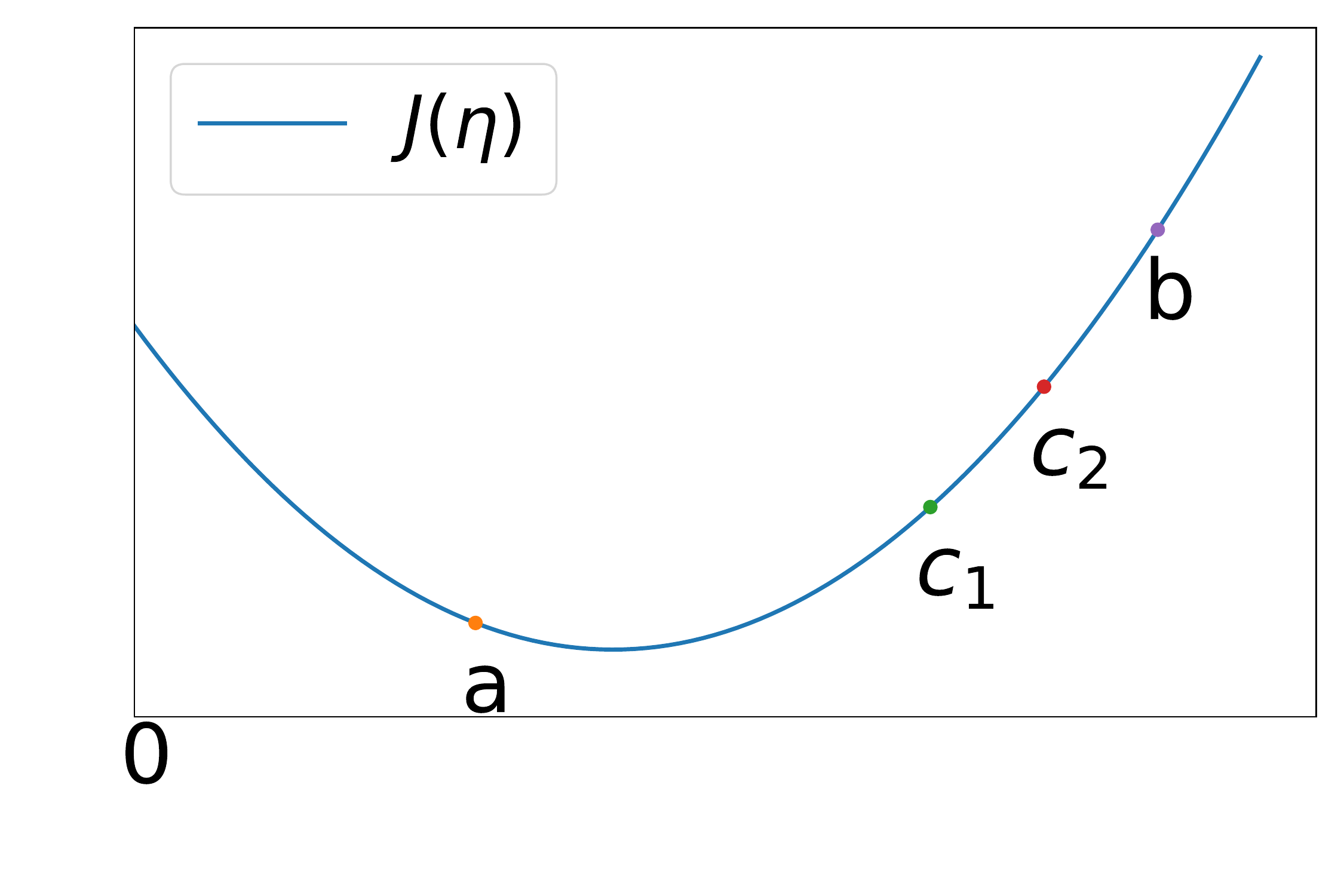}}
\subfigure[$\eta=c_1$ yields the minimum.]{\label{fig:golden_2}
	\includegraphics[width=0.23\linewidth]{./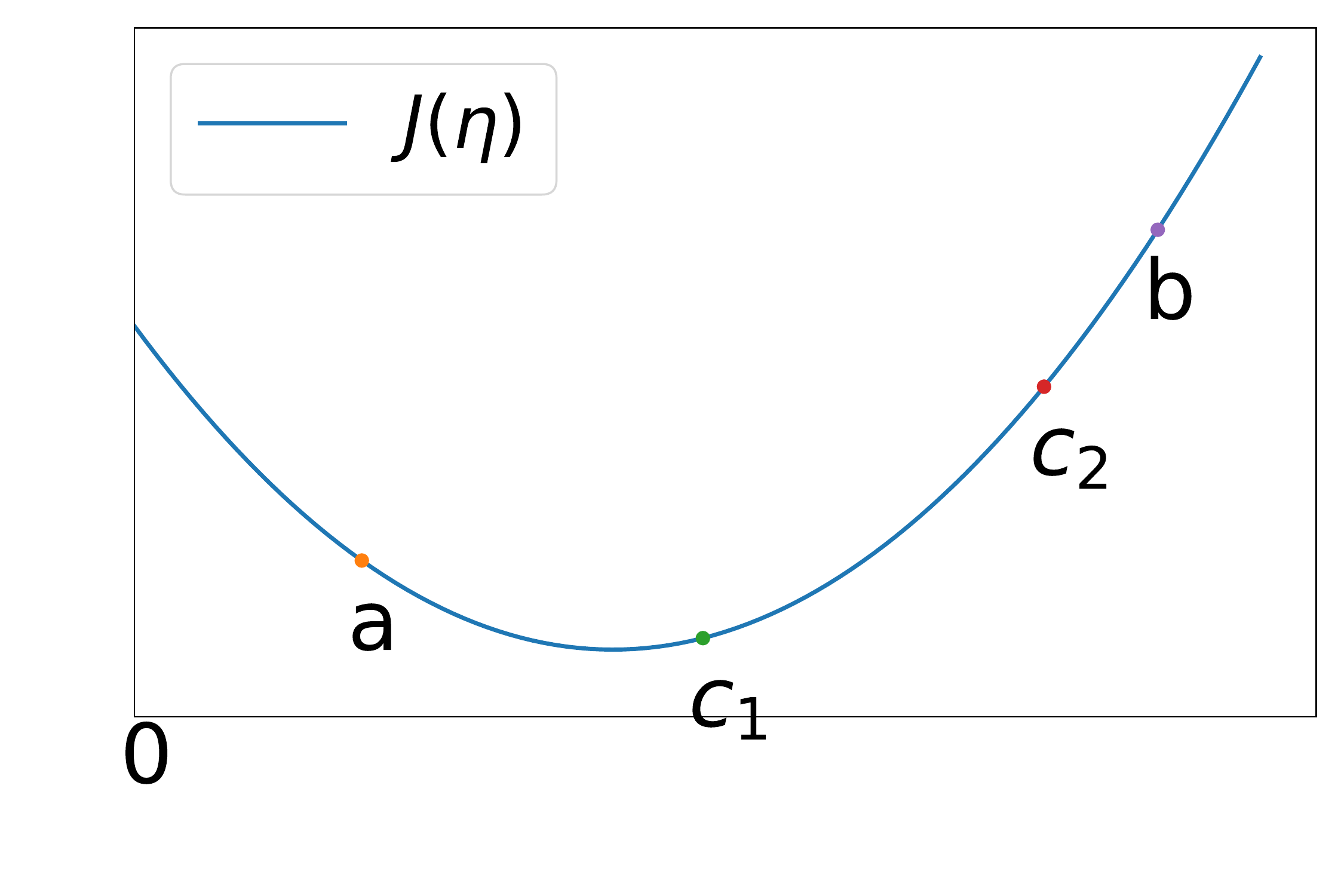}}
\subfigure[$\eta=c_2$ yields the minimum.]{\label{fig:golden_3}
	\includegraphics[width=0.23\linewidth]{./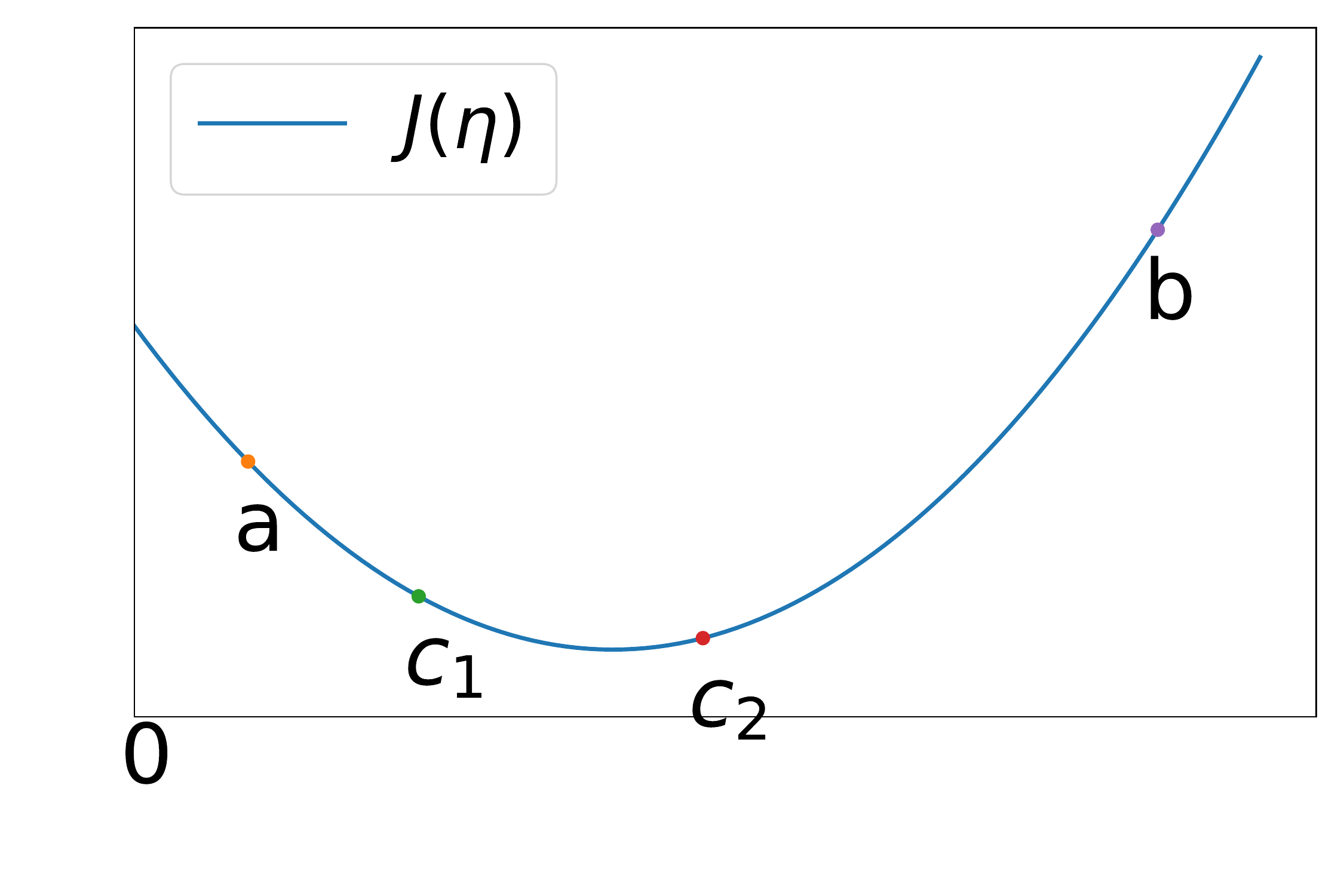}}
\subfigure[$\eta=b$ yields the minimum.]{\label{fig:golden_4}
	\includegraphics[width=0.23\linewidth]{./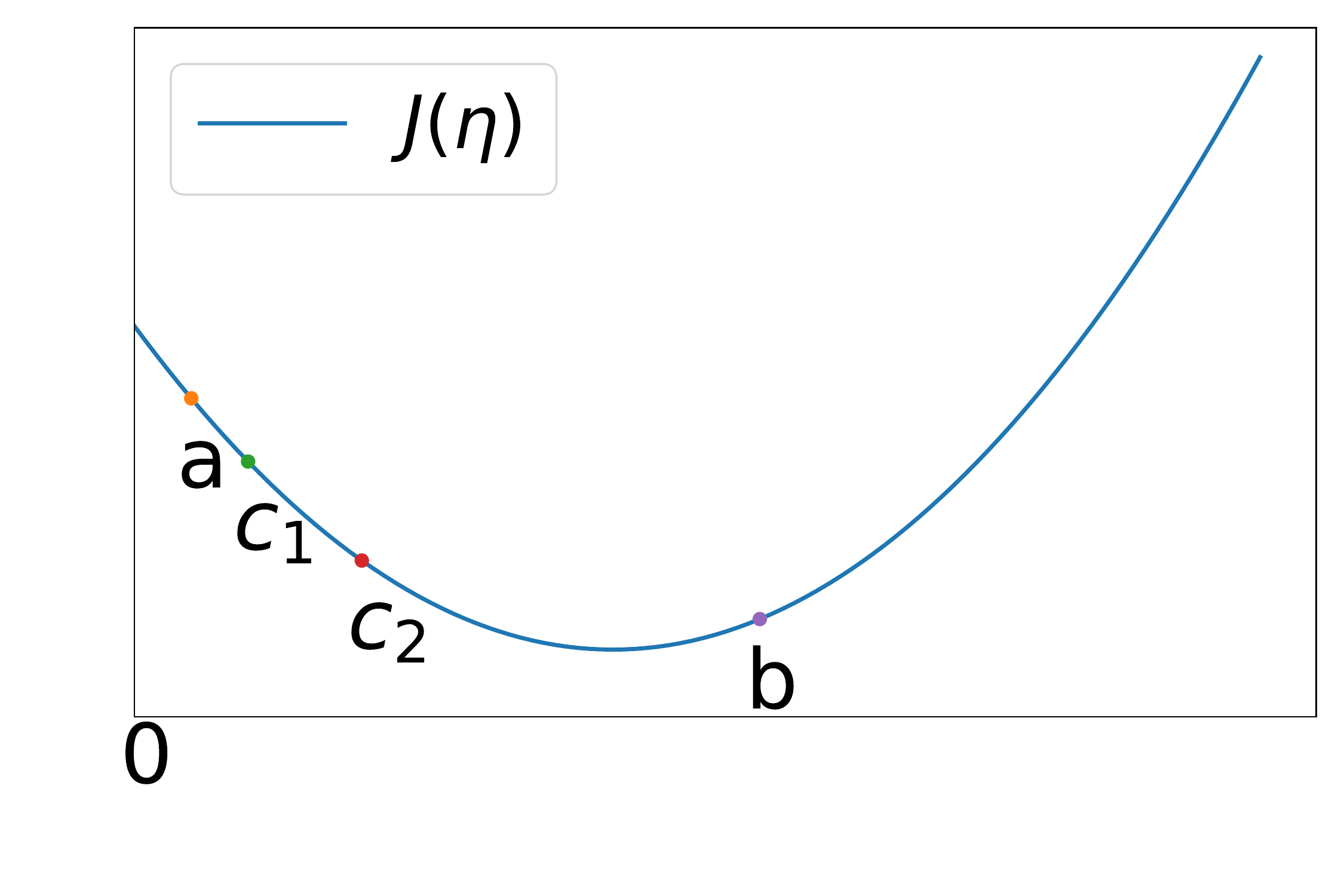}}
\caption{Demonstration of 4 different update ways in golden-section line search.}
\label{fig:conjguatecy-golden_1234}
\end{figure}

\index{Golden-section line search}
\section{Golden-Section Line Search}
Similar to the bisection line search, the \textit{golden-section line search} also identifies the best learning rate $\eta$ for a unimodal function $J(\eta)$. Again, it starts with the interval $[a,b]$ as $[0, \eta_{\max}]$. However, instead of selecting a midpoint, the golden-section search designates a pair of $c_1, c_2$ satisfying $a<c_1<c_2<b$. 
The procedure is as follows: if $\eta=a$ results in the minimum value for $J(\eta)$ (among the four values $J(a), J(c_1), J(c_2)$, and $J(b)$), we can exclude the interval $(c_1, b]$; if $\eta=c_1$ yields the minimum value, we can exclude the interval $(c_2, b]$; if $\eta=c_2$ yields the minimum value, we can exclude the interval $[a, c_1)$; and if $\eta=b$ yields the minimum value, we can exclude the interval $[a, c_2)$. The four situations are shown in Figure~\ref{fig:conjguatecy-golden_1234}.
In other words, at least one of the intervals $[a,c_1]$ and $[c_2, b]$ can be discarded  in the golden-section search
method. In summary, we have
$$
\begin{aligned}
&\text{when $J(a)$ is the minimum, exclude $(c_1, b]$};\\
&\text{when $J(c_1)$ is the minimum, exclude $(c_2, b]$};\\
&\text{when $J(c_2)$ is the minimum, exclude $[a, c_1)$};\\
&\text{when $J(b)$ is the minimum, exclude $[a, c_2)$}.\\
\end{aligned}
$$
By excluding one of the four intervals, the new bounds $[a, b]$ are adjusted accordingly, and the process iterates until the range is sufficiently small.

\begin{figure}[h]
\centering
\includegraphics[width=0.6\textwidth]{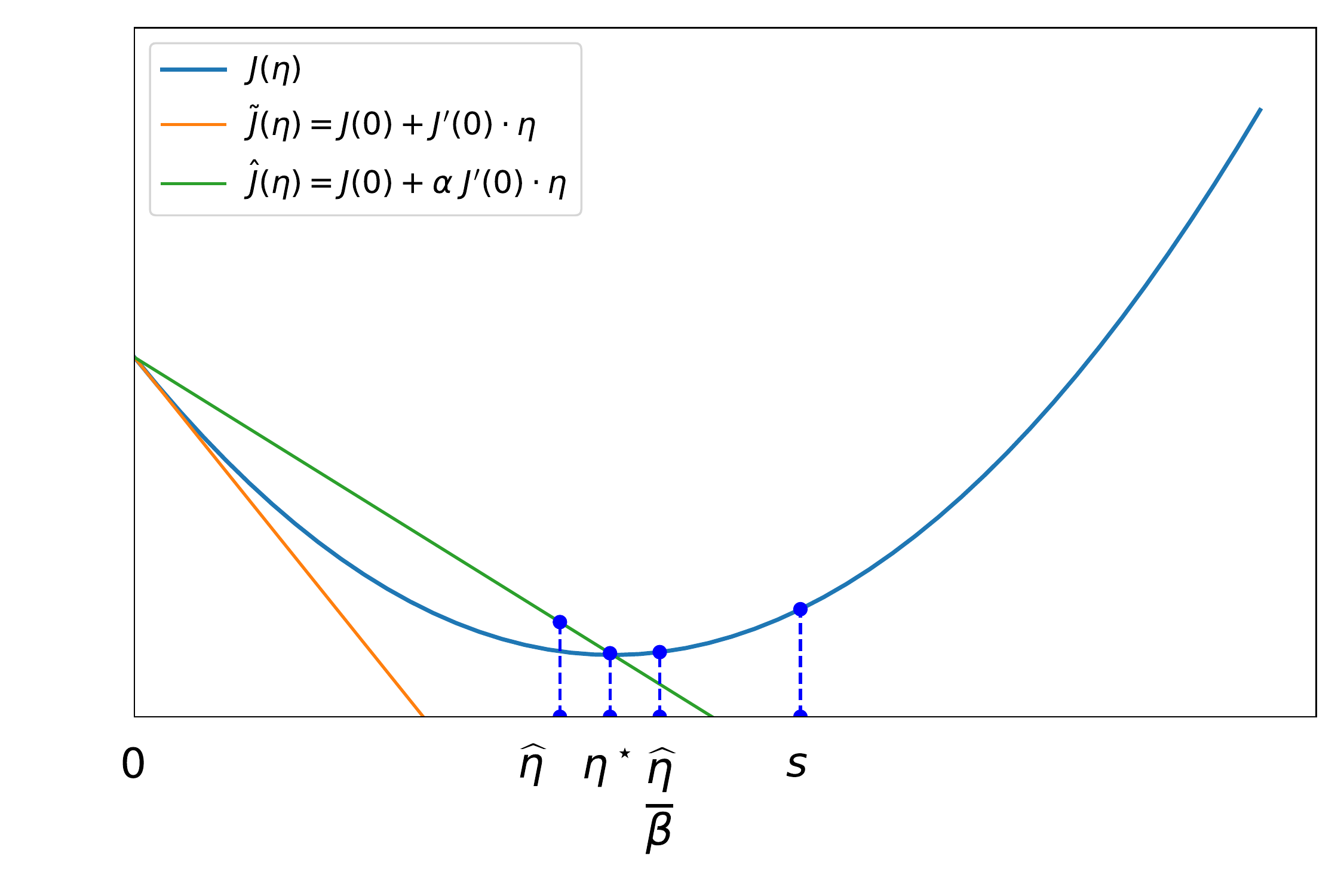}
\caption{Demonstration of Armijo rule in a convex setting.}
\label{fig:armijo_convex}
\end{figure}

\index{Armijo rule}
\section{Armijo Rule}
Similar to Eq.~\eqref{equation:orthogonal-line-search}, using Taylor's formula once again, we have
$$
J(\eta)=L(\bx_t +\eta_t \bd_t ) \approx L(\bx_t )+ \eta  \bd_t^\top \nabla L(\bx_t ).
$$
Since $\bd_t^\top \nabla L(\bx_t ) \leq 0$ (by Remark~\ref{remark:descent_condition}, p.~\pageref{remark:descent_condition}), it follows that 
\begin{equation}\label{equation:armijo_step_approx}
L(\bx_t + \eta \bd_t) \leq L(\bx_t) + \alpha \eta \cdot  \bd_t^\top \nabla L(\bx_t),  \gap \alpha \in (0,1).
\end{equation}
Let 
$\widetilde{J}(\eta) = J(0)+  J^\prime(0) \cdot  \eta$ \footnote{The tangent of $J(\eta)$ at $\eta=0$.}
and 
$\widehat{J}(\eta) = J(0)+ \alpha J^\prime(0) \cdot  \eta$, the relationship between the two functions is depicted in Figure~\ref{fig:armijo_convex} for the case where $J(\eta)$ is a convex function; and we note that $\widehat{J}(\eta) > \widetilde{J}(\eta)$ when $\eta>0$.

The Armijo rule states that an acceptable $\eta$ should satisfy $J(\widehat{\eta}) \leq \widehat{J}(\widehat{\eta})$ to ensure sufficient decrease and $J( \widehat{\eta}/\beta) > \widehat{J}(\widehat{\eta}/\beta)$ to prevent the step size from being too small, where $\beta\in (0,1)$. 
This ensures that  the (local) optimal learning rate is in the range of $[\widehat{\eta}, \widehat{\eta}/\beta)$. By Eq.~\eqref{equation:armijo_step_approx}, the two criteria above can also be described by:
$$
\left\{
\begin{aligned}
J(\widehat{\eta}) &\leq \widehat{J}(\widehat{\eta}); \\
J( \widehat{\eta}/\beta) &> \widehat{J}(\widehat{\eta}/\beta),
\end{aligned}
\right.
\Longrightarrow \gap 
\left\{
\begin{aligned}
L(\bx_t + \widehat{\eta} \bd_t) -  L(\bx_t)&\leq \alpha \widehat{\eta} \cdot \bd_t^\top\nabla L(\bx_t);  \\
L(\bx_t + \widehat{\eta}/\beta \bd_t) -  L(\bx_t)&> \alpha \widehat{\eta}/\beta \cdot  \bd_t^\top\nabla L(\bx_t) .
\end{aligned}
\right.
$$
The complete algorithm for calculating the learning rate at the $t$-th iteration is outlined in Algorithm~\ref{alg:als_armijo}. In practice, the parameters   are typically set as $\beta \in [0.2, 0.5]$ and $\alpha\in [1e-5, 0.5]$. Additionally, it's worth noting that the Armijo rule is inexact, and it works even when $J(\eta)$ is not unimodal.

After developing the Armijo algorithm, the underlying concept of the Armijo rule becomes apparent: the descent direction $J^\prime(0)$ at that starting point $\eta=0$ often deteriorates in terms of rate of improvement since it moves further along this direction. However, a fraction $\alpha\in [1e-5, 0.5]$ of this improvement is acceptable. By Eq.~\eqref{equation:armijo_step_approx}, the descent update at $(t+1)$-th iteration $L(\bx_t + \eta \bd_t)$ is at least $\alpha \eta \cdot  \bd_t^\top \nabla L(\bx_t)$ smaller than that of $t$-th iteration.

\begin{algorithm}[h] 
\caption{Armijo Rule at $t$-th Iteration}
\label{alg:als_armijo}
\begin{algorithmic}[1] 
	\Require Start with $\eta_t=s$, $0<\beta<1$, and $0<\alpha<1$;
	\State isStop = False;
	\While{isStop is False} 
	\If{$L(\bx_t + \eta_t \bd_t) - L(\bx_t) \leq \alpha \eta_t \cdot \bd_t^\top \nabla L(\bx_t)$}
	\State isStop = True;
	\Else
	\State $\eta_t= \beta \eta_t$;
	\EndIf
	\EndWhile
	\State Output $\eta_t$;
\end{algorithmic} 
\end{algorithm}

\index{Steepest descent}
\index{Quadratic form}
\section{Quadratic Form in Steepest Descent}\label{section:quadratic-in-steepestdescent}
Following the discussion of the quadratic form in the gradient descent section (Section~\ref{section:quadratic_vanilla_GD}, p.~\pageref{section:quadratic_vanilla_GD}), we now discuss the quadratic form in gradient descent with line search. 
By definition, we express $J(\eta)$ as follows:
$$
\begin{aligned}
J(\eta) = L(\bx_t+\eta \bd_t) &= \frac{1}{2} (\bx_t+\eta \bd_t)^\top \bA (\bx_t+\eta \bd_t) - \bb^\top (\bx_t+\eta \bd_t) +c\\
&= L(\bx_t) + \eta \bd_t^\top \underbrace{\left(\frac{1}{2} (\bA+\bA^\top )\bx_t - \bb \right)}_{=\nabla L(\bx_t)=\bg_t} +\frac{1}{2} \eta^2 \bd_t^\top \bA\bd_t,
\end{aligned}
$$
which is a quadratic function with respect to $\eta$ for which a closed   form for the line search exists:
\begin{equation}\label{equation:eta-gd-steepest}
\eta_t = - \frac{\bd_t^\top \bg_t}{ \bd_t^\top \bA\bd_t }.
\end{equation}
We observe that $\bd_t^\top \bA\bd_t >0 $ when $\bd_t \neq 0 $. As previously mentioned, when the search direction is the negative gradient $\bd_t=-\bg_t$, the method is known as the \textit{steepest descent}. Consequently, the descent update becomes
\begin{equation}\label{equation:steepest-quadratic}
\begin{aligned}
	\text{Steepest Descent: \gap }\bx_{t+1} &=  \bx_t + \eta_t \bd_t \\
	&= \bx_t  -  \frac{\bd_t^\top \bg_t}{ \bd_t^\top \bA\bd_t } \bd_t \\
	& =\bx_t   - \frac{\bg_t^\top \bg_t}{ \bg_t^\top \bA\bg_t } \bg_t,
\end{aligned}
\end{equation}
where $\bd_t = -\bg_t$ for the gradient descent case.

\begin{figure}[h]
\centering  
\vspace{-0.35cm} 
\subfigtopskip=2pt 
\subfigbottomskip=2pt 
\subfigcapskip=-5pt 
\subfigure[Contour and the descent direction. The red dot is the optimal point.]{\label{fig:quadratic_steepest_contour_tilt}
	\includegraphics[width=0.485\linewidth]{imgs/quadratic_steepest_contour_tilt.pdf}}
\subfigure[Intersection of the loss surface and vertical plane through the descent direction.]{\label{fig:quadratic_steepest_surface_tilt}
	\includegraphics[width=0.485\linewidth]{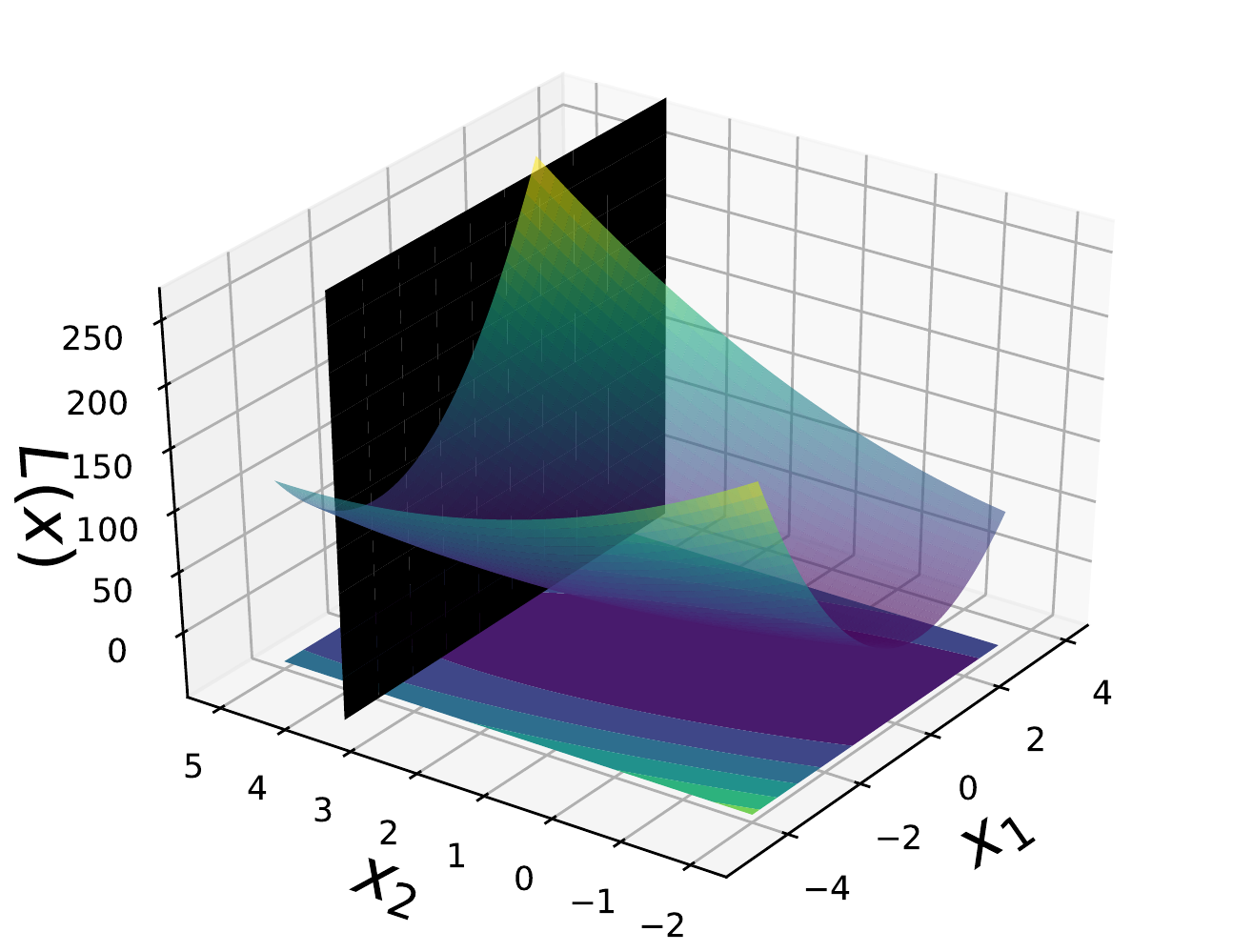}}
\subfigure[Intersection of the loss surface and vertical plane through the descent direction in two-dimensional space.]{\label{fig:quadratic_steepest_tilt_intersection}
	\includegraphics[width=0.485\linewidth]{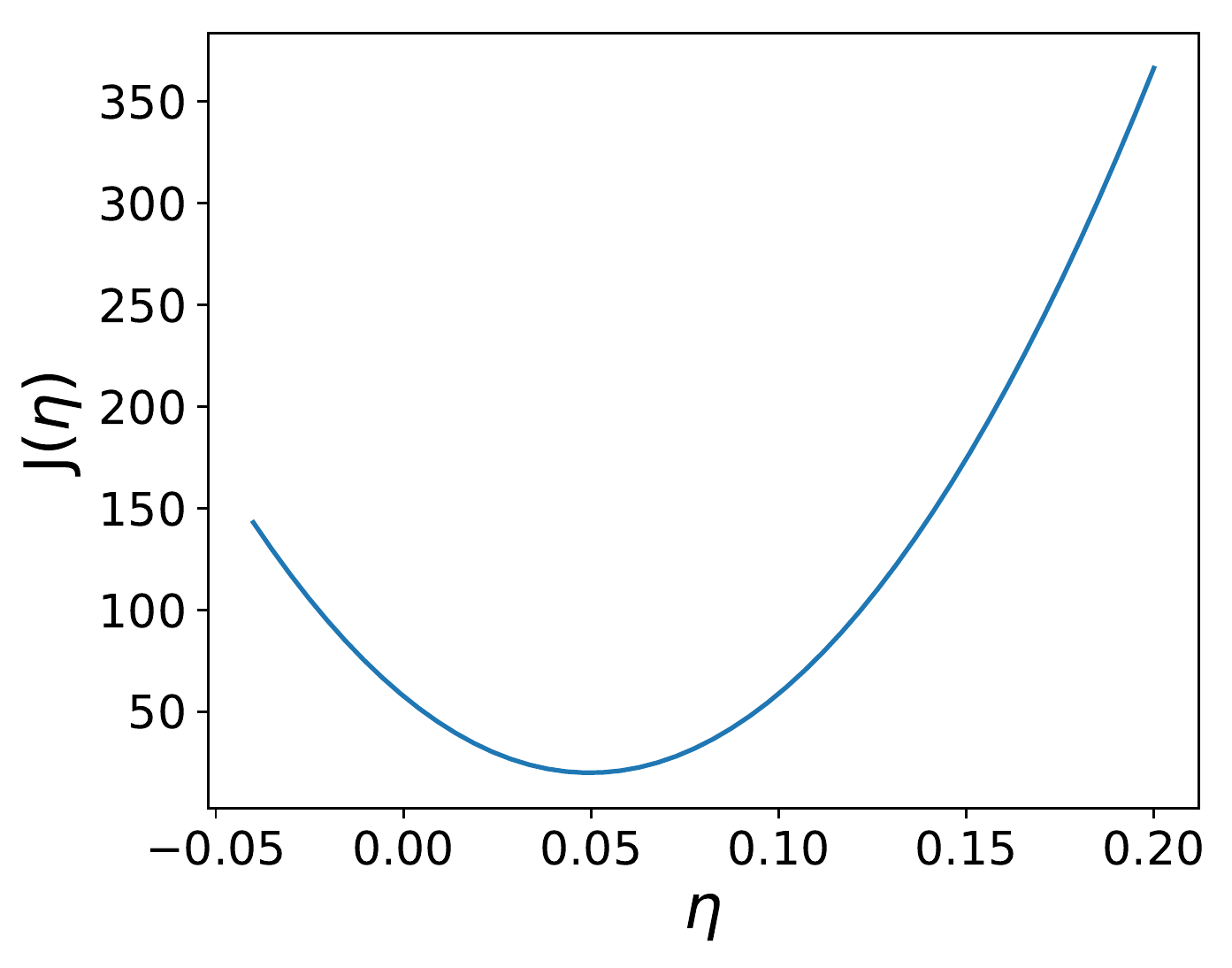}}
\subfigure[Various gradients on the line through the descent direction, where the gradient at the bottommost point is orthogonal to the gradient of the previous step.]{\label{fig:quadratic_steepest_tilt_gradient_direction}
	\includegraphics[width=0.485\linewidth]{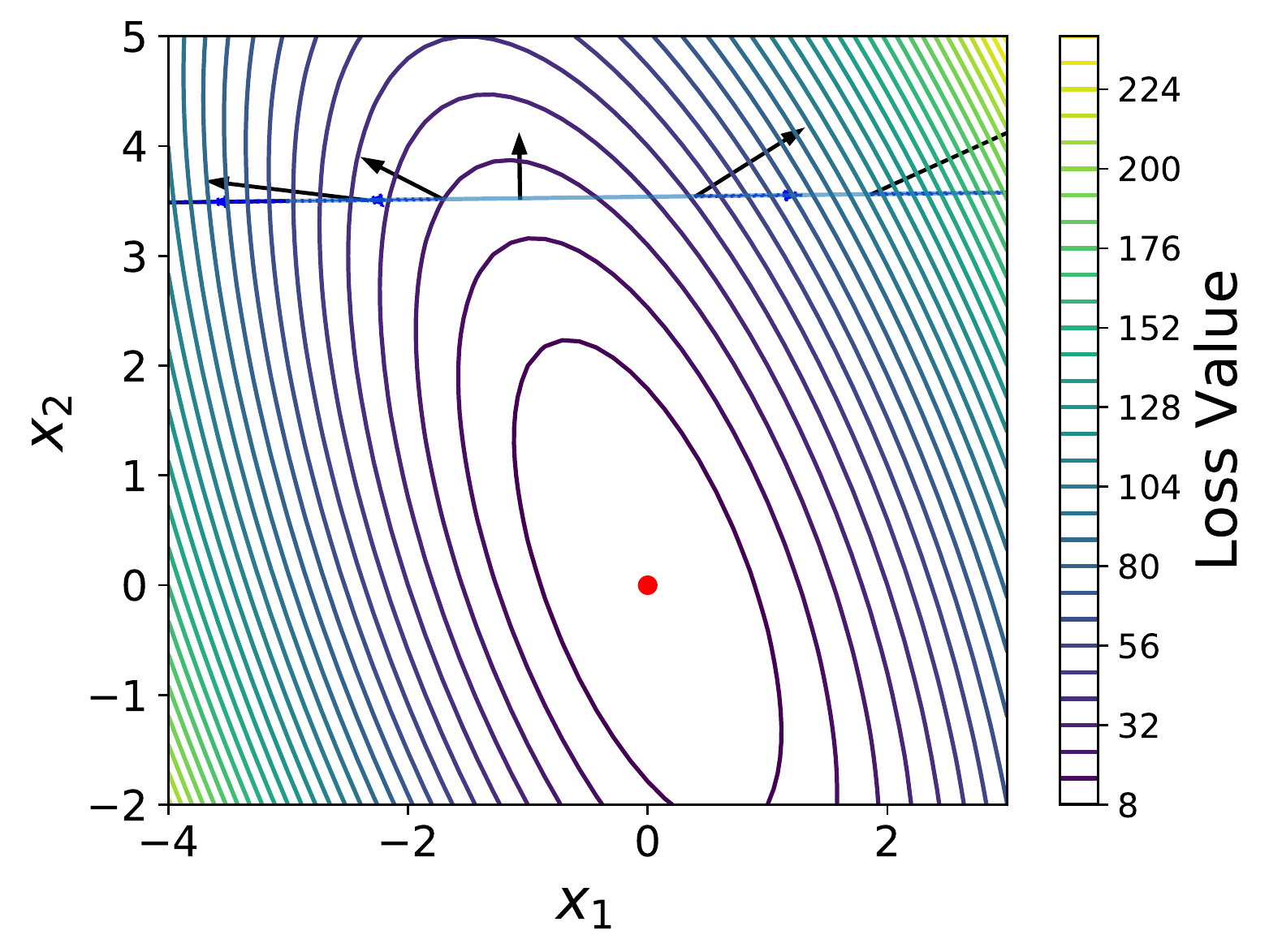}}
\caption{Illustration for the line search of quadratic form with $\bA=\begin{bmatrix}
		20 & 7 \\ 5 & 5
	\end{bmatrix}$, $\bb=\bzero$, and $c=0$. The procedure is at $\bx_t=[-3,3.5]^\top$ for $t$-th iteration.
}
\label{fig:quadratic_steepest_tilt}
\end{figure}

\begin{figure}[h]
\centering  
\vspace{-0.35cm} 
\subfigtopskip=2pt 
\subfigbottomskip=2pt 
\subfigcapskip=-5pt 
\subfigure[Vanilla GD, $\eta=0.02$.]{\label{fig:momentum_gd_conjugate2}
	\includegraphics[width=0.31\linewidth]{./imgs/steepest_gd_mom-0_lrate-2.pdf}}
\subfigure[Vanilla GD, $\eta=0.08$.]{\label{fig:momentum_gd_conjugate8}
	\includegraphics[width=0.31\linewidth]{./imgs/steepest_gd_mom-0_lrate-8.pdf}}
\subfigure[Steepest descent.]{\label{fig:conjguatecy_zigzag2}
	\includegraphics[width=0.31\linewidth]{./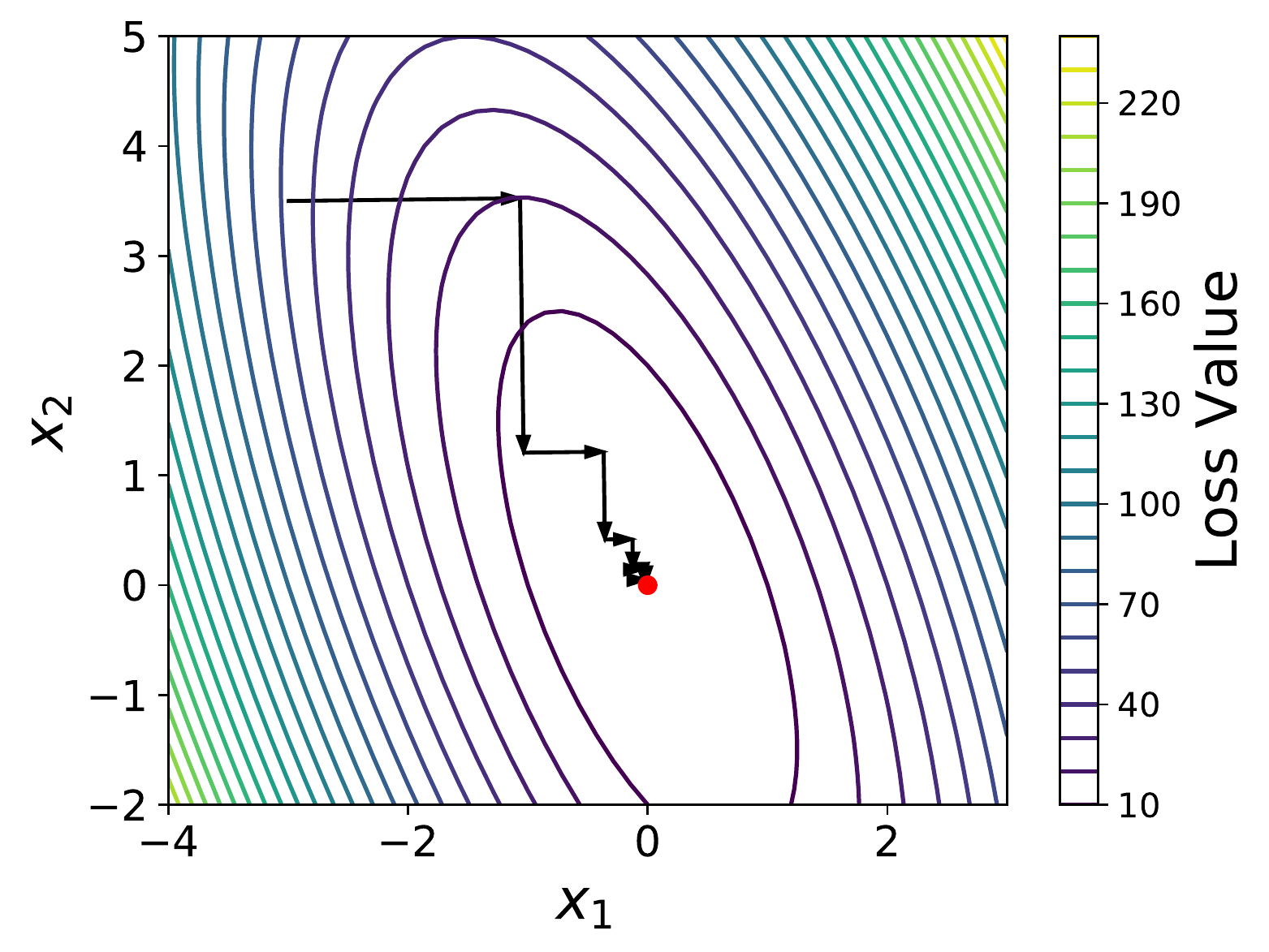}}
\caption{Illustration for the line search of quadratic form with $\bA=\begin{bmatrix}
		20 & 7 \\ 5 & 5
	\end{bmatrix}$, $\bb=\bzero$, and $c=0$. The procedure is at $\bx_t=[-3,3.5]^\top$ for $t$-th iteration. The example is executed for 10 iterations by vanilla GD with $\eta=0.02$, vanilla GD with $\eta=0.08$, and steepest descent, respectively. We notice the tedious choices in vanilla GD; and the zigzag path in the steepest GD with line search due to the orthogonality between each gradient and the previous gradient (Lemma~\ref{lemm:linear-search-orghonal}).
}
\label{fig:quadratic_steepest_tilt22}
\end{figure}

A concrete example is presented in Figure~\ref{fig:quadratic_steepest_tilt}, where $\bA=\begin{bmatrix}
20 & 7 \\ 5 & 5
\end{bmatrix}$, $\bb=\bzero$, and $c=0$. 
Suppose at the $t$-th iteration, the parameter is positioned at $\bx_t=[-3,3.5]^\top$. 
Additionally, Figure~\ref{fig:quadratic_steepest_contour_tilt} presents the descent direction via the negative gradient. The line search involves selecting the learning rate $\eta_t$ by minimizing $J(\eta) = L(\bx_t + \eta\bd_t)$, which is equivalent to determining the point on the intersection of the vertical plane through the descent direction and the paraboloid defined by the loss function $L(\bx)$, as shown in Figure~\ref{fig:quadratic_steepest_surface_tilt}. Figure~\ref{fig:quadratic_steepest_tilt_intersection} further shows the parabola defined by the intersection of the two surfaces. 
In Figure~\ref{fig:quadratic_steepest_tilt_gradient_direction}, various gradients on the line through the descent direction are displayed, where the gradient at the bottommost point is orthogonal to the gradient of the previous step $\nabla L(\bx_{t+1})^\top \bd_t$, as proved in Lemma~\ref{lemm:linear-search-orghonal}, where the black arrows are the gradients and the blue arrows are the projection of these gradients along $\bd_t = -\nabla L(\bx_t)$. 
An intuitive explanation for this orthogonality at the minimum is as follows: the slope of the parabola (Figure~\ref{fig:quadratic_steepest_tilt_intersection}) at any point is equal to the magnitude of the projection of the gradients onto the search direction (Figure~\ref{fig:quadratic_steepest_tilt_gradient_direction}) \citep{shewchuk1994introduction}. These projections represent the rate of increase of the loss function $L(\bx)$ as the point traverses the search line; and the minimum of $L(\bx)$ occurs where the projection is zero, and corresponding to when  the gradient is orthogonal to the search line.

The example is executed for 10 iterations in Figure~\ref{fig:momentum_gd_conjugate2}, Figure~\ref{fig:momentum_gd_conjugate8}, and Figure~\ref{fig:conjguatecy_zigzag2} using vanilla GD with $\eta=0.02$, vanilla GD with $\eta=0.08$, and steepest descent, respectively. It is evident that vanilla GD involves cumbersome choices of learning rates; and the zigzag trajectory in the steepest GD with line search, resulting from the orthogonality between each gradient and the previous gradient (Lemma~\ref{lemm:linear-search-orghonal}), can be observed. However, this drawback and limitation will be partly addressed in conjugate descent (Section~\ref{section:conjugate-descent}, p.~\pageref{section:conjugate-descent}).

\index{Spectral decomposition}
\index{Quadratic form}
\subsubsection{Special Case: Symmetric Quadratic Form}
To delve into the convergence results of steepest descent, we explore specific scenarios.
Following \citet{shewchuk1994introduction}, we first introduce some key definitions: the \textit{error vector} between the parameter $\bx_t$ at $t$-th iteration and the optimal parameter point $\bx_\star$, and the \textit{residual vector} between target $\bb$ and prediction at $t$-th iteration $\bA\bx_t$.

\begin{definition}[Error and Residual Vector]\label{definition:error-gd-}
At iteration $t$, the \textit{error vector} is defined as $\be_t = \bx_t - \bx_\star$, a vector indicates how far the iterate is from the solution, where $\bx_\star =\bA^{-1}\bb$ when $\bA$ is symmetric and nonsingular. 
Substituting into Eq.~\eqref{equation:steepest-quadratic}, the update for the error vector is
\begin{equation}\label{equation:steepest-quadratic-error}
	\be_{t+1} = \be_t - \frac{\bg_t^\top \bg_t}{ \bg_t^\top \bA\bg_t } \bg_t.
\end{equation}
Furthermore, the \textit{residual} $\br_t = \bb - \bA\bx_t$ indicates how far the iterate is from the correct value of $\bb$. Note in this case, the residual is equal to the negative gradient and the descent direction, i.e., $\br_t = \bd_t = -\bg_t$, when $\bA$ is symmetric (we may use $-\bg_t$ and $\br_t$ interchangeably when $\bA$ is symmetric). 
\end{definition}

We first consider the case where the error vector $\be_t$ at iteration $t$ is an eigenvector of $\bA$ corresponding to eigenvalue $\lambda_t$, i.e., $\bA\be_t = \lambda_t\be_t $. Then the gradient vector (for symmetric $\bA$ by Eq.~\eqref{equation:symmetric_gd_gradient}, p.~\pageref{equation:symmetric_gd_gradient})
$$
\bg_t = \bA\bx_t-\bb = \bA\left(\bx_t-\underbrace{\bA^{-1}\bb}_{\bx_\star}\right) =\bA\be_t = \lambda_t\be_t
$$ 
is also an eigenvector of $\bA$ corresponding to eigenvalue  $\lambda_t$, i.e., $\bA\bg_t = \lambda_t \bg_t$.
By Eq.~\eqref{equation:steepest-quadratic-error}, the update for $(t+1)$-th iteration is 
$$
\begin{aligned}
\be_{t+1} &= \be_t   - \frac{\bg_t^\top \bg_t}{ \bg_t^\top \bA\bg_t } \bg_t\\
&=\be_t - \frac{\bg_t^\top \bg_t}{ \lambda_t \bg_t^\top \bg_t } (\lambda_t\be_t) = \bzero .
\end{aligned}
$$
Therefore, convergence to the solution is achieved in just one additional step when   $\be_t$ is an eigenvector of $\bA$. A concrete example is shown in Figure~\ref{fig:steepest_gd_bisection_eigenvector}, where $\bA=\begin{bmatrix}
20 & 5 \\ 5 & 5
\end{bmatrix}$, $\bb=\bzero$, and $c=0$.

\begin{figure}[h]
\centering  
\vspace{-0.35cm} 
\subfigtopskip=2pt 
\subfigbottomskip=2pt 
\subfigcapskip=-5pt 
\subfigure[Steepest GD, $\bA\be_t=\lambda_t\be_t$.]{\label{fig:steepest_gd_bisection_eigenvector}
	\includegraphics[width=0.481\linewidth]{./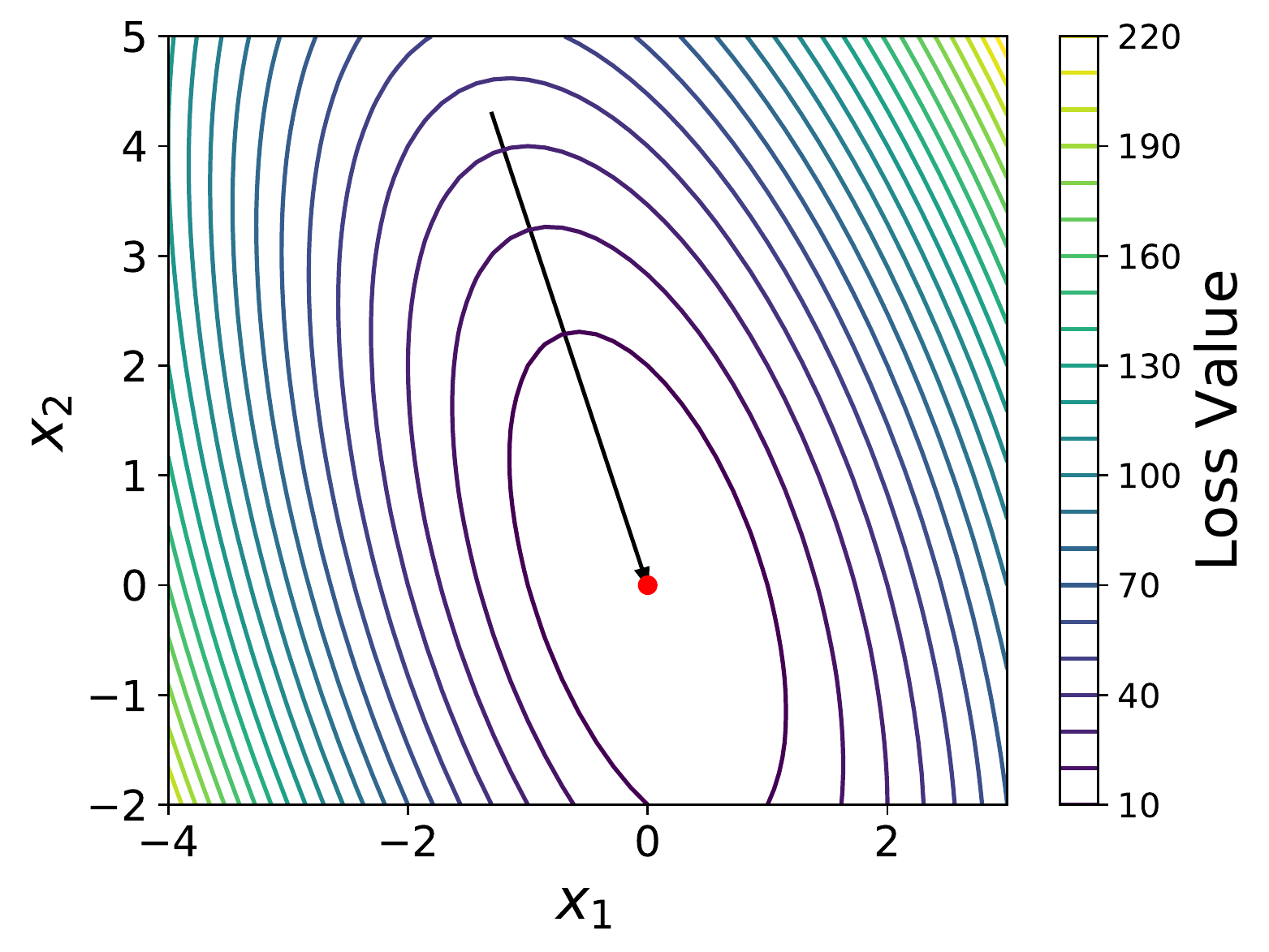}}
\subfigure[Steepest GD, $\lambda_1=\lambda_2$.]{\label{fig:steepest_gd_bisection_eigenvector_sameeigenvalue}
	\includegraphics[width=0.481\linewidth]{./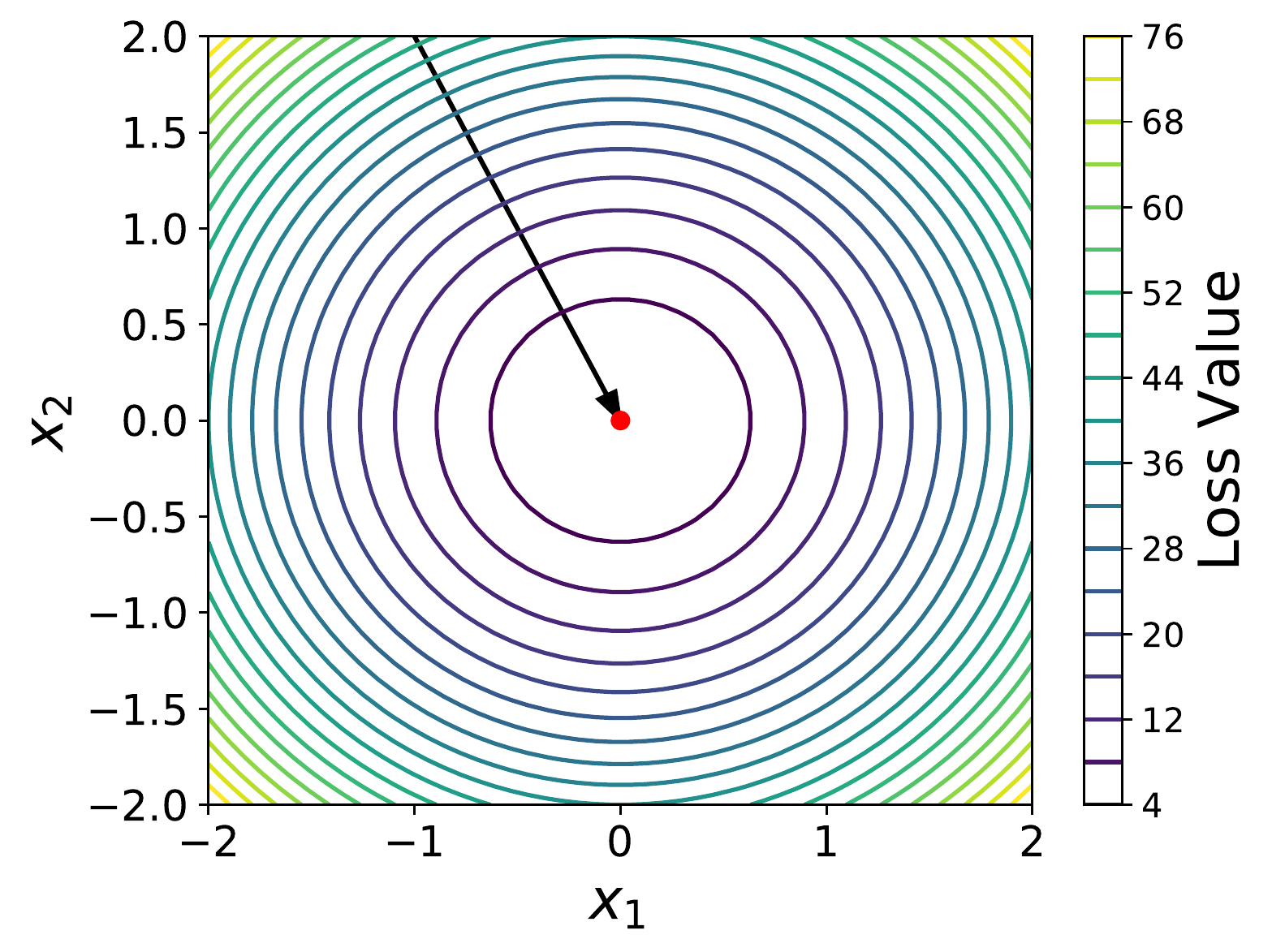}}
\caption{Illustration of special cases for GD with line search of quadratic form. $\bA=\begin{bmatrix}
		20 & 5 \\ 5 & 5
	\end{bmatrix}$, $\bb=\bzero$, $c=0$, and starting point to descent is $\bx_t=[-1.3, 4.3]^\top$ for Fig~\ref{fig:steepest_gd_bisection_eigenvector}. $\bA=\begin{bmatrix}
		20 & 0 \\ 0 & 20
	\end{bmatrix}$, $\bb=\bzero$, $c=0$, and starting point to descent is $\bx_t=[-1, 2]^\top$ for Fig~\ref{fig:steepest_gd_bisection_eigenvector_sameeigenvalue}.}
\label{fig:steepest_specialcases}
\end{figure}

\subsubsection{Special Case: Symmetric with Orthogonal Eigenvectors}
When $\bA$ is symmetric, it admits spectral decomposition (Theorem 13.1 in \citet{lu2022matrix} or Appendix~\ref{appendix:spectraldecomp}, p.~\pageref{appendix:spectraldecomp}):
$$
\bA=\bQ\bLambda\bQ^\top \in \real^{d\times d} \leadto \bA^{-1} = \bQ\bLambda^{-1}\bQ^\top,
$$ 
where the columns of $\bQ = [\bq_1, \bq_2, \ldots , \bq_d]$ are eigenvectors of $\bA$ and are mutually orthonormal, and the entries of $\bLambda = \diag(\lambda_1, \lambda_2, \ldots , \lambda_d)$ with $\lambda_1\geq \lambda_2\geq \ldots \geq \lambda_d$ are the corresponding eigenvalues of $\bA$, which are real. Since the eigenvectors are chosen to be mutually orthonormal:
$$
\bq_i^\top \bq_j =
\left\{
\begin{aligned}
&1, \gap i=j;\\
&0, \gap i\neq j,
\end{aligned}
\right.
$$
the eigenvectors also span the entire space $\real^d$ such that every error vector $\be_t  \in \real^d$ can be expressed as a linear combination of the eigenvectors:
\begin{equation}\label{equation:steepest-et-eigen-decom}
\be_t = \sum_{i=1}^{d} \alpha_i \bq_i,
\end{equation}
where $\alpha_i$ indicates the component of $\be_t$ in the direction of $\bq_i$.
Then the gradient vector (for symmetric $\bA$ by Eq.~\eqref{equation:symmetric_gd_gradient}, p.~\pageref{equation:symmetric_gd_gradient}) can be obtained by  
\begin{equation}\label{equation:steepest-eigen-decom-part}
\begin{aligned}
\bg_t &= \bA\bx_t-\bb  =\bA\be_t = 
\bA  \sum_{i=1}^{d} \alpha_i \bq_i\\
&= 
\sum_{i=1}^{d} \alpha_i \lambda_i\bq_i,
\end{aligned}
\end{equation} 
i.e., a linear combination of eigenvectors with length at the $i$-th dimension being $\alpha_i\lambda_i$.
Again by Eq.~\eqref{equation:steepest-quadratic-error}, the update for the $(t+1)$-th iteration is 
$$
\begin{aligned}
\be_{t+1} &= \be_t   - \frac{\bg_t^\top \bg_t}{ \bg_t^\top \bA\bg_t } \bg_t\\
&=\be_t - \frac{\sum_{i=1}^{d}\alpha_i^2\lambda_i^2}{ \sum_{i=1}^{d}\alpha_i^2\lambda_i^3 } \sum_{i=1}^{d} \alpha_i \lambda_i\bq_i \\
&= \bzero .
\end{aligned}
$$
The above equation indicates  that when only one component of $\alpha_i$'s is nonzero, the convergence is achieved in only one step, as illustrated in Figure~\ref{fig:steepest_gd_bisection_eigenvector}.
More specially, when $\lambda_1=\lambda_2=\ldots =\lambda_d=\lambda$, i.e., all the eigenvalues are the same, it then follows that 
$$
\begin{aligned}
\be_{t+1} &= \be_t   - \frac{\bg_t^\top \bg_t}{ \bg_t^\top \bA\bg_t } \bg_t\\
&=\be_t - \frac{\sum_{i=1}^{d}\alpha_i^2}{ \sum_{i=1}^{d}\alpha_i^2 }  \be_t \\
&= \bzero .
\end{aligned}
$$
Therefore, it takes only one step further to converge to the solution for any arbitrary $\be_t$. A specific example is shown in Figure~\ref{fig:steepest_gd_bisection_eigenvector_sameeigenvalue}, where $\bA=\begin{bmatrix}
20 & 0 \\ 0 & 20
\end{bmatrix}$, $\bb=\bzero$, and $c=0$.

\subsubsection{General Convergence Analysis for Symmetric PD Quadratic}\label{section:general-converg-steepest}
To delve into the general convergence results, we further define the \textit{energy norm} for the error vector, denoted by $\norm{\be}_{\bA} = (\be^\top\bA\be)^{1/2}$. It can be shown that minimizing $\norm{\be_t}_{\bA}$ is equivalent to minimizing $L(\bx_t)$ due to the relation:
\begin{equation}\label{equation:energy-norm-equivalent}
\norm{\be}_{\bA}^2  = 2L(\bx_t) \underbrace{- 2L(\bx_\star) -2\bb^\top\bx_\star}_{\text{constant}}.
\end{equation}
With  the definition of the energy norm, Eq.~\eqref{equation:steepest-quadratic-error}, and the symmetric positive definiteness of $\bA$, the update on the energy norm sequence is expressed as follows:
\begin{equation}\label{equation:energy-norm-leq}
\begin{aligned}
	\norm{\be_{t+1}}_{\bA}^2 &= \be_{t+1}^\top \bA\be_{t+1} \\
	&= \left(\be_t - \frac{\bg_t^\top \bg_t}{ \bg_t^\top \bA\bg_t } \bg_t\right)^\top \bA \left(\be_t - \frac{\bg_t^\top \bg_t}{ \bg_t^\top \bA\bg_t } \bg_t\right)\\
	&=\norm{\be_t}_{\bA}^2 + \left(\frac{\bg_t^\top \bg_t}{ \bg_t^\top \bA\bg_t }\right)^2 \bg_t^\top \bA\bg_t 
	- 2 \frac{\bg_t^\top \bg_t}{ \bg_t^\top \bA\bg_t } \bg_t^\top \bA \be_t   \\
	&=\norm{\be_t}_{\bA}^2 - \frac{(\bg_t^\top \bg_t)^2}{ \bg_t^\top \bA\bg_t }    \gap\gap&(\bA\be_t = \bg_t)\\
	&=\norm{\be_t}_{\bA}^2 \cdot\left(1- \frac{(\bg_t^\top \bg_t)^2}{ \bg_t^\top \bA\bg_t \cdot \be_t^\top \bA\be_t}\right)\\
	&=\norm{\be_t}_{\bA}^2  \cdot\left(1- \frac{(\sum_{i=1}^{d}\alpha_i^2\lambda_i^2)^2}{(\sum_{i=1}^{d}\alpha_i^2\lambda_i^3) \cdot (\sum_{i=1}^{d}\alpha_i^2\lambda_i)}\right)
	\gap\gap&(\text{by Eq.~\eqref{equation:steepest-et-eigen-decom}, Eq.~\eqref{equation:steepest-eigen-decom-part}} )\\
	&=\norm{\be_t}_{\bA}^2 \cdot r^2, 
\end{aligned}
\end{equation}
where $r^2 =\left(1- \frac{(\sum_{i=1}^{d}\alpha_i^2\lambda_i^2)^2}{(\sum_{i=1}^{d}\alpha_i^2\lambda_i^3) \cdot (\sum_{i=1}^{d}\alpha_i^2\lambda_i)}\right)$
determines the rate of convergence. As per convention, we assume $\lambda_1 \geq \lambda_2\geq \ldots \geq \lambda_d>0$, i.e., the eigenvalues are ordered in magnitude and positive since $\bA$ is positive definite. Then the condition number is defined as $\kappa = \frac{\lambda_1}{\lambda_d}$. 
Additionally, let $\kappa_i = \lambda_i/\lambda_d$ and $\sigma_i = \alpha_i/\alpha_1$. It follows that 
$$
r^2 = 
\left(1- 
\frac{ (\kappa^2+ \sum_{i=\textcolor{mylightbluetext}{2}}^{d}\sigma_i^2\kappa_i^2)^2}
{ (\kappa^3 + \sum_{i=\textcolor{mylightbluetext}{2}}^{d}\sigma_i^2\kappa_i^3) \cdot 
( \kappa +\sum_{i=\textcolor{mylightbluetext}{2}}^{d}\sigma_i^2\kappa_i)}
\right).
$$
Therefore, the rate of convergence is further controlled by $\kappa$, $\sigma_i$'s, and $\kappa_i$'s, where $|\kappa_i|\geq  1$ for $i\in \{2,3,\ldots,d\}$. 

\paragraph{Two-dimensional case.} Specifically, when $d=2$, we have 
\begin{equation}\label{equation:2d-rate-steepest}
r^2=
1- 
\frac{ (\kappa^2+ \sigma_2^2)^2}
{ (\kappa^3 + \sigma_2^2) \cdot 
	( \kappa +\sigma_2^2)} .
\end{equation}
Figure~\ref{fig:converge_contour_steepest} depicts the value $r^2$ as a function of $\kappa$ and $\sigma_2$. When $d=2$, from Eq.~\eqref{equation:steepest-et-eigen-decom}, we have 
\begin{equation}\label{equation:steepest-et-eigen-decom-2d}
\be_t = \alpha_1 \bq_1+\alpha_2\bq_2. 
\end{equation}
This confirms the two special examples shown in Figure~\ref{fig:steepest_specialcases}: when $\be_t$ is an eigenvector of $\bA$, it follows that:
$$
\begin{aligned}
\text{case 1: }	&\alpha_2=0\leadto  \sigma_2 = \alpha_2/\alpha_1 \rightarrow 0;\\
\text{case 2: }	&\alpha_1=0\leadto \sigma_2 = \alpha_2/\alpha_1 \rightarrow \infty,
\end{aligned}
$$
i.e., the slope of $\sigma_2$ is either zero or infinite, the rate of convergence approaches  zero and it converges instantly in just one step (example in Figure~\ref{fig:steepest_gd_bisection_eigenvector}). While if the eigenvalues are identical, $\kappa=1$, once again, the rate of convergence is zero (example in Figure~\ref{fig:steepest_gd_bisection_eigenvector_sameeigenvalue}).

\begin{figure}[h]
\centering
\includegraphics[width=0.6\textwidth]{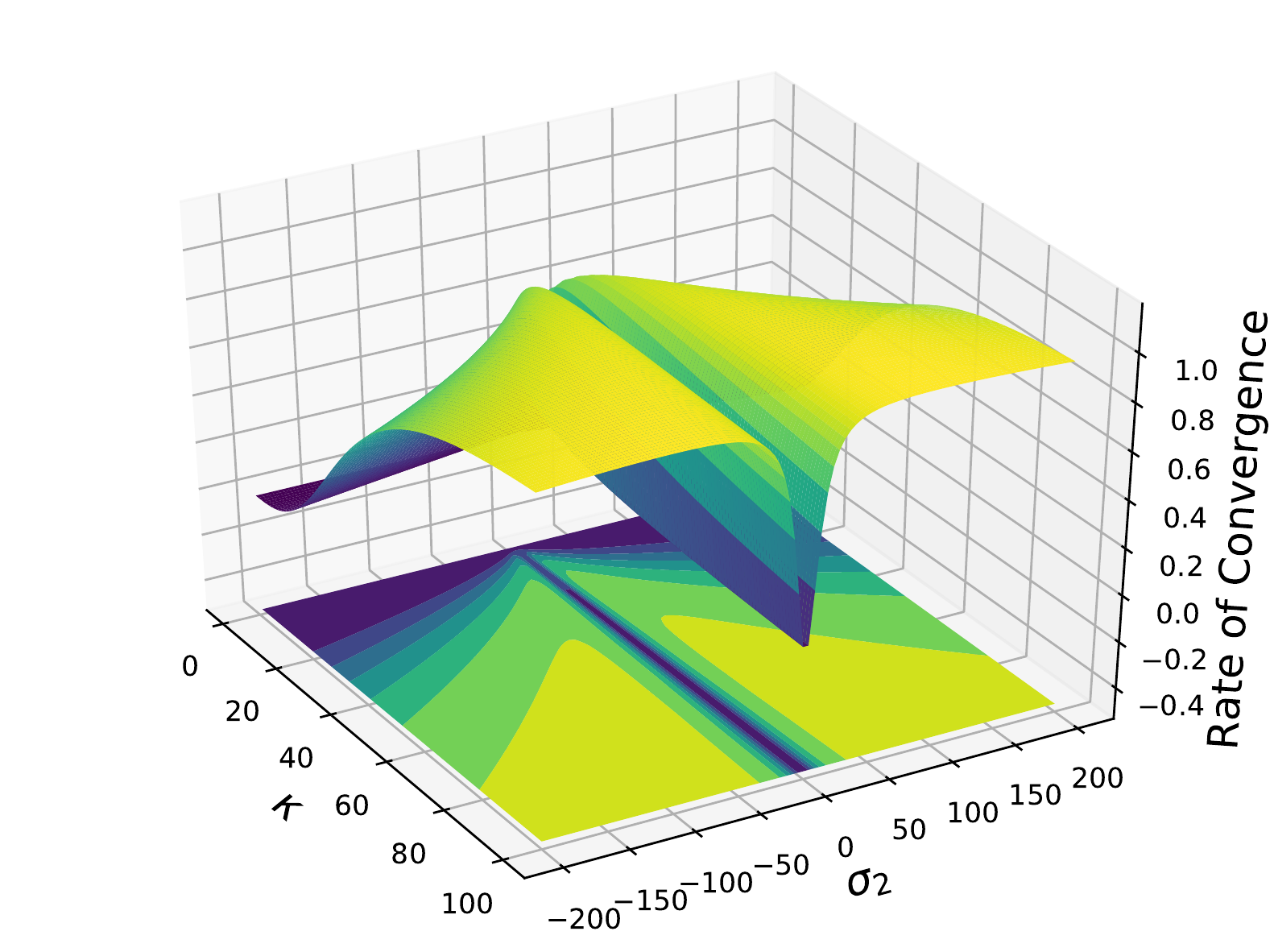}
\caption{Demonstration of the rate of convergence in steepest descent method with two-dimensional parameter. When $\sigma_2=0, \infty$, or $\kappa=1$, the rate of convergence is 0, which makes the update converges instantly in one step. The two cases correspond to $\be_t$ being an eigenvector of $\bA$ and the eigenvalues being identical, respectively, as examples shown in Figure~\ref{fig:steepest_specialcases}.}
\label{fig:converge_contour_steepest}
\end{figure}

\begin{figure}[h]
\centering
\includegraphics[width=0.5\textwidth]{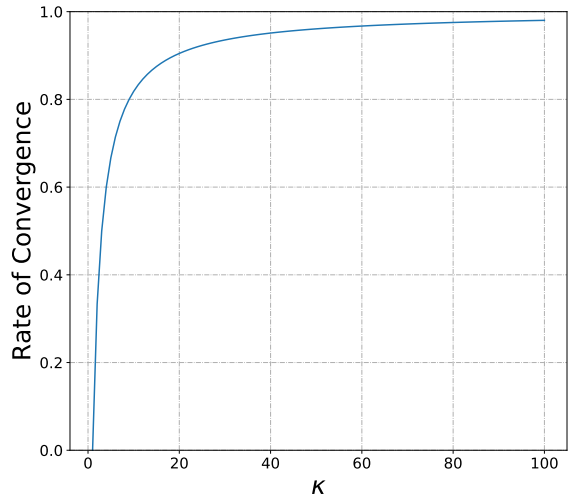}
\caption{Upper bound on the rate of convergence (per iteration) in steepest descent method with two-dimensional parameter. The $y$-axis is $\frac{\kappa-1}{\kappa+1}$.}
\label{fig:rate_convergen_steepest}
\end{figure}
\index{Rate of convergence}

\paragraph{Worst case.} We recall that $\sigma_2 = \alpha_2/\alpha_1$ determines the error vector $\be_t$ (Eq.~\eqref{equation:steepest-et-eigen-decom} or Eq.~\eqref{equation:steepest-et-eigen-decom-2d}), which, in turn, decides the point $\bx_t$ in the two-dimensional case. It is then interesting to identify the worst point to descent. Holding $\kappa$ fixed (i.e., $\bA$ and the loss function $L(\bx)$ are fixed), suppose further $t=1$, we aim to see the worst starting point $\bx_1$ to descent. It can be shown that the rate of convergence in Eq.~\eqref{equation:2d-rate-steepest} is maximized when $\sigma_2=\pm \kappa$:
$$
\begin{aligned}
r^2 &\leq 1-\frac{4 \kappa^2}{\kappa^5+2\kappa^4+\kappa^3}= \frac{(\kappa-1)^2}{(\kappa+1)^2}.
\end{aligned}
$$
Substituting into Eq.~\eqref{equation:energy-norm-leq}, we have 
$$
\begin{aligned}
\norm{\be_{t+1}}_{\bA}^2 &\leq \norm{\be_t}_{\bA}^2 \cdot \frac{(\kappa-1)^2}{(\kappa+1)^2},\\
\leadto 
&\norm{\be_{t+1}}_{\bA} \leq \norm{\be_1}_{\bA} \cdot\left(\frac{\kappa-1}{\kappa+1}\right)^t.
\end{aligned}
$$
The upper bound of the rate of convergence (per iteration) is plotted in Figure~\ref{fig:rate_convergen_steepest}. Once again, the more \textit{ill-conditioned} the matrix, the slower the convergence of steepest descent. We may notice that the (upper bound of the) rate of convergence is the same as that of the vanilla GD in Eq.~\eqref{equation:vanialla-gd-rate} (p.~\pageref{equation:vanialla-gd-rate}). However, the two are different in that the rate of the vanilla GD is described in terms of the $\by_t$ vector in Eq.~\eqref{equation:vanilla-yt} (p.~\pageref{equation:vanilla-yt}); while the rate of steepest descent is presented in terms of the energy norm. Moreover, the rate of vanilla GD in Eq.~\eqref{equation:vanialla-gd-rate} (p.~\pageref{equation:vanialla-gd-rate}) is obtained by selecting a specific learning rate, as shown in Eq.~\eqref{equation:eta-vanilla-gd} (p.~\pageref{equation:eta-vanilla-gd}), which is not practical in vanilla GD since the learning rate is fixed throughout all iterations. This makes the rate of vanilla GD more of  a tight bound. In practice, vanilla GD converges slower than steepest descent, as evident in the examples shown in Figure~\ref{fig:momentum_gd_conjugate2}, Figure~\ref{fig:momentum_gd_conjugate8}, and Figure~\ref{fig:conjguatecy_zigzag2}.

\newpage
\clearchapter{Learning Rate Annealing and Warmup}
\begingroup
\hypersetup{linkcolor=winestain,
	linktoc=page,  
}
\minitoc \newpage
\endgroup

\index{Learning rate annealing}
\index{Learning rate warmup}
\section{Learning Rate Annealing and Warmup}\label{section:learning-rate-annealing}
\lettrine{\color{caligraphcolor}W}
We have discussed in Eq.~\eqref{equation:gd-equaa-gene} (p.~\pageref{equation:gd-equaa-gene}) that the learning rate $\eta$ controls how large of a step to take in the direction of the negative gradient so that we can reach a (local) minimum. In a wide range of applications, a fixed learning rate works well in practice. While there are alternative learning rate schedules that change the learning rate during learning, and it is most often changed between epochs. We shall see in the sequel that per-dimension optimizers can change the learning rate in each dimension adaptively, e.g., AdaGrad, AdaDelta, RMSProp, and AdaSmooth \citep{duchi2011adaptive, hinton2012neural, zeiler2012adadelta, lu2022adasmooth}; while in this section, we focus on strategies for decaying or annealing the global learning rate, i.e., the value of $\eta$ in Eq.~\eqref{equation:gd-equaa-gene} (p.~\pageref{equation:gd-equaa-gene}).

A constant learning rate often poses a dilemma to the analyst: a small learning rate used will cause the algorithm to take too long to reach anywhere close to an optimal solution. While a large initial learning rate will allow the algorithm to come reasonably close to a good (local) minimum in the cost surface at first; however, the algorithm will then oscillate back and forth around the point for a very long time. One method to prevent this challenge is to slow down the parameter updates by decreasing the learning rate. This can be done manually when the validation accuracy appears to plateau. 
On the other hand, decaying the learning rate over time, based on how many epochs through the data have been done, can naturally address these issues. 
Common decay functions include \textit{step decay}, \textit{inverse decay}, and \textit{exponential decay}. 
The subsequent section delves into the mathematical formulations of various learning rate annealing schemes.

\index{Learning rate annealing}
\section{Learning Rate Annealing}\label{section:learning-rate-anneal}
\paragraph{Step decay.} The \textit{step decay} scheduler drops the learning rate by a factor every epoch or every few epochs. For a given iteration $t$, number of iterations to drop $n$, initial learning rate $\eta_0$, and decay factor $d<1$, the form of step decay is given by
$$
\eta_t = \eta_0 \cdot  d^{\floor{\frac{t}{n}} }=\eta_0 \cdot  d^s,
$$
where $s={\floor{\frac{t}{n}} }$ is called the \textit{step stage} to decay. Therefore, the step decay policy decays the learning rate every $n$ iterations.

\paragraph{Multi-step decay.}
The \textit{multi-step decay} scheduler is a slightly different version of the step decay, wherein the step stage is the index where the iteration $t$ falls in the milestone vector $\bmm=[m_1,m_2,\ldots, m_k]^\top$ with $0\leq m_1\leq m_2\leq\ldots\leq m_k\leq T$ and $T$ being the total number of iterations (or epochs) \footnote{When $T$ is the total number of iterations, it can be obtained by the product of the number of epochs and the number of steps per epoch.}. To be more concrete, the step stage $s$ at iteration $t$ is obtained by
$$
s = 
\left\{
\begin{aligned}
	&0, \gap &t<m_1;\\
	&1, \gap &m_1\leq t < m_2; \\
	&\ldots\\
	&k, \gap &m_k\leq t \leq T. 
\end{aligned}
\right.
$$
As a result, given the iteration $t$, initial learning rate $\eta_0$, and decay factor $d<1$, the learning rate at iteration $t$ is calculated as 
$$
\eta_t = \eta_0 \cdot  d^s.  
$$

\begin{figure}[h]
	\centering
	\includegraphics[width=0.6\textwidth]{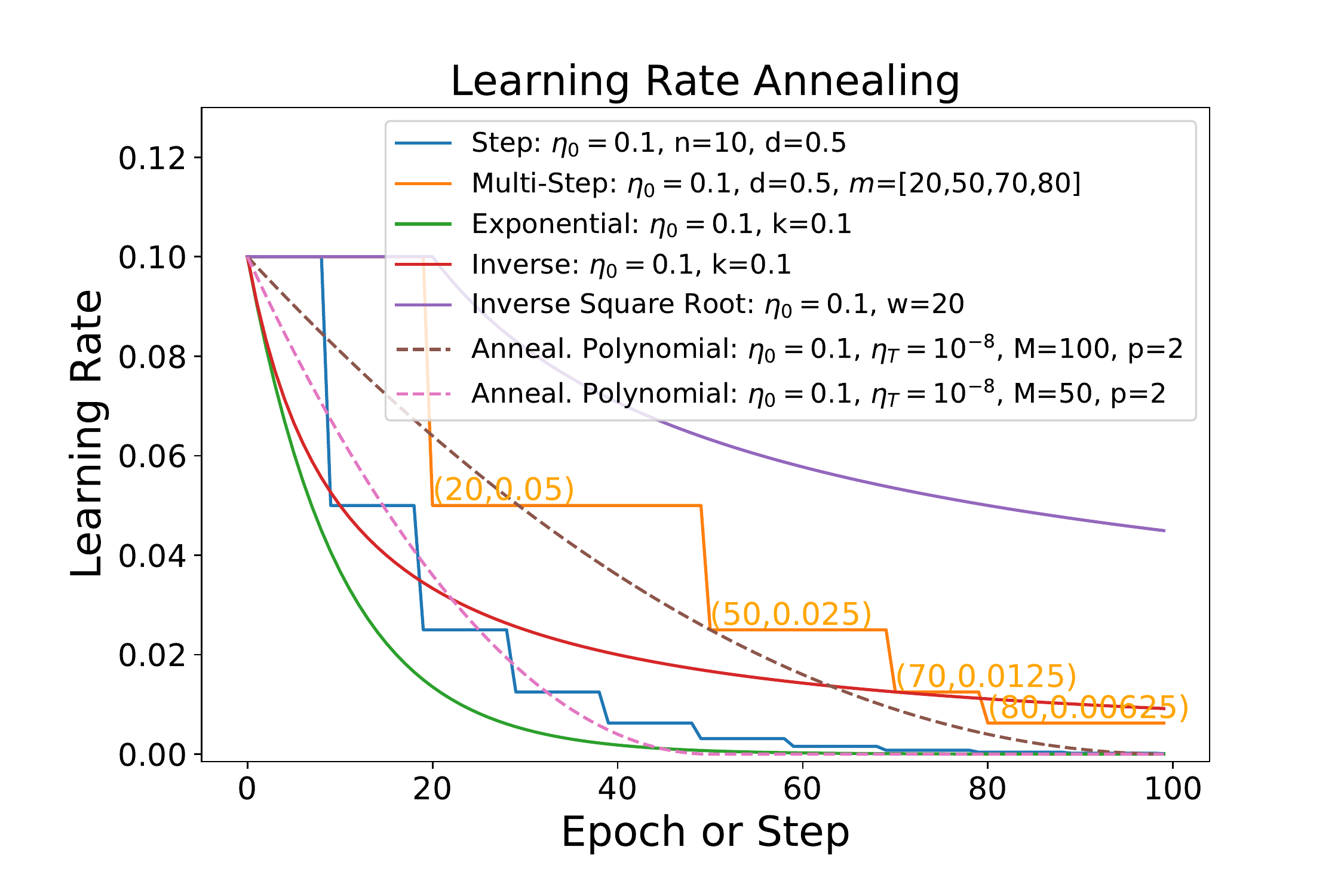}
	\caption{Demonstration of step decay, multi-step decay, annealing polynomial, inverse decay, inverse square root, and exponential decay schedulers. One may find that among the six, exponential decay exhibits the smoothest behavior, while multi-step decay is characterized by the least smoothness.}
	\label{fig:lr_step_decay}
\end{figure}

\paragraph{Exponential decay.}
Given the iteration $t$, the initial learning rate $\eta_0$, and the exponential decay factor $k$, the form of the \textit{exponential decay} is given by
$$
\eta_t = \eta_0 \cdot \exp(-k \cdot t),
$$
where the parameter $k$ controls the rate of the decay.


\paragraph{Inverse decay.}
The \textit{inverse decay} scheduler is a variant of exponential decay in that the decaying effect is applied by the inverse function. Given the iteration number $t$, the initial learning rate $\eta_0$, and the decay factor $k$, the form of the inverse decay is obtained by
$$
\eta_t = \frac{\eta_0}{1+ k\cdot t},
$$
where, again, the parameter $k$ controls the rate of the decay.

\paragraph{Inverse square root.} 
The \textit{inverse square root} scheduler is a learning rate schedule 
$$
\eta_t = \eta_0 \cdot \sqrt{w} \cdot \frac{1}{\sqrt{\max(t, w)}},
$$
where $t$ represents the current training iteration,  $w$ is the number of warm-up steps, and $\eta_0$ is the initial learning rate. This scheduler maintains a constant learning rate for the initial  steps, then exponentially decays the learning rate until the pre-training phase concludes.

\paragraph{Annealing polynomial decay.}
Given the iteration $t$, the \textit{max decay iteration} $M$, the power factor $p$, the initial learning rate $\eta_0$, and the final learning rate $\eta_T$, the \textit{annealing polynomial decay} at iteration $t$ can be obtained by 
\begin{equation}\label{equation:annealing_polynomial}
	\begin{aligned}
		& decay\_batch = \min(t, M) ;\\
		& \eta_t = (\eta_0-\eta_T)\cdot \left(1-\frac{t}{decay\_batch}\right)^{p}+\eta_T.
	\end{aligned}
\end{equation}
In practice, the default values for the parameters are: initial rate $\eta_0=0.001$, end rate $\eta_T=1e-10$, the warm up steps $M=T/2$ where $T$ is the maximal iteration number, and power rate $p=2$.

Figure~\ref{fig:lr_step_decay} compares step decay, multi-step decay, annealing polynomial decay, inverse decay, inverse square root, and exponential decay with a specified  set of parameters. 
The smoothness varies among these methods, with exponential decay exhibiting the smoothest behavior and multi-step decay being the least smooth.  
In  annealing polynomial decay, the max decay iteration parameter $M$ determines the decay rate:
\begin{itemize}
	\item When $M$ is small, the decay gets closer to that of the exponential scheduler or the step decay; however, the exponential decay has a longer tail. That is, the exponential scheduler decays slightly faster in the beginning iterations but slows down in the last few iterations.
	\item When $M$ is large, the decay gets closer to that of the multi-step decay; however, the multi-step scheduler exhibits a more aggressive behavior.
\end{itemize}

\begin{figure}[h]
	\centering  
	\vspace{-0.35cm} 
	\subfigtopskip=2pt 
	\subfigbottomskip=2pt 
	\subfigcapskip=-5pt 
	\subfigure[Training loss.]{\label{fig:mnist_scheduler_1}
		\includegraphics[width=0.47\linewidth]{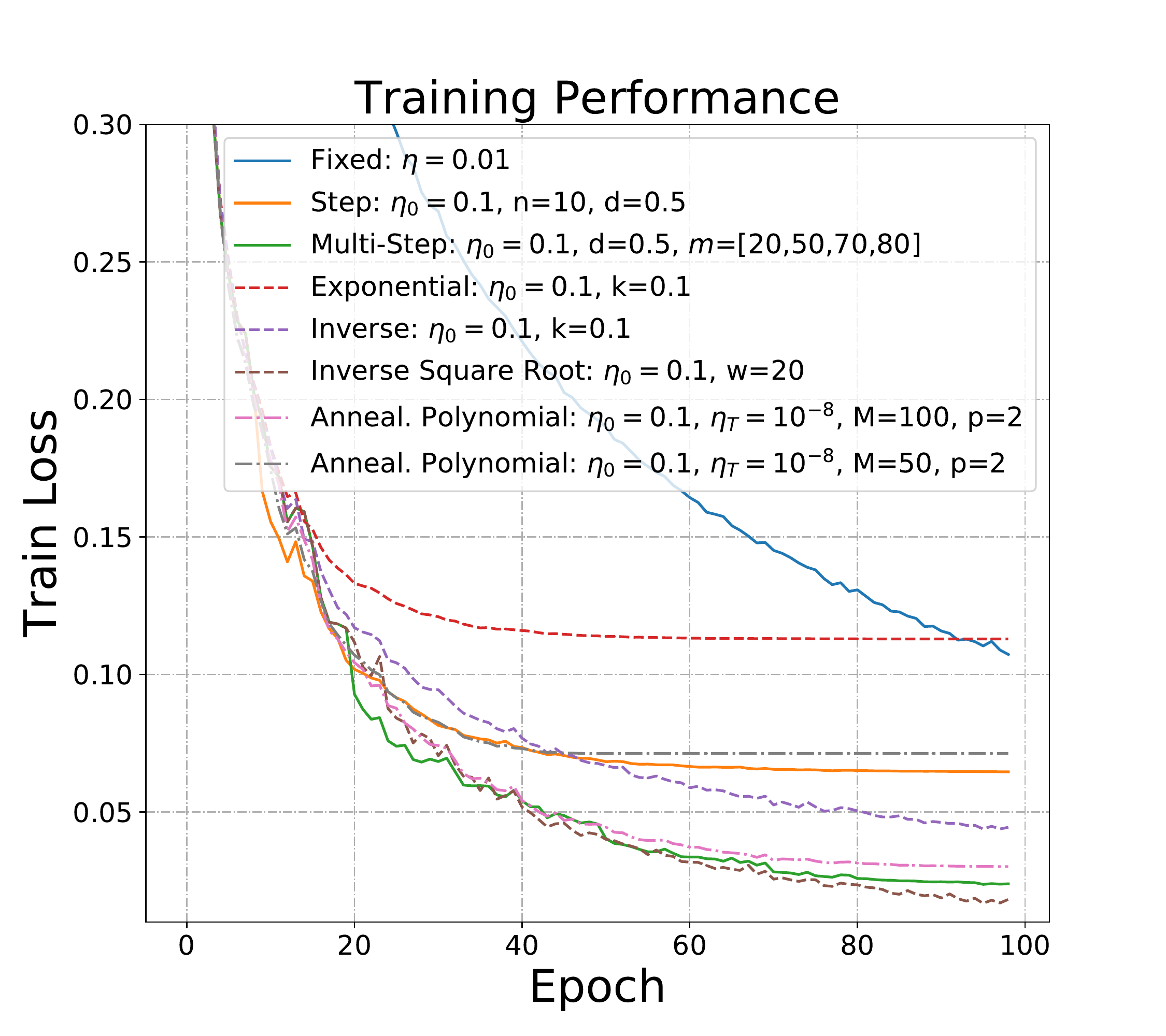}}
	\subfigure[Training accuracy.]{\label{fig:mnist_scheduler_2}
		\includegraphics[width=0.47\linewidth]{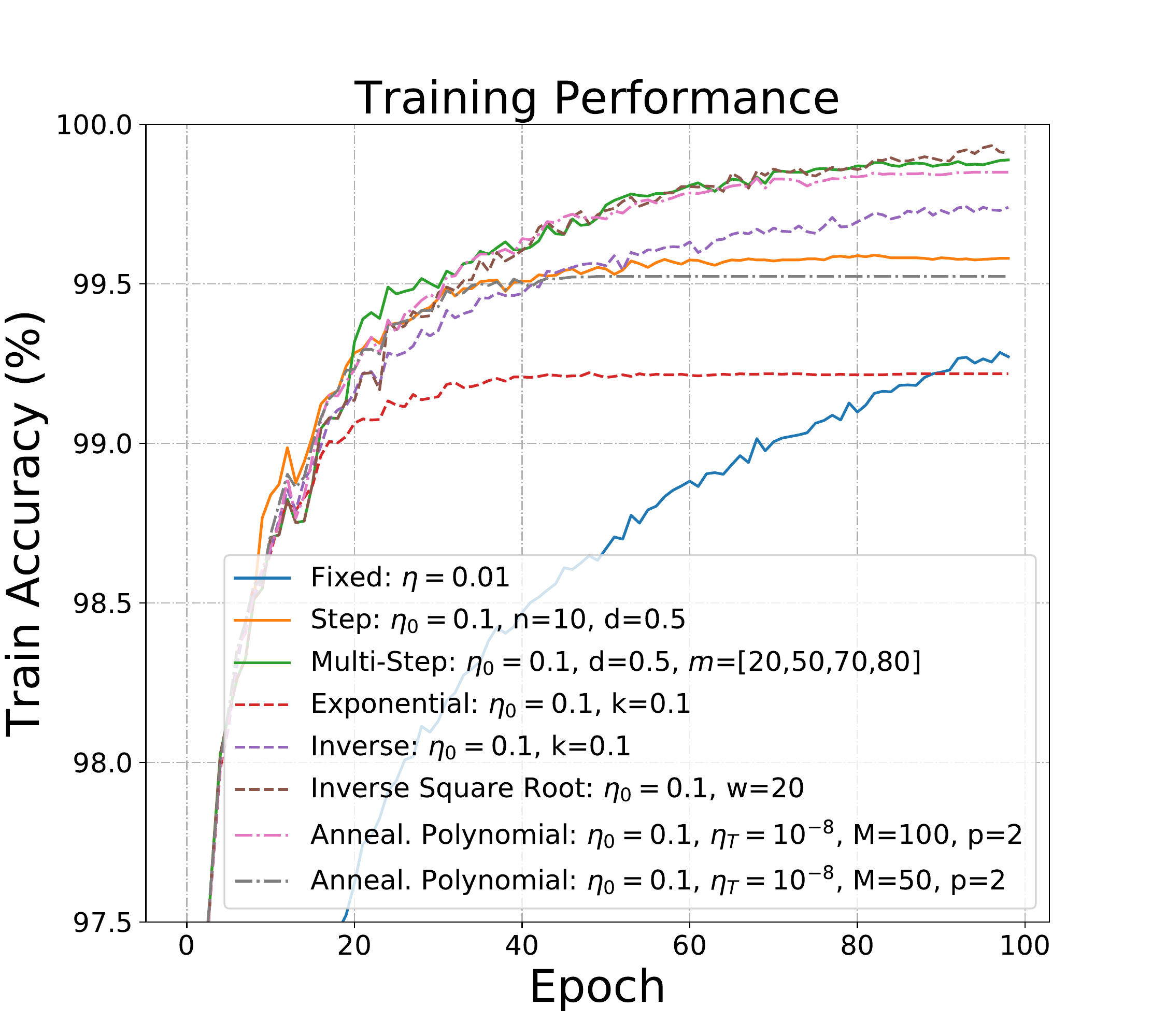}}
	\subfigure[Test loss.]{\label{fig:mnist_scheduler_3}
		\includegraphics[width=0.47\linewidth]{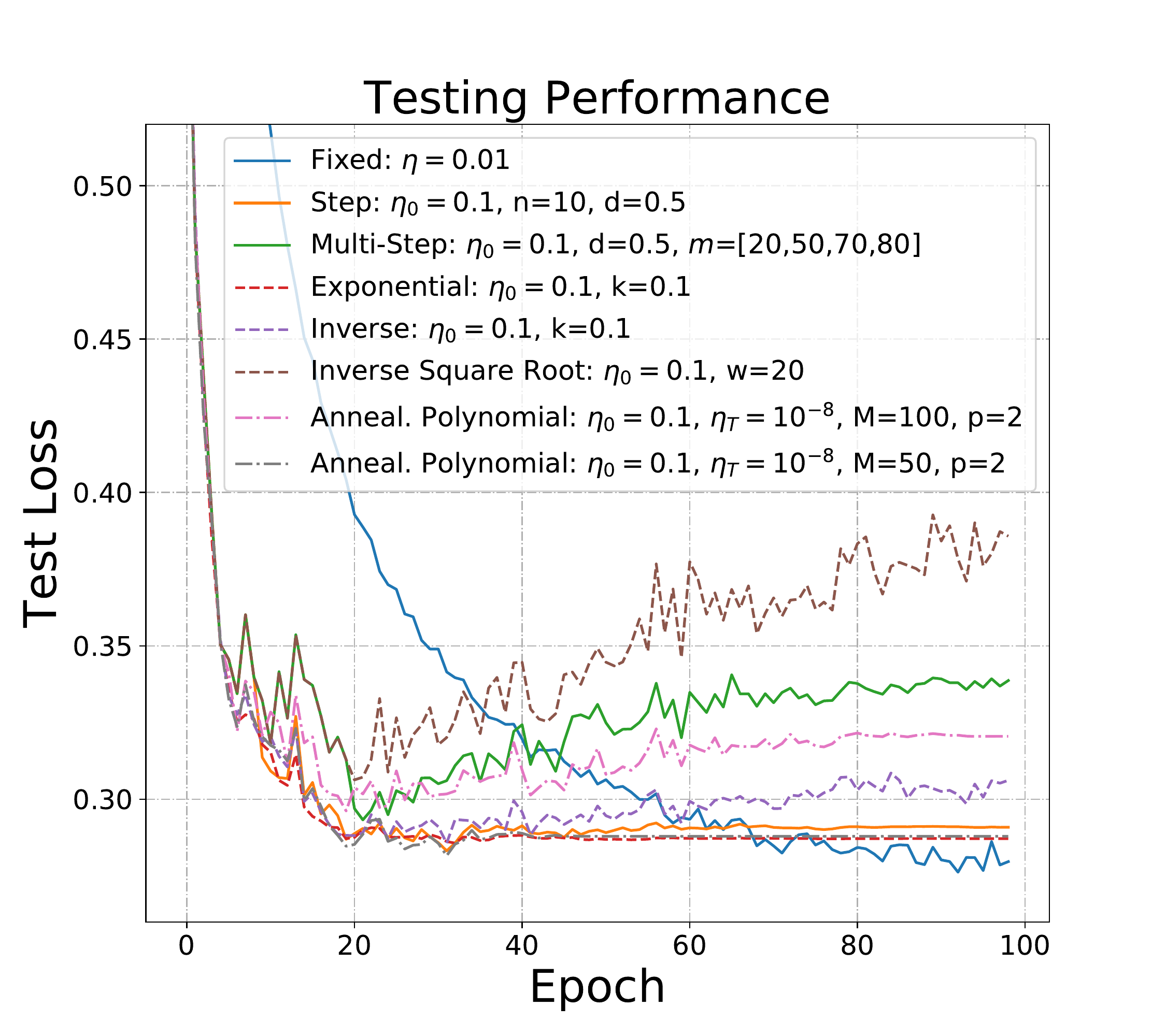}}
	\subfigure[Test accuracy.]{\label{fig:mnist_scheduler_4}
		\includegraphics[width=0.47\linewidth]{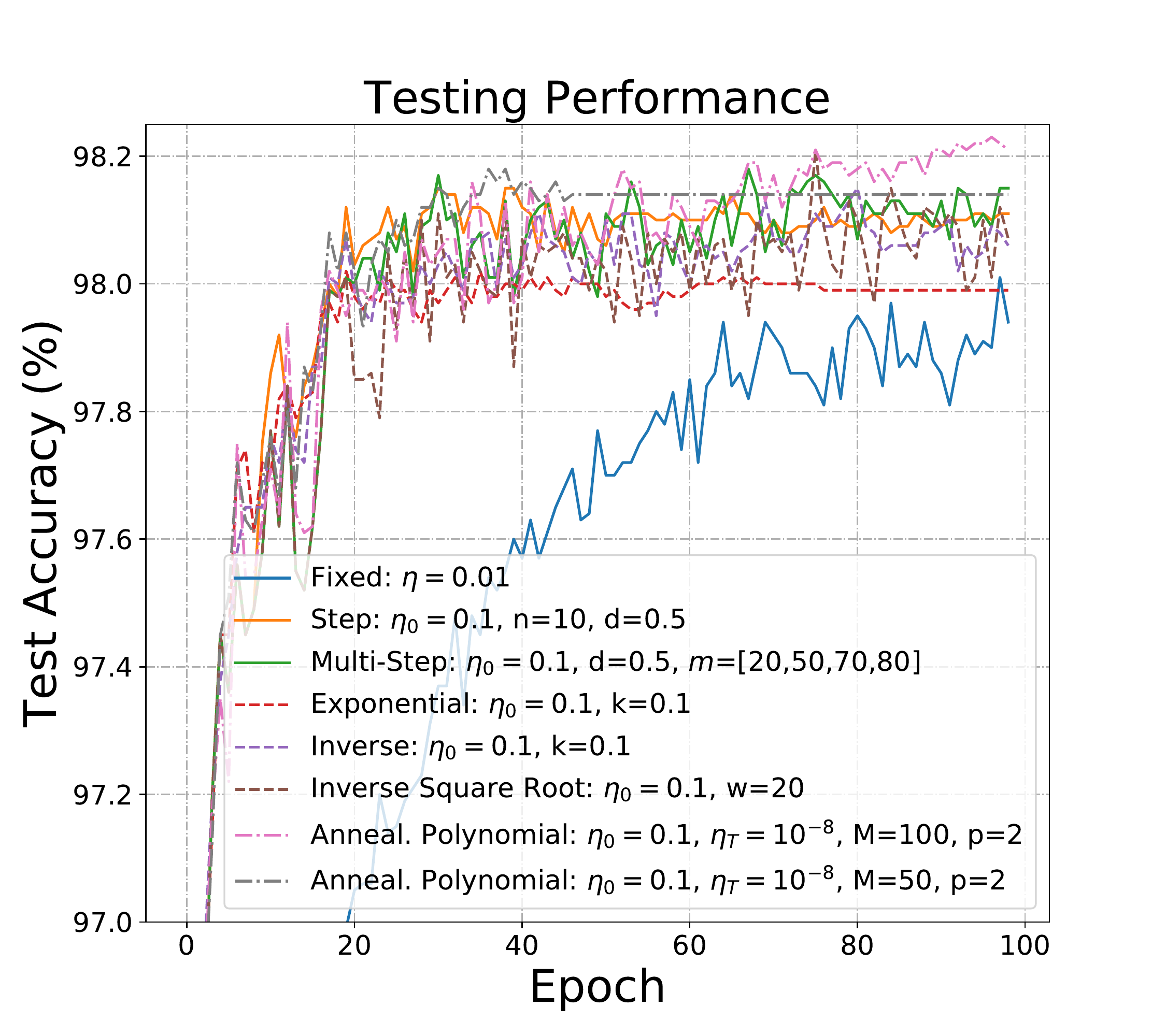}}
	\caption{Training and test performance with different learning rate schemes.}
	\label{fig:mnist_scheduler_1234}
\end{figure}

\paragraph{Toy example.}
To assess the impact of different schedulers, we utilize a toy example involving the training of a multi-layer perceptron (MLP) on the MNIST digit classification set  \citep{lecun1998mnist} \footnote{It has a training set of 60,000 examples, and a test set of 10,000 examples.}. Figure~\ref{fig:mnist_scheduler_1234} presents the training and test performance in terms of \textit{negative log-likelihood loss}. 
The parameters for various schedulers are detailed in Figure~\ref{fig:lr_step_decay} (for 100 epochs). We observe that the stochastic gradient descent method with fixed learning rate may lead to a continued reduction in test loss; however, its test accuracy may get stuck at a certain point. The toy example shows learning rate annealing schemes,  in general, can enhance optimization methods by guiding them towards better local minima with improved performance.

\index{Learning rate warmup}
\section{Learning Rate Warmup}
The concept of warmup in training neural networks receive attention in recent years \citep{he2016deep, goyal2017accurate, smith2019super}. Further insights into the efficacy of the warmup scheduler in neural machine translation (NML) can be found in the comprehensive discussion by  \citet{popel2018training}. 
The learning rate annealing schedulers can be utilized on both epoch- and step-basis. However, the learning rate warmup schemes are typically applied in the step context, where the total number of steps equals the product of the number of epochs and the number of steps per epoch, as aforementioned \citep{vaswani2017attention, howard2018universal}. Note that with this scheduler, early stopping should typically be avoided.
In the rest of this section, we delve into two commonly used warmup policies, namely, the slanted triangular learning rates (STLR) and the Noam methods.

\paragraph{Slanted Triangular Learning Rates (STLR).} STLR is a learning rate schedule that first linearly increases the learning rate over some number of epochs and then linearly decays it over the remaining epochs. The rate at iteration $t$ is computed as follows: 
$$
\begin{aligned}
	cut &= \ceil{T\cdot frac} ;\\
	p &= 
	\left\{
	\begin{aligned}
		&t/cut , \gap &\text{if }t<cut;\\
		& 1 - \frac{t-cut }{cut\cdot (1/frac-1)}, \gap &\text{otherwise};
	\end{aligned} 
	\right.\\
	\eta_t &= \eta_{\text{max}} \cdot \frac{1+p\cdot (ratio - 1)}{ratio },\\
\end{aligned}
$$
where $T$ is the number of training iterations (the product of the number of epochs and the number of updates per epoch), $frac$ is the fraction of iterations we want to increase the learning rate, $cut$ is the iteration when we switch from increasing to decreasing the learning rate, $p$ is the fraction of the number of iterations we have increased or decreased the learning rate respectively, $ratio$ specifies how much smaller the lowest learning rate is from the maximum learning rate $\eta_{\text{max}}$. In practice, the default values are $frac=0.1$, $ratio=32$, and $\eta_{\text{max}}=0.01$ \citep{howard2018universal}.

\paragraph{Noam.} The Noam scheduler is originally used in neural machine translation (NML) tasks and is proposed in \citet{vaswani2017attention}. This corresponds to increasing the learning rate linearly for the first ``warmup\_steps" 
training steps and decreasing it thereafter proportionally to the inverse square root of the
step number, scaled by the inverse square root of the dimensionality of the model (linear warmup for a given number of steps followed by exponential decay). Given the warmup steps $w$ and the model size $d_{\text{model}}$ (representing the hidden size parameter which dominates the number of parameters in the model), the learning rate $\eta_t$ at step $t$ can be calculated by 
$$
\eta_t = \alpha \cdot  \frac{1}{\sqrt{d_{\text{model}}}}\cdot  \min \left(\frac{1}{\sqrt{t}} ,  \frac{t}{w^{3/2}}\right),
$$
where $\alpha$ is a smoothing factor. In the original paper, the warmup step $w$ is set to $w=4000$. While in practice, $w=25000$ can be a good choice.

\begin{figure}[h]
	\centering
	\includegraphics[width=0.6\textwidth]{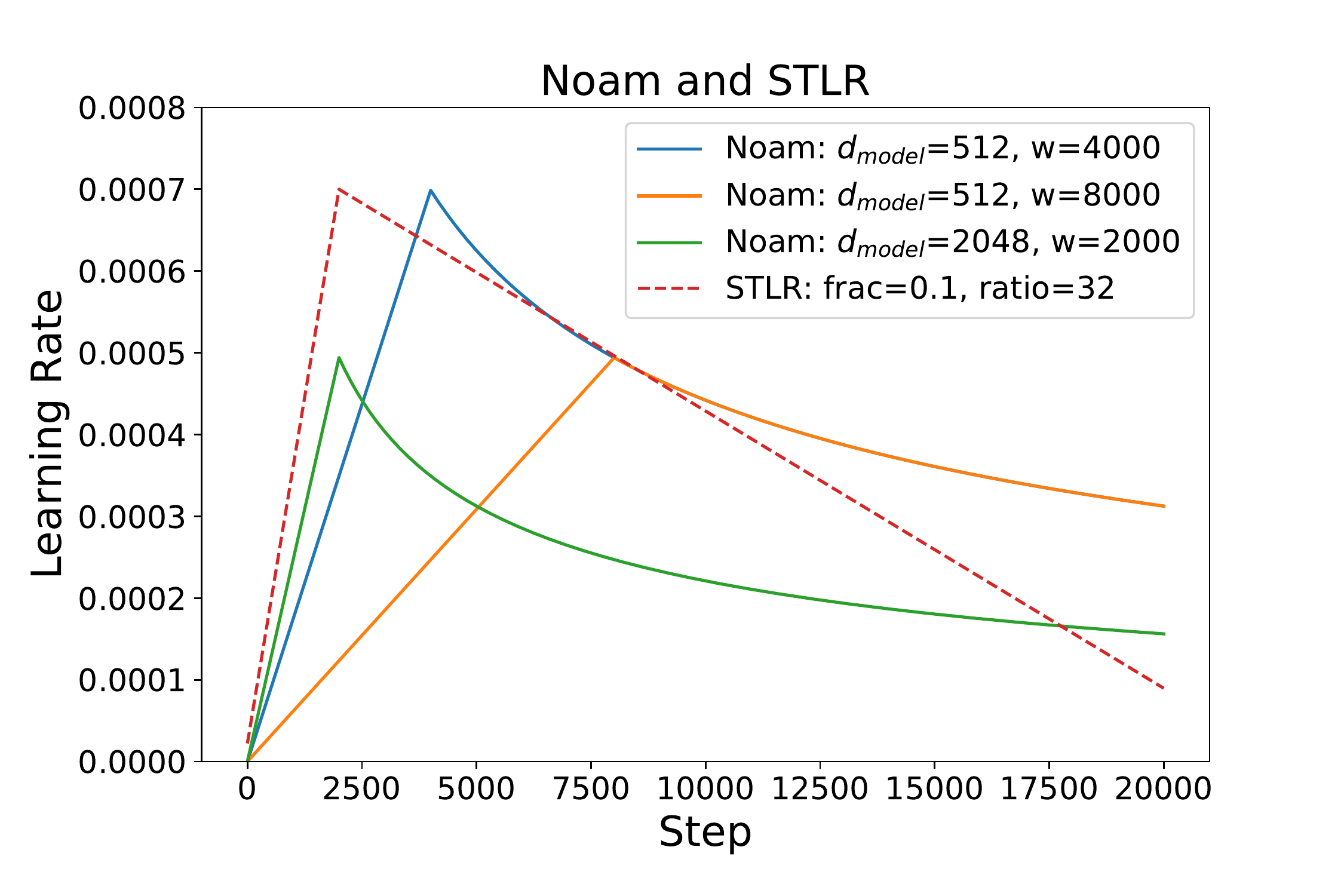}
	\caption{Comparison of Noam and STLR schedulers.}
	\label{fig:lr_noam}
\end{figure}
Moreover, in rare cases, the model size is occasionally  set to be the same as the warmup steps, resulting in what is known as the \textit{warmup Noam scheduler}:
$$
\eta_t = \alpha \cdot \frac{1}{\sqrt{w}} \cdot  \min \left(\frac{1}{\sqrt{t}} ,  \frac{t}{w^{3/2}}\right).
$$
\noindent Figure~\ref{fig:lr_noam} compares STLR and Noam schedulers with various parameters. We may observe that, in general, the Noam scheduler decays slower when the warmup phase finishes compared to the STLR.

%

\section{Cyclical Learning Rate (CLR) Policy}\label{section:cyclical-lr}
The cyclical learning rate is a generalization of warmup and decay policies (Noam scheme or STLR policy  typically involve only one cycle). The essence of this learning rate policy comes from the observation that a temporary increase in the learning rate might have a short-term negative effect and yet achieves a long-term beneficial effect. This observation leads to the idea of letting the learning rate vary within a range of values rather than adopting a stepwise fixed or exponentially decreasing value, where minimum and maximum boundaries are set to make the learning rate vary between them. The simplest function to adopt this idea is the triangular window function that linearly increases and then linearly decreases \citep{smith2017cyclical}.

\citet{dauphin2014identifying, dauphin2015equilibrated} argue that the difficulty in minimizing the loss arises from saddle points (toy example in Figure~\ref{fig:quadratic_saddle}, p.~\pageref{fig:quadratic_saddle}) rather than poor local minima. Saddle points have small gradients that slow the pace of the learning process. However, increasing the learning rate enables more rapid traversal of saddle point plateaus. In this scenario, a cyclical learning rate policy with periodical increasing and decreasing of the learning rate between minimum and maximum boundaries is reasonable. The minimum and maximum boundaries are problem-specific. 
Typically, the model is run for several epochs with different learning rates ranging from low to high values, known as the \textit{learning rate range test}. 
In this case, plotting the accuracy versus learning rate helps identify suitable minimum and maximum boundaries: when the accuracy starts to increase and when the accuracy slows, becomes ragged, or starts to fall, the two of which constitute good choices for the minimum and maximum boundaries.

The cyclical learning rate policies fall into two categories: the one based on iteration, and the one based on epoch. The former one implements  the annealing and warmup at each iteration, while the latter one does so on an epoch-basis \footnote{The total number of iterations equals the product of the number of epochs and the number of updates per epoch.}. However, there is no significance distinction  between the two; any policy can be applied in either one of the two fashions. In the following paragraphs, we will discuss the update policies based on their original proposals.

\paragraph{Triangular, Triangular2, and Exp Range.} 
The \textit{triangular} policy involves a linear increase and decrease of the learning rate. 
Given the initial learning rate $\eta_0$ (the lower boundary in the cycle), the maximum learning rate $\eta_{\max}$, and the step size $s$ (number of training iterations per half cycle), the learning rate $\eta_t$ at iteration $t$ can be obtained by:
$$
triangular: \gap 
\left\{
\begin{aligned}
	{cycle}&= \floor{1+\frac{t}{2s}};\\
	x &= \text{abs}\left(\frac{t}{s} - 2\times {cycle}+1\right);\\
	\eta_t &=\eta_0 + (\eta_{\max}-\eta_0) \cdot \max(0, 1-x),\\
\end{aligned}
\right.
$$
where the calculated \textit{cycle} indicates which cycle iteration $t$ is in.
The same as the \textit{triangular} policy, the \textit{triangular2} policy cuts in half at the end of each cycle:
$$
triangular2:\gap \eta_t =\eta_0 + (\eta_{\max}-\eta_0) \cdot \max(0, 1-x) \cdot \frac{1}{2^{\text{cycle}-1}}.
$$
Less aggressive than the \textit{triangular2} policy, the amplitude of a cycle in \textit{exp\_range} policy is scaled exponentially based on $\gamma^t$, where $\gamma<1$ is the scaling constant:
$$
exp\_range:\gap \eta_t =\eta_0 + (\eta_{\max}-\eta_0) \cdot \max(0, 1-x) \cdot \gamma^{t}.
$$ 
A comparison of these three policies is presented in Figure~\ref{fig:lr_triangular}. In practice, the step size $s$  is typically set to $2\sim 10$ times the number of iterations in an epoch \citep{smith2017cyclical}.

\begin{figure}[h]
	\centering
	\includegraphics[width=0.6\textwidth]{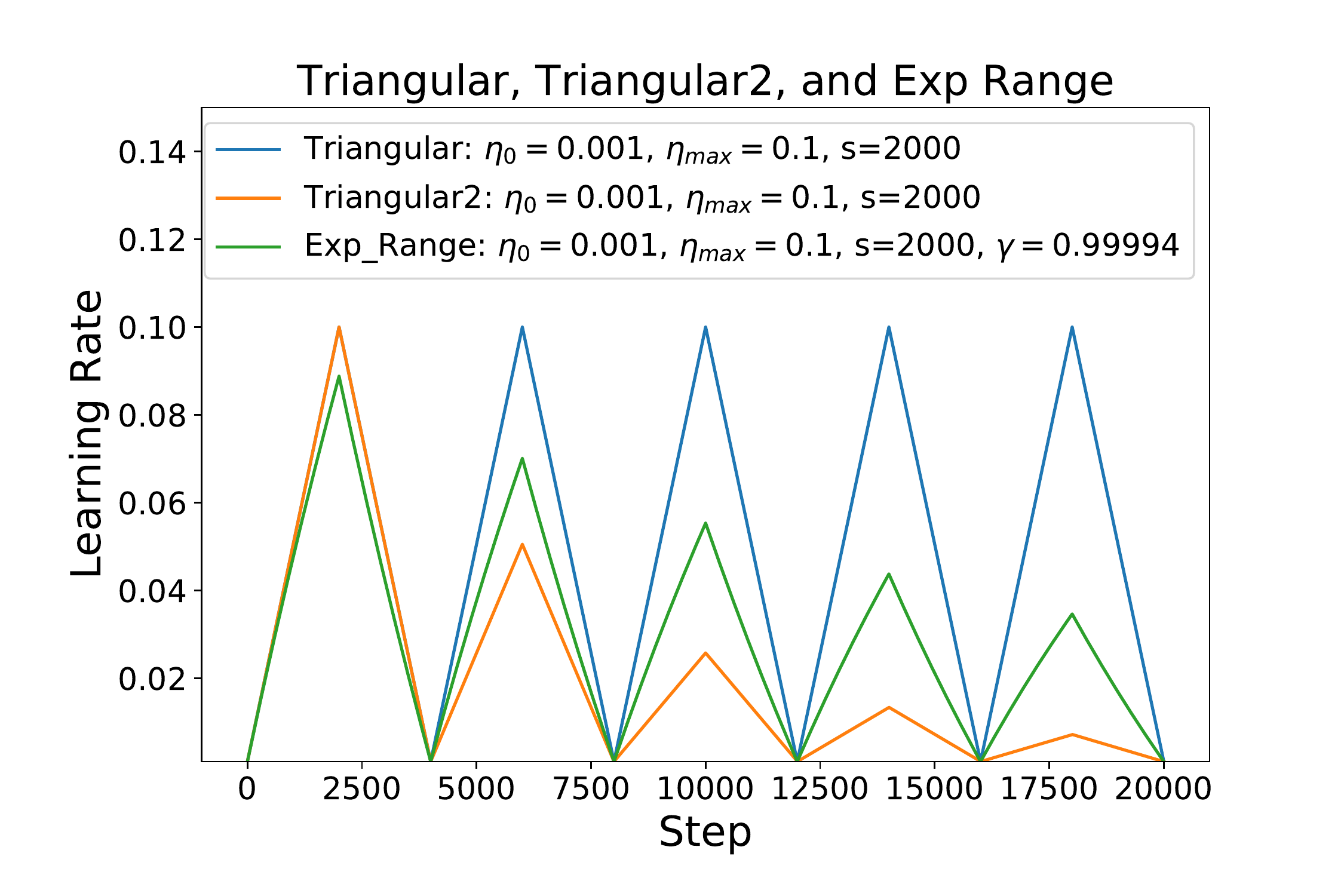}
	\caption{Demonstration of \textit{triangular}, \textit{triangular2}, and \textit{exp\_range} schedulers.}
	\label{fig:lr_triangular}
\end{figure}

\paragraph{Cyclical cosine.} 

The \textit{Cyclical cosine} is a type of learning rate scheduler that initiates with a high learning rate, rapidly decreases it to a minimum value, and then quickly increases it again.
The resetting of the learning rate acts as a simulated restart of the learning process and the re-use of good weights as the starting point of the restart is referred to as a ``warm restart" in contrast to a ``cold restart," where a new set of small random numbers may be used as a starting point \citep{loshchilov2016sgdr, huang2017snapshot}. The learning rate $\eta_t$ at iteration $t$ is calculated as follows:
$$
\eta_t = \frac{\eta_0}{2} \left( \cos \left(  \frac{\pi \,\, \text{mod}(t-1,\ceil{T/M} ) }{\ceil{T/M}}  \right)+1\right),
$$
where $T$ is the total number of training iterations (note the original paper takes the iterations as epochs in this sense \citep{loshchilov2016sgdr}), $M$ is the number of cycles, and $\eta_0$ is the initial learning rate. The scheduler anneals the learning rate from its initial value $\eta_0$ to a small learning rate approaching 0 over the course of a cycle. That is, we split the training process into $M$ cycles as shown in Figure~\ref{fig:lr_cosine}, each of which starts with a large learning rate $\eta_0$ and then gets annealed to a small learning rate.
The provided equation facilitates a rapid decrease in the learning rate, encouraging the model to converge towards its first local minimum after a few epochs. The optimization then continues at a larger learning rate that can perturb the model and dislodge it from the minimum \footnote{The goal of the procedure is similar to the perturbed SGD that can help escape from saddle points \citep{jin2017escape, du2017gradient}.}. The iterative procedure is then repeated several times to achieve multiple convergences. 
In practice, the iteration $t$ usually refers to the $t$-th epoch. More generally, any learning rate with general function $f$ in the following form can have a similar effect:
$$
\eta_t = f(\text{mod}(t-1, \ceil{T/M})).
$$
Moreover, the learning rate can be set for each batch rather than before each epoch to introduce more nuance to the updates \citep{huang2017snapshot}.


\begin{figure}[h]
	\centering  
	\vspace{-0.35cm} 
	\subfigtopskip=2pt 
	\subfigbottomskip=2pt 
	\subfigcapskip=-5pt 
	\subfigure[Cyclical Cosine.]{\label{fig:lr_cosine}
		\includegraphics[width=0.48\linewidth]{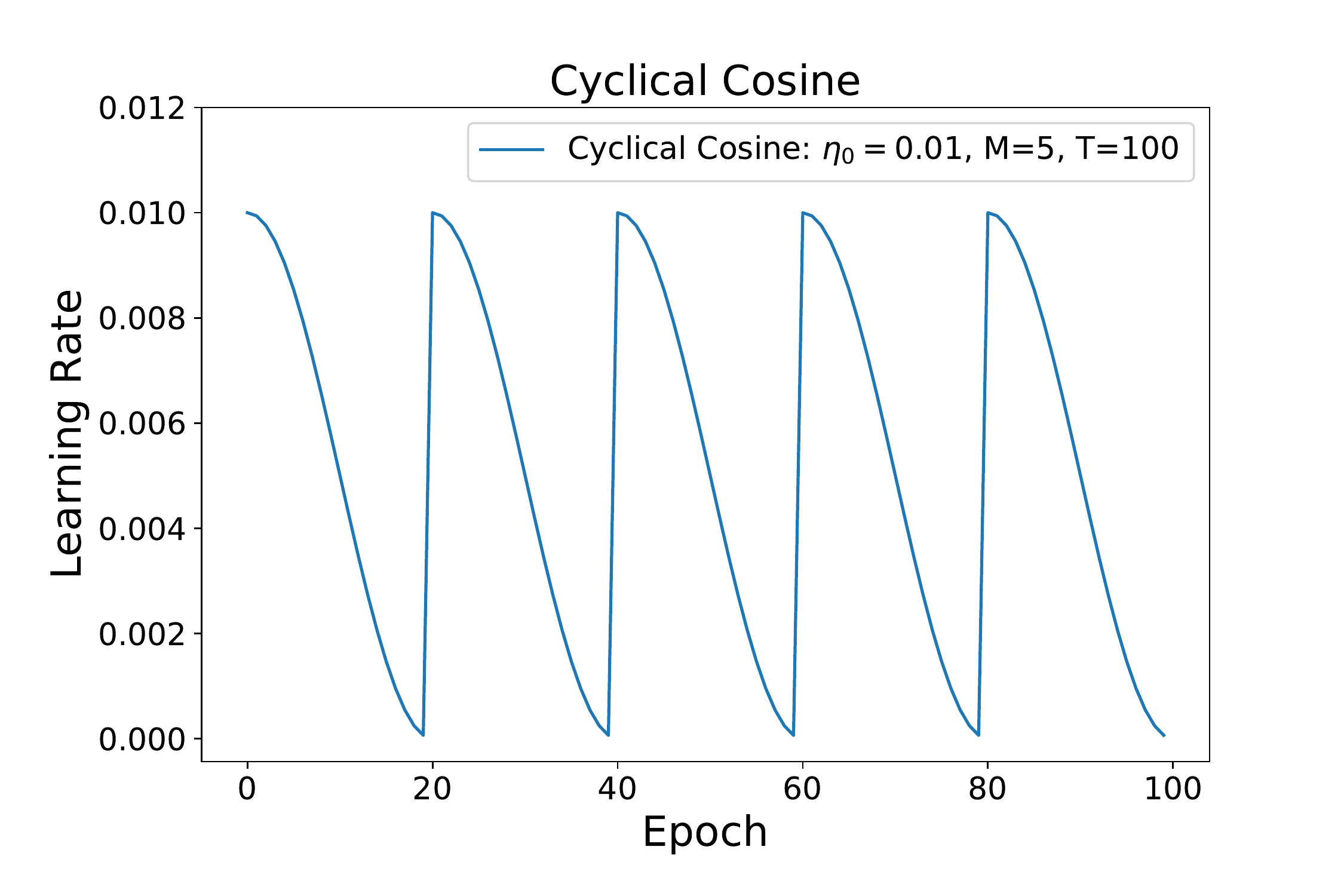}}
	\subfigure[Cyclical step.]{\label{fig:lr_cyclical_step}
		\includegraphics[width=0.48\linewidth]{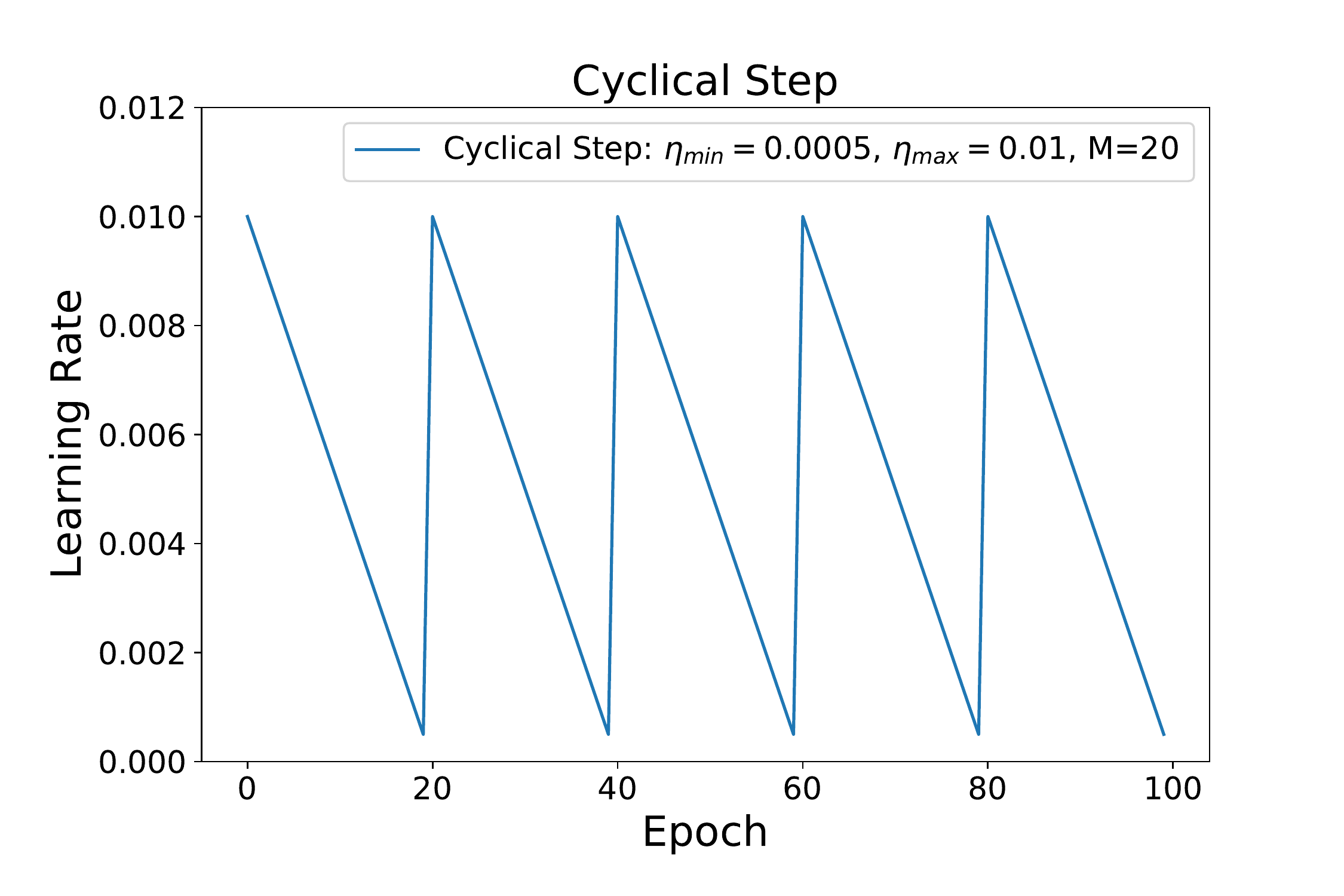}}
	\caption{Cyclical cosine and cyclical step learning rate policies.}
	\label{fig:lr_cosine_cyclical_step}
\end{figure}

\paragraph{Cyclical step.} Similar to the cyclical cosine scheme, the \textit{cyclical step learning rate policy} combines a linear learning rate decay with warm restarts \citep{mehta2019espnetv2}:
$$
\eta_t  =\eta_{\text{max}} - (t \,\, \text{mod}\,\, M) \cdot \eta_{\text{min}}.
$$
where in the original paper, $t$ refers to the epoch count, $\eta_{\text{min}}$ and $\eta_{\text{max}}$ are the ranges for the learning rate, and $M$ is the cycle length after which the learning rate will restart. The learning rate scheme can be seen as a variant of the cosine learning policy as discussed above and the comparison between the two policies is shown in Figure~\ref{fig:lr_cosine_cyclical_step}. In practice, $\eta_{\text{min}}=0.1$, $\eta_{\text{max}}=0.5$, and $M=5$ are set as default values in the original paper. 

\paragraph{Cyclical polynomial.}
The \textit{cyclical polynomial} is a variant of the \textit{annealing polynomial decay} (Eq.~\eqref{equation:annealing_polynomial}) scheme, where the difference is that the cyclical polynomial scheme employs a cyclical warmup similar to the $exp\_range$ policy. Given the iteration number $t$, the initial learning rate $\eta_0$, the final learning rate, $\eta_T$, and the maximal decay number $M<T$, the rate can be calculated by:
$$
\begin{aligned}
	& decay\_batch = M\cdot \ceil{\frac{t}{M}} \\
	& \eta_t= (\eta_0-\eta_T)\cdot \left(1-\frac{t}{decay\_batch+\epsilon}\right)^{p}+\eta_T,
\end{aligned}
$$
where $\epsilon=1e-10$ is applied for better conditioning when $t=0$. 
Figure~\ref{fig:lr_cyclic_polynomial_decay} presents the cyclical polynomial scheme with various parameters.

\begin{figure}[h]
	\centering
	\includegraphics[width=0.6\textwidth]{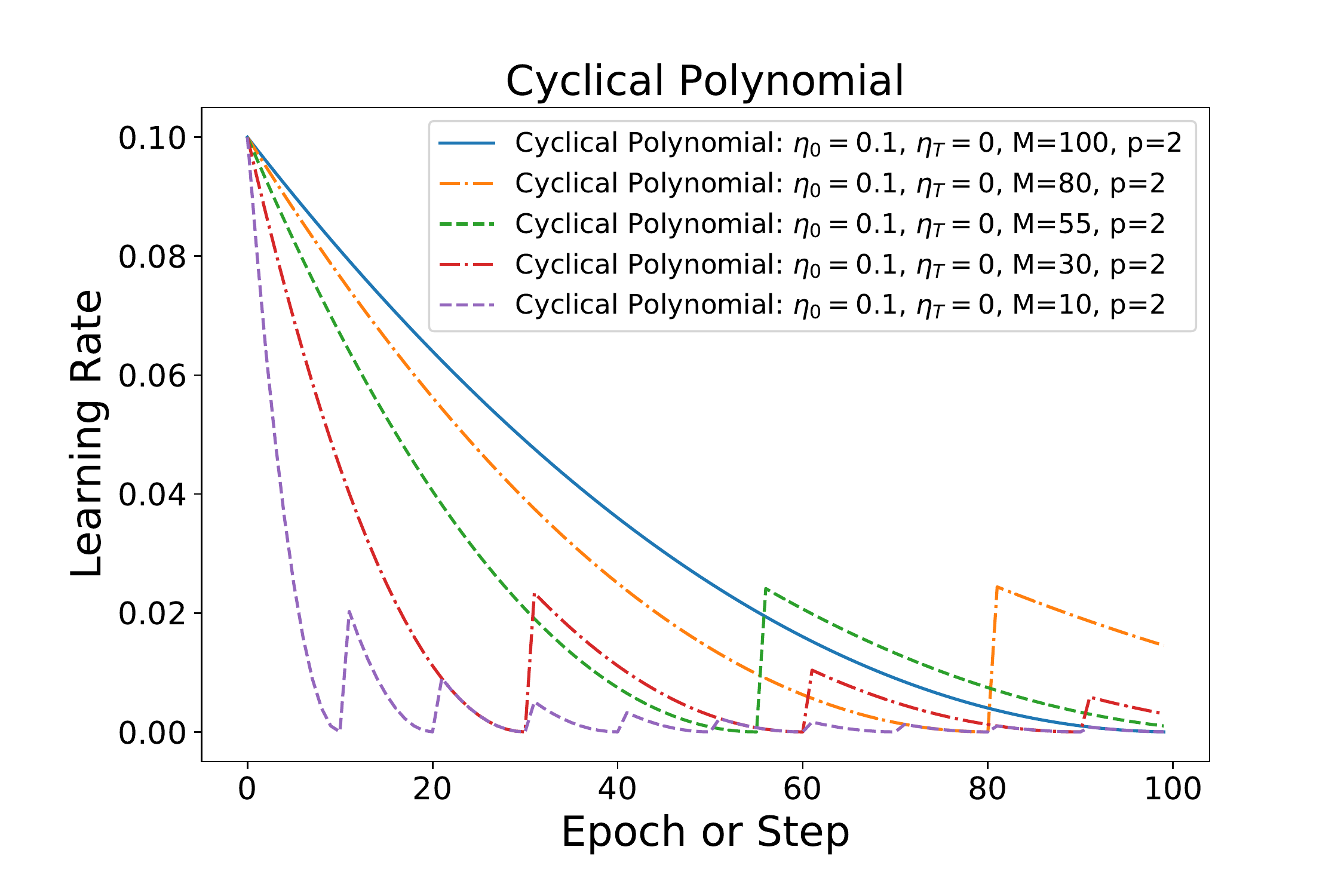}
	\caption{Demonstration of cyclical polynomial scheduler with various parameters.}
	\label{fig:lr_cyclic_polynomial_decay}
\end{figure}

%
%

\newpage
\clearchapter{Stochastic Optimizer}
\begingroup
\hypersetup{linkcolor=winestain,
linktoc=page,  
}
\minitoc \newpage
\endgroup
\index{Stochastic optimizer}
\section{Stochastic Optimizer}
\lettrine{\color{caligraphcolor}O}
Over the years, stochastic gradient-based optimization has emerged as a fundamental method in various fields of science and engineering, including computer vision and automatic speech recognition processing \citep{krizhevsky2012imagenet, hinton2012deep, graves2013speech}. Stochastic gradient descent (SGD) and deep neural network (DNN) play a core role in training stochastic objective functions. When a new deep neural network is developed for
a given task, some hyper-parameters related to the training of the network must be chosen heuristically. For each possible combination of structural hyper-parameters, a new network is typically
trained from scratch and evaluated over and over again. While much progress has been made on hardware (e.g., Graphical Processing Units) and software (e.g., cuDNN) to speed up the training time of a single structure
of a DNN, the exploration of a large set of possible structures remains very slow, making the need of a stochastic optimizer that is insensitive to hyper-parameters. Efficient stochastic optimizers, therefore, play a crucial role in training deep neural networks.
\begin{table}[h]
\begin{tabular}{llll|lll}
	\hline
	Method         & Year & Papers \gap \gap  &  & Method                  & Year & Papers \\ \hline
	Adam           & 2014 & 7532   &  & AdamW                   & 2017 & 45     \\
	SGD            & 1951 & 1212   &  & Local SGD               & 2018 & 41     \\
	RMSProp        & 2013 & 293    &  & Gravity                 & 2021 & 37     \\
	Adafactor      & 2018 & 177    &  & AMSGrad                 & 2019 & 35     \\
	Momentum       & 1999 & 130    &  & LARS                    & 2017 & 31     \\
	LAMB           & 2019 & 126    &  & MAS                     & 2020 & 26     \\
	AdaGrad        & 2011 & 103    &  & DFA                     & 2016 & 23     \\
	Deep Ensembles & 2016 & 69     &  & Nesterov momentum    & 1983 & 22     \\
	FA             & 2014 & 46     &  & Gradient Sparsification & 2017 & 20     \\ \hline
\end{tabular}
\caption{Data retrieved on April 27th, 2022 via https://paperswithcode.com/.}
\label{table:stochastic-optimizers}
\end{table}

There are several variants of SGD to use heuristics for estimating a good learning rate at each iteration of the progress. These methods either attempt to accelerate learning when suitable or to slow down learning near a local minimum. In this section, we introduce a few stochastic optimizers that are in the two categories. Table~\ref{table:stochastic-optimizers} provides an overview of the number of papers utilizing these optimizers for specific tasks and their publication dates. For additional comprehensive reviews, one can also check \citet{zeiler2012adadelta}, \citet{ruder2016overview}, \citet{goodfellow2016deep}, and many others.



\section{Momentum }
If the cost surface is not spherical, learning can be quite slow because the learning rate must be kept small to prevent divergence along the steep curvature directions \citep{polyak1964some, rumelhart1986learning, qian1999momentum, sutskever2013importance}. 
The SGD with momentum (that can be applied to full batch or mini-batch learning) attempts to use the previous step to speed up learning when suitable such that it enjoys better convergence rates in deep networks.
The main idea behind the momentum method is to speed up the learning along dimensions where the gradient consistently points in the same direction; and to slow the pace along dimensions in which the sign of the gradient continues to change. Figure~\ref{fig:momentum_gd} shows a set of updates for vanilla GD, where we can find the update along dimension $x_1$ is consistent; and the move along dimension $x_2$ continues to change in a zigzag pattern. The GD with momentum keeps track of past parameter updates with an exponential decay, and the update method has the following form from iteration $t-1$ to iteration $t$:
\begin{equation}
\begin{aligned}
\Delta \bx_t &= \rho \Delta \bx_{t-1} - \eta \frac{\partial L(\bx_t)}{\partial \bx_t},\\
\end{aligned}
\end{equation}
where the algorithm remembers the latest update and adds it to the present update by multiplying a parameter $\rho$, called the \textit{momentum parameter}, blending the present update with the past update. 
That is, the amount we change the parameter is proportional to the negative gradient plus the previous weight change; the added \textit{momentum term} acts as both a smoother and an accelerator.
The momentum parameter $\rho$ works as a \textit{decay constant}, where $\Delta\bx_1$ may have an effect on $\Delta \bx_{100}$; however, its effect is decayed by this decay constant. In practice, the momentum parameter $\rho$ is usually set to be 0.9 by default.
Momentum simulates the concept of inertia in physics. This means that in each iteration, the update mechanism is not only related to the gradient descent, which refers to the \textit{dynamic term}, but also maintains a component that is related to the direction of the last update iteration, which refers to the momentum.

\begin{figure}[h]
\centering  
\vspace{-0.35cm} 
\subfigtopskip=2pt 
\subfigbottomskip=2pt 
\subfigcapskip=-5pt 
\subfigure[
A two-dimensional surface plot for quadratic convex function.
]{\label{fig:alsgd12}
\includegraphics[width=0.47\linewidth]{./imgs/momentum_surface.pdf}}
\subfigure[The contour plot of $L(\bx)$. The red dot is the optimal point.]{\label{fig:alsgd22}
\includegraphics[width=0.44\linewidth]{./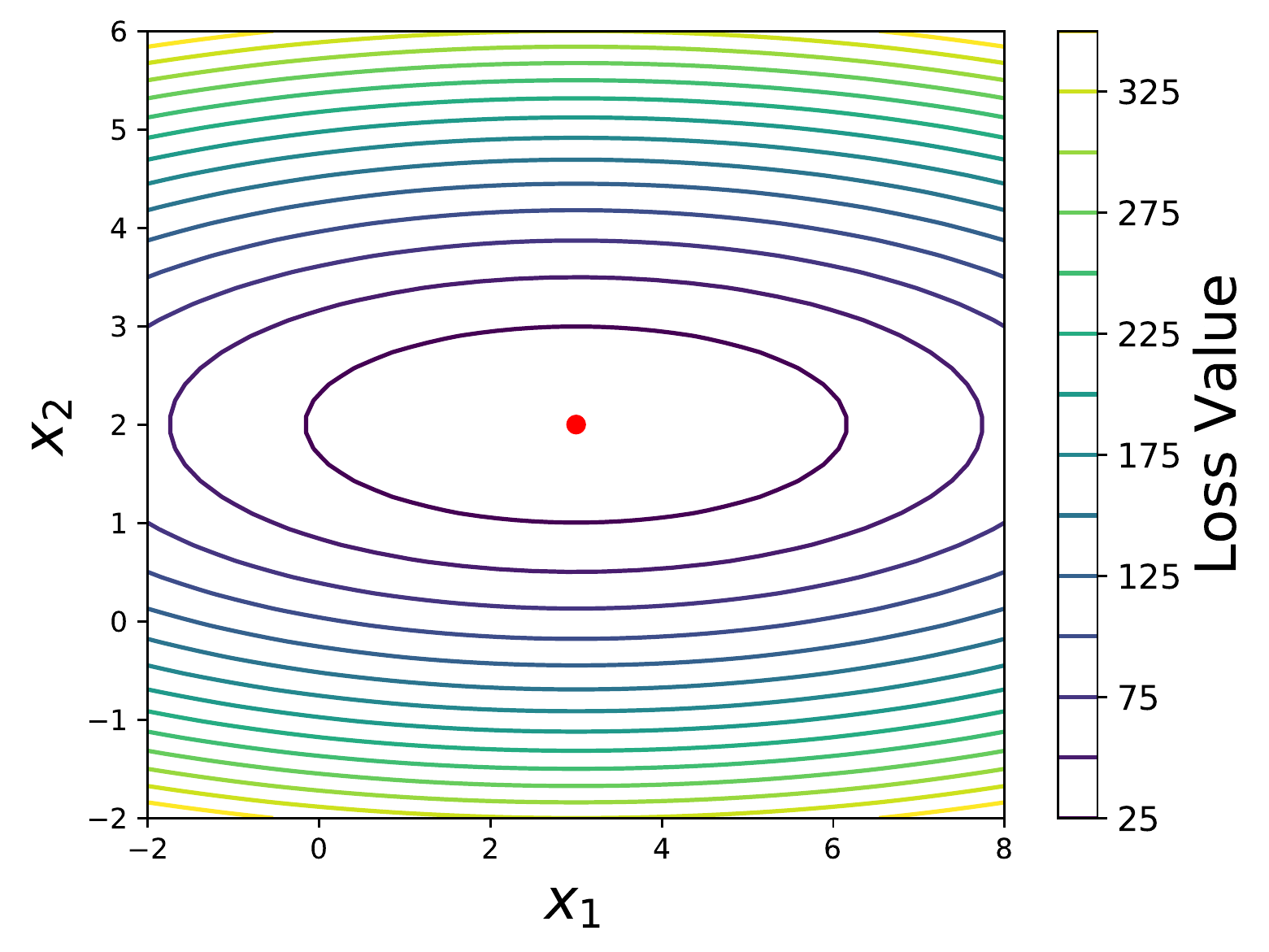}}
\caption{Figure~\ref{fig:alsgd12} shows a function surface and a contour plot (\textcolor{mylightbluetext}{blue}=low, \textcolor{mydarkyellow}{yellow}=high), where the upper graph is the surface, and the lower one is the projection of it (i.e., contour). The quadratic function is from parameters $\bA=\begin{bmatrix}
	4 & 0\\ 0 & 40
\end{bmatrix}$, $\bb=[12,80]^\top $, and $c=103$. Or equivalently, $L(\bx)=2(x_1-3)^2 + 20(x_2-2)^2+5$ and $\frac{\partial L(\bx)}{\partial \bx}=[4x_1-12, 8x_2-16]^\top$.}
\label{fig:momentum-contour}
\end{figure}
Momentum exhibits superior performance, particularly in the presence of a ravine-shaped loss curve. 
A ravine refers to an area where the surface curves are significantly steeper in one dimension than in another (see the surface and contour curve in Figure~\ref{fig:momentum-contour}, i.e., a long, narrow valley). 
Ravines are common near local minima in deep neural networks, and vanilla GD or SGD has trouble navigating them. As shown by the toy example in Figure~\ref{fig:momentum_gd}, GD tends to oscillate across the narrow ravine since the negative gradient will point down one of the steep sides rather than along the ravine towards the optimum. Momentum helps accelerate gradients in the correct direction and dampens oscillations, as evident in the example shown in  Figure~\ref{fig:momentum_mum}. 

\begin{figure}[h]
\centering  
\vspace{-0.35cm} 
\subfigtopskip=2pt 
\subfigbottomskip=2pt 
\subfigcapskip=-5pt 
\subfigure[Optimization without momentum. A higher learning rate
may result in larger parameter updates in the dimension across the valley (direction of $x_2$) which could lead to oscillations back and forth across the valley.]{\label{fig:momentum_gd}
\includegraphics[width=0.47\linewidth]{./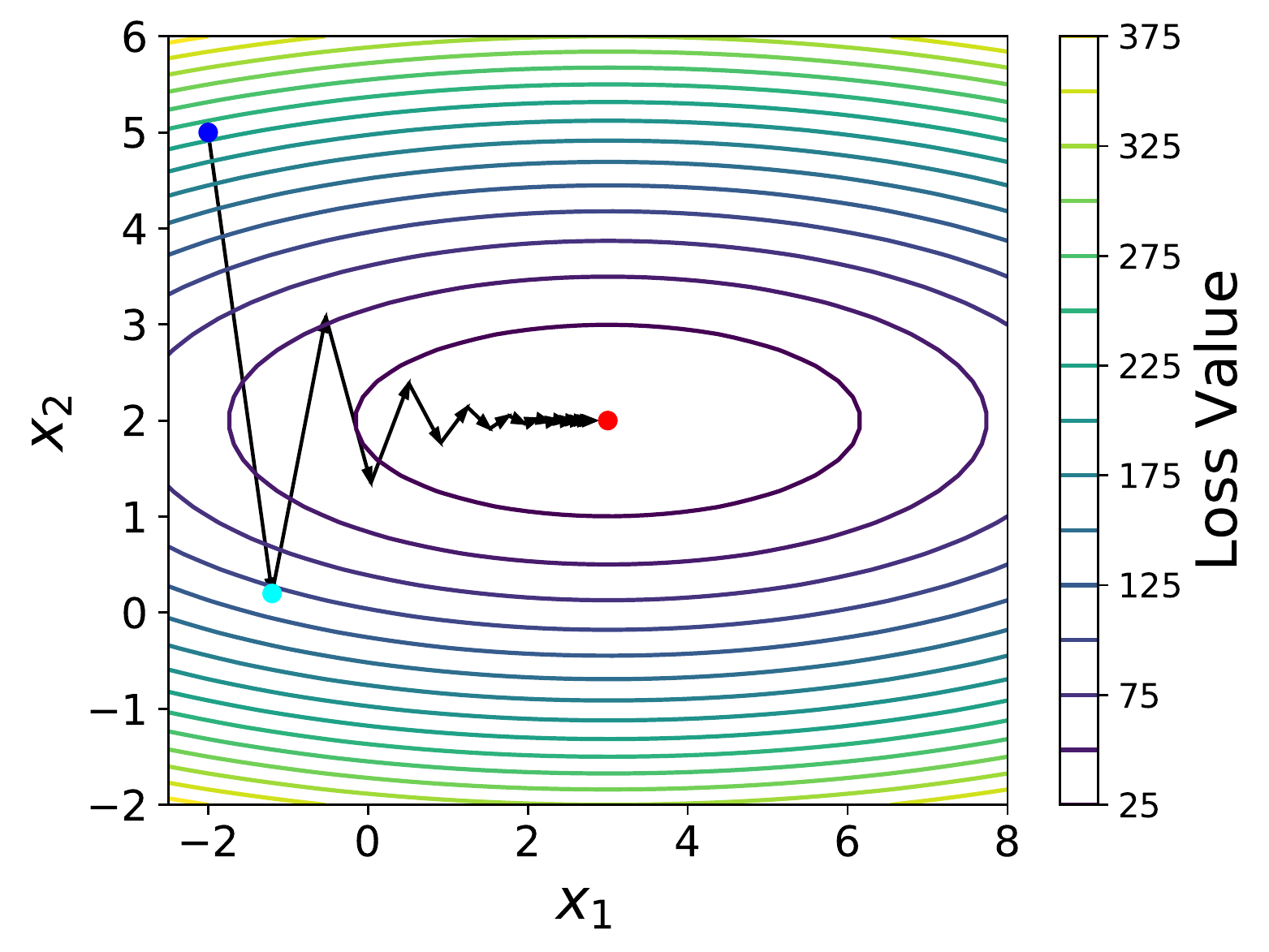}}
\subfigure[Optimization with momentum. Though the gradients along the valley (direction of $x_1$) are much smaller than the gradients across the valley (direction of $x_2$), they are
typically in the same direction. Thus, the momentum term
accumulates to speed up movement, dampens oscillations, and causes us to barrel through narrow valleys, small humps and (local) minima.]{\label{fig:momentum_mum}
\includegraphics[width=0.47\linewidth]{./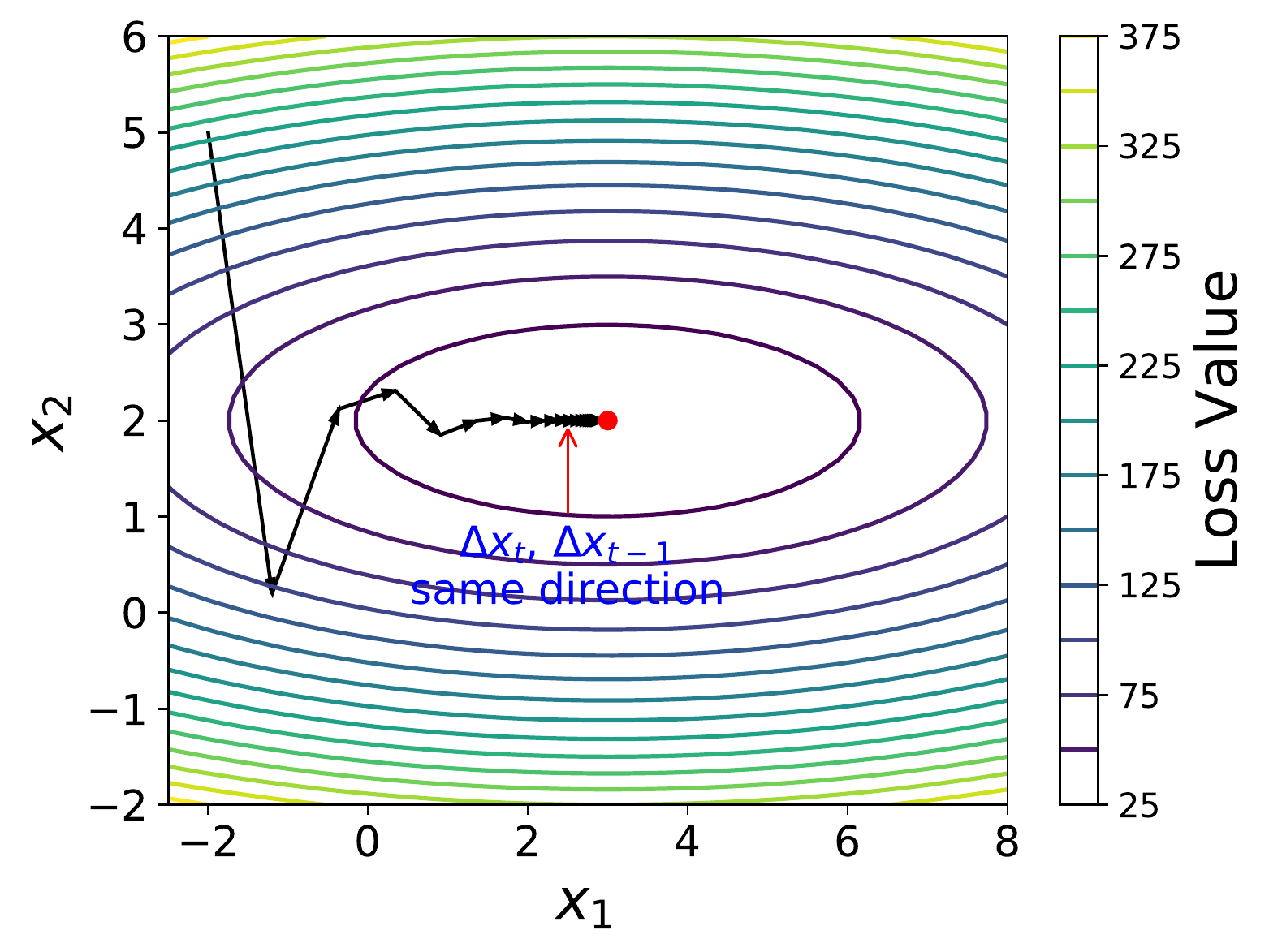}}
\caption{The loss function is shown in Figure~\ref{fig:momentum-contour}. The starting point is $[-2, 5]^\top$. After 5 iterations, the squared loss from vanilla GD is 42.72, and the loss from GD with momentum is 35.41 in this simple case. The learning rates $\eta$ are set to be 0.04 in both cases.}
\label{fig:momentum_gd_compare}
\end{figure}

As mentioned earlier, the momentum method is achieved by incorporating a fraction $\rho$ of the update vector from the previous time step into the current update vector. When $\Delta \bx_t$ and $\Delta \bx_{t-1}$ are in the same direction, the momentum accelerates the update step (e.g., the \textcolor{mylightbluetext}{blue} arrow regions in Figure~\ref{fig:momentum_mum}).
Conversely, if they are in the opposite directions, the algorithm tends to update in the former direction if $\bx$ has been updated in this direction for many iterations. 
To be more concrete, in Figure~\ref{fig:momentum_gd},  considering the \textcolor{mylightbluetext}{blue} starting point, and then looking at the \textcolor{cyan}{cyan} point we get to after one step in the step of the update without momentum, they have gradients that are pretty much equal and opposite. As a result, the gradient across the ravine has been canceled out. But the gradient along the ravine has not canceled out. 
Therefore, along the ravine, we're going to keep
building up speed, and so, after the momentum method has settled down, it'll
tend to go along the bottom of the ravine.

From this figure, the problem with the vanilla GD is that the gradient is big in the direction in which we only want to travel a small distance; and the gradient is small in the direction in which we want to travel a large distance. However, one can easily find that the momentum term helps average out the oscillation along the short axis while at the same time adds up contributions along the long axis. In other words, although it starts off by following the gradient, however, when it has velocity, it no longer does steepest descent. We call this \textit{momentum}, which makes it keep going in the previous direction.

\index{Spectral decomposition}
\index{Eigenvalue decomposition}
\index{Quadratic form}
\subsubsection{Quadratic Form in Momentum}\label{section:quadratic-in-momentum}
Following the discussion of the quadratic form in GD (Section~\ref{section:quadratic_vanilla_GD}, p.~\pageref{section:quadratic_vanilla_GD})  and steepest descent (Section~\ref{section:quadratic-in-steepestdescent}, p.~\pageref{section:quadratic-in-steepestdescent}), we now discuss the quadratic form in GD with momentum. The update is:
$$
\begin{aligned}
\Delta \bx_t &= \rho\Delta \bx_{t-1} - \eta\nabla L(\bx_t); \\
\bx_{t+1} &= \bx_t + \Delta \bx_t,
\end{aligned}
$$
where $\nabla L(\bx_t) = \bA\bx_t - \bb$ if $\bA$ is symmetric for the quadratic form. The update becomes
$$
\begin{aligned}
\Delta \bx_t &= \rho\Delta \bx_{t-1} - \eta(\bA\bx_t - \bb); \\
\bx_{t+1} &= \bx_t + \Delta \bx_t.
\end{aligned}
$$
Again, define the iterate vectors as follows:
$$
\left\{
\begin{aligned}
\by_t &= \bQ^\top(\bx_t - \bx_\star);\\
\bz_t &= \bQ^\top  \Delta \bx_t,
\end{aligned}
\right.
$$
where $\bx_\star = \bA^{-1}\bb$ under the assumption that $\bA$ is nonsingular and PD as aforementioned, and $\bA=\bQ\bLambda\bQ^\top$ is the spectral decomposition of matrix $\bA$. This construction leads to the following update rule:
$$
\begin{aligned}
\bz_t &= \rho \bz_{t-1} - \eta\bLambda \by_t; \\
\by_{t+1} &= \by_t + \bz_t,
\end{aligned}
$$
or,  after rearrangement:
$$
\begin{bmatrix}
\bz_t \\
\by_{t+1}
\end{bmatrix}
= 
\begin{bmatrix}
\rho \bI & -\eta \bLambda \\
\rho \bI & -\eta \bLambda + \bI
\end{bmatrix}
\begin{bmatrix}
\bz_{t-1} \\
\by_{t}
\end{bmatrix}.
$$
And this leads to the per-dimension update:
\begin{equation}\label{equation:momentum-quadra-generalformula}
\begin{bmatrix}
z_{t,i} \\
y_{t+1,i}
\end{bmatrix}
= 
\begin{bmatrix}
\rho  & -\eta\lambda_i \\
\rho  & 1-\eta \lambda_i 
\end{bmatrix}^t
\begin{bmatrix}
z_{0,i} \\
y_{1,i}
\end{bmatrix}=
\bB^t
\begin{bmatrix}
z_{0,i} \\
y_{1,i}
\end{bmatrix},
\end{equation}
where $z_{t,i}$ and $y_{t,i}$ are $i$-th element of $\bz_t$ and $\by_t$, respectively, and $\bB=\begin{bmatrix}
\rho  & -\eta\lambda_i \\
\rho  & 1-\eta \lambda_i 
\end{bmatrix}$.
Note here, $\bz_0$ is initialized as a zero vector, and $\by_1$ is initialized as $\bQ^\top (\bx_1-\bx_\star)$, where $\bx_1$ represents the initial parameter.
Suppose the eigenvalue decomposition (Theorem 11.1 in \citet{lu2022matrix} or Appendix~\ref{appendix:eigendecomp}, p.~\pageref{appendix:eigendecomp}) of $\bB$ admits 
$$
\bB = \bC\bD \bC^{-1},
$$
where the columns of $\bC$ contain eigenvectors of $\bB$, and $\bD=\diag(\alpha,\beta)$ is a diagonal matrix  containing the eigenvalues of $\bB$. Then $\bB^t = \bC\bD^t\bC^{-1}$. 
Alternatively, the eigenvalues of $\bB$ can be calculated by solving $\det(\bB-\alpha\bI)=0$:
$$
\alpha, \beta = \frac{(\rho+1-\eta\lambda_i) \pm \sqrt{(\rho+1-\eta\lambda_i)^2 -4\rho}}{2}.
$$
We then have by \citet{williams1992n} that
$$
\bB^t=
\left\{
\begin{aligned}
&\alpha^t \frac{\bB-\beta\bI}{\alpha-\beta} - \beta^t \frac{\bB-\alpha\bI}{\alpha-\beta}, \gap &\text{if $\alpha\neq \beta$};\\
&\alpha^{t-1}(t\bB - (t-1)\alpha\bI), \gap &\text{if $\alpha=\beta$}.
\end{aligned}
\right.
$$
Substituting  into Eq.~\eqref{equation:momentum-quadra-generalformula} yields the following expression:
$$
\begin{bmatrix}
z_{t,i} \\
y_{t+1,i}
\end{bmatrix}
=
\bB^{t}
\begin{bmatrix}
z_{0,i} \\
y_{1,i}
\end{bmatrix},
$$
where the rate of convergence is controlled by the slower one, $\max\{|\alpha|, |\beta|\}$; when $\max\{|\alpha|, |\beta|\}<1$, the GD with momentum is guaranteed to converge. 
In the case of  $\rho=0$, the momentum reduces to the vanilla GD, the condition for convergence becomes
$$
\max\{|\alpha|, |\beta|\} = |1-\eta\lambda_i | <1, \gap \forall \,\, i\in \{1,2,\ldots, d\},
$$
which aligns with  that in Eq.~\eqref{equation:vanillagd-quandr-rate-chgoices} (p.~\pageref{equation:vanillagd-quandr-rate-chgoices}).

Following the same example illustrated in Figure~\ref{fig:momentum-contour} and Figure~\ref{fig:momentum_gd_compare}, where $\bA=\begin{bmatrix}
4 & 0\\ 0 & 40
\end{bmatrix}$ with eigenvalues $\lambda_1=4$ and $\lambda_2=40$, and the matrix $\bB$ in Eq.~\eqref{equation:momentum-quadra-generalformula} is defined as 
$$
\bB_1 = 
\begin{bmatrix}
\rho & -4 \eta\\
\rho & 1-4\eta
\end{bmatrix}
\gap \text{and}\gap 
\bB_2 = 
\begin{bmatrix}
\rho & -40\eta \\
\rho & 1-40\eta
\end{bmatrix},
$$
respectively. Then it can be shown that when $\eta=0.04, \rho=0.8$, the rate of convergence is approximately  $0.89$; Figure~\ref{fig:momentum_rho8} displays the updates for 20 iterations, though the motion is in a zigzag pattern, it can still converge. However, when $\eta=0.04, \rho=1$, the rate of convergence is equal to 1; Figure~\ref{fig:momentum_rho10} shows the updates for 20 iterations, the movement diverges, even  as it traverses the optimal point.

\begin{figure}[h]
	\centering  
	\vspace{-0.35cm} 
	\subfigtopskip=2pt 
	\subfigbottomskip=2pt 
	\subfigcapskip=-5pt 
	\subfigure[Momentum $\rho=0.2$, convergence rate$\approx0.79$.]{\label{fig:momentum_rho2}
		\includegraphics[width=0.31\linewidth]{./imgs/mom_surface_lrate40_gd-1_xy-2-5_mom-2.pdf}}
	\subfigure[Momentum $\rho=0.8$, convergence rate$\approx0.89$.]{\label{fig:momentum_rho8}
		\includegraphics[width=0.31\linewidth]{./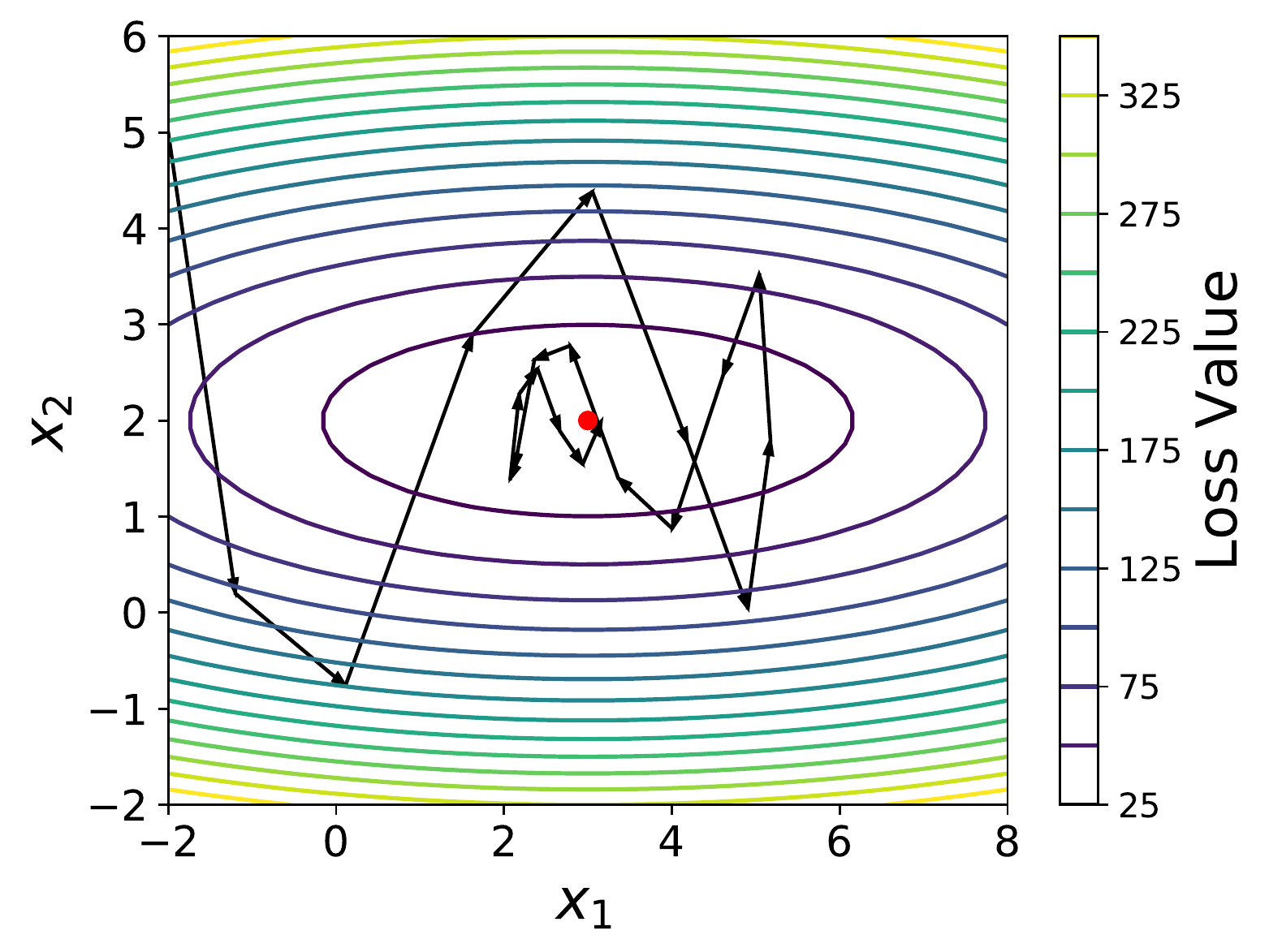}}
	\subfigure[Momentum $\rho=1$, convergence rate=1.]{\label{fig:momentum_rho10}
		\includegraphics[width=0.31\linewidth]{./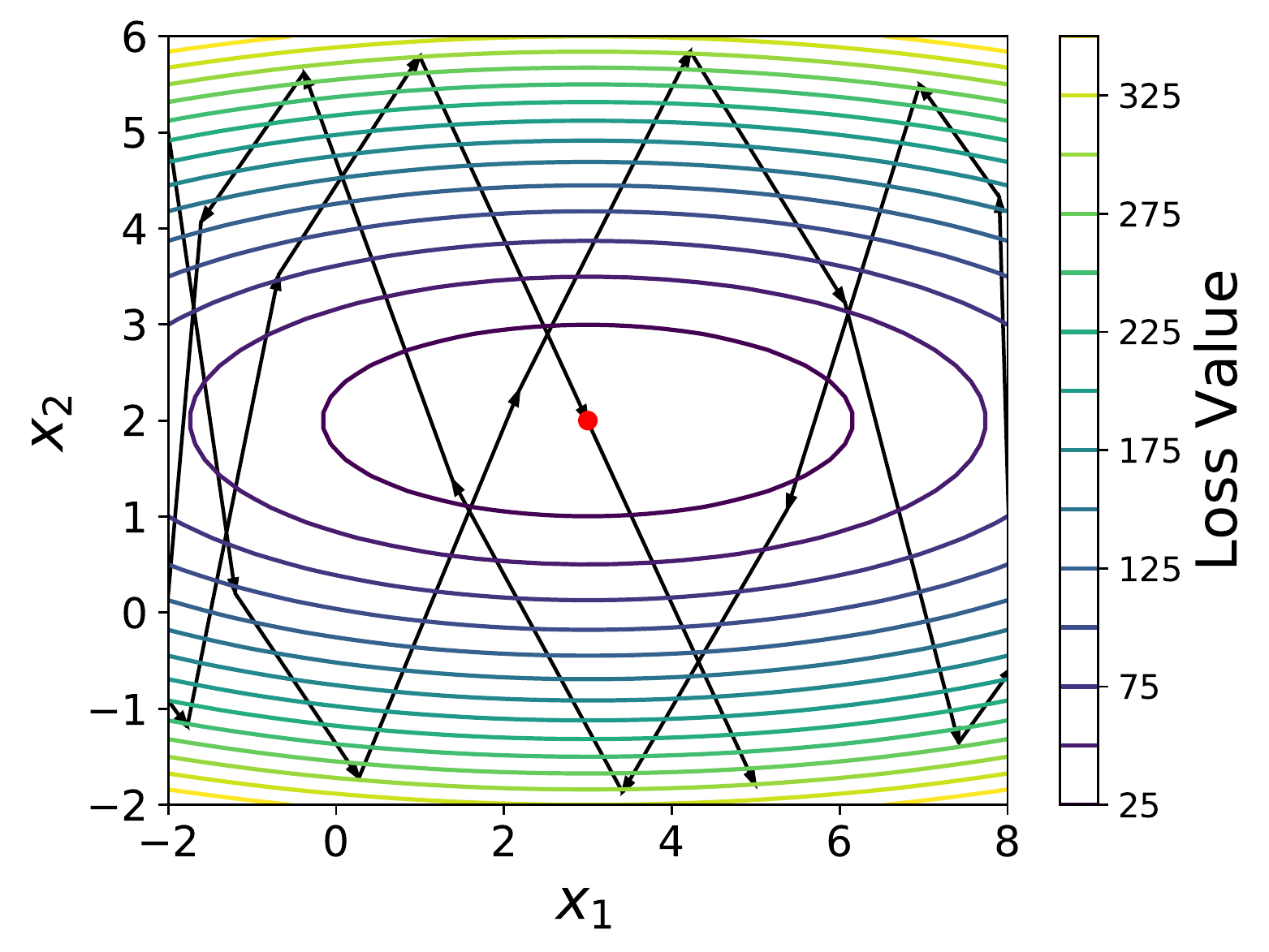}}
	\caption{Momentum creates its own oscillations. The learning rates $\eta$ are set to be 0.04 for all scenarios.}
	\label{fig:momentum_own_oscilation}
\end{figure}

\section{Nesterov Momentum}
Nesterov momentum, also known as Nesterov accelerated gradient (NAG), represents a slightly different version of the momentum update and has recently been gaining popularity. The core idea behind Nesterov momentum is that, when the current parameter vector is at some position $\bx_t$, then examining  the momentum update we discussed in the previous section. 
We know that the momentum term alone (i.e., ignoring the second term with the gradient) is about to nudge the parameter vector by $\rho \Delta \bx_{t-1}$. Therefore, if we are about to compute the gradient, we can treat the future approximate position $\bx_{t} + \rho \Delta \bx_{t-1}$ as a lookahead--this is a point in the vicinity of where we are soon going to end up. Hence, it makes sense to compute the gradient at $\bx_{t} + \rho \Delta \bx_{t-1}$ instead of at the old position $\bx_{t}$. Finally, the step takes on the following form:
$$
\begin{aligned}
	\Delta \bx_t &= \rho \Delta \bx_{t-1} - \eta \frac{\partial L(\bx_{t} + \rho \Delta \bx_{t-1})}{\partial \bx}.
\end{aligned}
$$

\begin{figure}[h]
	\centering  
	\vspace{-0.35cm} 
	\subfigtopskip=2pt 
	\subfigbottomskip=2pt 
	\subfigcapskip=-5pt 
	\subfigure[Momentum: evaluate gradient at the current position $\bx_t$, and the  momentum is about to carry us to the tip of the green arrow.]{\label{fig:nesterov1}
		\includegraphics[width=0.47\linewidth]{./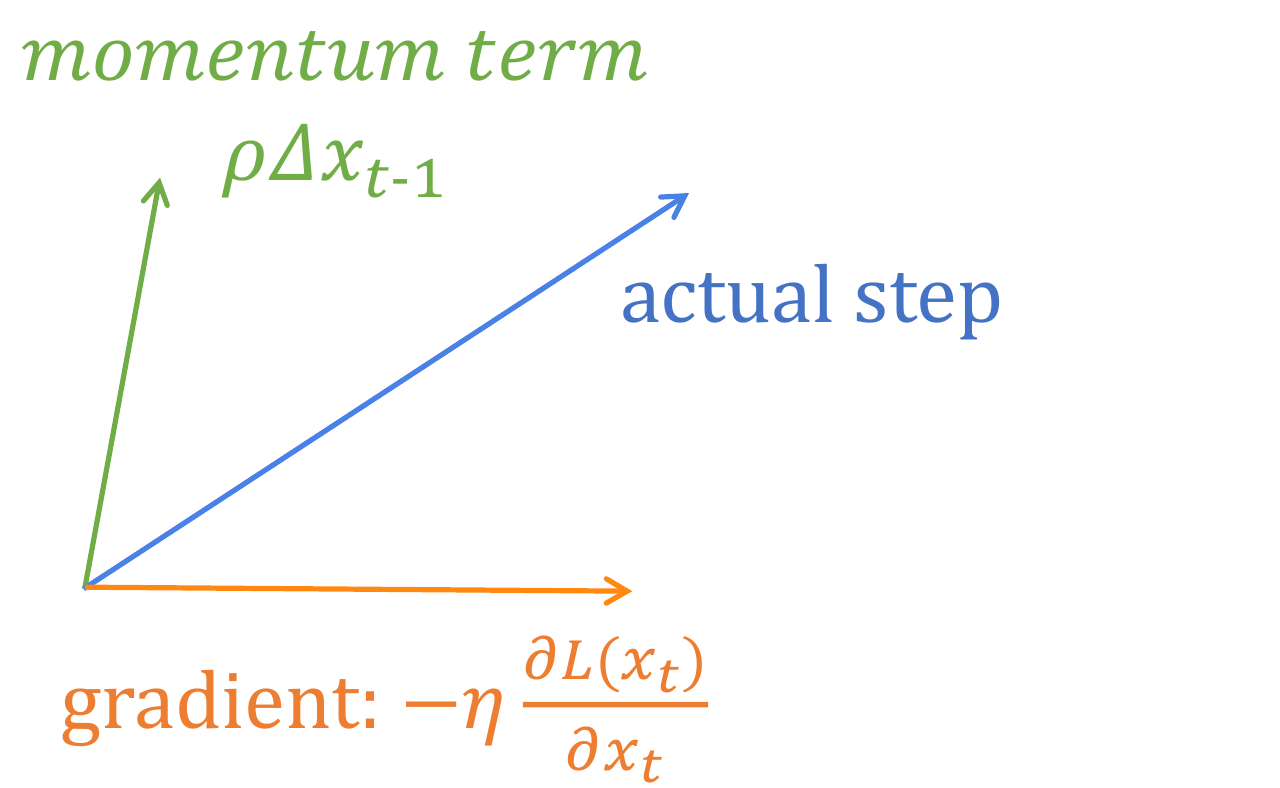}}
	\subfigure[Nesterov momentum: evaluate the gradient at this ``looked-ahead" position.]{\label{fig:nesterov2}
		\includegraphics[width=0.47\linewidth]{./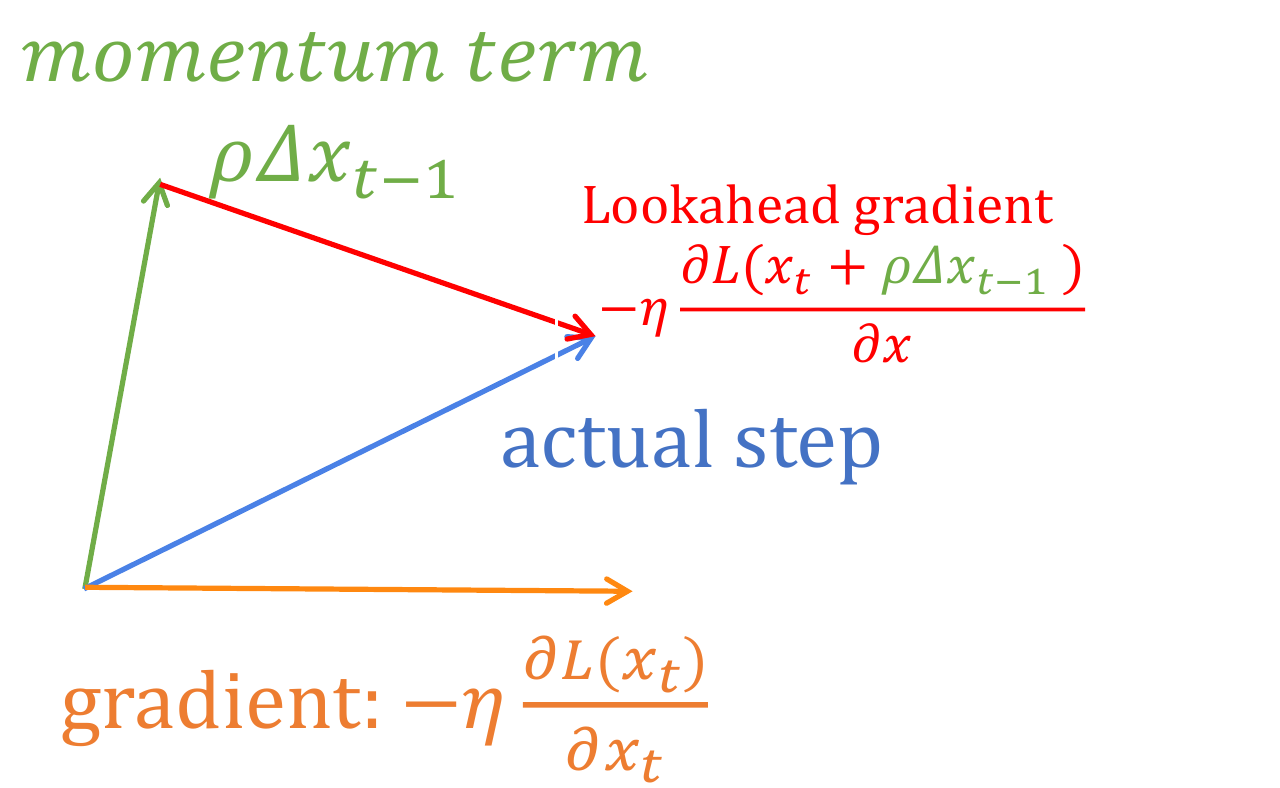}}
	\caption{Comparison of momentum and Nesterov momentum.}
	\label{fig:momentum_nesterov_comp}
\end{figure}

Figure~\ref{fig:momentum_nesterov_comp} shows the difference between momentum and Nesterov momentum approaches. This important difference is thought to counterbalance too high velocities by “peeking ahead” actual objective values in the candidate search direction. In other words, one first makes a big jump in the direction of the previously accumulated gradient; then measures the gradient where you end up and make a \textbf{correction}. But in standard momentum, one first jumps by current gradient, then makes a big jump in the direction of the previously accumulated gradient. 
To draw  a metaphor, it turns out, if you're going to gamble, it's much better to gamble and then make a correction, than to make a correction and then gamble \citep{hinton2012neural2}.
\citet{sutskever2013importance} show that Nesterov momentum has a provably better bound than gradient descent in convex, non-stochastic objectives settings.

\section{AdaGrad}
The learning rate annealing procedure modifies a single global learning rate that is applied  to all dimensions of the parameters (Section~\ref{section:learning-rate-anneal}, p.~\pageref{section:learning-rate-anneal}). \citet{duchi2011adaptive} proposed a method called AdaGrad where the learning rate is updated on a per-dimension basis. 
The learning rate for each parameter depends on the history of gradient updates of that parameter in a way such that parameters with a scarce history of updates can be updated faster by using a larger learning rate. In other words, parameters that have not been updated much in the past are more likely to have higher learning rates now. Denoting the element-wise vector multiplication between $\ba$ and $\bb$ by $\ba\odot\bb$, formally, the AdaGrad has the following update step:
\begin{equation}
	\begin{aligned}
		\Delta \bx_t &= -  \frac{ \eta}{ \sqrt{\sum_{\tau=1}^t \bg_{\tau}^2 +\epsilon} }\odot \bg_{t} ,
	\end{aligned}
\end{equation}
where $\epsilon$ is a smoothing term to better condition the division, $\eta$ is a global learning rate shared by all dimensions, $\bg_\tau^2$ indicates the element-wise square $\bg_\tau\odot \bg_\tau$, and the denominator computes the $\ell_2$ norm of a sum of all previous squared gradients in a per-dimension fashion. 
Though the global learning rate $\eta$ is shared by all dimensions, each dimension has its own dynamic learning rate controlled by the $\ell_2$ norm of accumulated gradient magnitudes. Since this dynamic learning rate grows with the inverse of the accumulated gradient magnitudes, larger gradient magnitudes have smaller learning rates, and smaller absolute values of gradients have larger learning rates. Therefore, the aggregated squared magnitude of the partial derivative with respect to each parameter over the course of the algorithm in the denominator has the same effects as the learning rate annealing.

One advantage of AdaGrad lies in that it is very easy to implement, the code snippet in the following is its implementation  by Python:
\begin{python}
	# Assume the gradient dx and parameter vector x 
	cache += dx**2
	x += - learning_rate * dx / np.sqrt(cache + 1e-8)
\end{python}
On the other hand, AdaGrad partly eliminates the need to tune the learning rate 
controlled by the accumulated gradient magnitude. 
However, AdaGrad faces a significant drawback related to the unbounded accumulation of squared gradients in the denominator. Since every added term is positive, the accumulated sum 
keeps growing or exploding during every training step. This in turn causes the per-dimension learning rate to shrink and eventually decrease throughout training and become infinitesimally small, eventually falling to zero and stopping training any more. 
Moreover, since the magnitudes of gradients are factored out in
AdaGrad, this method can be sensitive to the initialization
of the parameters and the corresponding gradients. If the initial magnitudes of the gradients are large or infinitesimally huge, the per-dimension learning rates will be low for the remainder of training. 
This can be partly combated by increasing the global learning rate, making the AdaGrad method sensitive to the choice of learning rate. 
Further, AdaGrad assumes the parameter with fewer updates should favor a larger learning rate; and one with more movement should employ a smaller learning rate. This makes it
consider only  the information from squared gradients or the absolute value of the gradients. And thus AdaGrad does not include information from the total move (i.e., the sum of updates; in contrast to the sum of absolute updates).

To be more succinct, AdaGrad exhibits  the following primary drawbacks:
1) the continual decay of learning rates throughout training;
2) the need for a manually selected global learning rate;
3) considering only the absolute value of gradients.

\section{RMSProp}

RMSProp is an extension of AdaGrad designed to overcome the main weakness of AdaGrad \citep{hinton2012neural, zeiler2012adadelta}. The original idea of RMSProp is simple: it restricts the window of accumulated past gradients to some fixed size $w$ rather than $t$ (i.e., current time step). 
However, since storing $w$ previous squared gradients is inefficient, the RMSProp introduced in \citet{hinton2012neural, zeiler2012adadelta} implements this accumulation as an exponentially decaying average of the squared gradients. This is very similar to the idea of momentum term (or decay constant).

We first discuss the specific formulation of RMSProp.
Assume at time $t$,  the running average, denoted by $E[\bg^2]_t$, is computed as follows:
\begin{equation}\label{equation:adagradwin}
	E[\bg^2]_t = \rho E[\bg^2]_{t-1} + (1 - \rho) \bg_{t}^2,
\end{equation}
where $\rho$ is a decay constant similar to that used in the momentum method, and $\bg_t^2$ indicates the element-wise square $\bg_t\odot \bg_t$. In other words, the estimate is achieved by multiplying the current squared aggregate (i.e., the running estimate) by the decay constant $\rho$ and then adding $(1-\rho)$ times the current squared partial derivative. This running estimate is initialized to $\mathbf{0}$, which may introduce  some bias in early iterations; while the bias disappears over the long term. We notice that the old gradients decay exponentially over the course of the algorithm.

As Eq.~\eqref{equation:adagradwin} is just the root mean squared (RMS) error criterion of the gradients, we can replace
it with the criterion short-hand.
Let $\rms[\bg]_t = \sqrt{E[\bg^2]_t + \epsilon}$, where again a constant $\epsilon$ is added to better condition the denominator. Then the resulting step size can be obtained as follows:
\begin{equation}\label{equation:rmsprop_update}
	\Delta \bx_t=- \frac{\eta}{\rms[\bg]_t}  \odot \bg_{t},
\end{equation}
where again $\odot$ is the element-wise vector multiplication. 

As aforementioned, the form in Eq.~\eqref{equation:adagradwin} is originally from the exponential moving average (EMA). In the original form of EMA, $1-\rho$ is also known as the smoothing constant (SC), where the SC can be expressed as $\frac{2}{N+1}$ and the period $N$ can be thought of as the number of past values to do the moving average calculation \citep{lu2022exploring}:
\begin{equation}\label{equation:ema_smooting_constant}
	\text{SC}=1-\rho \approx \frac{2}{N+1}.
\end{equation}
The above Eq.~\eqref{equation:ema_smooting_constant} establishes a relationship among  different variables: the decay constant $\rho$, the smoothing constant (SC), and the period $N$.
For instance , if $\rho=0.9$, then $N=19$. That is, roughly speaking, $E[\bg^2]_t $ at iteration $t$ is approximately equal to the moving average of the past 19 squared gradients and the current one (i.e., the moving average of a total of 20 squared gradients).
The relationship in Eq.~\eqref{equation:ema_smooting_constant} though is not discussed in the original paper of \citet{zeiler2012adadelta}, it is important to decide the lower bound of the decay constant $\rho$. Typically, a time period of $N=3$ or 7 is thought to be a relatively small frame making the lower bound of decay constant $\rho=0.5$ or 0.75; as $N\rightarrow \infty$, the decay constant $\rho$ approaches $1$.


AdaGrad is designed to converge rapidly when applied to a convex function; while RMSProp performs better in nonconvex settings. When applied to a nonconvex function to train a neural network, the learning trajectory can pass through many different structures and eventually arrives at a region that is a locally convex bowl. AdaGrad shrinks the learning rate according to the entire history of the squared partial derivative leading to an infinitesimally small learning rate before arriving at such a convex structure. While RMSProp discards ancient squared gradients to address this problem. 

However, we can find that the RMSProp still only considers the absolute value of gradients and a fixed number of past squared gradients is not flexible. 
This limitation can cause a small learning rate near (local) minima as we will discuss in the sequel. 

The RMSProp is developed independently by Geoff Hinton in \citet{hinton2012neural} and by Matthew Zeiler in \citet{zeiler2012adadelta} both of which are stemming from the need to resolve AdaGrad's radically diminishing per-dimension learning rates. 
\citet{hinton2012neural} suggest setting $\rho$ to 0.9 and the global learning rate $\eta$ to default to $0.001$. The RMSProp further can be combined into the Nesterov momentum method \citep{goodfellow2016deep}, where the comparison between the two is presented in Algorithm~\ref{alg:rmsprop} and Algorithm~\ref{alg:rmsprop_nesterov}.

\begin{algorithm}[h] 
	\caption{RMSProp}
	\label{alg:rmsprop}
	\begin{algorithmic}[1]
		\State {\bfseries Input:} Initial parameter $\bx_1$, constant $\epsilon$;
		\State {\bfseries Input:} Global learning rate $\eta$, by default $\eta=0.001$;
		\State {\bfseries Input:} Decay constant $\rho$;
		\State {\bfseries Input:} Initial accumulated squared gradients $E[\bg^2]_{0} = \bzero $;
		\For{$t=1:T$ } 
		\State Compute gradient $\bg_t = \nabla L(\bx_{t})$;
		\State Compute running estimate $	E[\bg^2]_t = \rho E[\bg^2]_{t-1} + (1 - \rho) \bg_{t}^2;$
		\State Compute step $\Delta \bx_t =- \frac{\eta}{\sqrt{E[\bg^2]_t+\epsilon }}  \odot \bg_{t}$;
		\State Apply update $\bx_{t+1} = \bx_{t} + \Delta \bx_t$;
		\EndFor
		\State {\bfseries Return:} resulting parameters $\bx_t$, and the loss $L(\bx_t)$.
	\end{algorithmic}
\end{algorithm}
\begin{algorithm}[h] 
	\caption{RMSProp with Nesterov Momentum}
	\label{alg:rmsprop_nesterov}
	\begin{algorithmic}[1]
		\State {\bfseries Input:} Initial parameter $\bx_1$, constant $\epsilon$;
		\State {\bfseries Input:} Global learning rate $\eta$, by default $\eta=0.001$;
		\State {\bfseries Input:} Decay constant $\rho$, \textcolor{mylightbluetext}{momentum constant $\alpha$};
		\State {\bfseries Input:} Initial accumulated squared gradients $E[\bg^2]_{0} = \bzero $, and update step $\Delta \bx_{0}=\bzero$;
		\For{$t=1:T$ } 
		\State Compute interim update $\widetilde{\bx}_{t} = \bx_{t} + \alpha \Delta \bx_{t-1}$;
		\State Compute interim gradient $\bg_t = \nabla L(\textcolor{mylightbluetext}{\widetilde{\bx}_{t}})$;
		\State Compute running estimate $	E[\bg^2]_t = \rho E[\bg^2]_{t-1} + (1 - \rho) \bg_{t}^2;$
		\State Compute step $\Delta \bx_t =\textcolor{mylightbluetext}{\alpha \Delta \bx_{t-1}}- \frac{\eta}{\sqrt{E[\bg^2]_t +\epsilon }}  \odot \bg_{t}$;
		\State Apply update $\bx_{t+1} = \bx_{t} + \Delta \bx_t$;
		\EndFor
		\State {\bfseries Return:} resulting parameters $\bx_t$, and the loss $L(\bx_t)$.
	\end{algorithmic}
\end{algorithm}

\section{AdaDelta}\label{section:adadelta}
\citet{zeiler2012adadelta} further shows an inconsistency in the units of the step size in RMSProp (so as the vanilla SGD, the momentum, and the AdaGrad). To overcome this weakness, and draw from the correctness of the second-order method (further discussed in Section~\ref{section:seconr-methods}, p.~\pageref{section:seconr-methods}), the author considers rearranging Hessian to determine the quantities involved.
It is well known that though the calculation of Hessian or approximation to the Hessian matrix is a tedious and computationally expensive task, the curvature information it provides proves valuable for optimization, and the units in Newton's method are well matched. Given the Hessian matrix $\bH$, the update step in Newton's method can be described as follows \citep{becker1988improving, dauphin2014identifying}:
\begin{equation}
	\Delta\bx_t \propto - \bH^{-1} \bg_t \propto \frac{\frac{\partial L(\bx_t)}{\partial \bx_t}}{\frac{\partial^2 L(\bx_t)}{\partial \bx^2}}.
\end{equation}
This implies 
\begin{equation}
	\frac{1}{\frac{\partial^2 L(\bx_t)}{\partial \bx_t^2}} = \frac{\Delta \bx_t}{\frac{\partial L(\bx_t)}{\partial \bx_t}},
\end{equation}
i.e., the units of the Hessian matrix can be approximated by the right-hand side term of the above equation. Since the RMSProp update in Eq.~\eqref{equation:rmsprop_update} already involves $\rms[\bg]_t$ in the denominator, i.e., the units of the gradients. Introducing an additional  unit of the order of $\Delta \bx_t$ in the numerator can match the same order as Newton's method. To do this, define another exponentially decaying average of the update steps:
\begin{equation}
	\begin{aligned}
		\rms[\Delta \bx]_t &= 	\sqrt{E[\Delta \bx^2]_t } \\
		&= \sqrt{\rho E[\Delta \bx^2]_{t-1} + (1 - \rho) \Delta \bx_{t}^2  }.
	\end{aligned}
\end{equation}
Since the value of $\Delta \bx_t$ for the current iteration is unknown, and the curvature can be assumed to be locally smoothed, making it suitable to approximate $\rms[\Delta \bx]_t$ by $\rms[\Delta \bx]_{t-1}$. 
So we can use an estimation of $\frac{1}{\frac{\partial^2 L(\bx_t)}{\partial \bx_t^2}} $ to substitute for the computationally expensive $\bH^{-1}$:
\begin{equation}
	\frac{\Delta \bx_t}{\frac{\partial L(\bx_t)}{\partial \bx_t}} \sim \frac{\rms[\Delta \bx]_{t-1}}{\rms[\bg]_t}.
\end{equation}
%
%
This presents an approximation to the diagonal Hessian, using only RMS measures of $\bg$ and $\Delta \bx$, and results in the update step whose units are matched:
\begin{equation}
	\Delta \bx_t = -\frac{\rms[\Delta \bx]_{t-1}}{\rms[\bg]_t} \odot \bg_t.
\end{equation}
The idea of AdaDelta, derived  from the second-order method, alleviates the annoying choosing of learning rate. 
Meanwhile, a web demo developed by Andrej Karpathy can be explored to find the convergence rates among SGD, SGD with momentum, AdaGrad, and AdaDelta \footnote{see https://cs.stanford.edu/people/karpathy/convnetjs/demo/trainers.html.}.

A crucial consideration when employing the RMSProp or AdaDelta method is to carefully notice that though the accumulated squared gradients in the denominator can compensate for the per-dimension learning rates, if we save the checkpoint of the neural networks at the end of some epochs and want to re-tune the parameter by loading the weights from the checkpoint, the first few batches of the re-tuning can perform poorly since there are not enough squared gradients to smooth the denominator. 
A particular example is shown in Figure~\ref{fig:er-rmsprop_epochstart}, where we save the weights and load them after each epoch; loss deterioration is observed after each epoch.
While this doesn't significantly impact overall training progress as the loss can still go down from that point, a more effective  choice involves saving the $E[\bg^2]_t$ along with the weights of the neural networks. 
\begin{figure}[h]
	\centering
	\includegraphics[width=0.6\textwidth]{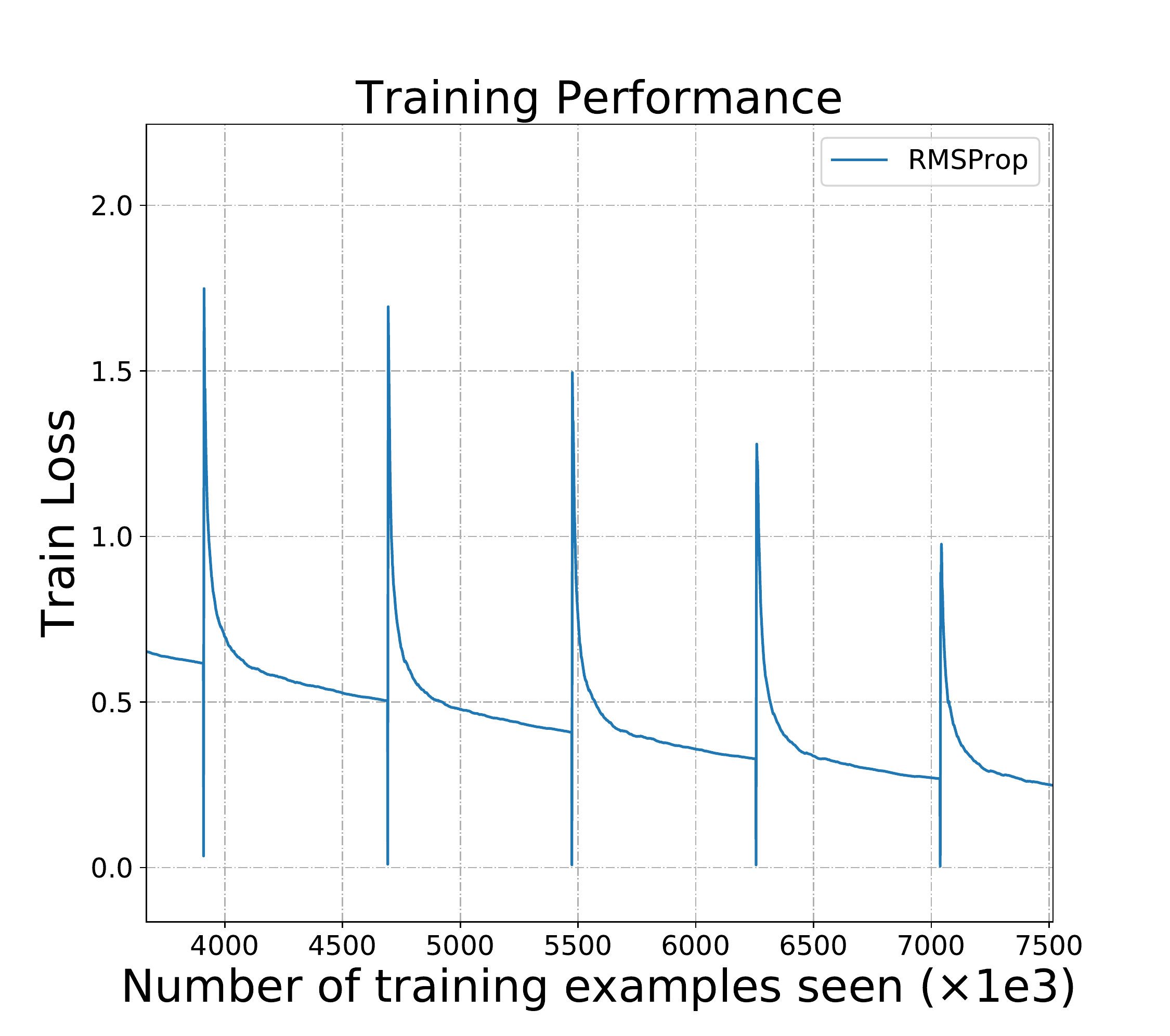}
	\caption{Demonstration of tuning parameter after each epoch by loading the weights. We save the weights and load them after each epoch such that there are step-points while re-training after each epoch. That is, loss deterioration is observed after each epoch.}
	\label{fig:er-rmsprop_epochstart}
\end{figure}
\index{Loss deterioration}
%
%
%
%
%

\index{Effective ratio}
\index{Exponential moving average}
\section{AdaSmooth}\label{section:adaer}
In this section, we will discuss the effective ratio, derived from previous updates in the stochastic optimization process.
We will elucidate its application to achieve  adaptive learning rates per-dimension via the flexible smoothing constant, hence the name AdaSmooth.
The idea presented in the section is derived from the RMSProp method in order to improve two main drawbacks of the method: 1) consider only the absolute value of the gradients rather than the total movement in each dimension; 2) the need for manually selection of hyper-parameters. 
\paragraph{Effective Ratio (ER).}
\citet{kaufman2013trading, kaufman1995smarter} suggested replacing the smoothing constant in the EMA formula with a constant based on the \textit{efficiency ratio} (ER). And the ER is shown to provide promising results for financial forecasting via classic quantitative strategies, wherein the ER of the closing price is calculated to decide the trend of the underlying asset \citep{lu2022exploring}. This indicator is designed to measure the \textit{strength of a trend}, defined within a range from -1.0 to +1.0, with a larger magnitude indicating a larger upward or downward trend.
Recently, \citet{lu2022reducing} show that the ER can be utilized to reduce overestimation and underestimation in time series forecasting.
Given the window size $M$ and a series $\{h_1, h_2, \ldots, h_T\}$, it is calculated with a simple formula:
\noindent
\begin{equation}
	\begin{aligned}
		e_t  &= \frac{s_t}{n_t}= \frac{h_{t} - h_{t-M}}{\sum_{i=0}^{M-1} |h_{t-i} - h_{t-1-i}|}= \frac{\text{Total move for a period}}{\text{Sum of absolute move for each bar}},
	\end{aligned}
\end{equation}
where $e_t$ is the ER of the series at time $t$. 
At a strong trend (i.e., the input series is moving in a certain direction, either up or down), the ER will approach 1 in absolute value; if there is no directed movement, it will be a little more than 0. 

Instead of calculating the ER base on the closing price of the underlying asset, we want to calculate the ER of the moving direction in the update methods for each parameter. And in the descent methods, we care more about how much each parameter moves apart from its initial point in each period, either moving positively or negatively. So here we only consider the absolute value of the ER. To be specific, the ER for the parameters in the  method is calculated as follows:
\begin{equation}\label{eqution:signoiase-er-delta}
	\begin{aligned}
		\be_t  = \frac{\bs_t}{\bn_t}&= \frac{| \bx_t -  \bx_{t-M}|}{\sum_{i=0}^{M-1} | \bx_{t-i} -  \bx_{t-1-i}|}= \frac{| \sum_{i=0}^{M-1} \Delta \bx_{t-1-i}|}{\sum_{i=0}^{M-1} | \Delta \bx_{t-1-i}|},
	\end{aligned}
\end{equation}
where $\be_t \in \real^d$, and its $i$-th element $e_{t,i}$ is in the range of $ [0, 1]$ for all $i$ in $[1,2,\ldots, d]$. A large value of $e_{t,i}$ indicates the descent method in the $i$-th dimension is moving consistently in a certain direction; while a small value approaching 0 means the parameter in the $i$-th dimension is moving in a zigzag pattern, alternating between positive and negative movements. In practice, and across all  our experiments,  $M$ is selected to be the batch index for each epoch. That is, $M=1$ if the training is in the first batch of each epoch; and $M=M_{\text{max}}$ if the training is in the last batch of the epoch, where $M_{\text{max}}$ is the maximal number of batches per epoch. In other words, $M$ ranges in $[1, M_{\text{max}}]$ for each epoch. Therefore, the value of $e_{t,i}$ indicates the movement of the $i$-th parameter in the most recent epoch. Or even more aggressively, the window can range from 0 to the total number of batches seen during the entire training progress. The adoption of the adaptive window size $M$ rather than a fixed one has the benefit that we do not need to keep the past $M+1$ steps $\{ \bx_{t-M},  \bx_{t-M+1}, \ldots,  \bx_t\}$ to calculate the signal and noise vectors $\{\bs_t,\bn_t\}$ in Eq.~\eqref{eqution:signoiase-er-delta} since they can be obtained in an accumulated fashion.

\paragraph{AdaSmooth.}\label{section:adaer-after-er}
If the ER in magnitude of each parameter is small (approaching 0), the movement in this dimension is zigzag, the  AdaSmooth method tends to use a long period average as the scaling constant to slow down the movement in that dimension. When the absolute ER per-dimension is large (tend to 1), the path in that dimension is moving in a certain direction (not zigzag), and the learning actually is happening and the descent is moving in a correct direction, where the learning rate should be assigned to a relatively large value for that dimension. Thus, the AdaSmooth tends to choose a small period which leads to a small compensation in the denominator; since the gradients in the closer periods are small in magnitude when it's near the (local) minima. A particular example is shown in Figure~\ref{fig:er-explain}, where the descent is moving in a certain direction, and the gradient in the near periods is small in magnitude; if we choose a larger period to compensate for the denominator, the descent will be slower due to the large factored denominator.
In short, we want a smaller period to calculate the exponential average of the squared gradients in Eq.~\eqref{equation:adagradwin} if the update is moving in a certain direction without a zigzag pattern; while when the parameter is updated in a zigzag fashion, the period for the exponential average should be larger \citep{lu2022adasmooth}.

\begin{figure}[h]
	\centering
	\includegraphics[width=0.6\textwidth]{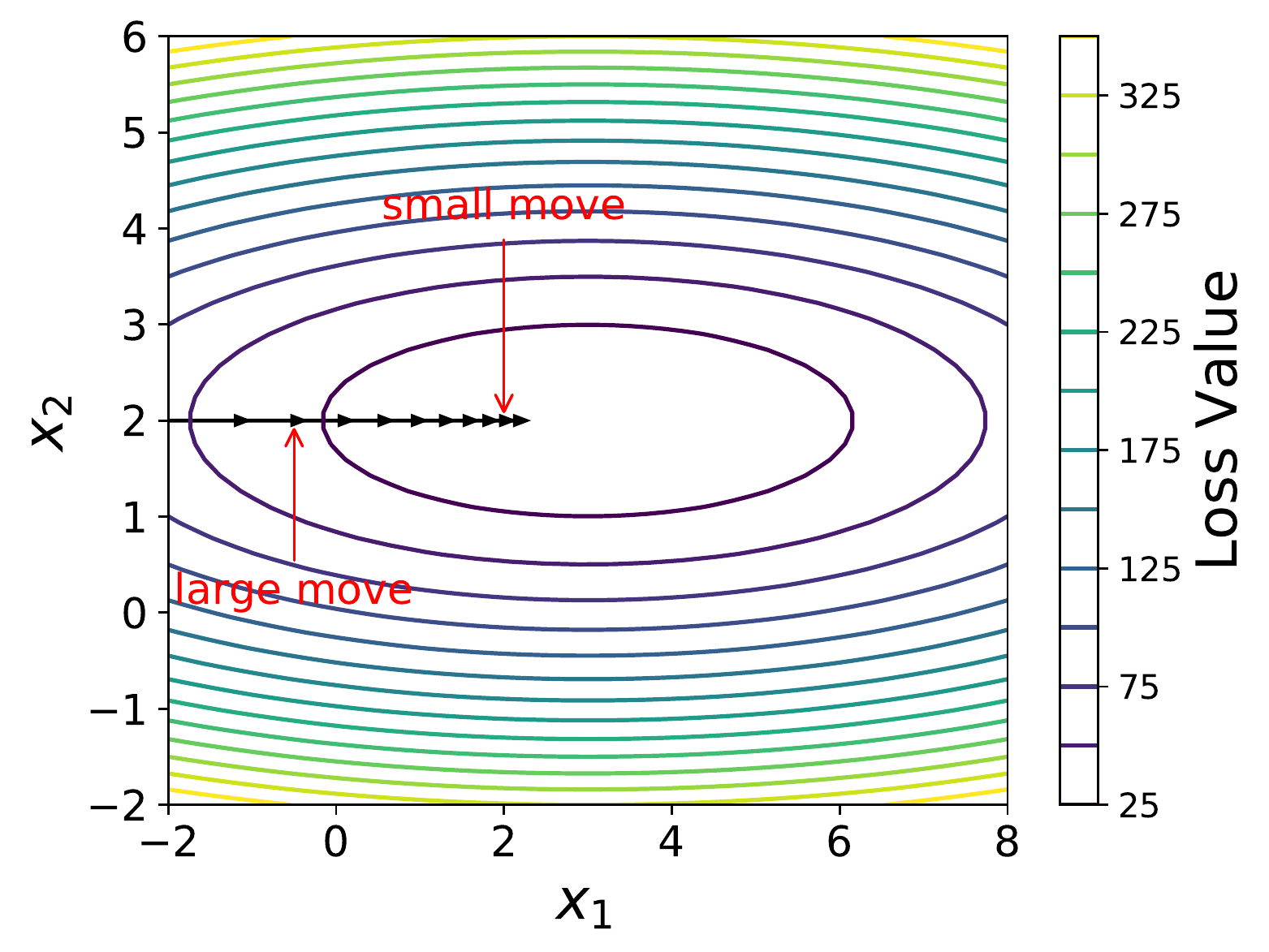}
	\caption{Demonstration of how the effective ratio works. Stochastic optimization tends to move a large step when it is far from the (local) minima; and a relatively small step when it is close to the (local) minima.}
	\label{fig:er-explain}
\end{figure}

The obtained value of ER is incorporated into the exponential smoothing formula.  
To enhance our approach, we aim to dynamically adjust the time period $N$ discussed in Eq.~\eqref{equation:ema_smooting_constant} to be a smaller value when the ER tends to 1 in absolute value; or a larger value when the ER moves towards 0. When $N$ is small, $\text{SC} $ is known as a ``\textit{fast SC}"; otherwise, $\text{SC} $ is known as a ``\textit{slow SC}". 

For example, let the small time period be $N_1=3$, and the large time period be $N_2=199$.
The smoothing ratio for the fast movement must align with that of  EMA with period $N_1$ (``fast SC" = $\frac{2}{N_1+1}$ = 0.5), and for the period of no trend EMA period must be equal to $N_2$ (``slow SC" = $\frac{2}{N_2+1}$ = 0.01). 
Thus the new changing smoothing constant is introduced, called the ``\textit{scaled smoothing constant}" (SSC), denoted by a vector $\bc_t\in \real^d$:
$$
\bc_t =  ( \text{fast SC} - \text{slow SC}) \times \be_t   + \text{slow SC}.
$$
By Eq.~\eqref{equation:ema_smooting_constant}, we can define the \textit{fast decay constant} $\rho_1=1-\frac{2}{N_1+1}$, and the \textit{slow decay constant} $\rho_2 = 1-\frac{2}{N_2+1}$. Then the scaled smoothing constant vector can be obtained by:
$$
\bc_t =  ( \rho_2- \rho_1) \times \be_t   + (1-\rho_2),
$$
where the smaller $\be_t$, the smaller $\bc_t$.
For a more efficient influence of the obtained smoothing constant on the averaging period, Kaufman recommended squaring it.
The final calculation formula then follows:
\begin{equation}\label{equation:squared-ssc}
	E[\bg^2]_t  =  \bc_t^2 \odot \bg_{t}^2  +  \left(1-\bc_t^2 \right)\odot E[\bg^2]_{t-1}.
\end{equation}
or after rearrangement:
$$
E[\bg^2]_t = E[\bg^2]_{t-1}+ \bc_t^2 \odot (\bg_{t}^2 -  E[\bg^2]_{t-1}).
$$
We notice that $N_1=3$ is a small period to calculate the average (i.e., $\rho_1=1-\frac{2}{N_1+1}=0.5$) such that the EMA sequence will be noisy if $N_1$ is less than 3. Therefore, the minimum value of $\rho_1$ in practice is set to be greater than 0.5 by default. While $N_2=199$ is a large period to compute the average (i.e., $\rho_2=1-\frac{2}{N_2+1}=0.99$) such that the EMA sequence almost depends only on the previous value, leading to the default value of $\rho_2$ no larger than 0.99. Experimental study will reveal that the AdaSmooth update will be insensitive to the hyper-parameters in the sequel. We also carefully notice that when $\rho_1=\rho_2$, the AdaSmooth algorithm recovers to the RMSProp algorithm with decay constant $\rho=1-(1-\rho_2)^2$ since we square it in Eq.~\eqref{equation:squared-ssc}. After developing the AdaSmooth method, we realize the main idea behind it is similar to that of SGD with momentum: to speed up (compensate less in the denominator) the learning along dimensions where the gradient consistently points in the same direction; and to slow the pace (compensate more in the denominator) along dimensions in which the sign of the gradient continues to change.

\index{Saddle point}
As discussed in the cyclical learning rate section (Section~\ref{section:cyclical-lr}, p.~\pageref{section:cyclical-lr}), \citet{dauphin2014identifying, dauphin2015equilibrated} argue that the difficulty in minimizing the loss arises from saddle points rather than poor local minima. Saddle points, characterized by small gradients, impede the learning process.  
However, an adaptive smoothing procedure for the learning rates per-dimension can naturally find these saddle points and compensate less in the denominator or ``increase" the learning rates when the optimization is in these areas, allowing more rapid traversal of saddle point plateaus. When applied to a nonconvex function to train a neural network, the learning trajectory may pass through many different structures and eventually arrive at a region that is a locally convex bowl. AdaGrad shrinks the learning rate according to the entire history of the squared partial derivative and may have made the learning rate too small before arriving at such a convex structure. RMSProp partly solves this drawback since it uses an exponentially decaying average to discard ancient squared gradients, making it more robust in a nonconvex setting compared to the AdaGrad method. The AdaSmooth goes further in two points: 1) when it's close to a saddle point, a small compensation in the denominator can help it escape the saddle point; 2) when it's close to a locally convex bowl, the small compensation further makes it converge faster.


Empirical evidence shows the ER used in the simple moving average (SMA) with a fixed windows size $w$ can also reflect the trend of the series/movement in quantitative strategies \citep{lu2022exploring}. However, this again needs to store $w$ previous squared gradients in the AdaSmooth case, making it inefficient and we shall not adopt this extension.

\paragraph{AdaSmoothDelta.}\label{section:adasmoothdelta}
We observe that the ER can also be applied to the AdaDelta setting:
\begin{equation}\label{equation:adasmoothdelta}
	\Delta \bx_t = -\frac{\sqrt{E[\Delta \bx^2]_t}}{\sqrt{E[\bg^2]_t+\epsilon}} \odot \bg_t,
\end{equation}
where 
\begin{equation}\label{equation:adasmoothdelta111}
	E[\bg^2]_t  =  \bc_t^2 \odot \bg_{t}^2  +  \left(1-\bc_t^2 \right)\odot E[\bg^2]_{t-1} ,
\end{equation}
and 
\begin{equation}\label{equation:adasmoothdelta222}
	E[\Delta \bx^2]_t = (1-\bc_t^2) \odot \Delta \bx^2_t+ \bc_t^2 \odot E[\Delta \bx^2]_{t-1},
\end{equation}
in which case the difference in $E[\Delta \bx^2]_t$ is to choose a larger period when the ER is small. This is reasonable in the sense that $E[\Delta \bx^2]_t$ appears in the numerator, while $E[\bg^2]_t$ is in the denominator of Eq.~\eqref{equation:adasmoothdelta}, making their compensation towards different directions. Alternatively, a fixed decay constant can be applied for $E[\Delta \bx^2]_t$:
$$
E[\Delta \bx^2]_t = (1-\rho_2)  \Delta \bx^2_t+ \rho_2  E[\Delta \bx^2]_{t-1},
$$
The AdaSmoothDelta optimizer introduced above further alleviates the need for a hand specified global learning rate, which is conventionally set to $\eta=1$ in the Hessian context. However, due to the adaptive smoothing constants in Eq.~\eqref{equation:adasmoothdelta111} and \eqref{equation:adasmoothdelta222}, the $E[\bg^2]_t $ and $E[\Delta \bx^2]_t$ are less locally smooth, making it less insensitive to the global learning rate than the AdaDelta method. Therefore, a smaller global learning rate, e.g., $\eta=0.5$ is favored in AdaSmoothDelta. The full procedure for computing AdaSmooth is then formulated in Algorithm~\ref{algo:adasmooth}.

\begin{algorithm}[tb]
	\caption{Computing AdaSmooth: the  AdaSmooth algorithm. All operations on vectors are element-wise. Good default settings for the tested tasks are $\rho_1=0.5, \rho_2=0.99, \epsilon=1e-6, \eta=0.001$; see Section~\ref{section:adaer-after-er} or Eq.~\eqref{equation:ema_smooting_constant} for a detailed discussion on the explanation of the decay constants' default values. 
		The AdaSmoothDelta iteration can be calculated in a similar way.
	}
	\label{alg:computer-adaer}
	\begin{algorithmic}[1]
		\State {\bfseries Input:} Initial parameter $\bx_1$, constant $\epsilon$;
		\State {\bfseries Input:} Global learning rate $\eta$, by default $\eta=0.001$;
		\State {\bfseries Input:} Fast decay constant $\rho_1$, slow decay constant $\rho_2$;
		\State {\bfseries Input:} Assert $\rho_2>\rho_1$, by default $\rho_1=0.5$, $\rho_2=0.99$;
		\For{$t=1:T$ } 
		\State Compute gradient $\bg_t = \nabla L(\bx_t)$;
		\State Compute ER $\be_t=\frac{| \bx_t -  \bx_{t-M}|}{\sum_{i=0}^{M-1} | \Delta \bx_{t-1-i}|}$ ;
		\State Compute scaled smoothing vector $\bc_t =  ( \rho_2- \rho_1) \times \be_t   + (1-\rho_2)$;
		\State Compute normalization term $E[\bg^2]_t  =  \bc_t^2 \odot \bg_{t}^2  +  \left(1-\bc_t^2 \right)\odot E[\bg^2]_{t-1} ;$
		\State Compute step $\Delta \bx_t =- \frac{\eta}{\sqrt{E[\bg^2]_t+\epsilon}}  \odot \bg_{t}$;
		\State Apply update $\bx_{t+1} = \bx_{t} + \Delta \bx_t$;
		\EndFor
		\State {\bfseries Return:} resulting parameters $\bx_t$, and the loss $L(\bx_t)$.
	\end{algorithmic}\label{algo:adasmooth}
\end{algorithm}

We have discussed the step-points problem when reloading weights from checkpoints in the RMSProp or AdaDelta methods. However, this issue is less severe in the AdaSmooth setting as a typical example shown in Figure~\ref{fig:er-rmsprop_epochstart22}, where a smaller loss deterioration is observed in the AdaSmooth example compared to the RMSProp case.
\begin{figure}[h]
	\centering
	\includegraphics[width=0.6\textwidth]{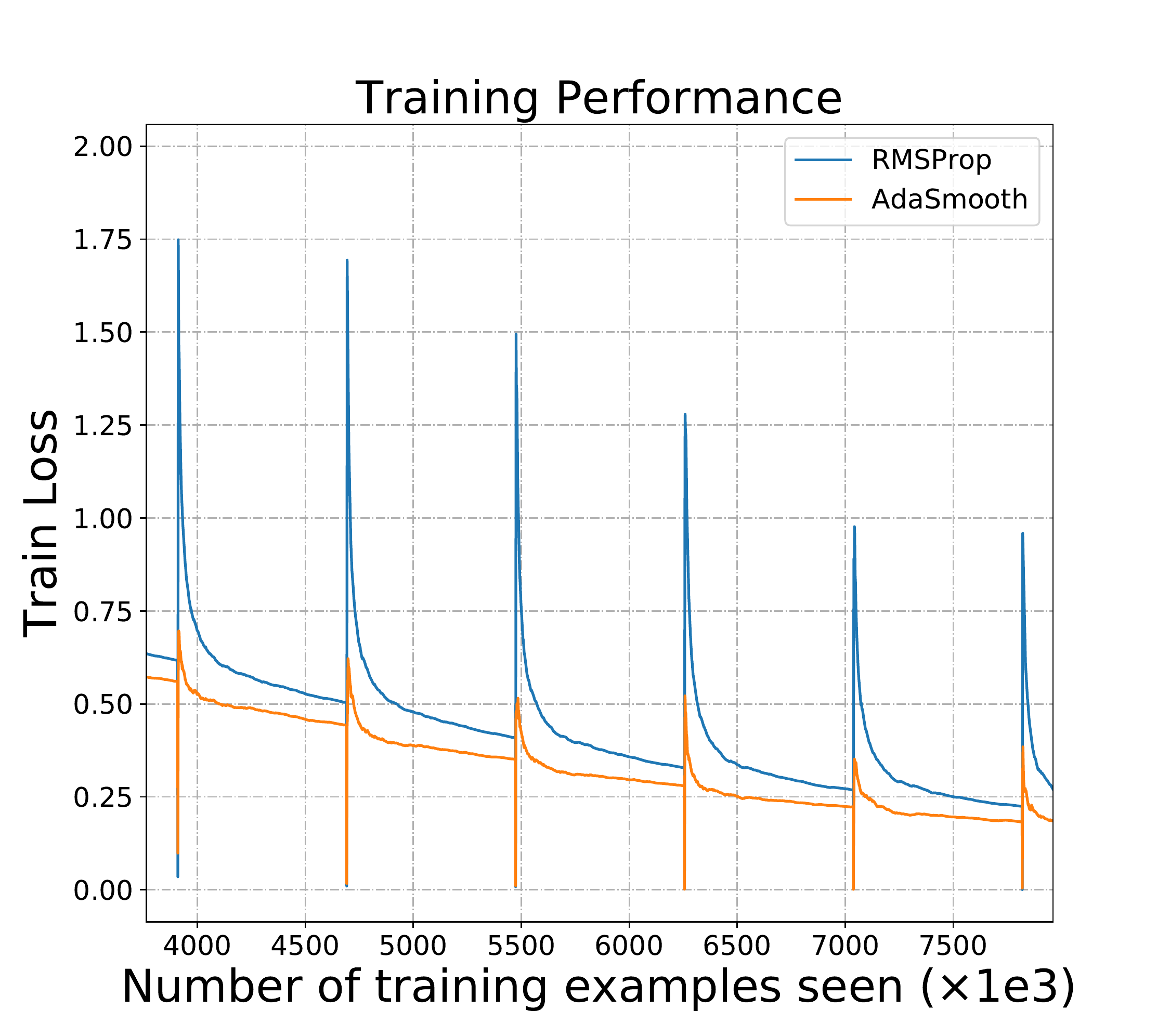}
	\caption{Demonstration of tuning parameter after each epoch by loading the weights. We save the weights and load them after each epoch such that there are step-points while re-training after each epoch. This issue is less sever in the AdaSmooth case than in the RMSProp method. A smaller loss deterioration is observed in the AdaSmooth example than that of the RMSProp case.}
	\label{fig:er-rmsprop_epochstart22}
\end{figure}
\index{Loss deterioration}

\begin{figure}[!h]
	\centering
	\subfigure[MNIST training Loss]{\includegraphics[width=0.47\textwidth, ]{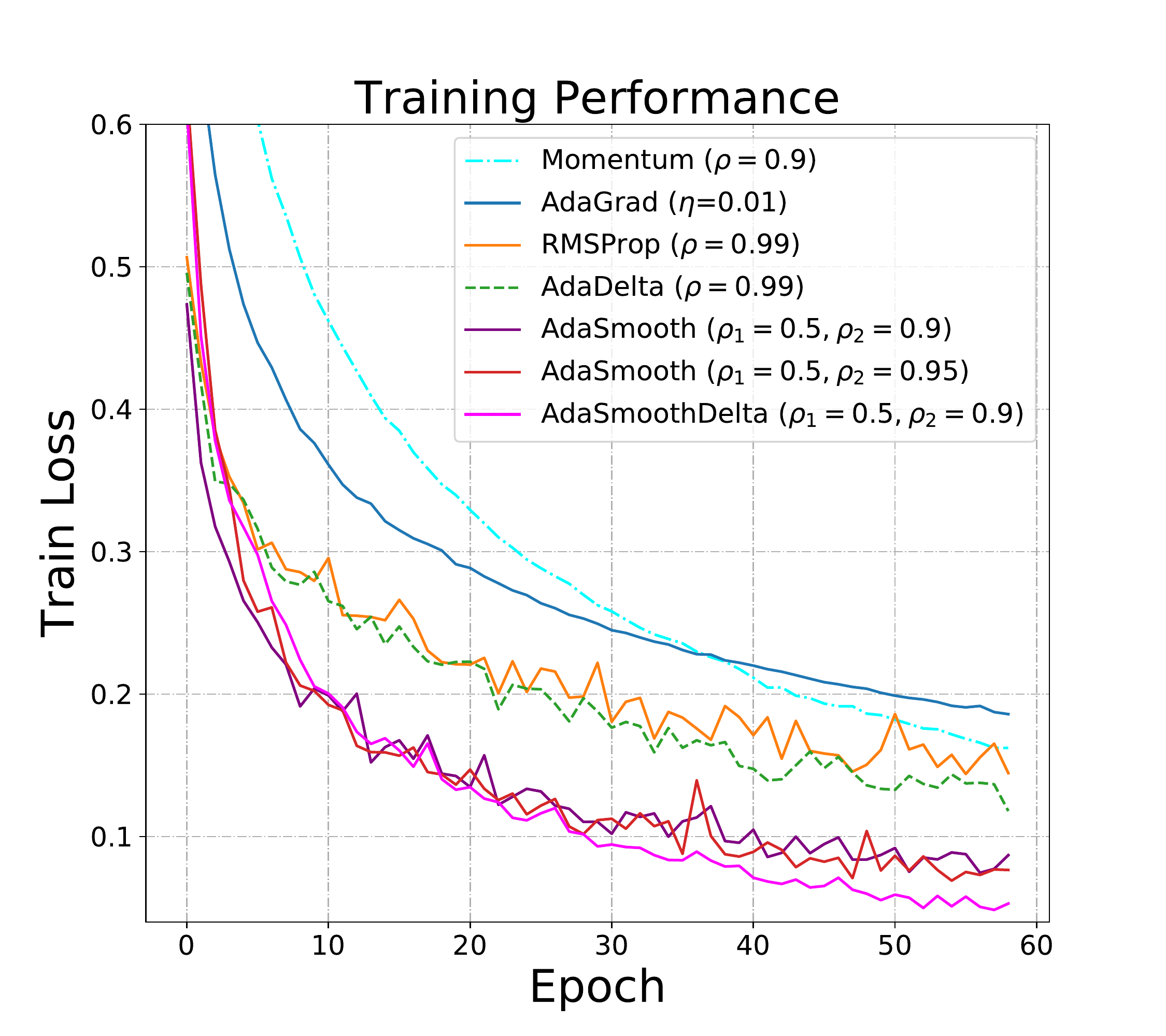} \label{fig:mnist_loss_train_mlp}}
	\subfigure[Census Income training loss]{\includegraphics[width=0.47\textwidth]{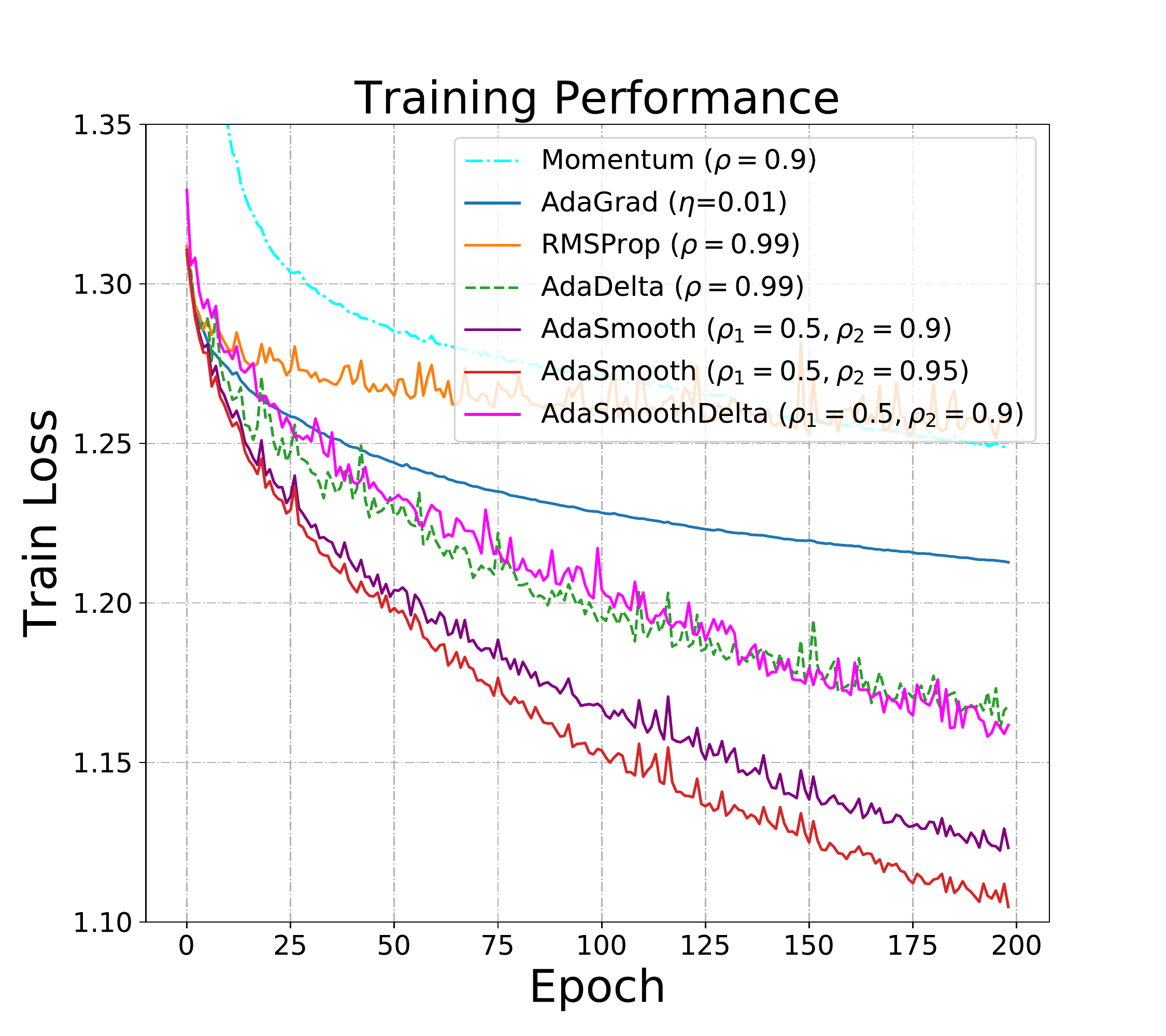} \label{fig:census_loss_train_mlp}}
	\caption{\textbf{MLP:} Comparison of descent methods on MNIST digit and Census Income data sets for 60 and 200 epochs with MLP.}
	\label{fig:mnist_census_loss_MLP}
\end{figure}

\paragraph{Example: multi-layer perceptron.}

To see the difference between the discussed algorithms by far,
we conduct experiments with different machine learning models; and different data sets including 
real handwritten digit classification task, MNIST \citep{lecun1998mnist} \footnote{It has a training set of 60,000 examples, and a test set of 10,000 examples.}, and Census Income \footnote{Census income data has 48842 number of samples and 70\% of them are used as the training set in our case: https://archive.ics.uci.edu/ml/datasets/Census+Income.} data sets are used.
In all scenarios, the same parameter initialization is adopted when training with different stochastic optimization algorithms. We compare the results in terms of convergence speed and generalization. 

Multi-layer perceptrons (MLP, a.k.a., multi-layer neural networks) are powerful tools for solving machine learning tasks, finding internal linear and nonlinear features behind the model inputs and outputs. We adopt the simplest MLP structure: an input layer, a hidden layer, and an output layer. We notice that rectified linear unit (Relu) outperforms Tanh, Sigmoid, and other nonlinear units in practice, making it the default nonlinear function in our structures. Since dropout has become a core tool in training neural networks \citep{srivastava2014dropout}, we adopt 50\% dropout noise to the network architecture during training to prevent overfitting.   
To be more concrete, the detailed architecture for each fully connected layer is described by F$(\langle \textit{num outputs} \rangle:\langle \textit{activation function} \rangle)$; and for a dropout layer is described by
DP$(\langle \textit{rate} \rangle)$. Then the network structure we use can be described as follows:
\begin{equation}
	\text{F(128:Relu)DP(0.5)F(\text{num of classes}:Softmax)}.
\end{equation}
All methods are trained on mini-batches of 64 images per batch for 60 or 200 epochs through the training set. Setting the hyper-parameter to $\epsilon=1e-6$.
If not especially mentioned, the global learning rates are set to $\eta=0.001$ in all scenarios. While a relatively large learning rate ($\eta=0.01$) is used for AdaGrad method due to  its accumulated decaying effect; learning rate for the AdaDelta method is set to 1 as suggested by \citet{zeiler2012adadelta} and for the AdaSmoothDelta method is set to 0.5 as discussed in Section~\ref{section:adasmoothdelta}.
In Figure~\ref{fig:mnist_loss_train_mlp} and ~\ref{fig:census_loss_train_mlp}, we compare SGD with momentum, AdaGrad, RMSProp, AdaDelta, AdaSmooth, and AdaSmoothDelta in optimizing the training set losses for MNIST and Census Income data sets, respectively. The SGD with momentum method does the worst in this case. AdaSmooth performs slightly better than AdaGrad and RMSProp in the MNIST case and much better than the latters in the Census Income case. AdaSmooth shows fast convergence from the initial epochs while continuing to reduce the training losses in both the two experiments. We here show two sets of slow decay constant for AdaSmooth, i.e., ``$\rho_2=0.9$" and ``$\rho_2=0.95$". Since we square the scaled smoothing constant in Eq.~\eqref{equation:squared-ssc}, when $\rho_1=\rho_2=0.9$, the AdaSmooth recovers to RMSProp with $\rho=0.99$ (so as the AdaSmoothDelta and AdaDelta case). In all cases, the AdaSmooth results perform better while there is almost no difference between the results of AdaSmooth with various hyper-parameters in the MLP model. Table~\ref{fig:mlp_table_perform} shows the best training set accuracy for different algorithms.
While we notice the best test set accuracies for various algorithms are very close; we only present the best ones for the first 5 epochs in Table~\ref{fig:mlp_table_perform-test}. 
In all scenarios, the AdaSmooth method converges slightly faster than other optimization methods in terms of the test accuracy for this toy example.

\begin{table}[!h]
	\centering
	\begin{tabular}{lll}
		\hline
		Method &\gap MNIST &\gap  Census \\ \hline
		SGD with Momentum ($\rho=0.9$) &\gap 98.64\% &\gap 85.65\%\\
		AdaGrad ($\eta$=0.01) &\gap 98.55\%&\gap 86.02\%\\
		RMSProp ($\rho=0.99$) &\gap 99.15\%&\gap 85.90\%\\
		AdaDelta ($\rho=0.99$) &\gap 99.15\%&\gap 86.89\%\\
		AdaSmooth ($\rho_1=0.5, \rho_2=0.9$) &\gap \textbf{99.34}\%&\gap \textbf{86.94}\%\\
		AdaSmooth ($\rho_1=0.5, \rho_2=0.95$) &\gap \textbf{99.45}\%&\gap \textbf{87.10}\%\\
		AdaSmoothDelta ($\rho_1=0.5, \rho_2=0.9$) &\gap \textbf{99.60}\%&\gap {86.86}\%\\
		\hline
	\end{tabular}
	\caption{\textbf{MLP}: Best in-sample evaluation in training accuracy (\%).}
	\label{fig:mlp_table_perform}
\end{table}

\begin{table}[!h]
	\centering
	\begin{tabular}{lll}
		\hline
		Method &\gap MNIST &\gap  Census \\ \hline
		SGD with Momentum ($\rho=0.9$) &\gap 94.38\%&\gap  83.13\%\\
		AdaGrad ($\eta$=0.01) &\gap 96.21\%& \gap84.40\%\\
		RMSProp ($\rho=0.99$) &\gap 97.14\%& \gap84.43\%\\
		AdaDelta ($\rho=0.99$) &\gap 97.06\%&\gap84.41\%\\
		AdaSmooth ($\rho_1=0.5, \rho_2=0.9$) &\gap 97.26\%&\gap 84.46\%\\
		AdaSmooth ($\rho_1=0.5, \rho_2=0.95$) &\gap 97.34\%&\gap 84.48\%\\
		AdaSmoothDelta ($\rho_1=0.5, \rho_2=0.9$) &\gap 97.24\% &\gap \textbf{84.51}\%\\
		\hline
	\end{tabular}
	\caption{\textbf{MLP}: Best out-of-sample evaluation in test accuracy for the first 5 epochs. }
	\label{fig:mlp_table_perform-test}
\end{table}


\section{Adam}
Adaptive moment estimation (Adam) is yet another adaptive learning rate optimization algorithm \citep{kingma2014adam}. The Adam algorithm uses a similar normalization by second-order information, the running estimates for squared gradient; however, it also incorporates first-order information into the update. In addition to storing the exponential moving average of past squared gradient (the second moment) like RMSProp, AdaDelta, and AdaSmooth, Adam also keeps an exponentially decaying average of the past gradients: 
\begin{equation}\label{equation:adam-updates}
	\begin{aligned}
		\bmm_t &=  \rho_1 \bmm_{t-1} + (1-\rho_1)\bg_t; \\
		\bv_t &= \rho_2 \bv_{t-1} +(1-\rho_2)\bg_t^2,
	\end{aligned}
\end{equation}
where $\bmm_t$ and $\bv_t$ are running estimates of the first moment (the mean) and the second moment (the uncentered variance) of the gradients, respectively. The drawback of RMSProp is that the running estimate $E[\bg^2]$ of the second-order moment is biased in the initial time steps since it is initialized to $\bzero$; especially when the decay constant is large (when $\rho$ is close to 1 in RMSProp). Observing the biases towards zero in Eq.~\eqref{equation:adam-updates} as well, Adam counteracts these biases by computing the bias-free moment estimates:
$$
\begin{aligned}
	\widehat{\bmm}_t &= \frac{\bmm_t}{1-\rho_1^t}; \\
	\widehat{\bv}_t &= \frac{\bv_t}{1-\rho_2^t}.
\end{aligned}
$$ 
The first and second moment estimates are then incorporated into the update step:
$$
\Delta \bx_t = - \frac{\eta}{\sqrt{\widehat{\bv}_t}+\epsilon} \odot  \widehat{\bmm}_t.
$$
And therefore the update becomes 
$$
\bx_{t+1} =\bx_t -  \frac{\eta}{\sqrt{\widehat{\bv}_t}+\epsilon} \odot  \widehat{\bmm}_t.
$$
In practice, \citet{kingma2014adam} suggest to use $\rho_1=0.9$, $\rho_2=0.999$, and $\epsilon=1e-8$ for the parameters by default.

\section{AdaMax}
Going further from Adam, \citet{kingma2014adam} notice the high-order moment:
$$
\bv_t = \rho_2^p \bv_{t-1} + (1-\rho_2^p) |\bg_t|^p,
$$
that is numerically unstable for large $p$ values, making $\ell_1$ and $\ell_2$ norms the common choices for updates. However, when $p\rightarrow \infty$, the $\ell_{\infty}$ also exhibits stable behavior. Therefore, the AdaMax admits the following moment update:
$$
\bu_t = \rho_2^\infty \bu_{t-1} + (1-\rho_2^\infty) |\bg_t|^\infty = \max (\rho_2\bu_{t-1}, |\bg_t|),
$$
where we do not need to correct for initialization bias in this case, and this yields the update step
$$
\Delta \bx_t = - \frac{\eta}{\bu_t} \odot  \widehat{\bmm}_t.
$$
In practice, \citet{kingma2014adam} suggest to use $\eta=0.002$, $\rho_1=0.9$, and $\rho_2=0.999$ as the default parameters.

\section{Nadam}
Nadam (Nesterov-accelerated Adam) combines the ideas of Adam and Nesterov momentum \citep{dozat2016incorporating}. We recall the momentum and NAG updates as follows:
$$
\boxed{
	\begin{aligned}
		&\text{Momentum:}\\
		&\bg_t = \nabla L(\bx_t);\\
		&\Delta \bx_t = \rho\Delta \bx_{t-1} - \eta \bg_t; \\
		&\bx_{t+1} = \bx_t +\Delta \bx_t,
	\end{aligned}
}
\gap 
\boxed{
	\begin{aligned}
		&\text{NAG:}\\
		&\bg_t = \nabla L(\bx_{t} + \rho \Delta \bx_{t-1});\\
		&\Delta \bx_t =\rho\Delta \bx_{t-1} - \eta\bg_t; \\
		&\bx_{t+1} = \bx_t +\Delta \bx_t,
	\end{aligned}
}
$$
\citet{dozat2016incorporating} first proposes a modification of NAG by using the current momentum vector to look ahead, which we call NAG$^\prime$ here. That is, applying the momentum update twice for each update:
$$
\boxed{
	\begin{aligned}
		&\text{NAG}^\prime:\\
		&\bg_t = \nabla L(\bx_t);\\
		&\Delta \bx_t = \rho\Delta \bx_{t-1} - \eta \bg_t; \\
		&\Delta \bx_t^\prime = \rho\Delta \bx_{t} - \eta \bg_t; \\
		&\bx_{t+1} = \bx_t +\Delta \bx_t^\prime .
	\end{aligned}
}
$$
By rewriting the Adam in the following form, where a similar modification according to NAG$^\prime$ leads to the Nadam update:
$$
\boxed{
	\begin{aligned}
		&\text{Adam}:\\
		&\bmm_t =  \rho_1 \bmm_{t-1} + (1-\rho_1)\bg_t;\\
		&\widehat{\bmm}_t = \rho_1 \frac{\bmm_{t-1} }{1-\rho_1^t} +\frac{1-\rho_1}{1-\rho_1^t} \bg_t; \\
		&\Delta \bx_t = - \frac{\eta}{\sqrt{\widehat{\bv}_t}+\epsilon} \odot  \widehat{\bmm}_t;\\
		&\bx_{t+1} = \bx_t +\Delta \bx_t,
	\end{aligned}
}
\leadto 
\boxed{
	\begin{aligned}
		&\text{Nadam}:\\
		&\bmm_t =  \rho_1 \bmm_{t-1} + (1-\rho_1)\bg_t;\\
		&\widehat{\bmm}_t = \rho_1 \frac{\bmm_{\textcolor{mylightbluetext}{t}} }{1-\rho_1^{\textcolor{mylightbluetext}{t+1}}} +\frac{1-\rho_1}{1-\rho_1^t} \bg_t; \\
		&\Delta \bx_t = - \frac{\eta}{\sqrt{\widehat{\bv}_t}+\epsilon} \odot  \widehat{\bmm}_t;\\
		&\bx_{t+1} = \bx_t +\Delta \bx_t.
	\end{aligned}
}
$$
However, the $\rho_1 \frac{\bmm_{t-1} }{1-\rho_1^t}$ in $\widehat{\bmm}_t$ of the Adam method can be replaced in a momentum fashion; by applying the same modification on NAG$^\prime$, the second version of Nadam has the following form (though it's not originally presented in \citet{dozat2016incorporating}):
$$
\boxed{
	\begin{aligned}
		&\text{Adam}^\prime:\\
		&\bmm_t =  \rho_1 \bmm_{t-1} + (1-\rho_1)\bg_t;\\
		&\widehat{\bmm}_t = \rho_1 \textcolor{mylightbluetext}{\widehat{\bmm}_{t-1}} +\frac{1-\rho_1}{1-\rho_1^t} \bg_t; \\
		&\Delta \bx_t = - \frac{\eta}{\sqrt{\widehat{\bv}_t}+\epsilon} \odot  \widehat{\bmm}_t;\\
		&\bx_{t+1} = \bx_t +\Delta \bx_t^\prime,
	\end{aligned}
}
\leadto 
\boxed{
	\begin{aligned}
		&\text{Nadam}^\prime:\\
		&\widehat{\bmm}_t = \rho_1 \frac{\bmm_{t-1} }{1-\rho_1^t} +\frac{1-\rho_1}{1-\rho_1^t} \bg_t; \\
		&\widehat{\bmm}_t^\prime = \rho_1 \textcolor{mylightbluetext}{\widehat{\bmm}_{t}} +\frac{1-\rho_1}{1-\rho_1^t} \bg_t; \\
		&\Delta \bx_t = - \frac{\eta}{\sqrt{\widehat{\bv}_t}+\epsilon} \odot  \widehat{\bmm}_t^\prime;\\
		&\bx_{t+1} = \bx_t +\Delta \bx_t^\prime.
	\end{aligned}
}
$$

\index{Saddle point}
\section{Problems in SGD}\label{section:c-problem}
The introduced optimizers for stochastic gradient descent  are widely used optimization algorithms, especially in training machine learning models. However, it has its challenges and potential issues. Here are some common problems associated with SGD:
\paragraph{Saddle points.}
When the Hessian of loss function is positive definite, then the optimal point $\bx_\star$ with vanishing gradient must be a local minimum. Similarly, when the Hessian is negative definite, the point is a local maximum; when the Hessian has both positive and negative eigenvalues, the point is a saddle point (see later discussion in Eq.~\eqref{equation:reparametrization-newton}, p.~\pageref{equation:reparametrization-newton}). 
The stochastic optimizers discussed above in practice are first order optimization algorithms: they only look at the gradient information, and never explicitly compute the Hessian. Such algorithms may get stuck at saddle points (toy example in Figure~\ref{fig:quadratic_saddle}, p.~\pageref{fig:quadratic_saddle}). In the algorithms presented earlier, including vanilla update, AdaGrad, AdaDelta, RMSprop, and others, this issue may arise. 
AdaSmooth may have chance to go out of the saddle points as argued in Section~\ref{section:adaer}. 
On the other hand, the mechanism of momentum and Nesterov momentum help point $\bx$ to go over the local minimum or saddle point because they have a term of previous step size (in general), but make the model more difficult to converge, especially when the momentum term $\rho$ is large. 

\paragraph{Low speed in SGD.}
However, although claimed in Rong Ge's post \footnote{http://www.offconvex.org/2016/03/22/saddlepoints/}, it is potential to converge to saddle points with high error rate, \citet{dauphin2014identifying} and Benjamin Recht's post \footnote{http://www.offconvex.org/2016/03/24/saddles-again/} point out that it is in fact super hard to converge to a saddle point if one picks a random initial point and run SGD. 
This is because the typical problem for both local minima and saddle points is that they are often surrounded by plateaus of small curvature in the error surface. In the SGD algorithms we discuss above, they are repelled away from a saddle point to a lower error by following the directions of negative curvature. In other words, there will be no so called saddle points problem in SGD algorithms. However, this repulsion can occur slowly due to the plateau with small curvature. 

While, for the second-order methods, e.g., Newton's method, it does not treat saddle points appropriately. This is partly because Newton's method is designed to rapidly descend plateaus surrounding local minima by rescaling gradient steps by the inverse eigenvalues of the Hessian matrix (we will see shortly in the sequel).

As argued in \citet{dauphin2014identifying}, random Gaussian error functions over large $d$ dimensions are increasingly likely to have saddle points rather than local minima as $d$ increases. And the ratio of the number of saddle points to local minima increases exponentially with the dimensionality $d$. The authors also argue that it is saddle points rather than local minima that provide a fundamental impediment to rapid high dimensional non-convex optimization. In this sense, local minima with high errors are exponentially rare in the dimensionality of the problem. So, the computation will be slow in SGD algorithms to escape from small curvature plateaus.

\paragraph{First-order method to escape from saddle point.}
The post \footnote{http://www.offconvex.org/2016/03/22/saddlepoints/} by Rong Ge introduces a first-order method to escape from saddle point. He claims that saddle points are very unstable: if we put a ball on a saddle point, then slightly perturb it, the ball is likely to fall to a local minimum, especially when the second-order term $\frac{1}{2} \Delta \bx^\top \bH \Delta \bx$ (see later discussion in Eq.~\eqref{equation:newton-derive}, p.~\pageref{equation:newton-derive}) is significantly smaller than 0 (i.e., there is a steep direction where the function value decreases, and assume we are looking for a local minimum), which is called a \textit{strict saddle function} in Rong Ge's post. In this case, we can use \textit{noisy gradient descent}:
$$
\bx_{t+1} = \bx_t + \Delta\bx + \bepsilon.
$$
where $\bepsilon$ is a noise vector that has zero mean $\mathbf{0}$. Actually, this is the basic idea of SGD, which uses the gradient of a mini batch rather than the true gradient. However, the drawback of the stochastic gradient descent is not the direction, but the size of the step along each eigenvector. The step, along any eigen-direction $\bq_i$, is given by $-\lambda_i \Delta {v_i}$ (see later discussion in Section~\ref{section:newtonsmethod}, p.~\pageref{section:newtonsmethod}, feel free to skip this paragraph at a first reading), when the steps taken in the direction with small absolute value of eigenvalues, the step is small. To be more concrete, as an example where the curvature of the error surface may not be the same in all directions. If there is a long and narrow valley in the error surface, the component of the gradient in the direction that points along the base of the valley is very small; while the component perpendicular to the valley walls is quite large even though we have to move a long distance along the base and a small distance perpendicular to the walls. This phenomenon can be seen in Figure~\ref{fig:momentum_gd} (though it's partly solved by SGD with momentum).

We normally move by making a step that is some constant times the negative gradient rather than a step of constant length in the direction of the negative gradient. This means that in steep regions (where we have to be careful not to make our steps too large), we move quickly; and in shallow regions (where we need to move in big steps), we move slowly. This phenomenon again contributes to the slower convergence of SGD methods compared to second-order methods.


%
%

\newpage
\clearchapter{Second-Order Methods}
\begingroup
\hypersetup{linkcolor=winestain,
linktoc=page,  
}
\minitoc \newpage
\endgroup
\section{Second-Order Methods}\label{section:seconr-methods}
\lettrine{\color{caligraphcolor}W}
We previously addressed the derivation of AdaDelta based on the consistency of units in second-order methods  (Section~\ref{section:adadelta}, p.~\pageref{section:adadelta}). In this section, we provide a brief overview of Newton's method and its typical variations, including damped Newton's method and Levenberg gradient descent. Furthermore, we also derive the conjugate gradient from scratch that utilizes second-order information to capture the curvature shape of the loss surface in order to favor a faster convergence.

\section{Newton's Method}\label{section:newtonsmethod}
Newton's method is an optimization policy that employs Taylor's expansion to approximate the loss function with a quadratic form, providing an estimation of  the minimum location based on the approximated quadratic equation.
By Taylor's formula (Appendix~\ref{appendix:taylor-expansion}, p.~\pageref{appendix:taylor-expansion}) and disregarding  derivatives of higher order, the loss function $L(\bx+\Delta \bx)$ can be approximated by
\begin{equation}\label{equation:newton-derive}
L(\bx+\Delta \bx) \approx L(\bx) +\Delta \bx^\top \nabla L(\bx) + \frac{1}{2} \Delta \bx^\top \bH \Delta \bx,
\end{equation}
where $\bH$ is the Hessian of the loss function $L(\bx)$ with respect to $\bx$. The optimal point (minimum point) of Eq.~\eqref{equation:newton-derive} is then obtained at 
$$
\bx_\star = \bx - \bH^{-1} \nabla L(\bx).
$$
That is, the update step is rescaled by the inverse of Hessian. 
Newton's update can be intuitively understood as utilizing curvature information from the Hessian:  when the curvature is steep, the inverse Hessian can scale more making the step smaller; while the curvature is flat, the inverse Hessian scales less resulting in a larger update.
However, for nonlinear $L(\bx)$, achieving the minimum in a single step is not feasible. 
Similar to the stochastic optimization methods we have introduced in previous sections, Newton's method is updated iteratively as formulated in Algorithm~\ref{alg:newton_method} \citep{roweis1996levenberg, goodfellow2016deep}. 

\begin{algorithm}[H] 
\caption{Newton's Method}
\label{alg:newton_method}
\begin{algorithmic}[1]
	\State {\bfseries Input:} Initial parameter $\bx_1$;
	\For{$t=1:T$ } 
	\State Compute gradient $\bg_t = \nabla L(\bx_{t})$;
	\State Compute Hessian $\bH_t = \nabla^2 L(\bx_{t});$
	\State Compute inverse Hessian $\bH_t^{-1}$;
	\State Compute update step $\Delta \bx_t = -\bH_t^{-1}\bg_t$;
	\State Apply update $\bx_{t+1} = \bx_{t} + \Delta \bx_t$;
	\EndFor
	\State {\bfseries Return:} resulting parameters $\bx_t$, and the loss $L(\bx_t)$.
\end{algorithmic}
\end{algorithm}

The computational complexity of Newton's method arises from the computation of the inverse of the Hessian at each training iteration.
The number of entries in the Hessian matrix is squared in the number of parameters ($\bx\in \real^d, \bH\in \real^{d\times d}$), making the complexity of the inverse be of order $\mathcal{O}(d^3)$ \citep{trefethen1997numerical, boyd2018introduction, lu2021numerical}. As a consequence, only networks with a small number of parameters, e.g., shallow neural networks or multi-layer perceptrons, can be trained by Newton's method in practice.

\index{Spectral decomposition}
\index{Reparametrization}
\paragraph{Reparametrization of the space around critical point.}
A \textit{critical point or stationary point} is a point $\bx$ where the gradient of $L(\bx)$ vanishes (suppose $L(\bx)$ is differentiable over the region of $L(\bx)$). A useful reparametrization of the loss function $L$ \textbf{around critical points} is derived from Taylor's expansion. Since the gradient vanishes, Eq.~\eqref{equation:newton-derive} can be written as
\begin{equation}
L(\bx+\Delta \bx) \approx L(\bx)  + \frac{1}{2} \Delta \bx^\top \bH \Delta \bx.
\end{equation}
Since Hessian $\bH$ is symmetric, it admits spectral decomposition (Theorem 13.1 in \citet{lu2022matrix} or Appendix~\ref{appendix:spectraldecomp}, p.~\pageref{appendix:spectraldecomp}):
$$
\bH=\bQ\bLambda\bQ^\top \in \real^{d\times d},
$$ 
where the columns of $\bQ = [\bq_1, \bq_2, \ldots , \bq_d]$ are eigenvectors of $\bH$ and are mutually orthonormal, and the entries of $\bLambda = \diag(\lambda_1, \lambda_2, \ldots , \lambda_d)$ are the corresponding real eigenvalues of $\bA$. We define the following vector $\Delta\bv$:
$$
\Delta \bv=
\begin{bmatrix}
\bq_1^\top \\ \vdots \\ \bq_d^\top 
\end{bmatrix}
\Delta\bx=
\bQ^\top \Delta\bx.
$$
Then the reparametrization follows
\begin{equation}\label{equation:reparametrization-newton}
\begin{aligned}
	L(\bx+\Delta \bx) &\approx L(\bx)  + \frac{1}{2} \Delta \bx^\top \bH \Delta \bx\\
	&= L(\bx)  +  \frac{1}{2} \Delta \bv^\top \bLambda \Delta\bv 
	= L(\bx)  +  \frac{1}{2}  \sum_{i=1}^{d} \lambda_i (\Delta v_i)^2,
\end{aligned}
\end{equation}
where $\Delta v_i$ is the $i$-th element of $\Delta\bv$. A conclusion on the type of the critical point follows immediately from the reparametrization:
\begin{itemize}
\item If all eigenvalues are nonzero and positive, then the critical point is a local minimum;

\item If all eigenvalues are nonzero and negative, then the critical point is a local maximum;

\item If the eigenvalues are nonzero, and both positive and negative eigenvalues exist, then the critical point is a saddle point.
\end{itemize}

In vanilla GD, if an eigenvalue $\lambda_i$ is positive (negative), then the step moves towards (away) from $\bx$ along $\Delta \bv$, guiding the GD towards optimal $\bx_\star$. The step along any direction $\bq_i$ is given by $-\lambda_i\Delta v_i$. 
In contrast, in Newton's method, the step is rescaled by the inverse Hessian, making the step along direction $\bq_i$ scaled into $-\Delta v_i$. This may cause problem when the eigenvalue is negative, resulting in the step moving in the \textit{opposite} direction compared to  vanilla GD \citep{dauphin2014identifying}. 

The reparametrization shows that rescaling the gradient along the direction of each eigenvector can result in wrong direction when the eigenvalue $\lambda_i$ is negative. This suggests the rescale by its magnitude, i.e., scale by $1/|\lambda_i|$ rather than $1/\lambda_i$, preserving sign of the gradient and addressing the slowness issue of GD at the same time. From Eq.~\eqref{equation:reparametrization-newton}, when both positive and negative eigenvalues exist, both vanilla GD and Newton's method may get stuck in saddle points, leading to suboptimal performance. However, scaling by $1/|\lambda_i|$ can partly solve this problem, since the movement around the saddle point can either increase  or decrease the loss from this rescaling, rather than stay where it is \citep{nocedal1999numerical, dauphin2014identifying}.

\section{Damped Newton's Method}
Newton's method addresses the slowness problem by rescaling the gradients in each direction with the inverse of the corresponding eigenvalue, yielding the step $\Delta \bx_t = -\bH_t^{-1}\bg_t$ at iteration $t$. However, this approach can result in moving in an undesired  direction when the eigenvalue is negative, causing the  Newton's step to proceed along the eigenvector in a direction opposite to the gradient descent step and increasing the error. 

To mitigate this issue, damping the Hessian is proposed, where negative curvature is eliminated by introducing a constant $\alpha$ to its diagonal, yielding the step $\Delta \bx_t= - (\mathbf{H}+\alpha \mathbf{I})^{-1} \bg_t$. We can view $\alpha$ as the tradeoff between Newton's method and vanilla GD. When $\alpha$ is small, it is closer to Newton's method; when $\alpha$ is large, it is closer to vanilla GD. In this case, we get the step $-\frac{\lambda_i}{\lambda_i + \alpha}\Delta \bg_t$. 
However, obviously, the drawback of damping Newton's method is evident{\textemdash}it may result in a small step size across many eigen-directions due to the influence of the large damping factor $\alpha$.

\section{Levenberg (-Marquardt) Gradient Descent}
The quadratic rule is not universally better since it assumes a linear approximation of $L(\bx)$, which is only valid when it's in proximity to a minimum. The \textit{Levenberg gradient descent} goes further by combining the idea of damped Newton's method and vanilla GD. Initially, we can apply a steepest descent type method until we approach a minimum, prompting switching to the quadratic rule. The distance from a minimum is described by evaluating the loss \citep{levenberg1944method}. If the loss is increasing, the quadratic approximation is not working well and we are not likely near a minimum, yielding a larger $\alpha$ in damped Newton's method; while if the loss is decreasing, the quadratic approximation is working well and we are approaching a minimum, yielding a smaller $\alpha$ in damped Newton's method.

Marquardt improved this method by incorporating the local curvature information. In this modification,  one replaces the identity matrix in Levenberg's method by the diagonal of the Hessian, resulting in the \textit{Levenberg-Marquardt gradient descent} \citep{marquardt1963algorithm}:
$$
\Delta \bx_t= - \bigg(\mathbf{H}+\alpha \cdot \diag(\bH)\bigg)^{-1} \bg_t.
$$
The Levenberg-Marquardt gradient descent method is nothing more than a heuristic method since it is not optimal for any well defined criterion of speed or error measurements, and it is merely a well thought out optimization procedure. However, it is an optimization method that works extremely well in practice, especially for medium sized nonlinear models. 


\index{Conjugate gradient}
\index{Hessian-orthogonal}
\section{Conjugate Gradient}\label{section:conjugate-descent}
We have shown that the vanilla GD (employing the negative gradient as the descent direction) can move back and forth in a zigzag pattern when applied in a quadratic bowl (a ravine-shaped loss curve, see example in Figure~\ref{fig:quadratic_vanillegd_contour8}, p.~\pageref{fig:quadratic_vanillegd_contour8}). 
This zigzag behavior becomes more pronounced if the learning rate is guaranteed by line search (Section~\ref{section:line-search}, p.~\pageref{section:line-search}), since the gradient is orthogonal to the previous update step (Lemma~\ref{lemm:linear-search-orghonal}, p.~\ref{lemm:linear-search-orghonal} and example in Figure~\ref{fig:conjguatecy_zigzag2}, p.~\pageref{fig:conjguatecy_zigzag2}). The choice of orthogonal descent directions fails to preserve the minimum along the previous search directions, and the line search will undermine the progress we have already achieved in the direction of the previous line search. This results in the zigzag pattern in  movement.

Instead of favoring a descent direction that is orthogonal to previous search direction (i.e., $\bd_{t+1}^\top \bd_t=0$), the \textit{conjugate descent} selects a search direction that is \textit{Hessian-orthogonal} (i.e., $\bd_{t+1}^\top\bH\bd_t=0$, or conjugate with respect to $\bH$). This choice ensures that the movement is compensated by the curvature information of the loss function. In Figure~\ref{fig:conjugate_tile_A_orthogonals}, we show examples of Hessian-orthogonal pairs when the eigenvalues of the Hessian matrix $\bH$ are different or identical. When the eigenvalues of the Hessian matrix are the same, the Hessian-orthogonality reduces to trivial orthogonal cases (this can be shown by the spectral decomposition of the Hessian matrix, where the orthogonal transformation does not alter the orthogonality \citep{lu2021numerical}).

\begin{figure}[h]
\centering  
\vspace{-0.35cm} 
\subfigtopskip=2pt 
\subfigbottomskip=2pt 
\subfigcapskip=-5pt 
\subfigure[Surface plot: Hessian with different eigenvalues.]{\label{fig:conjugate_tile_A_orthogonal_diffeigenv}
	\includegraphics[width=0.481\linewidth]{./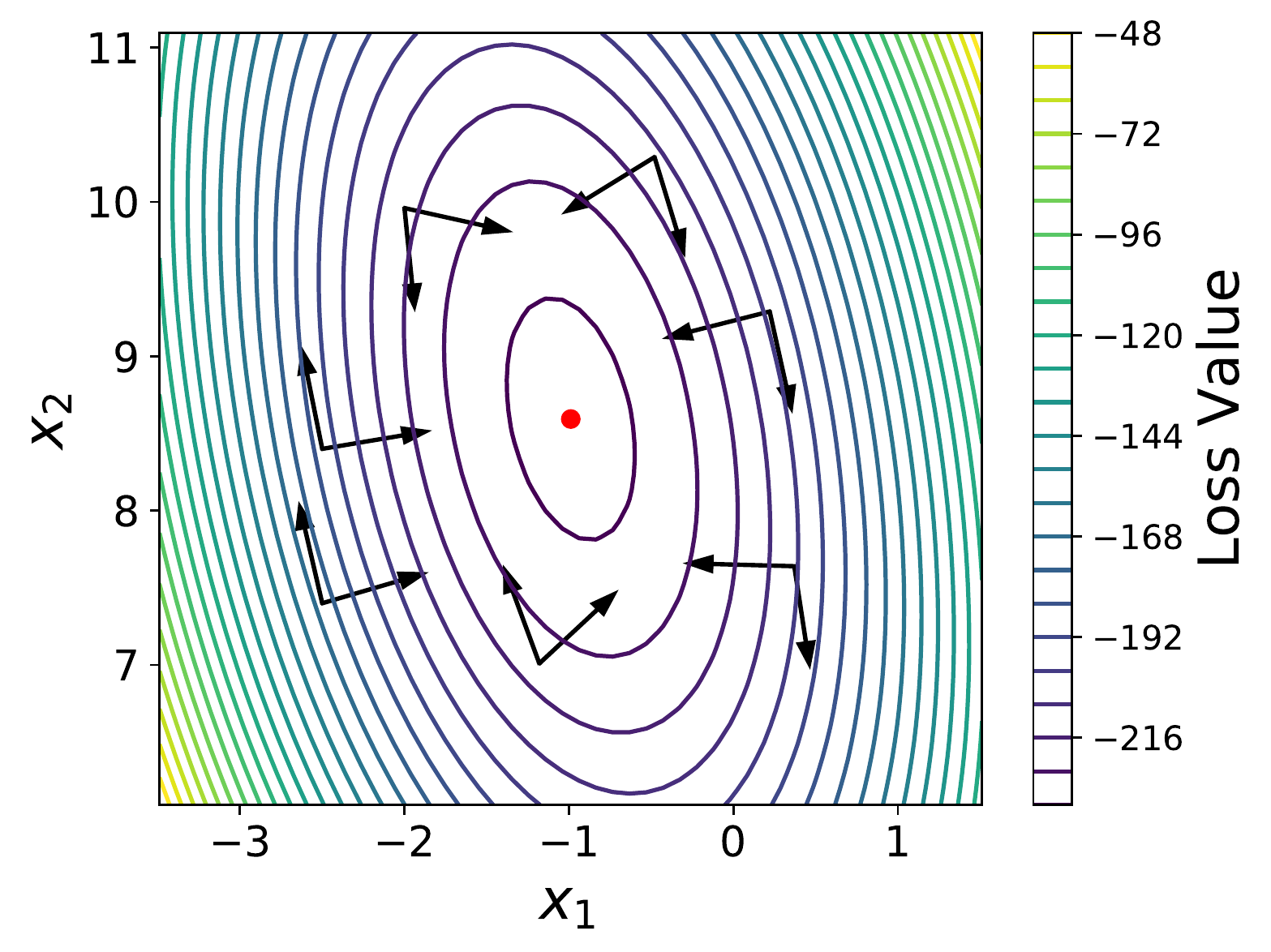}}
\subfigure[Surface plot: Hessian with same eigenvalues.]{\label{fig:conjugate_tile_A_orthogonal_sameeigenv}
	\includegraphics[width=0.481\linewidth]{./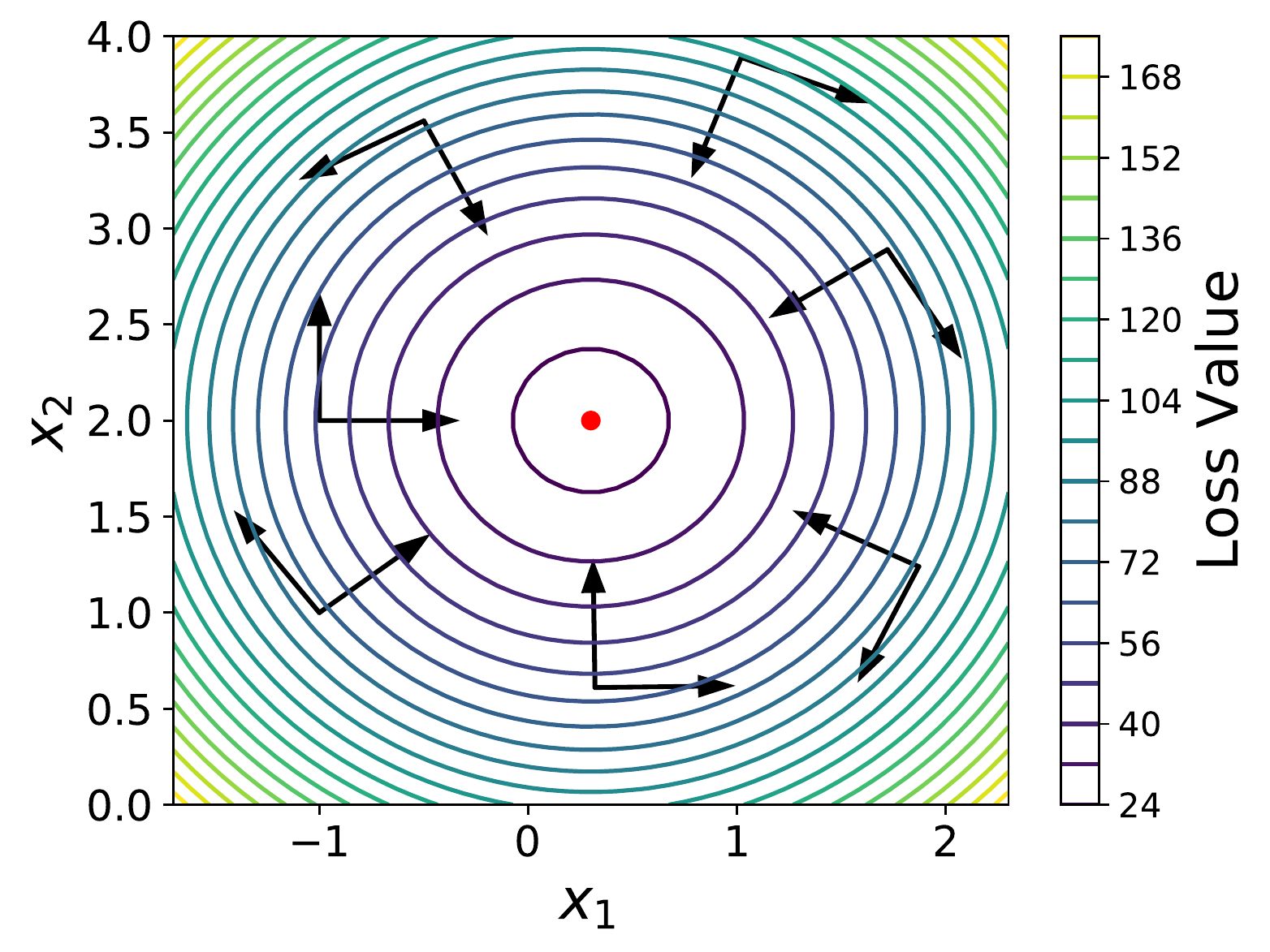}}
\caption{Illustration of $\bH$-orthogonal for different Hessian matrices in two-dimensional case:
	$\bH=\begin{bmatrix}
		40 & 5 \\ 7 & 10
	\end{bmatrix}$ for Fig~\ref{fig:conjugate_tile_A_orthogonal_diffeigenv}
	and 
	$\bH=\begin{bmatrix}
		40 & 0 \\ 0 & 40
	\end{bmatrix}$ 
	for Fig~\ref{fig:conjugate_tile_A_orthogonal_sameeigenv}. The $\bH$-orthogonal pairs are orthogonal when $\bH$ has identical eigenvalues.}
\label{fig:conjugate_tile_A_orthogonals}
\end{figure}

We now provide the formal definition of conjugacy as follows:
\begin{definition}[Conjugacy]\label{definition:conjugacy}
Given a positive definite matrix $\bA\in \real^{d\times d}$, the vectors $\bu, \bv\in \real^d$ are \textit{conjugate} with respect to $\bA$ if $\bu,\bv\neq \bzero$ and $\bu^\top\bA\bv = 0$.  
\end{definition}
In the method of \textit{conjugate gradient} (CG), we determine a descent direction that is conjugate to the previous search direction with respect to the Hessian matrix $\bH$, ensuring that the new update step will not undo the progress made in the previous directions:
$$
\bd_{t} = -\nabla L(\bx_t) + \beta_t \bd_{t-1},
$$
where $\beta_t$ is a coefficient controlling how much the previous direction would add back to the current search direction.
Three commonly used methods to compute the coefficient are as follows:
$$
\begin{aligned}
\text{Fletcher-Reeves:\gap } &\beta_t^F = \frac{\nabla L(\bx_t)^\top \nabla L(\bx_t)}{\nabla L(\bx_{t-1})^\top \nabla L(\bx_{t-1})},\\
\text{Polak-Ribi\`ere:\gap } &\beta_t^P = \frac{\bigg(\nabla L(\bx_t) - \nabla L(\bx_{t-1})\bigg)^\top \nabla L(\bx_t)}{\nabla L(\bx_{t-1})^\top \nabla L(\bx_{t-1})},\\
\text{Hestenes–Stiefel:\gap } &\beta_t^H = \frac{\bigg(\nabla L(\bx_t) - \nabla L(\bx_{t-1})\bigg)^\top \nabla L(\bx_t)}{\bigg(\nabla L(\bx_t) - \nabla L(\bx_{t-1})\bigg)^\top
	\bd_{t-1}}.
\end{aligned}
$$
In the case of a quadratic loss function, the conjugate gradient ensures that the gradient along the previous direction does not increase in magnitude \citep{shewchuk1994introduction, nocedal1999numerical, iserles2009first, goodfellow2016deep}. The full procedure of the conjugate gradient method is formulated in Algorithm~\ref{alg:conjugate_descent}, where it is observed that the first step of conjugate gradient is identical to a step of steepest descent when the learning rate is calculated by exact line search since $\beta_1=0$.


\begin{algorithm}[H] 
\caption{Fletcher-Reeves Conjugate Gradient}
\label{alg:conjugate_descent}
\begin{algorithmic}[1]
	\State {\bfseries Input:} Initial parameter $\bx_1$;
	\State {\bfseries Input:} Initialize $\bd_0 =\bzero $ and $\bg_0 = \bd_0+\epsilon$;
	\For{$t=1:T$ } 
	\State Compute gradient $\bg_t = \nabla L(\bx_{t})$;
	\State Compute coefficient $\beta_{t} = \frac{ \bg_t^\top \bg_t}{\bg_{t-1}^\top \bg_{t-1}}$ (\text{Fletcher-Reeves });
	\State Compute descent direction $\bd_t = -\bg_{t} +\beta_{t}  \bd_{t-1}$;
	\State Fixed learning rate $\eta_t=\eta$ or find it by line search: $\eta_t = \arg\min L(\bx_t + \eta \bd_t)$;
	\State Compute update step $\Delta \bx_t = \eta_t\bd_t$;
	\State Apply update $\bx_{t+1} = \bx_{t} + \Delta \bx_t$;
	\EndFor
	\State {\bfseries Return:} resulting parameters $\bx_t$, and the loss $L(\bx_t)$.
\end{algorithmic}
\end{algorithm}

\begin{figure}[h]
\centering  
\vspace{-0.35cm} 
\subfigtopskip=2pt 
\subfigbottomskip=2pt 
\subfigcapskip=-5pt 
\subfigure[Vanilla GD, fixed $\eta=0.08$.]{\label{fig:cgm_conjugate8}
	\includegraphics[width=0.22\linewidth]{./imgs/steepest_gd_mom-0_lrate-8.pdf}}
\subfigure[Steepest descent.]{\label{fig:cgm_zigzag2}
	\includegraphics[width=0.22\linewidth]{./imgs/steepest_gd_bisection.pdf}}
\subfigure[Conjugate descent, fixed $\eta=0.06$.]{\label{fig:cgm_conjugate2}
	\includegraphics[width=0.22\linewidth]{./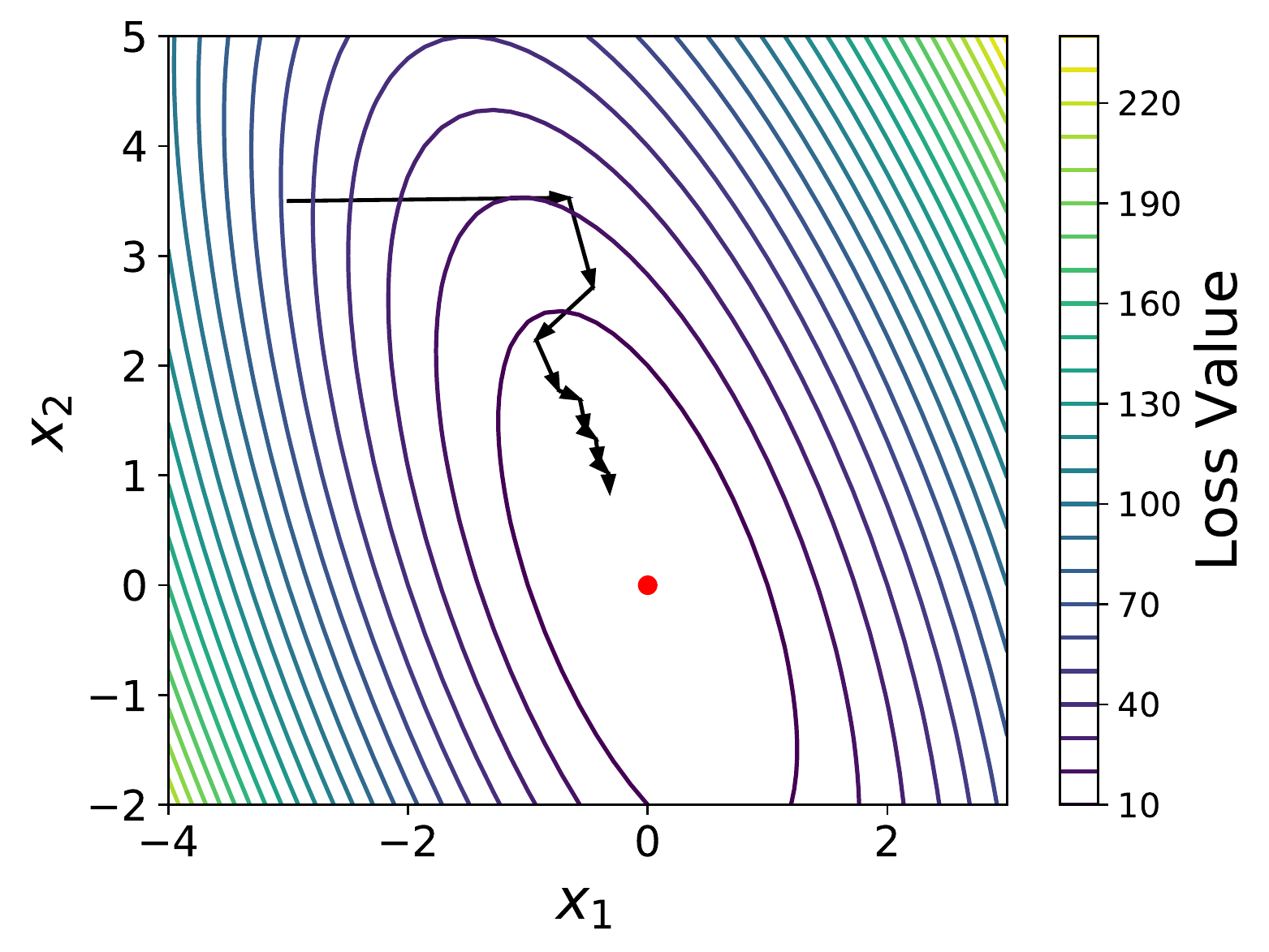}}
\subfigure[Conjugate descent, exact line search.]{\label{fig:cgm_conjugate3}
	\includegraphics[width=0.22\linewidth]{./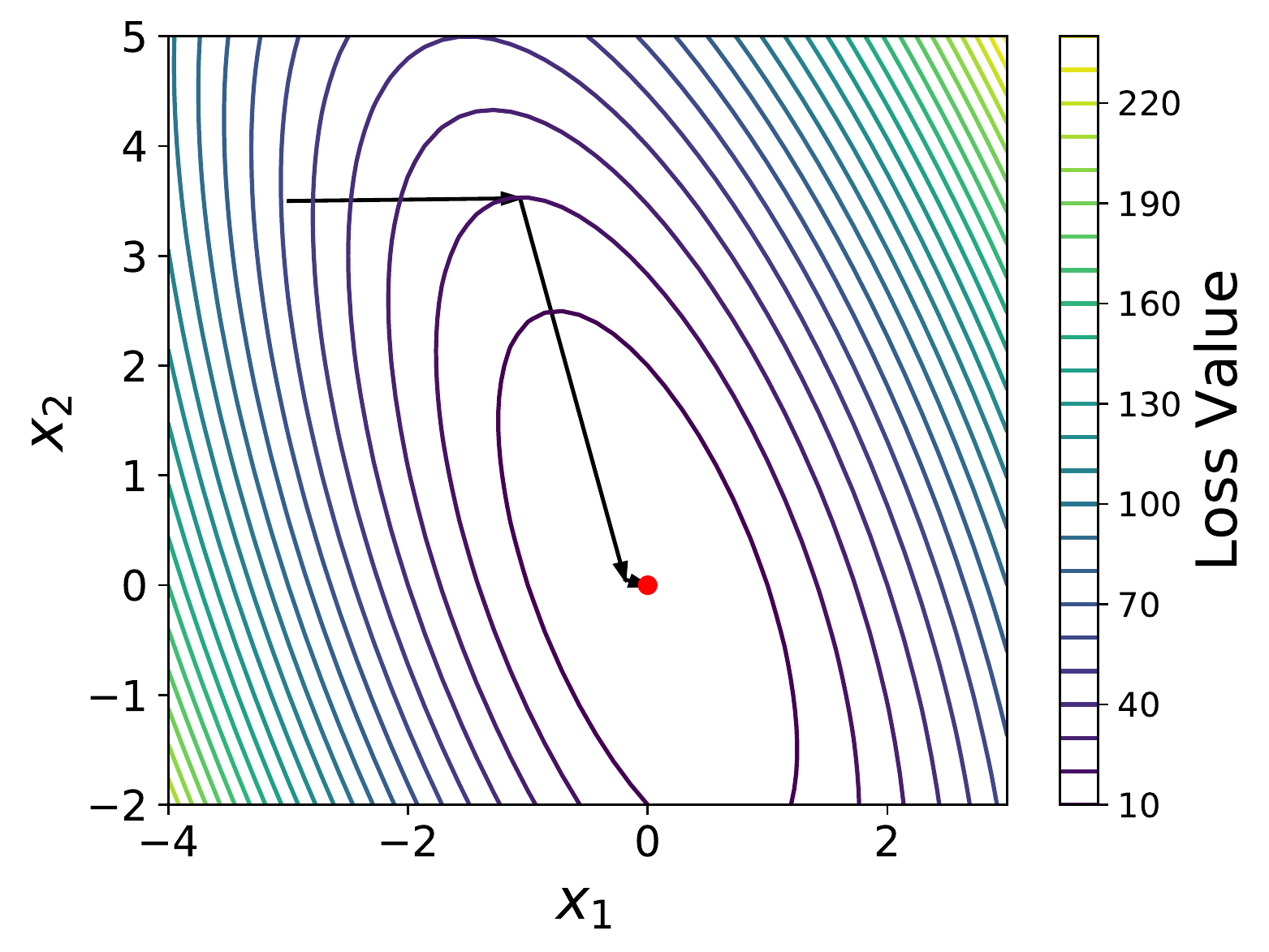}}
\caption{Illustration for the vanilla GD, steepest descent, and CG of quadratic form with $\bA=\begin{bmatrix}
		20 & 7 \\ 5 & 5
	\end{bmatrix}$, $\bb=\bzero$, and $c=0$. The starting point to descent is $\bx_1=[-3,3.5]^\top$.}
\label{fig:cgm-zigzzag}
\end{figure}

In the subsequent sections, we base our derivation of the conjugate gradient on the assumption of a symmetric positive definite $\bA$;  
however, it can be readily adapted to asymmetric matrices. A comparison among vanilla GD, steepest descent, and conjugate gradient is shown in Figure~\ref{fig:cgm-zigzzag}, where we observe that the updates in CG have less zigzag pattern than vanilla GD and steepest descent.

\index{Quadratic form}
\subsubsection{Quadratic Form in Conjugate Direction (CD) Method}
Following the discussion of the quadratic form in GD (Section~\ref{section:quadratic_vanilla_GD}, p.~\pageref{section:quadratic_vanilla_GD}), the quadratic form in steepest descent (Section~\ref{section:quadratic-in-steepestdescent}, p.~\pageref{section:quadratic-in-steepestdescent}), and the quadratic form in momentum (Section~\ref{section:quadratic-in-momentum}, p.~\pageref{section:quadratic-in-momentum}), we turn our attention to  the quadratic form in CG. 
To introduce the \textit{conjugate gradient (CG)} method, we proceed by an exploration of the \textit{conjugate direction (CD)} method, where the distinction between them will become evident in the subsequent discussions. 
According to the definition of conjugacy (Definition~\ref{definition:conjugacy}), it is easy to show that any set of vectors $\{\bd_1, \bd_2, \ldots, \bd_d\}\in \real^d$ satisfying this property with respect to the symmetric positive definite Hessian matrix $\bH=\frac{1}{2}(\bA^\top+\bA)$: 
\begin{equation}\label{equation:conjguate-definis}
\bd_i^\top \bH\bd_j, \gap \forall i\neq j,
\end{equation}
is also linearly independent. That is, the set span the entire $\real^d$ space:
$$
\text{span}\{\bd_1, \bd_2, \ldots, \bd_d\} = \real^d.
$$
Given the initial parameter $\bx_1$ and a set of \textit{conjugate directions} $\{\bd_1, \bd_2, \ldots, \bd_d\}$ (defined in Eq.~\eqref{equation:conjguate-definis}), the update at time $t$ is given by 
\begin{equation}\label{equation:conjugate_direction-update}
\bx_{t+1} = \bx_t +\eta_t \bd_t,
\end{equation}
where $\eta_t$ is the learning rate at time $t$ and is obtained by minimizing the one-dimensional quadratic function $J(\eta)=L(\bx_t+\eta\bd_t)$, as presented in Eq.~\eqref{equation:eta-gd-steepest} (p.~\pageref{equation:eta-gd-steepest}):
\begin{equation}\label{equation:conjugate_direction-update2}
\eta_t = - \frac{\bd_t^\top \bg_t}{ \bd_t^\top \bA\bd_t } \gap \text{with}\gap  \bg_t = \nabla L(\bx_t).
\end{equation}
Then, we can establish the following theorem that the updates following from the conjugate directions will converge in $d$ steps (the dimension of the parameter) when $\bA$ is symmetric positive definite.
\begin{theorem}[Converge in $d$ Steps]\label{theorem:conjudate_direc_d-steps}
For any initial parameter $\bx_1$, the sequence $\{\bx_t\}$ generated by the conjugate direction algorithm Eq.~\eqref{equation:conjugate_direction-update} converges to the solution $\bx_\star$ in at most $d$ steps {when $\bA$ is symmetric positive definite}.
\end{theorem}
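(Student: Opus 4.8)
The plan is to exploit the fact, already recorded just below Eq.~\eqref{equation:conjguate-definis}, that the conjugate directions $\{\bd_1,\ldots,\bd_d\}$ are linearly independent and hence form a basis of $\real^d$. First I would expand the initial error in this basis, writing
$$
\bx_1 - \bx_\star = \sum_{i=1}^{d} \delta_i \bd_i,
$$
and then isolate each coefficient by left-multiplying with $\bd_k^\top \bA$. Conjugacy $\bd_k^\top\bA\bd_i=0$ for $i\neq k$ annihilates every term except the $k$-th, giving the closed form $\delta_k = \frac{\bd_k^\top \bA(\bx_1-\bx_\star)}{\bd_k^\top\bA\bd_k}$. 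Since $\bA$ is positive definite the denominator is strictly positive, so each $\delta_k$ is well defined.

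The heart of the argument is to show that the learning rate chosen by exact line search at step $k$, namely $\eta_k = -\frac{\bd_k^\top\bg_k}{\bd_k^\top\bA\bd_k}$ from Eq.~\eqref{equation:conjugate_direction-update2}, satisfies $\eta_k = -\delta_k$. To establish this I would use $\bA\bx_\star=\bb$ to rewrite the gradient as $\bg_k = \bA\bx_k - \bb = \bA(\bx_k-\bx_\star)$, and then unroll the update Eq.~\eqref{equation:conjugate_direction-update} to obtain $\bx_k - \bx_\star = (\bx_1-\bx_\star) + \sum_{i=1}^{k-1}\eta_i\bd_i$. Substituting this into $\bd_k^\top\bA(\bx_k-\bx_\star)$ and again invoking conjugacy $\bd_k^\top\bA\bd_i=0$ for $i<k$ collapses the sum, leaving
$$
\bd_k^\top\bg_k = \bd_k^\top\bA(\bx_1-\bx_\star) = \delta_k\,\bd_k^\top\bA\bd_k,
$$
which yields $\eta_k=-\delta_k$ at once.

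Finally I would telescope: by induction the error after $d$ updates is $\bx_{d+1}-\bx_\star = (\bx_1-\bx_\star)+\sum_{i=1}^{d} \eta_i\bd_i = \sum_{i=1}^{d}(\delta_i-\delta_i)\bd_i = \bzero$, so $\bx_{d+1}=\bx_\star$, and the iteration has reached the exact minimizer in at most $d$ steps (fewer if some $\delta_i$ vanish). The main obstacle — really the only subtle point — is the decoupling step: one must verify that the accumulated contributions $\sum_{i<k}\eta_i\bd_i$ from earlier iterations are invisible to the $k$-th search direction under the $\bA$-inner product. This is precisely where conjugacy, rather than ordinary orthogonality, is essential, and it is what forces the line-searched step length to coincide with the basis coefficient $\delta_k$; without it the cross terms $\bd_k^\top\bA\bd_i$ would survive and the clean telescoping would break down.
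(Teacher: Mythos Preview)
Your proposal is correct and follows essentially the same route as the paper: expand the initial error $\bx_1-\bx_\star$ in the conjugate basis, use $\bA$-conjugacy to isolate each coefficient $\delta_k$, then show via the identity $\bg_k=\bA(\bx_k-\bx_\star)$ and the unrolled update that the line-search step satisfies $\eta_k=-\delta_k$, so the telescoped sum vanishes after $d$ steps. The paper's proof is organized identically (with $\gamma_t$ in place of your $\delta_k$ and a brief detour through the Hessian notation $\bH$ before specializing to symmetric $\bA$), including the same use of conjugacy to kill the cross terms $\sum_{i<k}\eta_i\bd_i$ when computing $\bd_k^\top\bA\be_k$.
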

\begin{proof}[of Theorem~\ref{theorem:conjudate_direc_d-steps}]
Since conjugate directions $\{\bd_1, \bd_2, \ldots, \bd_d\}$ span the entire $\real^d$ space, the initial error vector $\be_1 = \bx_1 - \bx_\star$ (Definition~\ref{definition:error-gd-}, p.~\pageref{definition:error-gd-}) then can be expressed as a linear combination of the conjugate directions:
\begin{equation}\label{equation:dsteps-symmetric}
	\be_1 = \bx_1-\bx_\star = \gamma_1\bd_1 + \gamma_2\bd_2+\ldots+\gamma_d\bd_d.
\end{equation}
The $\gamma_i$'s can be obtained by the following equation:
$$
\begin{aligned}
	\bd_t^\top \bH\be_1 &= \sum_{i=1}^{d} \gamma_i \bd_t^\top \bH\bd_i=\gamma_t \bd_t^\top \bH\bd_t &\text{(by conjugacy, Eq.~\eqref{equation:conjguate-definis})}\\
	\leadto \gamma_t &= \frac{\bd_t^\top \bH\be_1}{\bd_t^\top \bA\bd_t}= \frac{\bd_t^\top \bH(\be_1+ \sum_{i=1}^{t-1}\eta_i \bd_i )}{\bd_t^\top \bH\bd_t} &\text{(by conjugacy, Eq.~\eqref{equation:conjguate-definis})}\\
	&=\frac{\bd_t^\top \bH\be_t}{\bd_t^\top \bH\bd_t}.
\end{aligned}
$$
When $\bA$ is  symmetric and nonsingular, we have $\be_t = \bx_t-\bx_\star  = \bx_t-\bA^{-1}\bb$ and $\bH=\bA$. It can be shown that $\gamma_t$ is equal to $\frac{\bd_t^\top \bg_t}{\bd_t^\top \bA\bd_t}$. This is exactly the same form (in magnitude) as the learning rate at time $t$ in steepest descent: $\gamma_t = -\eta_t$ (see Eq.~\eqref{equation:eta-gd-steepest}, p.~\pageref{equation:eta-gd-steepest}). Substituting into Eq.~\eqref{equation:dsteps-symmetric}, it follows that 
$$
\bx_\star = \bx_1 + \eta_1\bd_1+\eta_2\bd_2+\ldots +\eta_d\bd_d.
$$
Moreover, we have updates by Eq.~\eqref{equation:conjugate_direction-update} that
$$
\begin{aligned}
	\bx_{d+1} &= \bx_d + \eta_d\bd_d  \\
	&=\bx_{d-1} +\eta_{d-1}\bd_{d-1} + \eta_d\bd_d \\
	&= \ldots \\
	&= \bx_1 + \eta_1\bd_1 + \eta_2\bd_2+\ldots+\eta_d\bd_d = \bx_\star,
\end{aligned}
$$
which completes the proof.
\end{proof}

The above theorem states the conjudate direction given by Eq.~\eqref{equation:conjugate_direction-update} converges in $d$ steps, i.e., $\bx_{d+1}$ minimizes the quadratic function $L(\bx)=\frac{1}{2}\bx^\top\bA\bx-\bb^\top\bx+c$ over the entire space $\real^d$. Furthermore, we can prove at each iteration $t\leq d$, the update $\bx_{t+1}$ minimizes the quadratic function over a subspace of $\real^d$.

\begin{theorem}[Expanding Subspace Minimization]\label{theorem:expanding_subspace_minimization}
For any initial parameter $\bx_1$, the sequence $\{\bx_t\}$ is generated by the conjugate direction algorithm Eq.~\eqref{equation:conjugate_direction-update}. Then it follows that 
\begin{equation}\label{equation:expanding_subspace_minimization_zero}
	\bg_{t+1}^\top \bd_i=0, \gap \forall i=1,2,\ldots, t, \text{ and } t\in \{1,2,\ldots, d\},
\end{equation}
where $\bg_t = \bA\bx_t - \bb$ (i.e., the gradient when $\bA$ is symmetric),
and $\bx_{t+1}$ is the minimizer of $L(\bx)=\frac{1}{2}\bx^\top\bA\bx-\bb^\top\bx+c$ with symmetric positive definite $\bA$ over the subspace
\begin{equation}\label{equation:space_d_t}
	\mathbb{D}_t=\left\{\bx | \bx=\bx_1 + \text{span}\{\bd_1, \bd_2, \ldots, \bd_t\}\right\}.
\end{equation}
\end{theorem}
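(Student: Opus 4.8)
The plan is to prove the two assertions in turn: first the orthogonality relations in Eq.~\eqref{equation:expanding_subspace_minimization_zero}, and then to deduce the subspace-minimization claim as a consequence. The algebraic workhorse throughout is the one-step recursion for the gradient, $\bg_{t+1} = \bg_t + \eta_t \bA \bd_t$, which follows at once from $\bg_t = \bA\bx_t - \bb$ together with the update $\bx_{t+1} = \bx_t + \eta_t \bd_t$ in Eq.~\eqref{equation:conjugate_direction-update}.

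First I would record the single-step identity $\bg_{t+1}^\top \bd_t = 0$. Substituting the recursion gives $\bg_{t+1}^\top \bd_t = \bg_t^\top \bd_t + \eta_t\, \bd_t^\top \bA \bd_t$, and inserting the line-search step size $\eta_t = -\bd_t^\top \bg_t / (\bd_t^\top \bA \bd_t)$ from Eq.~\eqref{equation:conjugate_direction-update2} makes the two terms cancel. This is exactly the orthogonality already guaranteed by Lemma~\ref{lemm:linear-search-orghonal}, specialized to the quadratic setting.

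Next I would establish the full family in Eq.~\eqref{equation:expanding_subspace_minimization_zero} by induction on $t$. The base case $t=1$ is the single-step identity. For the inductive step, suppose $\bg_t^\top \bd_i = 0$ holds for every $i \le t-1$; I must show $\bg_{t+1}^\top \bd_i = 0$ for every $i \le t$. The index $i=t$ is the single-step identity again. For $i < t$, expanding with the recursion yields $\bg_{t+1}^\top \bd_i = \bg_t^\top \bd_i + \eta_t\, \bd_t^\top \bA \bd_i$, where the first term vanishes by the induction hypothesis and the second vanishes by the $\bA$-conjugacy of the directions in Eq.~\eqref{equation:conjguate-definis}. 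This closes the induction and proves the orthogonality relations.

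Finally, to obtain the minimization statement, I would unroll Eq.~\eqref{equation:conjugate_direction-update} to write $\bx_{t+1} = \bx_1 + \sum_{i=1}^{t} \eta_i \bd_i$, so that $\bx_{t+1}$ indeed lies in the affine subspace $\mathbb{D}_t$ defined in Eq.~\eqref{equation:space_d_t}. Parametrizing $\mathbb{D}_t$ by the coefficients $(\sigma_1,\ldots,\sigma_t)$ and restricting the objective gives the quadratic $\phi(\sigma_1,\ldots,\sigma_t) = L\big(\bx_1 + \sum_{i=1}^{t}\sigma_i\bd_i\big)$, which is strictly convex since $\bA$ is positive definite and the $\bd_i$ are linearly independent. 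Its unique minimizer is characterized by the stationarity conditions $\partial \phi/\partial \sigma_j = \nabla L(\cdot)^\top \bd_j = 0$ for $j=1,\ldots,t$; the orthogonality relations show that $\bx_{t+1}$ satisfies precisely these conditions, so strict convexity forces $\bx_{t+1}$ to be the minimizer of $L$ over $\mathbb{D}_t$. The step I expect to require the most care is the induction bookkeeping{\textemdash}keeping the two sources of vanishing (the line-search choice of $\eta_t$ for $i=t$, and conjugacy together with the hypothesis for $i<t$) cleanly separated{\textemdash}along with the closing observation that the orthogonality relations are exactly the first-order optimality conditions of the restricted strictly convex quadratic.
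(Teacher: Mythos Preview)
Your proposal is correct and follows essentially the same route as the paper: both arguments use the gradient recursion $\bg_{t+1} = \bg_t + \eta_t \bA \bd_t$, prove Eq.~\eqref{equation:expanding_subspace_minimization_zero} by induction (the case $i=t$ via the exact line-search choice of $\eta_t$, and $i<t$ via conjugacy plus the induction hypothesis), and then identify the orthogonality relations with the first-order optimality conditions of the strictly convex restricted quadratic on $\mathbb{D}_t$.
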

\begin{proof}[of Theorem~\ref{theorem:expanding_subspace_minimization}]
We first prove $\bg_{t+1}^\top\bd_i$ by induction. When $t=1$, since $\eta_1$ is obtained to minimize $J(\eta)=L(\bx_1+\eta\bd_1)$, by Lemma~\ref{lemm:linear-search-orghonal} (p.~\pageref{lemm:linear-search-orghonal}), we have $\bg_2 = \nabla L(\bx_2)$ that is orthogonal to $\bd_1$. Suppose now for general $t-1$, the induction hypothesis is satisfied with $\bg_{t}^\top\bd_i=0$ for $i=0,1,\ldots, t-1$. The $\bg_t$ has the following update
\begin{equation}\label{equation:conjguate-redsidual-update}
	\begin{aligned}
		\bg_{t+1} &= \bA\bx_{t+1}-\bb  =  \bA (\bx_t+\eta_t\bd_t) -\bb  \\
		&= \bg_t + \eta_t\bA\bd_t.
	\end{aligned}
\end{equation}
By conjugacy and the induction hypothesis, we have $\bg_{t+1}^\top \bd_i=0$ for $i=\{0,1,\ldots,t-1\}$. If we further prove this is also true for $\bg_{t+1}^\top\bd_t$, we complete the proof. This follows again from Lemma~\ref{lemm:linear-search-orghonal} (p.~\pageref{lemm:linear-search-orghonal}), the current gradient is orthogonal to the previous search direction $\bd_t$.

For the second part, we define $f(\bm{\eta}) = L(\bx_1+\eta_1\bd_1+\eta_2\bd_2+\ldots +\eta_t\bd_t)$, which is a strictly convex quadratic function over $\bm{\eta}=[\eta_1, \eta_2, \ldots, \eta_t]^\top$ such that 
$$
\frac{\partial f(\bm{\eta})}{\partial \eta_i} = 0, \gap \forall i=1,2,\ldots, t.
$$
This implies 
$$
\underbrace{\nabla L(\bx_1 +\eta_1\bd_1+\eta_2\bd_2+\ldots +\eta_t\bd_t}_{\nabla L(\bx_{t+1})})^\top \bd_i = 0,  \gap \forall i=1,2,\ldots, t.
$$
That is, $\bx_{t+1}\in \{\bx | \bx=\bx_1 + \text{span}\{\bd_1, \bd_2, \ldots, \bd_t\}\}$ is the minimizer of $L(\bx)$.
\end{proof}

\index{Quadratic form}
\subsubsection{Quadratic Form in Conjugate Gradient (CG) Method}
We have mentioned that the conjugate gradient (CG) method differs from the conjugate descent (CD) method. The distinction lies  in the fact  that the CG method computes a new vector $\bd_{t+1}$ using only the previous vector $\bd_t$ rather than the entire sequence $\{\bd_1, \bd_2, \ldots, \bd_t\}$. 
And the resulting $\bd_{t+1}$ will automatically be conjugate to the sequence in this sense. In the CG method, each search direction $\bd_t$ is chosen to be a linear combination of negative gradient $-\bg_t$ (the search direction in steepest descent) and the previous direction $\bd_{t-1}$:
\begin{equation}\label{equation:cd_gradient_direction}
\bd_{t} =  -\bg_t + \beta_t \bd_{t-1}.
\end{equation}
This yields 
$$
\beta_t = \frac{\bg_t^\top\bA\bd_{t-1}}{\bd_{t-1}^\top \bA\bd_{t-1}}
.
$$
This choice of $\beta_t$ and $\bd_t$ actually results in the conjugate sequence $\{\bd_1, \bd_2, \ldots, \bd_t\}$. To see this, we first provide the definition of the \textit{Krylov subspace of degree $t$ for vector $\bv$ with respect to matrix $\bA$}:
$$
\mathcal{K}(\bv; t) = \text{span}\{\bv, \bA\bv, \ldots, \bA^{t-1}\bv\}.
$$

\begin{theorem}[Converge in $d$ Steps]\label{theorem:conjudate_CD_d-steps}
For any initial parameter $\bx_1$, the sequence $\{\bx_t\}$ generated by the conjugate descent algorithm, with search direction generated by Eq.~\eqref{equation:cd_gradient_direction}, converges to the solution $\bx_\star$ in at most $d$ steps {when $\bA$ is symmetric positive definite}. The result follows from the following claims:
\begin{align}
	\bg_t^\top \bg_i &= 0, \gap \text{for $i=\{1,2,\ldots, t-1\}$};\label{equation:conjudate_CD_d1}\\
	\text{span}\{\bg_1, \bg_2, \ldots, \bg_t\} &= \text{span}\{\bg_1, \bA\bg_1, \ldots, \bA^{t-1}\bg_1\}=\mathcal{K}(\bg_1; t);\label{equation:conjudate_CD_d2}\\
	\text{span}\{\bd_1, \bd_2, \ldots, \bd_t\} &= \text{span}\{\bg_1, \bA\bg_1, \ldots, \bA^{t-1}\bg_1\}=\mathcal{K}(\bg_1; t);\label{equation:conjudate_CD_d3}\\
	\bd_t^\top\bA\bd_i &= 0, \gap \text{for $i=\{1,2,\ldots, t-1\}$},\label{equation:conjudate_CD_d4}
\end{align}
where Eq.~\eqref{equation:conjudate_CD_d4} indicates the sequence $\{\bd_t\}$ is conjugate.
\end{theorem}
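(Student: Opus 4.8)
The plan is to prove the four claims simultaneously by induction on $t$, since they are tightly interdependent: the conjugacy of the $\bd_i$'s (Eq.~\eqref{equation:conjudate_CD_d4}) relies on the Krylov characterizations (Eqs.~\eqref{equation:conjudate_CD_d2}--\eqref{equation:conjudate_CD_d3}), which in turn feed the gradient-orthogonality claim (Eq.~\eqref{equation:conjudate_CD_d1}). Once all four hold, the $d$-step convergence follows immediately: by Eq.~\eqref{equation:conjudate_CD_d4} the directions $\{\bd_1,\ldots,\bd_d\}$ are $\bA$-conjugate, hence linearly independent, so they form a conjugate direction set, and Theorem~\ref{theorem:conjudate_direc_d-steps} (converge in $d$ steps) applies directly. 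So the real work is establishing the four equations, and I would treat the convergence conclusion as a one-line corollary at the end.

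For the base case $t=1$ I would note that $\bd_1 = -\bg_1$ (since $\beta_1=0$), which makes Eqs.~\eqref{equation:conjudate_CD_d2} and~\eqref{equation:conjudate_CD_d3} trivial ($\mathcal{K}(\bg_1;1)=\text{span}\{\bg_1\}$), while Eqs.~\eqref{equation:conjudate_CD_d1} and~\eqref{equation:conjudate_CD_d4} are vacuous. For the inductive step, assuming all four hold up to index $t$, the key tool is the residual/gradient recurrence already derived in Eq.~\eqref{equation:conjguate-redsidual-update}, namely $\bg_{t+1} = \bg_t + \eta_t\bA\bd_t$. First I would prove the Krylov span equalities: since $\bg_t \in \mathcal{K}(\bg_1;t)$ by hypothesis and $\bA\bd_t \in \bA\cdot\mathcal{K}(\bg_1;t) \subseteq \mathcal{K}(\bg_1;t+1)$, the recurrence gives $\bg_{t+1}\in\mathcal{K}(\bg_1;t+1)$, and a dimension/containment argument upgrades this to equality for both the gradient span~\eqref{equation:conjudate_CD_d2} and, via $\bd_{t+1}=-\bg_{t+1}+\beta_{t+1}\bd_t$, the direction span~\eqref{equation:conjudate_CD_d3}.

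Next I would establish the expanded gradient orthogonality~\eqref{equation:conjudate_CD_d1}. For $i\leq t-1$ I would expand $\bg_{t+1}^\top\bg_i$ using the recurrence and the inductive hypotheses, invoking $\bg_{t+1}^\top\bd_i=0$ from the Expanding Subspace Minimization result (Theorem~\ref{theorem:expanding_subspace_minimization}) together with the fact that $\bg_i$ lies in $\text{span}\{\bd_1,\ldots,\bd_i\}$ by~\eqref{equation:conjudate_CD_d3}; the case $i=t$ follows from Lemma~\ref{lemm:linear-search-orghonal} (the line-search orthogonality $\bg_{t+1}^\top\bd_t=0$) combined with $\bd_t=-\bg_t+\beta_t\bd_{t-1}$ and the hypothesis $\bg_{t+1}^\top\bd_{t-1}=0$. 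Finally, for the new conjugacy relation~\eqref{equation:conjudate_CD_d4}, I would compute $\bd_{t+1}^\top\bA\bd_i = (-\bg_{t+1}+\beta_{t+1}\bd_t)^\top\bA\bd_i$: for $i=t$ the specific choice $\beta_{t+1}=\frac{\bg_{t+1}^\top\bA\bd_t}{\bd_t^\top\bA\bd_t}$ makes this vanish by construction, and for $i<t$ I would rewrite $\bA\bd_i$ using the recurrence as a difference of consecutive gradients ($\bA\bd_i = (\bg_{i+1}-\bg_i)/\eta_i$), reducing everything to the now-proven gradient orthogonality~\eqref{equation:conjudate_CD_d1}. \textbf{The main obstacle} I anticipate is the bookkeeping in the conjugacy step for $i<t$: one must carefully verify that $\bd_t$ is $\bA$-orthogonal to every earlier $\bd_i$ not just by the defining relation for the immediately preceding direction but through the Krylov structure, and ensuring the short two-term recurrence genuinely produces a fully conjugate set (rather than merely $\bd_{t+1}\perp_{\bA}\bd_t$) is exactly the subtle point that distinguishes CG from CD. Keeping the chain of mutual dependencies among the four claims consistent at each induction level is where care is required.
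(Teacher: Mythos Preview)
Your proposal is correct and follows essentially the same inductive architecture as the paper: base case trivial, then use the gradient recurrence $\bg_{t+1}=\bg_t+\eta_t\bA\bd_t$ to extend the Krylov span identities, invoke Theorem~\ref{theorem:expanding_subspace_minimization} (which is legitimate because the inductive hypothesis \eqref{equation:conjudate_CD_d4} at level $t$ guarantees the $\bd_i$'s so far are conjugate) to get $\bg_{t+1}^\top\bd_i=0$, and derive \eqref{equation:conjudate_CD_d1} and \eqref{equation:conjudate_CD_d4} from there. The only cosmetic difference is in the conjugacy step for $i<t$: you rewrite $\bA\bd_i=(\bg_{i+1}-\bg_i)/\eta_i$ and appeal to the freshly proven gradient orthogonality, whereas the paper instead argues $\bA\bd_i\in\text{span}\{\bd_1,\ldots,\bd_{i+1}\}$ via the Krylov inclusion and uses $\bg_{t+1}^\top\bd_j=0$ directly; both routes are standard and equivalent.
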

\begin{proof}[of Theorem~\ref{theorem:conjudate_CD_d-steps}]
The proof proceeds through induction.  Eq.~\eqref{equation:conjudate_CD_d2} and Eq.~\eqref{equation:conjudate_CD_d3} are trivial when $t=1$. 
Assuming that  for $t$, Eq.~\eqref{equation:conjudate_CD_d2}, Eq.~\eqref{equation:conjudate_CD_d3}, and Eq.~\eqref{equation:conjudate_CD_d4} hold as well; if we can show the  equations still hold for $t+1$, then we complete the proof. By the induction hypothesis, we have 
$$
\bg_t \in \text{span}\{\bg_1, \bA\bg_1, \ldots, \bA^{t-1}\bg_1\}, 
\gap 
\bd_t \in \text{span}\{\bg_1, \bA\bg_1, \ldots, \bA^{t-1}\bg_1\}.
$$
Left multiplying by $\bA$, it follows that
\begin{equation}\label{equation:dstesps_induction00}
	\bA\bd_t \in \text{span}\{\bA\bg_1, \bA^2\bg_1, \ldots, \bA^{t}\bg_1\}.
\end{equation}
Since 
\begin{equation}\label{equation:dstesps_induction11}
	\begin{aligned}
		\bg_{t+1} &= \bA\bx_{t+1}-\bb = \bA(\bx_t+\Delta \bx_t)-\bb\\
		&=\bA(\bx_t+\eta_t \bd_t) -\bb = \bg_t + \eta_t\bA\bd_t,
	\end{aligned}
\end{equation}
then we have 
\begin{equation}\label{equation:dstesps_induction2}
	\bg_{t+1}\in \text{span}\{\bg_1,\bA\bg_1, \bA^2\bg_1, \ldots, \bA^{t}\bg_1\}.
\end{equation}
Combining Eq.~\eqref{equation:dstesps_induction2} and Eq.~\eqref{equation:conjudate_CD_d2}, we have 
$$
\text{span}\{\bg_1, \bg_2, \ldots, \bg_{t}, \bg_{t+1}\} \subset \text{span}\{\bg_1, \bA\bg_1, \ldots, \bA^{t-1}\bg_1, \bA^t\bg_1\}.
$$
To see the reverse inclusion, by Eq.~\eqref{equation:conjudate_CD_d3}, it follows that 
$$
\bA^t \bg_1 = \bA(\bA^{t-1}\bg_1) \in \text{span}\{\bA\bd_1, \bA\bd_2, \ldots, \bA\bd_t\}.
$$
Again, by Eq.~\eqref{equation:dstesps_induction11}, we have $\bA\bd_t = (\bg_{t+1}-\bg_t)/\eta_t$. Therefore,
$$
\bA^t \bg_1 \in \text{span}\{\bg_1, \bg_2, \ldots, \bg_t, \bg_{t+1}\} .
$$
Combining with Eq.~\eqref{equation:conjudate_CD_d2}, we have 
$$
\text{span}\{\bg_1, \bA\bg_1, \ldots, \bA^{t-1}\bg_1, \bA^{t}\bg_1\}
\subset 
\text{span}\{\bg_1, \bg_2, \ldots, \bg_t, \bg_{t+1}\}.
$$
Therefore, Eq.~\eqref{equation:conjudate_CD_d2} holds for $t+1$. Eq.~\eqref{equation:conjudate_CD_d3} follows similarly and also holds for $t+1$.

To see how Eq.~\eqref{equation:conjudate_CD_d4} holds for $t+1$, we have 
$$
\bd_{t+1}^\top\bA\bd_i = (-\bg_{t+1} + \beta_{t+1}\bd_{t})^\top\bA\bd_i.
$$
By Theorem~\ref{theorem:expanding_subspace_minimization}, we have 
\begin{equation}\label{equation:dstesps_induction_argue41}
	\bg_{t+1}^\top\bd_i=0 \text{ for } i\in \{1,2,\ldots, t\}.
\end{equation}
Furthermore, by Eq.~\eqref{equation:dstesps_induction00} and Eq.~\eqref{equation:conjudate_CD_d3}, we have 
\begin{equation}\label{equation:dstesps_induction_argue42}
	\bA\bd_i \in \text{span}\{\bA\bg_1, \bA^2\bg_1, \ldots, \bA^{i}\bg_1\}\subset 
	\text{span}\{\bd_1, \bd_2, \ldots, \bd_i, \bd_{i+1}\}.
\end{equation}
Combining Eq.~\eqref{equation:dstesps_induction_argue41} and Eq.~\eqref{equation:dstesps_induction_argue42}, it then follows that 
$$
\bd_{t+1}^\top\bA\bd_i=0,\gap \text{ for }i\in \{1,2,\ldots,t-1\}.
$$
We need to further demonstrate $\bd_{t+1}^\top\bA\bd_t=0$, a result that is evident given the intentional design of the algorithm to satisfy this condition.

To establish the validity of  Eq.~\eqref{equation:conjudate_CD_d1}, we have $\bd_i = -\bg_i+\beta_i\bd_{i-1}$. Therefore, $\bg_i \in \text{span}\{\bd_i,\bd_{i-1}\}$. Furthermore, employing Eq.~\eqref{equation:dstesps_induction_argue41}, we prove $\bg_t^\top\bg_i=0$ for $i\in\{1,2,\ldots, t-1\}$.
\end{proof}

Therefore, the CG method developed by Eq.~\eqref{equation:cd_gradient_direction} that creates conjugate directions $\bd_{t+1}\bA\bd_t=0$ indeed finds a conjugate set $\bd_{t+1}\bA\bd_i=0$ for $i\in\{1,2,\ldots,t\}$. By Theorem~\ref{theorem:conjudate_direc_d-steps}, the CG method thus converges in at most $d$ steps (when $\bA$ is symmetric PD). The complete procedure is then formulated in Algorithm~\ref{alg:vanilla_conjugate_descent}.

\begin{algorithm}[H] 
\caption{Vanilla Conjugate Gradient Method for Quadratic Function}
\label{alg:vanilla_conjugate_descent}
\begin{algorithmic}[1]
	\State {\bfseries Require:} Symmetric positive definite $\bA\in \real^{d\times d}$;
	\State {\bfseries Input:} Initial parameter $\bx_1$;
	\State {\bfseries Input:} Initialize $\bd_0 =\bzero $ and $\bg_0 = \bd_0+\epsilon$;
	\For{$t=1:d$ } 
	\State Compute gradient $\bg_t = \nabla L(\bx_{t})$;
	\State Compute coefficient $\beta_{t} = \frac{ \bg_t^\top\bA \bd_{t-1}}{\bg_{t-1}^\top \bA\bg_{t-1}}$;
	\State Compute descent direction $\bd_t = -\bg_{t} +\beta_{t}  \bd_{t-1}$;
	\State Learning rate $\eta_t = - \frac{\bd_t^\top \bg_t}{ \bd_t^\top \bA\bd_t }$;
	\State Compute update step $\Delta \bx_t = \eta_t\bd_t$;
	\State Apply update $\bx_{t+1} = \bx_{t} + \Delta \bx_t$;
	\EndFor
	\State {\bfseries Return:} resulting parameters $\bx_t$, and the loss $L(\bx_t)$.
\end{algorithmic}
\end{algorithm}

\index{Complexity}
\index{Flops}

To further reduce the complexity of the CG algorithm, we  introduce the notion of floating operation (flop) counts. We follow the classical route and count the number of floating-point operations (flops) that the algorithm requires. Each addition, subtraction, multiplication, division, and square root is considered one flop. Note that we have the convention that an assignment operation is not counted as one flop.
The calculation of the complexity extensively relies on the complexity of the multiplication of two matrices so that we formulate the finding in the following lemma.
\begin{lemma}[Vector Inner Product Complexity]
Given two vectors $\bv,\bw\in \real^{n}$. The  inner product of the two vectors $\bv^\top\bw$ is given by $\bv^\top\bw=v_1w_1+v_2w_2+\ldots v_nw_n$, involving $n$ scalar multiplications and $n-1$ scalar additions. Therefore, the complexity for the inner product is $2n-1$ flops.
\end{lemma}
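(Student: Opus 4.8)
The plan is to prove this by direct enumeration of the arithmetic operations appearing in the defining formula $\bv^\top\bw = v_1w_1 + v_2w_2 + \ldots + v_nw_n$, using the flop-counting convention just established in the text (each scalar multiplication, addition, subtraction, division, or square root counts as one flop, while assignments are not counted).

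First I would observe that the expression is built from $n$ scalar products $v_iw_i$ for $i=1,2,\ldots,n$. Each such product is a single scalar multiplication, contributing exactly $n$ multiplications in total. Next I would count the additions required to accumulate these $n$ products into a single scalar: summing $n$ numbers requires $n-1$ binary additions (for instance, forming the running partial sums $s_1=v_1w_1$ and $s_k = s_{k-1} + v_kw_k$ for $k=2,\ldots,n$, which uses one addition per step from $k=2$ to $k=n$, i.e.\ $n-1$ additions). Adding the two tallies gives $n + (n-1) = 2n-1$ flops, which is the claimed complexity.

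The proof is essentially a counting argument and involves no real analytic difficulty; the only point requiring care is to be explicit and consistent about the flop convention so that the bookkeeping is unambiguous. The mild subtlety worth flagging is that the count of $n-1$ additions is independent of the order in which the partial sums are accumulated (any binary summation tree over $n$ leaves has exactly $n-1$ internal addition nodes), so the bound does not depend on the particular summation scheme chosen. With that clarification the result follows immediately.
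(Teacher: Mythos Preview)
Your proposal is correct and matches the paper's approach exactly: the paper embeds the same counting argument directly in the lemma statement (counting $n$ multiplications and $n-1$ additions for $2n-1$ flops) rather than giving a separate proof, so your write-up is just a slightly more explicit version of the same reasoning.
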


The matrix multiplication thus relies on the complexity of the inner product.
\begin{lemma}[Matrix Multiplication Complexity]\label{lemma:matrix-multi-complexity}
For matrix $\bA\in\real^{m\times n}$ and $\bB\in \real^{n\times k}$, the complexity of the multiplication $\bC=\bA\bB$ is $mk(2n-1)$ flops.
\end{lemma}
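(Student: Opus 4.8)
The plan is to reduce the matrix multiplication complexity directly to the Vector Inner Product Complexity lemma established just above. The key observation is that the product $\bC = \bA\bB$ lies in $\real^{m\times k}$ and hence has exactly $mk$ scalar entries, and that each of these entries is computed as a single inner product of two vectors in $\real^n$. Once this decomposition into $mk$ independent inner products is made precise, the count follows by multiplication.

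First I would recall the definition of the $(i,j)$-th entry of the product. Writing $\ba_{i,:}$ for the $i$-th row of $\bA$ (a vector in $\real^n$) and $\bb_{:,j}$ for the $j$-th column of $\bB$ (also a vector in $\real^n$), we have
$$
c_{ij} = \ba_{i,:}^\top \bb_{:,j} = \sum_{l=1}^{n} a_{il} b_{lj}, \gap i\in\{1,\ldots,m\}, \,\, j\in\{1,\ldots,k\}.
$$
This exhibits $c_{ij}$ as the inner product of two length-$n$ vectors. By the Vector Inner Product Complexity lemma, computing each such $c_{ij}$ requires exactly $2n-1$ flops (namely $n$ scalar multiplications and $n-1$ scalar additions).

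Next I would simply aggregate. There are $m$ choices for the row index $i$ and $k$ choices for the column index $j$, giving $mk$ distinct entries, and the computations of distinct entries do not share any flops. Summing the per-entry cost over all entries therefore yields a total of $mk \cdot (2n-1)$ flops, which is the claimed count.

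Honestly there is no serious obstacle here; the only point requiring a little care is the bookkeeping convention. I would emphasize that we adopt the naive (textbook) algorithm in which each entry is formed from scratch as its own inner product, so that no intermediate products are reused across entries; under that convention the count is exact rather than merely an upper bound. I would also note in passing that assignment operations are not counted as flops, consistent with the convention stated before the lemma, so that the accounting matches the $2n-1$ figure inherited from the inner-product lemma.
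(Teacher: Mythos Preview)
Your proposal is correct and follows essentially the same approach as the paper: observe that each of the $mk$ entries of $\bC$ is a length-$n$ inner product costing $2n-1$ flops, then multiply. The paper's proof is just a terser version of your argument.
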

\begin{proof}[of Lemma~\ref{lemma:matrix-multi-complexity}]
We notice that each entry of $\bC$ involves a vector inner product that requires $n$ multiplications and $n-1$ additions. And there are $mk$ such entries, which leads to the conclusion.
\end{proof}

By Theorem~\ref{theorem:conjudate_CD_d-steps}, we can replace the formula for calculating learning rate into:
$$
\eta_t = - \frac{\bd_t^\top \bg_t}{ \bd_t^\top \bA\bd_t }
\leadto 
\eta_t = - \frac{\textcolor{mylightbluetext}{\bg_t}^\top \bg_t}{ \bd_t^\top \bA\bd_t }.
$$
According to Eq.~\eqref{equation:conjguate-redsidual-update}, it follows that $\eta_t\bA\bd_t = \bg_{t+1}-\bg_t$. Combining with Eq.~\eqref{equation:expanding_subspace_minimization_zero} and Eq.~\eqref{equation:conjudate_CD_d1}, $\beta_t$ can also be expressed as 
$$
\beta_t 
=
-\frac{\bg_t^\top\bg_t}{\bd_{t-1}^\top \bg_{t-1}}
=\frac{\bg_t^\top\bg_t}{\bg_{t-1}^\top \bg_{t-1}}
.
$$
This reduces the computational complexity from $\mathcal{O}(4d^2)$ to $\mathcal{O}(4d)$ flops. This practical CG method is then outlined in Algorithm~\ref{alg:practical_conjugate_descent}.

\begin{algorithm}[H] 
\caption{Practical Conjugate Gradient Method for Quadratic Function}
\label{alg:practical_conjugate_descent}
\begin{algorithmic}[1]
	\State {\bfseries Require:} Symmetric positive definite $\bA\in \real^{d\times d}$;
	\State {\bfseries Input:} Initial parameter $\bx_1$;
	\State {\bfseries Input:} Initialize $\bd_0 =\bzero $ and $\bg_0 = \bd_0+\epsilon$;
	\For{$t=1:d$ } 
	\State Compute gradient $\bg_t = \nabla L(\bx_{t})$;
	\State Compute coefficient $\beta_{t} =  \frac{\bg_t^\top\bg_t}{\bg_{t-1}^\top \bg_{t-1}}$; \Comment{set $\beta_1=0$ by convention}
	\State Compute descent direction $\bd_t = -\bg_{t} +\beta_{t}  \bd_{t-1}$;
	\State Learning rate $\eta_t =- \frac{{\bg_t}^\top \bg_t}{ \bd_t^\top \bA\bd_t }$;
	\State Compute update step $\Delta \bx_t = \eta_t\bd_t$;
	\State Apply update $\bx_{t+1} = \bx_{t} + \Delta \bx_t$;
	\EndFor
	\State {\bfseries Return:} resulting parameters $\bx_t$, and the loss $L(\bx_t)$.
\end{algorithmic}
\end{algorithm}

\index{Spectral decomposition}
\index{Quadratic form}
\index{Symmetry}
\index{Positive definite}
\subsubsection{Convergence Analysis for Symmetric Positive Definite Quadratic}
We further discuss the convergence results of the CG method. 
According to Eq.~\eqref{equation:conjudate_CD_d3}, there exists a set of $\{\sigma_1,\sigma_2,\ldots,\sigma_t\}$ coefficients such that 
\begin{equation}\label{equation:cg-convergence-xt1}
\begin{aligned}
	\bx_{t+1} &=\bx_1 + \eta_1\bd_1+\eta_2\bd_2+\ldots+\eta_t\bd_t  \\
	&= \bx_1 + \sigma_1\bg_1+\sigma_2\bA \bg_1+\ldots+\sigma_t\bA^{t-1}\bg_1\\
	&= \bx_1 + P^{\textcolor{mylightbluetext}{\star}}_{t-1}(\bA)\bg_1,
\end{aligned}
\end{equation}
where $P^{\textcolor{mylightbluetext}{\star}}_{t-1}(\bA) = \sigma_1\bI+\sigma_2\bA +\ldots+\sigma_t\bA^{t-1}$ is a polynomial of degree $t-1$ with coefficients $\{\sigma_1, \sigma_2, \ldots, \sigma_t\}$. 
This polynomial is a special case of a polynomial of degree $t-1$ with random coefficients $\{\omega_1, \omega_2, \ldots, \omega_t\}$, denoted by $P_{t-1}(\bA) = \omega_1\bI+\omega_2\bA +\ldots+\omega_t\bA^{t-1}$. (Note that $P_{t-1}$ can take either a scalar or a matrix as its argument).
Suppose the symmetric positive definite $\bA$ admits the spectral decomposition (Theorem 13.1 in \citet{lu2022matrix} or Appendix~\ref{appendix:spectraldecomp}, p.~\pageref{appendix:spectraldecomp}):
$$
\bA=\bQ\bLambda\bQ^\top  \in \real^{d\times d} \leadto \bA^{-1} = \bQ\bLambda^{-1}\bQ^\top,
$$ 
where the columns of $\bQ = [\bq_1, \bq_2, \ldots , \bq_d]$ are eigenvectors of $\bA$ and are mutually orthonormal, and the entries of $\bLambda = \diag(\lambda_1, \lambda_2, \ldots , \lambda_d)$ with $ \lambda_1\geq \lambda_2\geq \ldots\geq \lambda_d>0$ are the corresponding eigenvalues of $\bA$, which are real and ordered by magnitude (the eigenvalues are positive due to the positive definiteness assumption of $\bA$). It then follows that any eigenvector of $\bA$ is also an eigenvector of $P_{t-1}(\bA)$:
$$
P_{t-1}(\bA) \bq_i = P_{t-1}(\lambda_i) \bq_i, \gap \forall i\in \{1,2,\ldots, d\}.
$$
Moreover, since the eigenvectors span the entire space $\real^d$, there exists a set of $\{\nu_1,\nu_2,\ldots, \nu_d\}$ coefficients such that the initial error vector $\be_1$ can be expressed as 
\begin{equation}\label{equation:cg-convergence-xt2}
\be_1=\bx_1 - \bx_\star = \sum_{i=1}^{d} \nu_i \bq_i,
\end{equation}
where $\bx_1$ is the initial parameter. Combining Eq.~\eqref{equation:cg-convergence-xt1} and Eq.~\eqref{equation:cg-convergence-xt2}, this yields the update of the error vector:
\begin{equation}\label{equation:cg-convergence-xt3}
\begin{aligned}
	\be_{t+1}&=\bx_{t+1} - \bx_\star \\\
	&=\bx_1 +  P^{\textcolor{mylightbluetext}{\star}}_{t-1}(\bA)\bg_1-\bx_\star\\
	&=\bx_1 +  P^{\textcolor{mylightbluetext}{\star}}_{t-1}(\bA)(\bA\bx_1 - \bA\underbrace{\bA^{-1} \bb}_{\bx_\star})-\bx_\star\\
	&=\bx_1 +  P^{\textcolor{mylightbluetext}{\star}}_{t-1}(\bA)\bA(\bx_1 - \bx_\star)-\bx_\star\\
	&=\bigg(\bI+P^{\textcolor{mylightbluetext}{\star}}_{t-1}(\bA)\bA\bigg) (\bx_1 - \bx_\star)\\
	&=\bigg(\bI+P^{\textcolor{mylightbluetext}{\star}}_{t-1}(\bA)\bA\bigg) \sum_{i=1}^{d} \nu_i \bq_i= \sum_{i=1}^{d}\bigg(1+ \lambda_i P^{\textcolor{mylightbluetext}{\star}}_{t-1}(\bA)\bigg) \nu_i\bq_i
\end{aligned}
\end{equation}
To further discuss the convergence results, we  need to use the notion of \textit{energy norm} for error vector $\norm{\be}_{\bA} = (\be^\top\bA\be)^{1/2}$ as defined in Section~\ref{section:general-converg-steepest} (p.~\pageref{section:general-converg-steepest}). 
It can be shown that minimizing $\norm{\be_t}_{\bA}$ is equivalent to minimizing $L(\bx_t)$ by Eq.~\eqref{equation:energy-norm-equivalent} (p.~\pageref{equation:energy-norm-equivalent}).

\begin{remark}[Polynomial Minimization]
Since we proved in Theorem~\ref{theorem:expanding_subspace_minimization} that $\bx_{t+1}$ minimizes $L(\bx)$ over the subspace $\mathbb{D}_t$ defined in Eq.~\eqref{equation:space_d_t}, it also minimizes the energy norm $\norm{\be}_{\bA}$ over the subspace $\mathbb{D}_t$ at iteration $t$. It then follows that $P^{\textcolor{mylightbluetext}{\star}}_{t-1}(\bA)$ minimizes over the space of all possible polynomials of degree $t-1$:
$$
P^{\textcolor{mylightbluetext}{\star}}_{t-1}(\bA)
=\mathop{\arg\min}_{P_{t-1}(\bA)}  \norm{\bx_1 +  P_{t-1}(\bA)\bg_1-\bx_\star}_{\bA}.
$$
\end{remark}
Then the update of the squared energy norm can be obtained by
$$
\begin{aligned}
\norm{\be_{t+1}}_{\bA}^2 &= \be_{t+1}^\top \bA\be_{t+1} =\be_{t+1}^\top \left(\sum_{i=1}^{d}\lambda_i \bq_i\bq_i^\top\right) \be_{t+1} \\
& = \sum_{i=1}^{d} \lambda_i (\be_{t+1}^\top\bq_i)^2 \\
&=\sum_{i=1}^{d} \lambda_i \left(\bq_i^\top \bigg(\sum_{j=1}^{d}\bigg(1+ \lambda_j P^{\textcolor{mylightbluetext}{\star}}_{t-1}(\bA)\bigg) \nu_j\bq_j\bigg)\right)^2 &\text{(by Eq.~\eqref{equation:cg-convergence-xt3})}\\
&=\sum_{i=1}^{d}  \bigg(1+ \lambda_i P^{\textcolor{mylightbluetext}{\star}}_{t-1}(\lambda_i)\bigg)^2  \lambda_i\nu_i^2 
&\text{($\bq_i^\top\bq_j=0$ if $i\neq j$)} \\
&=\mathop{\min}_{P_{t-1}} \sum_{i=1}^{d}  \bigg(1+ \lambda_i  P_{t-1}(\lambda_i)\bigg)^2  \lambda_i\nu_i^2  \\
&\leq m_t \sum_{i=1}^{d} \lambda_i\nu_i^2    &\text{($m_t = \mathop{\min}_{P_{t-1}}\mathop{\max}_{1\leq j\leq d} (1+ \lambda_j P_{t-1}(\lambda_j))^2 $)} \\
&\leq m_t \cdot \norm{\be_{1}}_{\bA}^2.
\end{aligned}
$$
Therefore, the rate of convergence for the CG method is controlled by 
\begin{equation}\label{equation:cg-convergence-xt5}
m_t = \mathop{\min}_{P_{t-1}}\mathop{\max}_{1\leq j\leq d}(1+ \lambda_j P_{t-1}(\lambda_j))^2.
\end{equation}

\subsection*{Special Case: $\bA$ Has Only $r$ Distinct Eigenvalues}
We then consider some special cases. Firstly, we want to show the CG method terminates in exactly $r$ iterations if the symmetric positive definite $\bA$ has only $r$ distinct eigenvalues. To establish this, suppose $\bA$ has distinct eigenvalues $\mu_1<\mu_2<\ldots<\mu_r$. And we define a polynomial $Q_r(\lambda)$ by 
$$
Q_r(\lambda) =\frac{(-1)^r}{\mu_1\mu_2\ldots\mu_r} (\lambda-\mu_1)(\lambda-\mu_2)\ldots (\lambda-\mu_r),
$$
where $Q_r(\lambda_i)=0$ for $i=\{1,2,\ldots, d\}$ and $Q_r(0)=1$. Therefore, it follows that the polynomial 
$$
R_{r-1}(\lambda) = \frac{Q_r(\lambda)-1}{\lambda}
$$
is a polynomial of degree $r-1$ with a root at $\lambda=0$. Setting $t-1=r-1$ in Eq.~\eqref{equation:cg-convergence-xt5}, we have
$$
\begin{aligned}
0&\leq m_{r}=\mathop{\min}_{P_{r-1}}\mathop{\max}_{1\leq j\leq d}(1+ \lambda_j P_{r-1}(\lambda_j))^2\\
&=\mathop{\max}_{1\leq j\leq d}(1+ \lambda_j R_{r-1}(\lambda_j))^2 
= \mathop{\max}_{1\leq j\leq d} Q_r^2(\lambda_i) = 0.
\end{aligned}
$$
As a result, $m_{r}$=0, and $\norm{\be_{r+1}}_{\bA}=0$, implying $\bx_{r+1} = \bx_\star$ and the algorithm terminates at iteration $r$. A specific example is shown in Figure~\ref{fig:conjugate_specialcases}, where Figure~\ref{fig:conjugate_specialcases_2eigenvalue} terminates in two steps since it has two distinct eigenvalues, and Figure~\ref{fig:conjugate_specialcases_1eigenvalue} terminates in just one step as it has one distinct eigenvalue.

\begin{figure}[h]
\centering  
\vspace{-0.35cm} 
\subfigtopskip=2pt 
\subfigbottomskip=2pt 
\subfigcapskip=-5pt 
\subfigure[CG, 2 distinct eigenvlaues. Finish in 2 steps.]{\label{fig:conjugate_specialcases_2eigenvalue}
	\includegraphics[width=0.481\linewidth]{./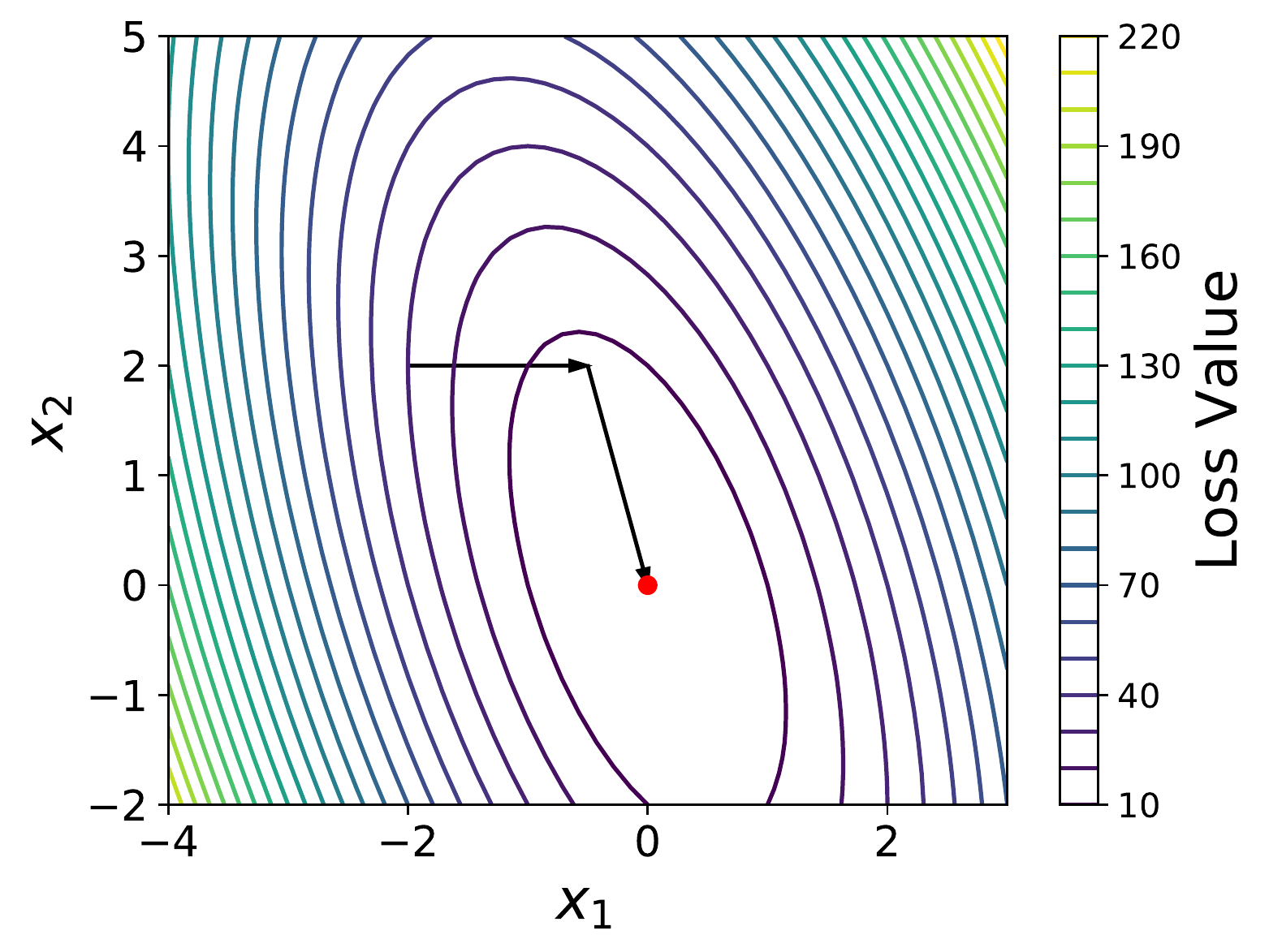}}
\subfigure[CG, 1 distinct eigenvalue. Finish in 1 step.]{\label{fig:conjugate_specialcases_1eigenvalue}
	\includegraphics[width=0.481\linewidth]{./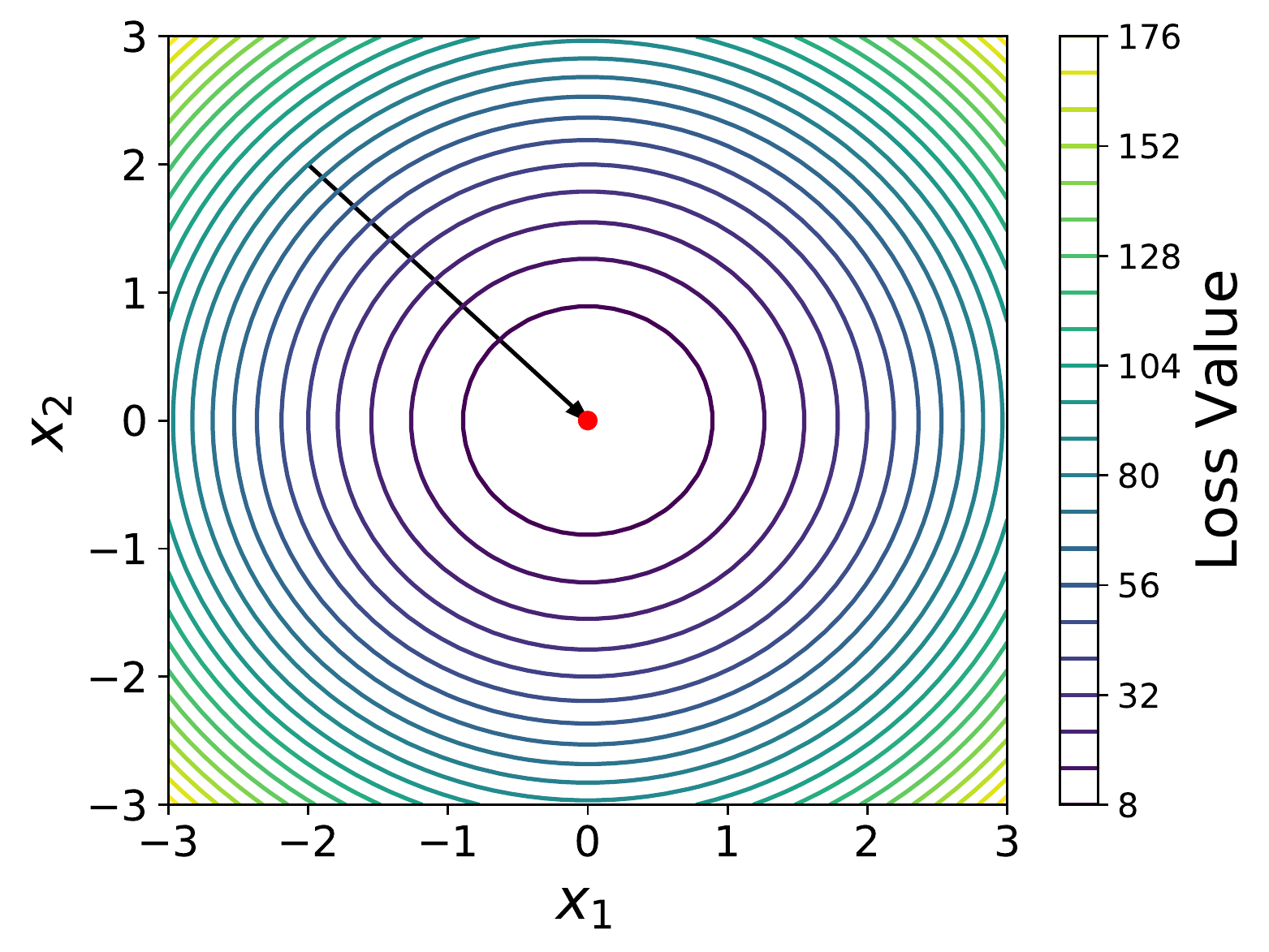}}
\caption{Illustration of special cases for CG with exact line search of quadratic forms. $\bA=\begin{bmatrix}
		20 & 5 \\ 5 & 5
	\end{bmatrix}$, $\bb=\bzero$, $c=0$, and starting point to descent is $\bx_1=[-2, 2]^\top$ for Fig~\ref{fig:conjugate_specialcases_2eigenvalue}. $\bA=\begin{bmatrix}
		20 & 0 \\ 0 & 20
	\end{bmatrix}$, $\bb=\bzero$, $c=0$, and starting point to descent is $\bx_1=[-2, 2]^\top$ for Fig~\ref{fig:conjugate_specialcases_1eigenvalue}.}
\label{fig:conjugate_specialcases}
\end{figure}

\subsection*{Closed Form by Chebyshev Polynomials}
It can be shown that Eq.~\eqref{equation:cg-convergence-xt5} is minimized by a Chebyshev polynomial, given by
$$
1+ \lambda_j P_{t-1}(\lambda_j) = \frac{T_{t}\left( \frac{\lambda_{\max} + \lambda_{\min} - 2\lambda}{\lambda_{\max}-\lambda_{\min}} \right) }
{T_{t}\left( \frac{\lambda_{\max} + \lambda_{\min} }{\lambda_{\max}-\lambda_{\min}} \right) },
$$
where $T_t(w) = \frac{1}{2} \left[ (w+\sqrt{w^2+1})^t + (w-\sqrt{w^2-1})^t\right]$ represents the Chebyshev polynomial of degree $t$.
\begin{proof}
To see this, we can express the $m_t$ in Eq.~\eqref{equation:cg-convergence-xt5} as
\begin{equation}\label{equation:cg-convergence-xt5_rewrite}
	m_t = \mathop{\min}_{P_{t-1}}\mathop{\max}_{1\leq j\leq d}(1+ \lambda_j P_{t-1}(\lambda_j))^2 = \mathop{\min}_{P_{t-1}}\mathop{\max}_{1\leq j\leq d} (\widetilde{P}_{t}(\lambda_i))^2,
\end{equation}
where $\widetilde{P}_{t}(\lambda) = 1+ \lambda P_{t-1}(\lambda)=1+w_1\lambda + \ldots+w_t\lambda^t$ is a special polynomial of degree  $t$ with $\widetilde{P}_{t}(0)=1$. We note that the Chebyshev polynomial can be expressed on the interval $w\in [-1,1]$ as
$$
T_t(w) =\cos(t \cos^{-1} w), \gap w\in [-1,1] \leadto |T_t(w)| \leq 1,\gap \text{if } w\in [-1,1].
$$
It is observable that $\widetilde{P}_{t}(\lambda)$ oscillates within the range $\pm {T_{t}\left( \frac{\lambda_{\max} + \lambda_{\min} }{\lambda_{\max}-\lambda_{\min}} \right) }^{-1}$ over the domain $[\lambda_{\min}, \lambda_{\max}]$. Suppose there exists a polynomial $S_t(\lambda)$ of degree $t$ such that $S_t(0)=1$ and $S_t$ is better than $\widetilde{P}_t$ on the domain $[\lambda_{\min}, \lambda_{\max}]$. It then follows that the $S_t-\widetilde{P}_t$ has a zero at $\lambda=0$ and other $t$ zeros on the range $[\lambda_{\min}, \lambda_{\max}]$, making it has $t+1$ zeros, which leads to a contradiction. Therefore, $\widetilde{P}_t$ is the optimal polynomial of degree $t$.
This completes the proof.
\end{proof}

Therefore, it follows that
$$
\begin{aligned}
\norm{\be_{t+1}}_{\bA} &\leq T_t\left(  \frac{\lambda_{\max} + \lambda_{\min}}{\lambda_{\max} - \lambda_{\min}}   \right)^{-1} \cdot\norm{\be_1}_{\bA} \\
&=T_t\left(  \frac{\kappa+1}{\kappa-1}   \right)^{-1} \cdot \norm{\be_1}_{\bA}\\
&= 2\left[ \left(\frac{\sqrt{\kappa}+1}{\sqrt{\kappa}-1}  \right)^t +
\left(\frac{\sqrt{\kappa}-1}{\sqrt{\kappa}+1}  \right)^t
\right]^{-1} \cdot\norm{\be_1}_{\bA},
\end{aligned}
$$
where $\kappa = \frac{\lambda_{\max}}{\lambda_{\min}}$ is the condition number, and $\left(\frac{\sqrt{\kappa}-1}{\sqrt{\kappa}+1}  \right)^t \rightarrow 0$ as iteration $t$ grows. A weaker inequality can be obtained by 
$$
\norm{\be_{t+1}}_{\bA} \leq  2 \left(\frac{\sqrt{\kappa}-1}{\sqrt{\kappa}+1}  \right)^t
\cdot\norm{\be_1}_{\bA}.
$$
Figure~\ref{fig:rate_convergen_conjugae_comparison}  compares the rate of convergence of steepest descent and CG per iteration. It is observed that  CG exhibits significantly faster convergence compared to steepest descent.

\begin{figure}[h]
\centering  
\vspace{-0.35cm} 
\subfigtopskip=2pt 
\subfigbottomskip=2pt 
\subfigcapskip=-5pt 
\subfigure[Rate of convergence for steepest descent per iteration (same as Figure~\ref{fig:rate_convergen_steepest}, p.~\pageref{fig:rate_convergen_steepest}). The $y$-axis is $\frac{\kappa-1}{\kappa+1}$.]{\label{fig:rate_convergen_steepest1}
	\includegraphics[width=0.481\linewidth]{./imgs/rate_convergen_steepest.pdf}}
\subfigure[Rate of convergence for CG per iteration. The $y$-axis is $\frac{\sqrt{\kappa}-1}{\sqrt{\kappa}+1}$.]{\label{fig:rate_convergen_conjugate}
	\includegraphics[width=0.481\linewidth]{./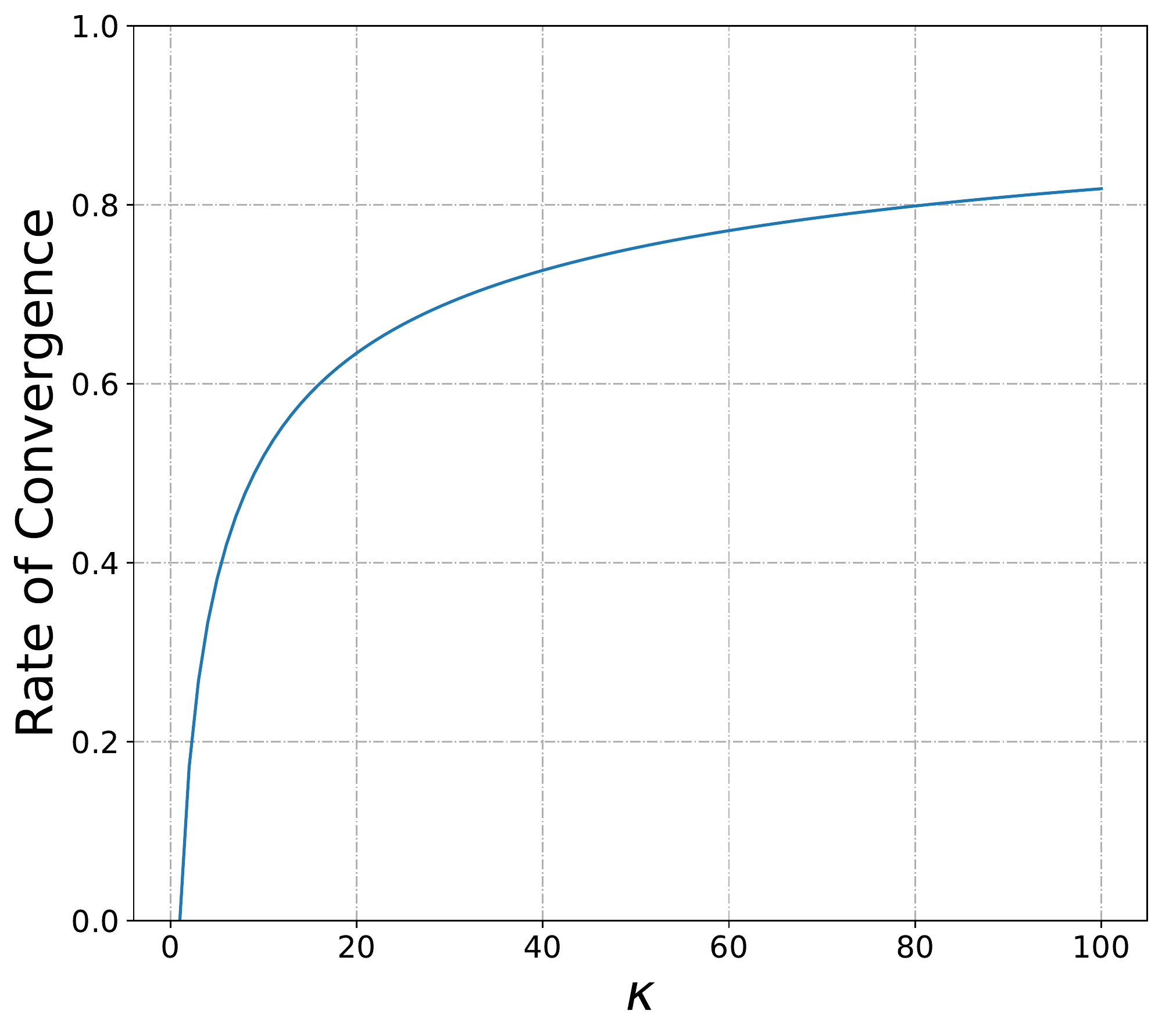}}
\caption{Illustration of the rate of convergence for CG and steepest descent.}
\label{fig:rate_convergen_conjugae_comparison}
\end{figure}

\index{Rate of convergence}
\subsection*{Preconditioning}
Since the smaller the condition number $\kappa$, the faster the convergence (Figure~\ref{fig:rate_convergen_conjugate}). We can accelerate the convergence of CG by transforming the linear system to improve the eigenvalue distribution of $\bA${\textemdash}the procedure is known as \textit{preconditioning}. The variable $\bx$ is transformed to $\widehat{\bx}$ via a nonsingular matrix $\bP$, satisfying
$$
\begin{aligned}
\whbx &= \bP\bx;\\
\whL(\whbx) &=\frac{1}{2}\whbx^\top (\bP^{-\top} \bA\bP^{-1})\whbx - (\bP^{-\top}\bb)^\top \whbx +c.
\end{aligned}
$$
When $\bA$ is symmetric, the solution of $\whL(\whbx)$ is equivalent to the solution of the linear equation
$$
\begin{aligned}
(\bP^{-\top} \bA\bP^{-1})\whbx &= \bP^{-\top}\bb \\
&\leadto \bP^{-\top}\bA\bx=\bP^{-\top}\bb \\
&\leadto \bA\bx=\bb.
\end{aligned}
$$
That is, we can solve $\bA\bx=\bb$ indirectly by solving $\bP^{-\top}\bA\bx=\bP^{-\top}\bb$.
Therefore, the rate of convergence of the quadratic form $\whL(\whbx)$ depends on the condition number of $\bP^{-\top} \bA\bP^{-1}$, which can be controlled by the nonsingular matrix $\bP$. 
Intuitively,  preconditioning is a procedure to stretch the quadratic form to make it more spherical so that the eigenvalues are clustered in a smaller range. A specific example is given in Figure~\ref{fig:conjugate_specialcases} that we want to transform the elliptical contour in Figure~\ref{fig:conjugate_specialcases_2eigenvalue} into the spherical contour in Figure~\ref{fig:conjugate_specialcases_1eigenvalue}.
Based on Algorithm~\ref{alg:practical_conjugate_descent}, the preconditioned CG method is formulated in Algorithm~\ref{alg:predicition_CG}.

\begin{algorithm}[h] 
\caption{Transformed-Preconditioned CG for Quadratic Functions}
\label{alg:predicition_CG}
\begin{algorithmic}[1]
	\State {\bfseries Require:} Symmetric positive definite $\bA\in \real^{d\times d}$;
	\State {\bfseries Input:} Initial parameter $\whbx_1$;
	\State {\bfseries Input:} Initialize $\whbd_0 =\bzero $ and $\whbg_0 = \whbd_0+\epsilon$;
	\For{$t=1:d$ } 
	\State Compute gradient $\whbg_t = \nabla \whL(\whbx_{t}) = (\bP^{-\top} \bA\bP^{-1})\whbx- \bP^{-\top}\bb $; \Comment{$=\textcolor{mylightbluetext}{\bP^{-\top}}\bg_t$}
	\State Compute coefficient $\widehat{\beta}_{t} =  \frac{\whbg_t^\top\whbg_t}{\whbg_{t-1}^\top \whbg_{t-1}}
	$; \Comment{$=
		\frac{\bg_t^\top \textcolor{mylightbluetext}{(\bP^\top\bP)^{-1}}\bg_t}
		{\bg_{t-1}^\top\textcolor{mylightbluetext}{(\bP^\top\bP)^{-1}} \bg_{t-1}}$}
	\State Compute descent direction $\whbd_t = -\whbg_{t} +\widehat{\beta}_{t}  \whbd_{t-1}$;
	\Comment{$=-\textcolor{mylightbluetext}{\bP^{-\top}}\bg_{t} +\widehat{\beta}_{t}  \whbd_{t-1}$}
	\State Learning rate 
	$\widehat{\eta}_t =- \frac{{\whbg_t}^\top \whbg_t}{ \whbd_t^\top (\bP^{-\top} \bA\bP^{-1})\whbd_t }
	$;
	\Comment{$=
		- \frac{{\bg_t}^\top \textcolor{mylightbluetext}{(\bP^\top\bP)^{-1}}\bg_t}{ \whbd_t^\top (\bP^{-\top} \bA\bP^{-1})\whbd_t }$}
	\State Compute update step $\Delta \whbx_t = \widehat{\eta}_t\whbd_t$;
	\State Apply update $\whbx_{t+1} = \whbx_{t} + \Delta \whbx_t$;
	\EndFor
	\State {\bfseries Return:} resulting parameters $\textcolor{mylightbluetext}{\bx_t=\bP^{-1}\whbx_t}$, and the loss $L(\bx_t)$.
\end{algorithmic}
\end{algorithm}

However, the procedure in Algorithm~\ref{alg:predicition_CG} is not desirable since we need to transform $\bx$ into $\whbx=\bP\bx$ and untransformed back by $\bx=\bP^{-1}\whbx$ as highlighted in the blue texts of Algorithm~\ref{alg:predicition_CG}. This introduces additional computational overhead. Let $\bM=\bP^\top\bP$, Algorithm~\ref{alg:untransformed_predicition_CG} is proposed to formulate the untransformed-preconditioned CG, which proves to be more efficient  than Algorithm~\ref{alg:predicition_CG}.

\begin{algorithm}[h] 
\caption{Untransformed-Preconditioned CG for Quadratic Functions}
\label{alg:untransformed_predicition_CG}
\begin{algorithmic}[1]
	\State {\bfseries Require:} Symmetric positive definite $\bA\in \real^{d\times d}$;
	\State {\bfseries Input:} Initial parameter $\bx_1$;
	\State {\bfseries Input:} Initialize $\bd_0 =\bzero $ and $\bg_0 = \bd_0+\epsilon$;
	\For{$t=1:d$ } 
	\State Compute gradient $\bg_t = \nabla L(\bx_{t})$;
	\Comment{Same as that of Algorithm~\ref{alg:practical_conjugate_descent}}
	\State Compute coefficient $\widehat{\beta}_{t} =  \frac{\bg_t^\top\textcolor{mylightbluetext}{\bM^{-1}}\bg_t}{\bg_{t-1}^\top\textcolor{mylightbluetext}{\bM^{-1}} \bg_{t-1}}$; \Comment{Same as that of Algorithm~\ref{alg:predicition_CG}}
	\State Compute descent direction $\widetilde{\bd}_t = -\textcolor{mylightbluetext}{\bM^{-1}}\bg_{t} +\widehat{\beta}_{t}  \widetilde{\bd}_{t-1}$;
	\Comment{$=-\textcolor{mylightbluetext}{\bP^{-1}} \whbd_{t}$ in Algorithm~\ref{alg:predicition_CG}}
	\State Learning rate $\widehat{\eta}_t =- {({\bg_t}^\top\textcolor{mylightbluetext}{\bM^{-1}} \bg_t)}/{ (\widetilde{\bd}_t^\top \bA\widetilde{\bd}_t )}$; \Comment{Same as that of Algorithm~\ref{alg:predicition_CG}}
	\State Compute update step ${\Delta \bx}_t = \widehat{\eta}_t\widetilde{\bd}_t$;
	\Comment{$=-\textcolor{mylightbluetext}{\bP^{-1}} \Delta\whbx_{t}$ in Algorithm~\ref{alg:predicition_CG}}
	\State Apply update $\bx_{t+1} = \bx_{t} + {\Delta \bx}_t$;
	\EndFor
	\State {\bfseries Return:} resulting parameters $\bx_t$, and the loss $L(\bx_t)$.
\end{algorithmic}
\end{algorithm}

\begin{figure}[h]
\centering  
\vspace{-0.35cm} 
\subfigtopskip=2pt 
\subfigbottomskip=2pt 
\subfigcapskip=-5pt 
\subfigure[Contour plot of quadratic function with $\bA$.]{\label{fig:conjugate_precondition1}
	\includegraphics[width=0.481\linewidth]{./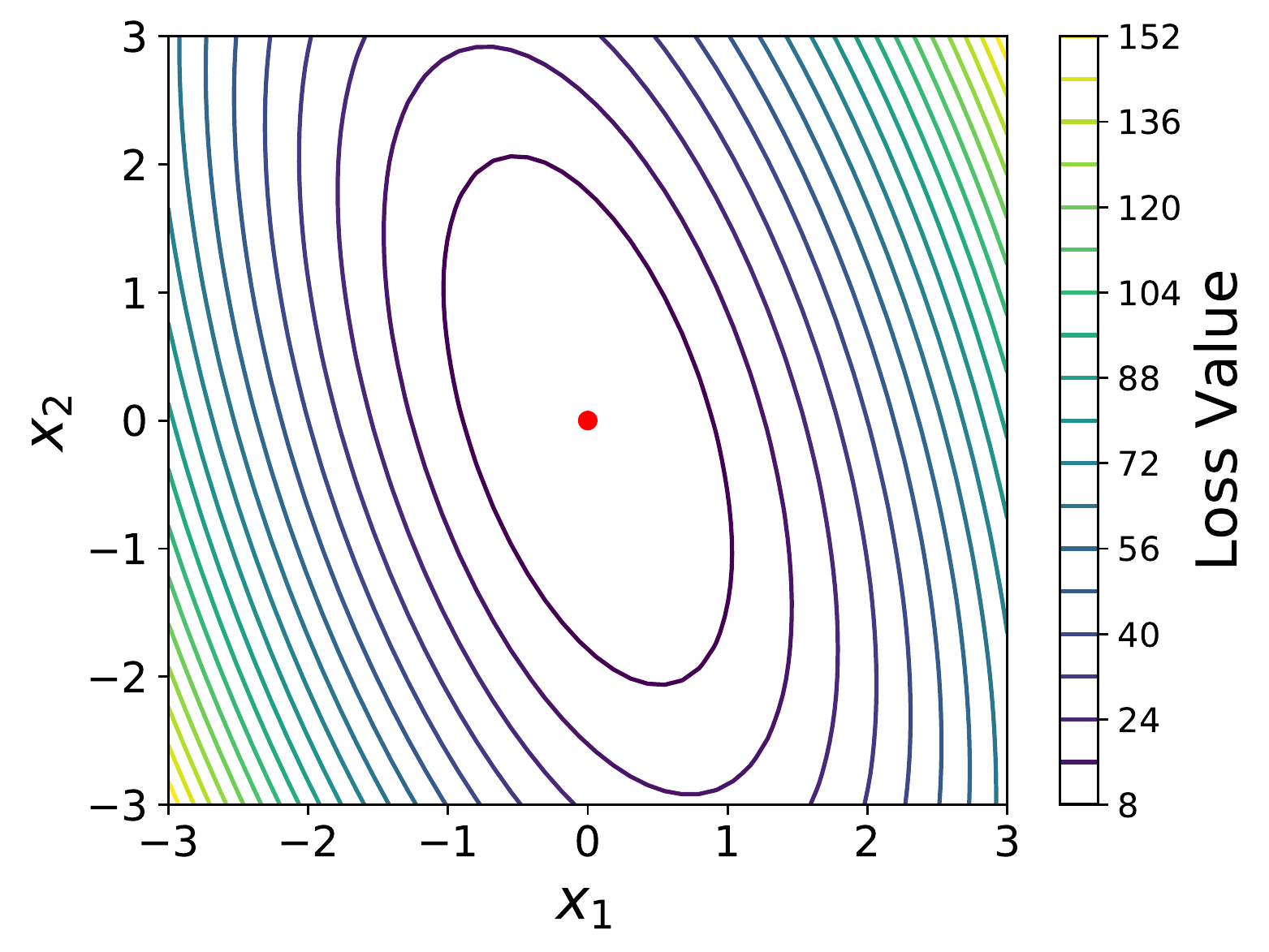}}
\subfigure[Contour plot of quadratic function with $\bP^{-\top} \bA\bP^{-1}$.]{\label{fig:conjugate_precondition2}
	\includegraphics[width=0.481\linewidth]{./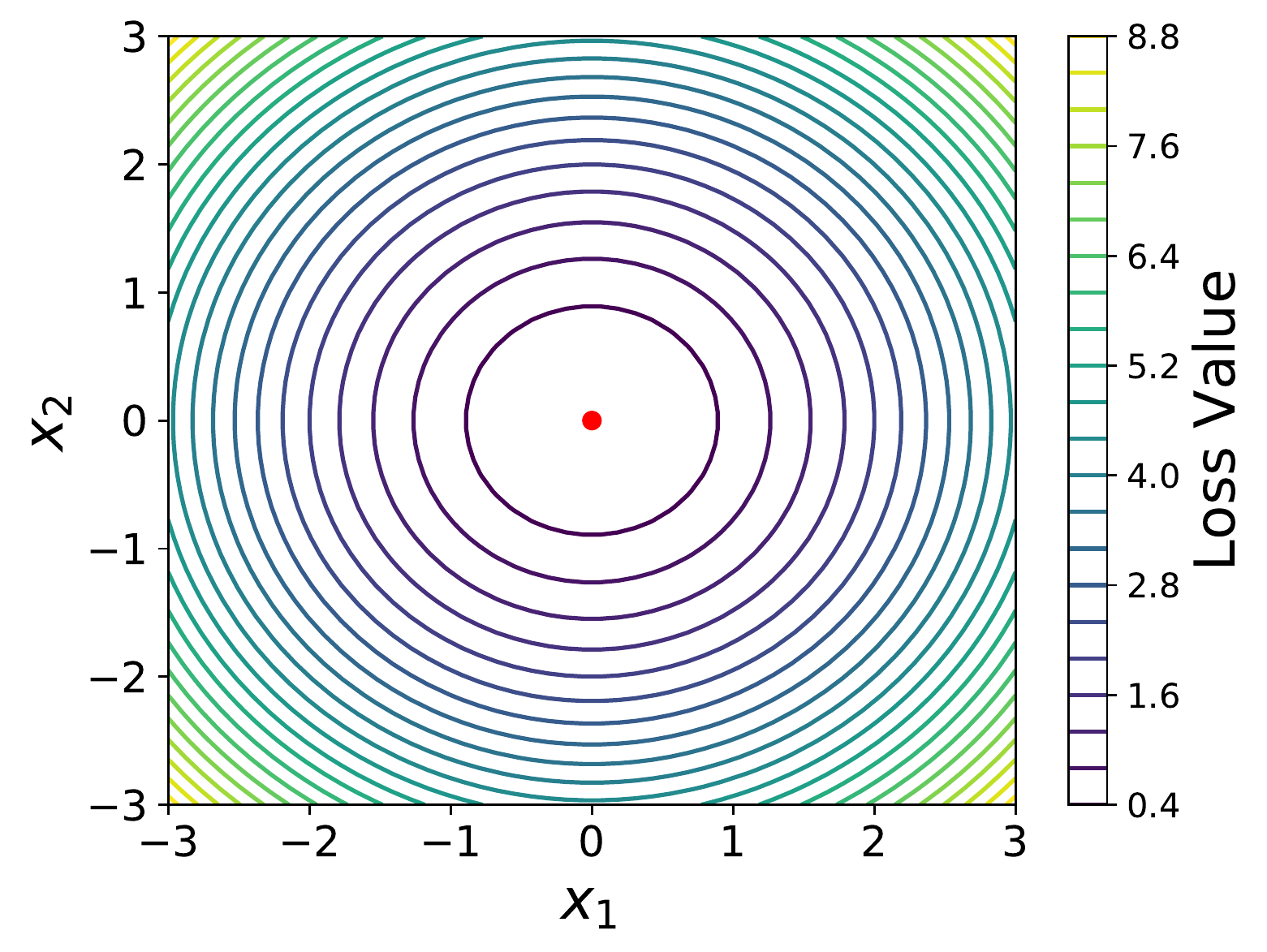}}
\caption{Illustration of preconditioning for $\bA=\begin{bmatrix}
		20&5 \\5&5
	\end{bmatrix}$. $\bP$ is obtained by the Cholesky decomposition such that $\bM=\bA=\bP^\top\bP$.}
\label{fig:conjugate_precondition13}
\end{figure}
\index{Cholesky decomposition}

\paragraph{Second perspective of preconditioning.}
The matrices $\bM^{-1}\bA$ and $\bP^{-\top} \bA\bP^{-1}$ have the same eigenvalues. To see this, suppose the eigenpair of $\bM^{-1}\bA$ is $(\bM^{-1} \bA) \bv =\lambda \bv$, it follows that
$$
(\bP^{-\top} \bA\bP^{-1}) (\bP\bv) = \bP^{-\top} \bA\bv = 
\bP\bP^{-1}\bP^{-\top} \bA\bv 
=\bP\bM^{-1}\bA\bv=\lambda (\bP\bv).
$$
Therefore, the preconditioning can be understood from two perspectives. While the second perspective is to solve $\bM^{-1}\bA\bx = \bM^{-1}\bb$, where the condition number is decided by matrix $\bM^{-1}\bA$.
The simplest preconditioner $\bM^{-1}$ is  a diagonal matrix whose diagonal entries are identical to those of $\bA$, known as \textit{diagonal preconditioning}, in which case  we scale the quadratic form along the coordinate axes. In contrast, the \textit{perfect preconditioner} is $\bM=\bA$ such that $\bM^{-1}\bA=\bI$, whose condition number is 1, in which case  the quadratic form is scaled along its eigenvector directions. In this sense, the $\bP$ can be obtained by the (pseudo) Cholesky decomposition (Theorem 2.1 in \citet{lu2022matrix} or Appendix~\ref{appendix:choleskydecomp}) such that $\bM=\bA=\bP^\top\bP$. Figure~\ref{fig:conjugate_precondition13} shows the perfect preconditioning on $\bM=\bA=\begin{bmatrix}
20&5 \\5&5\\
\end{bmatrix}$ such that the eigenvalues of $\bP^{-\top}\bA\bP^{-1}$ are identical and the condition number is thus equal to 1.

\index{Cholesky decomposition}

\subsubsection{General Conjugate Gradient Method}
We now revisit the general CG method as introduced in \citet{fletcher1964function}. The method has been previously formulated in Algorithm~\ref{alg:conjugate_descent}; we may notice the \textit{Fletcher-Reeves Conjugate Gradient} method (Algorithm~\ref{alg:conjugate_descent}) is just the same as the \textit{Practical Conjugate Gradient} method (Algorithm~\ref{alg:practical_conjugate_descent}) under the conditions of a strongly convex quadratic loss function and the use of an exact line search for the learning rate $\eta_t$.

To see why the Fletcher-Reeves Conjugate Gradient algorithm (Algorithm~\ref{alg:conjugate_descent}) works, the search direction $\bd_t$ must satisfy the descent condition (Remark~\ref{remark:descent_condition}, p.~\pageref{remark:descent_condition}) such that $ \bg_t^\top \bd_t<0$. The descent condition is satisfied when the learning rate is calculated by exact line search, in which case the gradient $\nabla L(\bx_t) = \bg_t$ is orthogonal to search direction $\bd_{t-1}$ (Lemma~\ref{lemm:linear-search-orghonal}, p.~\pageref{lemm:linear-search-orghonal}): $\bg_t^\top\bd_{t-1}=0$. Therefore, 
$$
\bg_t^\top \bd_t = \bg_t^\top (-\bg_t +\beta_t\bd_{t-1} ) = -\norm{\bg_t}^2 + \beta_t \bg_t^\top \bd_{t-1}<0
$$
when $\eta_t$ is determined by exact line search. However, when $\eta_t$ is fixed or calculated by inexact line search, the descent condition $\bg_t^\top\bd_t$ may not be satisfied. This problem, however, can be attacked by \textit{strong Wolfe conditions} \citep{nocedal1999numerical}; and we will not go into the details.

\paragraph{Polak-Ribi\`ere conjugate gradient.} We have mentioned previously that the $\beta_t$ can also be computed by the Polak-Ribi\`ere coefficient:
$$
\text{Polak-Ribi\`ere:\gap } \beta_t^P = \frac{\bigg(\nabla L(\bx_t) - \nabla L(\bx_{t-1})\bigg)^\top \nabla L(\bx_t)}{\nabla L(\bx_{t-1})^\top \nabla L(\bx_{t-1})}
=
\frac{(\bg_t - \bg_{t-1})^\top  \bg_t}{ \bg_{t-1}^\top \bg_{t-1}}
.
$$
When the loss function is strongly convex quadratic and the learning rate is chosen by exact line search, the Polak-Ribi\`ere coefficient $\beta_t^P$ is identical to the Fletcher-Reeves coefficient $\beta_t^F$ since $\bg_{t-1}^\top\bg_t=0$ by Theorem~\ref{theorem:conjudate_CD_d-steps}.

\paragraph{Hestenes–Stiefel conjugate gradient.} Hestenes–Stiefel coefficient is yet another variant of the Polak-Ribi\`ere coefficient:
$$
\text{Hestenes–Stiefel:\gap } \beta_t^H = \frac{\bigg(\nabla L(\bx_t) - \nabla L(\bx_{t-1})\bigg)^\top \nabla L(\bx_t)}{\bigg(\nabla L(\bx_t) - \nabla L(\bx_{t-1})\bigg)^\top
\bd_{t-1}}
=
\frac{(\bg_t - \bg_{t-1})^\top  \bg_t}{ (\bg_t - \bg_{t-1})^\top \bd_{t-1}}.
$$
When the loss function is strongly convex quadratic and the learning rate is chosen by exact line search, the Hestenes–Stiefel coefficient $\beta_t^H$ is identical to the Fletcher-Reeves coefficient $\beta_t^F$ since $\bg_{t-1}^\top\bg_t=0$ by Theorem~\ref{theorem:conjudate_CD_d-steps} and $\bg_t^\top\bd_{t-2}=\bg_{t-1}^\top\bd_{t-2}=0$ by Theorem~\ref{theorem:expanding_subspace_minimization}.

Moreover, numerical experiments show that the Polak-Ribi\`ere coefficient and Hestenes
–Stiefel coefficient are more robust than Fletcher-Reeves coefficient in nonconvex settings \citep{nocedal1999numerical}.

\newpage

\appendix
\newpage
\clearchapter{Taylor’s Expansion}
\begingroup
\hypersetup{linkcolor=winestain,
linktoc=page,  
}
\minitoc \newpage
\endgroup
\section{Taylor’s Expansion}\label{appendix:taylor-expansion}
\index{Taylor's formula}
\begin{theorem}[Taylor’s Expansion with Lagrange Remainder]
Let $f(x): \real\rightarrow \real$ be $k$-times continuously differentiable on the closed interval $I$ with endpoints $x$ and $y$, for some $k\geq 0$. If $f^{(k+1)}$ exists on the interval $I$, then there exists a $x^\star \in (x,y)$ such that 
$$
\begin{aligned}
&\gap f(x) \\
&= f(y) + f^\prime(y)(x-y) + \frac{f^{\prime\prime}(y)}{2!}(x-y)^2+ \ldots + \frac{f^{(k)}(y)}{k!}(x-y)^k
+ \frac{f^{(k+1)}(x^\star)}{(k+1)!}(x-y)^{k+1}\\
&=\sum_{i=0}^{k} \frac{f^{(i)}(y)}{i!} (x-y)^i + \frac{f^{(k+1)}(x^\star)}{(k+1)!}(x-y)^{k+1}.
\end{aligned}
$$ 
The Taylor's expansion can be extended to a function of vector $f(\bx):\real^n\rightarrow \real$ or a function of matrix $f(\bX): \real^{m\times n}\rightarrow \real$.
\end{theorem}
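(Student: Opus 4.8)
The plan is to reduce the statement to Rolle's theorem by introducing a cleverly chosen auxiliary function. First I would fix the endpoints $x$ and $y$ and \emph{define} a constant $M$ by demanding that the desired formula hold with $\frac{f^{(k+1)}(x^\star)}{(k+1)!}$ replaced by $M$; that is, set
$$
M = \frac{1}{(x-y)^{k+1}}\left( f(x) - \sum_{i=0}^{k} \frac{f^{(i)}(y)}{i!}(x-y)^i \right).
$$
The entire task then collapses to proving the single claim that $M = \frac{f^{(k+1)}(x^\star)}{(k+1)!}$ for some $x^\star$ strictly between $x$ and $y$.

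To locate such an $x^\star$, I would introduce the auxiliary function in which the \emph{base point} is the variable while the evaluation point $x$ is frozen:
$$
g(t) = f(x) - \sum_{i=0}^{k} \frac{f^{(i)}(t)}{i!}(x-t)^i - M\,(x-t)^{k+1}.
$$
A direct substitution gives $g(x)=0$ (the $i=0$ term of the sum collapses to $f(x)$ and cancels the leading $f(x)$, while every remaining term carries a positive power of $(x-t)$ and vanishes at $t=x$), and $g(y)=0$ is exactly the defining equation of $M$. Since $f^{(k+1)}$ exists on $I$, the function $g$ is differentiable on the open interval, so Rolle's theorem yields a point $x^\star$ between $x$ and $y$ with $g'(x^\star)=0$.

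The heart of the argument — and the step I expect to require the most care — is the differentiation of $g$, where a telescoping cancellation must be exhibited. Differentiating the sum term by term produces two families: one from differentiating $f^{(i)}(t)$ and one from differentiating $(x-t)^i$. After reindexing the second family by $j=i-1$, all interior terms cancel in pairs, leaving only the top term, so that
$$
g'(t) = -\frac{f^{(k+1)}(t)}{k!}(x-t)^k + M(k+1)(x-t)^k.
$$
Evaluating at $x^\star$, dividing out the nonzero factor $(x-x^\star)^k$ (nonzero because $x^\star\neq x$), and solving for $M$ delivers $M = \frac{f^{(k+1)}(x^\star)}{(k+1)!}$, which is precisely the claim. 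The special case $k=0$ is just the mean value theorem, a useful sanity check on the telescoping bookkeeping.

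Finally, for the extension to $f(\bx):\real^n\rightarrow\real$, I would restrict $f$ to the segment joining the two points by defining the scalar function $\phi(s) = f(\by + s(\bx-\by))$ for $s\in[0,1]$, apply the one-dimensional result just proved to $\phi$, and translate the derivatives $\phi^{(i)}(s)$ back into directional quantities via the chain rule (e.g. $\phi'(s) = \nabla f(\cdot)^\top(\bx-\by)$ and $\phi''(s) = (\bx-\by)^\top \bH(\cdot)(\bx-\by)$). This recovers the Linear and Quadratic Approximation Theorems stated earlier as the cases $k=1$ and $k=2$; the matrix-argument version follows by the same reduction after vectorizing the variable.
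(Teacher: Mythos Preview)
Your argument is the standard and correct Rolle-based proof of Taylor's theorem with Lagrange remainder: the definition of $M$, the auxiliary function $g(t)$ with the moving base point, the endpoint checks $g(x)=g(y)=0$, the telescoping computation of $g'(t)$, and the final division by $(x-x^\star)^k$ are all carried out cleanly. The reduction of the vector case to the scalar case via $\phi(s)=f(\by+s(\bx-\by))$ is also the right move.

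However, there is nothing to compare against: the paper does \emph{not} prove this theorem. After stating it, the text offers only a heuristic motivation (matching derivatives of $\cos\theta$ at $0$ with a quadratic $c_1+c_2\theta+c_3\theta^2$) and then explicitly says ``We shall not give the details of the proof.'' So your proposal is not an alternative to the paper's proof but a complete proof where the paper supplies none; what you have written would in fact fill the gap the author deliberately left.
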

The Taylor's expansion, or also known as the \textit{Taylor's series}, approximates the function $f(x)$ around the value of $y$ by a polynomial in a single indeterminate $x$. To see where  this series comes from, we recall from the elementary calculus course that the approximated function around $\theta=0$ for $\cos (\theta)$ is given by 
$$
\cos (\theta) \approx 1-\frac{\theta^2}{2}.
$$
That is, the $\cos  (\theta)$ is approximated by a polynomial with a degree of 2. Suppose we want to approximate $\cos  (\theta)$ by the more general polynomial with  a degree of 2 by $ f(\theta) = c_1+c_2 \theta+ c_3 \theta^2$. A intuitive idea is to match the gradients around the $0$ point. That is,
$$\left\{
\begin{aligned}
\cos(0) &= f(0); \\
\cos^\prime(0) &= f^\prime(0);\\
\cos^{\prime\prime}(0) &= f^{\prime\prime}(0);\\
\end{aligned}
\right.
\leadto 
\left\{
\begin{aligned}
1 &= c_1; \\
-\sin(0) &=0= c_2;\\
-\cos(0) &=-1= 2c_3.\\
\end{aligned}
\right.
$$
This makes $f(\theta) = c_1+c_2 \theta+ c_3 \theta^2 = 1-\frac{\theta^2}{2}$, and agrees with our claim that $\cos (\theta) \approx 1-\frac{\theta^2}{2}$ around the 0 point. We shall not give the details of the proof.

\newpage
\clearchapter{Matrix Decomposition}
\begingroup
\hypersetup{linkcolor=winestain,
linktoc=page,  
}
\minitoc \newpage
\endgroup
\section{Cholesky Decomposition}\label{appendix:choleskydecomp}
\lettrine{\color{caligraphcolor}P}
Positive definiteness or positive semidefiniteness is one of the highest accolades to which a matrix can aspire. 
In this section, we introduce decompositional approaches for positive definite matrices and we illustrate the most famous Cholesky decomposition as follows.

\index{Cholesky decomposition}
\begin{theoremHigh}[Cholesky Decomposition]\label{theorem:cholesky-factor-exist}
Every positive definite (PD) matrix $\bA\in \real^{n\times n}$ can be factored as 
$$
\bA = \bR^\top\bR,
$$
where $\bR \in \real^{n\times n}$ is an upper triangular matrix \textbf{with positive diagonal elements}. This decomposition is known as the \textit{Cholesky decomposition}  of $\bA$. $\bR$ is known as the \textit{Cholesky factor} or \textit{Cholesky triangle} of $\bA$.

Alternatively, $\bA$ can be factored as $\bA=\bL\bL^\top$, where $\bL=\bR^\top$ is a lower triangular matrix \textit{with positive diagonals}.

Specifically, the Cholesky decomposition is unique (Corollary~\ref{corollary:unique-cholesky-main}).
\end{theoremHigh}

The Cholesky decomposition is named after a French military officer and mathematician, Andr\'{e}-Louis Cholesky (1875{\textendash}1918), who developed the Cholesky decomposition in his surveying work. The Cholesky decomposition is used primarily to solve positive definite linear systems. 

We here only establish the existence of the Cholesky decomposition through an inductive approach. Although alternative methods, such as a consequence of the LU decomposition, exist for proving it \citep{lu2022matrix}.
\begin{proof}[of Theorem~\ref{theorem:cholesky-factor-exist}]
We will prove by induction that every $n\times n$ positive definite matrix $\bA$ has a decomposition $\bA=\bR^\top\bR$. The $1\times 1$ case is trivial by setting $R=\sqrt{A}$, thus, $A=R^2$. 

Suppose any $k\times k$ PD matrix $\bA_k$ has a Cholesky decomposition. If we prove that  any $(k+1)\times(k+1)$ PD matrix $\bA_{k+1}$ can also be factored as this Cholesky decomposition, then we complete the proof.

For any $(k+1)\times(k+1)$ PD matrix $\bA_{k+1}$, write out $\bA_{k+1}$ as
$$
\bA_{k+1} = \begin{bmatrix}
\bA_k & \bb \\
\bb^\top & d
\end{bmatrix}.
$$
We note that $\bA_k$ is PD. 
From the inductive hypothesis, it admits a Cholesky decomposition $\bA_k$ given by $\bA_k = \bR_k^\top\bR_k$. We can construct the upper triangular matrix 
$$
\bR=\begin{bmatrix}
\bR_k & \br\\
0 & s
\end{bmatrix},
$$
such that it follows that
$$
\bR_{k+1}^\top\bR_{k+1} = 
\begin{bmatrix}
\bR_k^\top\bR_k & \bR_k^\top \br\\
\br^\top \bR_k & \br^\top\br+s^2
\end{bmatrix}.
$$
Therefore, if we can prove $\bR_{k+1}^\top \bR_{k+1} = \bA_{k+1}$ is the Cholesky decomposition of $\bA_{k+1}$ (which requires the value $s$ to be positive), then we complete the proof. That is, we need to prove
$$
\begin{aligned}
\bb &= \bR_k^\top \br, \\
d &= \br^\top\br+s^2.
\end{aligned}
$$
Since $\bR_k$ is nonsingular, we have a unique solution for $\br$ and $s$ that 
$$
\begin{aligned}
\br &= \bR_k^{-\top}\bb, \\
s &= \sqrt{d - \br^\top\br} = \sqrt{d - \bb^\top\bA_k^{-1}\bb},
\end{aligned}
$$
where we assume $s$ is nonnegative. However, we need to further prove that $s$ is not only nonnegative, but also positive. Since $\bA_k$ is PD, from Sylvester's criterion (see \citet{lu2022matrix}), and the fact that if matrix $\bM$ has a block formulation: $\bM=\begin{bmatrix}
\bA & \bB \\
\bC & \bD 
\end{bmatrix}$, then $\det(\bM) = \det(\bA)\det(\bD-\bC\bA^{-1}\bB)$, we have
$$
\det(\bA_{k+1}) = \det(\bA_k)\det(d- \bb^\top\bA_k^{-1}\bb) =  \det(\bA_k)(d- \bb^\top\bA_k^{-1}\bb)>0.
$$
Because $ \det(\bA_k)>0$, we then obtain that $(d- \bb^\top\bA_k^{-1}\bb)>0$, and this implies $s>0$.
We complete the proof.
\end{proof}
%

\index{Uniqueness}
\begin{corollary}[Uniqueness of Cholesky Decomposition\index{Uniqueness}]\label{corollary:unique-cholesky-main}
The Cholesky decomposition $\bA=\bR^\top\bR$ for any positive definite matrix $\bA\in \real^{n\times n}$ is unique.
\end{corollary}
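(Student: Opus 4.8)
The plan is to prove uniqueness by assuming two Cholesky factorizations $\bA = \bR_1^\top\bR_1 = \bR_2^\top\bR_2$, where $\bR_1$ and $\bR_2$ are both upper triangular with strictly positive diagonal entries, and then show that $\bR_1 = \bR_2$. The existence of such a factorization is guaranteed by Theorem~\ref{theorem:cholesky-factor-exist}, so the only thing remaining is to pin down that the factor is forced. The central algebraic manipulation I would carry out is to rearrange the equality $\bR_1^\top\bR_1 = \bR_2^\top\bR_2$ into a single equation whose left-hand side is lower triangular and whose right-hand side is upper triangular, forcing both sides to be diagonal.

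First I would note that each $\bR_i$ is nonsingular: an upper triangular matrix with positive (hence nonzero) diagonal entries has determinant equal to the product of its diagonal, which is nonzero, so $\bR_i^{-1}$ exists. This lets me multiply the equation $\bR_1^\top\bR_1 = \bR_2^\top\bR_2$ on the left by $\bR_2^{-\top}$ and on the right by $\bR_1^{-1}$ to obtain
$$
\bR_2^{-\top}\bR_1^\top = \bR_2\bR_1^{-1}.
$$
Next I would invoke the standard closure facts about triangular matrices: the product of two upper triangular matrices is upper triangular, the inverse of an upper triangular matrix is upper triangular, and the transpose of an upper triangular matrix is lower triangular. The right-hand side $\bR_2\bR_1^{-1}$ is a product of upper triangular matrices, hence upper triangular. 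The left-hand side $\bR_2^{-\top}\bR_1^\top = (\bR_2^{-1})^\top (\bR_1)^\top$ is a product of lower triangular matrices, hence lower triangular. A matrix that is simultaneously upper and lower triangular must be diagonal; call it $\bD$.

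Then I would extract the diagonal information. From $\bR_2\bR_1^{-1} = \bD$ we get $\bR_1 = \bD^{-1}\bR_2$, equivalently $\bR_2 = \bD\bR_1$; and from the lower triangular side, taking the transpose of $\bR_2^{-\top}\bR_1^\top = \bD$ gives $\bR_1\bR_2^{-1} = \bD^\top = \bD$ (a diagonal matrix equals its transpose). Comparing $\bR_2 = \bD\bR_1$ with $\bR_1 = \bD\bR_2$ shows $\bD^2 = \bI$, so each diagonal entry of $\bD$ satisfies $d_{ii}^2 = 1$, i.e.\ $d_{ii} = \pm 1$. The final step is to use the positivity of the diagonals: since $\bR_2 = \bD\bR_1$, the $i$-th diagonal entry satisfies $(\bR_2)_{ii} = d_{ii}(\bR_1)_{ii}$, and because both $(\bR_1)_{ii}>0$ and $(\bR_2)_{ii}>0$, we must have $d_{ii}=+1$ for every $i$. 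Hence $\bD = \bI$ and $\bR_1 = \bR_2$.

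I do not anticipate a genuine obstacle here; the proof is short and structural. The one point requiring a little care—and the place I would be most careful to state explicitly—is the sign argument at the end: the triangularity argument alone only yields $\bD$ diagonal with $d_{ii}=\pm 1$, and it is precisely the hypothesis of \emph{positive} diagonal entries (not merely nonzero) that rules out the sign ambiguity and delivers $\bD = \bI$. Without that hypothesis the factorization is unique only up to a diagonal sign matrix, so I would emphasize that this is exactly where the normalization in the statement of Theorem~\ref{theorem:cholesky-factor-exist} is used.
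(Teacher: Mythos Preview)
Your proof is correct and follows essentially the same route as the paper: rearrange the equality of the two factorizations so that an upper triangular expression equals a lower triangular one, conclude the common value is diagonal, and then use the positivity of the diagonal entries to force that diagonal to be the identity. The only cosmetic difference is which side you multiply on (you form $\bR_2\bR_1^{-1}$ while the paper forms $\bR_1\bR_2^{-1}$), and you phrase the final step via $\bD^2=\bI$ whereas the paper compares the diagonal entries $r_{ii}/s_{ii}=s_{ii}/r_{ii}$ directly; both arrive at $\bR_1=\bR_2$ by the same mechanism.
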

\begin{proof}[of Corollary~\ref{corollary:unique-cholesky-main}]
Suppose the Cholesky decomposition is not unique, then we can find two decompositions such that $\bA=\bR_1^\top\bR_1 = \bR_2^\top\bR_2$, which implies
$$
\bR_1\bR_2^{-1}= \bR_1^{-\top} \bR_2^\top.
$$
From the fact that the inverse of an upper triangular matrix is also an upper triangular matrix, and the product of two upper triangular matrices is also an upper triangular matrix, \footnote{Same for lower triangular matrices: the inverse of a lower triangular matrix is also a lower triangular matrix, and the product of two lower triangular matrices is also a lower triangular matrix.} we realize that the left-side of the above equation is an upper triangular matrix and the right-side of it is a lower triangular matrix. This implies $\bR_1\bR_2^{-1}= \bR_1^{-\top} \bR_2^\top$ is a diagonal matrix, and $\bR_1^{-\top} \bR_2^\top= (\bR_1^{-\top} \bR_2^\top)^\top = \bR_2\bR_1^{-1}$.
Let $\bLambda = \bR_1\bR_2^{-1}= \bR_2\bR_1^{-1}$ be the diagonal matrix. We notice that the diagonal value of $\bLambda$ is the product of the corresponding diagonal values of $\bR_1$ and $\bR_2^{-1}$ (or $\bR_2$ and $\bR_1^{-1}$). That is, for 
$$
\bR_1=\begin{bmatrix}
r_{11} & r_{12} & \ldots & r_{1n} \\
0 & r_{22} & \ldots & r_{2n}\\
\vdots & \vdots & \ddots & \vdots\\
0 & 0 & \ldots & r_{nn}
\end{bmatrix},
\qquad 
\bR_2=
\begin{bmatrix}
s_{11} & s_{12} & \ldots & s_{1n} \\
0 & s_{22} & \ldots & s_{2n}\\
\vdots & \vdots & \ddots & \vdots\\
0 & 0 & \ldots & s_{nn}
\end{bmatrix},
$$
we have,
$$
\begin{aligned}
\bR_1\bR_2^{-1}=
\begin{bmatrix}
\frac{r_{11}}{s_{11}} & 0 & \ldots & 0 \\
0 & \frac{r_{22}}{s_{22}} & \ldots & 0\\
\vdots & \vdots & \ddots & \vdots\\
0 & 0 & \ldots & \frac{r_{nn}}{s_{nn}}
\end{bmatrix}
=
\begin{bmatrix}
\frac{s_{11}}{r_{11}} & 0 & \ldots & 0 \\
0 & \frac{s_{22}}{r_{22}} & \ldots & 0\\
\vdots & \vdots & \ddots & \vdots\\
0 & 0 & \ldots & \frac{s_{nn}}{r_{nn}}
\end{bmatrix}
=\bR_2\bR_1^{-1}.
\end{aligned}
$$ 
Since both $\bR_1$ and $\bR_2$ have positive diagonals, this implies $r_{11}=s_{11}, r_{22}=s_{22}, \ldots, r_{nn}=s_{nn}$. And $\bLambda = \bR_1\bR_2^{-1}= \bR_2\bR_1^{-1}  =\bI$.
That is, $\bR_1=\bR_2$ and this leads to a contradiction. The Cholesky decomposition is thus unique. 
\end{proof}		

\section{Eigenvalue Decomposition}\label{appendix:eigendecomp}
\index{Eigenvalue decomposition}
\begin{theoremHigh}[Eigenvalue Decomposition]\label{theorem:eigenvalue-decomposition}
Any square matrix $\bA\in \real^{n\times n}$ with linearly independent eigenvectors can be factored as 
$$
\bA = \bX\bLambda\bX^{-1},
$$
where $\bX\in\real^{n\times n}$ contains the eigenvectors of $\bA$ as its columns, and $\bLambda\in\real^{n\times n}$ is a diagonal matrix $\diag(\lambda_1, $ $\lambda_2, \ldots, \lambda_n)$ with $\lambda_1, \lambda_2, \ldots, \lambda_n$ denoting the eigenvalues of $\bA$.
\end{theoremHigh}

Eigenvalue decomposition is also known as to diagonalize the matrix $\bA$. When no eigenvalues of $\bA$ are repeated, the corresponding eigenvectors are guaranteed to be linearly independent. Then $\bA$ can be diagonalized. It is essential to emphasize that without a set of $n$ linearly independent eigenvectors, diagonalization is not feasible. 

\begin{proof}[of Theorem~\ref{theorem:eigenvalue-decomposition}]
Let $\bX=[\bx_1, \bx_2, \ldots, \bx_n]$ be the linearly independent eigenvectors of $\bA$. Clearly, we have
$$
\bA\bx_1=\lambda_1\bx_1,\qquad \bA\bx_2=\lambda_2\bx_2, \qquad \ldots, \qquad\bA\bx_n=\lambda_n\bx_n.
$$
In the matrix form, we can express this as:
$$
\bA\bX = [\bA\bx_1, \bA\bx_2, \ldots, \bA\bx_n] = [\lambda_1\bx_1, \lambda_2\bx_2, \ldots, \lambda_n\bx_n] = \bX\bLambda.
$$
Since we assume the eigenvectors are linearly independent, then $\bX$ has full rank and is invertible. We obtain
$$
\bA = \bX\bLambda \bX^{-1}.
$$
This completes the proof.
\end{proof}

We will discuss some similar forms of eigenvalue decomposition in the spectral decomposition section, where the matrix $\bA$ is required to be symmetric, and the factor $\bX$ is not only nonsingular but also orthogonal. Alternatively, the matrix $\bA$ is required to be a \textit{simple matrix}, that is, the algebraic multiplicity and geometric multiplicity are the same for $\bA$, and $\bX$ will be a trivial nonsingular matrix that may not contain the eigenvectors of $\bA$.

A matrix decomposition in the form of $\bA =\bX\bLambda\bX^{-1}$ has a nice property that we can compute the $m$-th power efficiently.

\index{$m$-th Power}
\begin{remark}[$m$-th Power]\label{remark:power-eigenvalue-decom}
The $m$-th power of $\bA$ is $\bA^m = \bX\bLambda^m\bX^{-1}$ if the matrix $\bA$ can be factored as $\bA=\bX\bLambda\bX^{-1}$.
\end{remark}

We observe that the prerequisite for the existence of the eigenvalue decomposition is the linear independence of the eigenvectors of $\bA$. This condition is inherently fulfilled under specific circumstances.
\begin{lemma}[Different Eigenvalues]\label{lemma:diff-eigenvec-decompo}
Suppose the eigenvalues $\lambda_1, \lambda_2, \ldots, \lambda_n$ of $\bA\in \real^{n\times n}$ are all distinct, the associated eigenvectors are inherently linearly independent. Put differently, any square matrix with unique eigenvalues can be diagonalized.
\end{lemma}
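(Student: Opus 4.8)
The plan is to establish the linear independence of the eigenvectors by induction on the number of eigenvectors, and then to obtain diagonalizability as an immediate corollary of the eigenvalue decomposition theorem (Theorem~\ref{theorem:eigenvalue-decomposition}). Let $\bx_1, \bx_2, \ldots, \bx_n$ denote eigenvectors associated with the distinct eigenvalues $\lambda_1, \lambda_2, \ldots, \lambda_n$, recalling that each $\bx_i \neq \bzero$ by the definition of an eigenvector. The base case is trivial: a single eigenvector $\bx_1$ is nonzero and hence constitutes a linearly independent set of one vector.

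For the inductive step, I would assume that $\{\bx_1, \ldots, \bx_k\}$ is linearly independent for some $k<n$, and consider an arbitrary dependence relation $\sum_{i=1}^{k+1} c_i \bx_i = \bzero$. The key device is to extract two equations from this single relation. Applying $\bA$ and using $\bA\bx_i = \lambda_i \bx_i$ gives $\sum_{i=1}^{k+1} c_i \lambda_i \bx_i = \bzero$, while multiplying the original relation through by $\lambda_{k+1}$ gives $\sum_{i=1}^{k+1} c_i \lambda_{k+1} \bx_i = \bzero$. Subtracting the second from the first annihilates the $(k+1)$-th term and leaves $\sum_{i=1}^{k} c_i (\lambda_i - \lambda_{k+1}) \bx_i = \bzero$.

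Invoking the inductive hypothesis that $\{\bx_1, \ldots, \bx_k\}$ is linearly independent forces $c_i (\lambda_i - \lambda_{k+1}) = 0$ for each $i \leq k$. This is precisely the moment where distinctness of the eigenvalues is used: since $\lambda_i \neq \lambda_{k+1}$ for $i \leq k$, the factor $\lambda_i - \lambda_{k+1}$ is nonzero, so $c_i = 0$ for all $i \leq k$. The original relation then collapses to $c_{k+1} \bx_{k+1} = \bzero$, and since $\bx_{k+1} \neq \bzero$ we conclude $c_{k+1} = 0$ as well. Thus $\{\bx_1, \ldots, \bx_{k+1}\}$ is linearly independent, completing the induction. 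With all $n$ eigenvectors shown to be linearly independent, the matrix $\bX = [\bx_1, \ldots, \bx_n]$ has full rank and is invertible, so Theorem~\ref{theorem:eigenvalue-decomposition} immediately yields the diagonalization $\bA = \bX \bLambda \bX^{-1}$.

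I do not expect a genuine obstacle here, as the argument is elementary; the only points demanding care are bookkeeping ones. First, I must keep track that every eigenvector is genuinely nonzero (guaranteed by definition) so that the final collapse $c_{k+1}\bx_{k+1} = \bzero$ truly forces $c_{k+1} = 0$. Second, I must be careful to invoke distinctness exactly where the factors $\lambda_i - \lambda_{k+1}$ are divided out, rather than earlier. An alternative route—taking a minimal linearly dependent subset and deriving a shorter dependence by the same subtraction trick—would yield the same conclusion, but the inductive formulation is cleaner to present.
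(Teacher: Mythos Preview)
Your proposal is correct and uses essentially the same device as the paper: apply $\bA$ to a putative dependence relation, subtract a scalar multiple of the original relation to kill one term, and use distinctness of the eigenvalues to force the remaining coefficients to vanish. The only difference is presentational---the paper phrases it as a single contradiction (writing $\bx_n$ as a combination of $\bx_1,\ldots,\bx_{n-1}$) rather than a formal induction on $k$---and your inductive formulation is arguably cleaner since it makes explicit why one may conclude the coefficients vanish at each stage.
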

\begin{proof}[of Lemma~\ref{lemma:diff-eigenvec-decompo}]
Suppose the eigenvalues $\lambda_1, \lambda_2, \ldots, \lambda_n$ are all different, and the eigenvectors $\bx_1,\bx_2, \ldots, \bx_n$ are linearly dependent. That is, there exists a nonzero vector $\bc = [c_1,c_2,\ldots,c_{n-1}]^\top$ satisfying
$$
\bx_n = \sum_{i=1}^{n-1} c_i\bx_{i}. 
$$
Then we have 
$$
\begin{aligned}
\bA \bx_n &= \bA (\sum_{i=1}^{n-1} c_i\bx_{i}) \\
&=c_1\lambda_1 \bx_1 + c_2\lambda_2 \bx_2 + \ldots + c_{n-1}\lambda_{n-1}\bx_{n-1}.
\end{aligned}
$$
and 
$$
\begin{aligned}
\bA \bx_n &= \lambda_n\bx_n\\
&=\lambda_n (c_1\bx_1 +c_2\bx_2+\ldots +c_{n-1} \bx_{n-1}).
\end{aligned}
$$
Combining  two equations above, we have 
$$
\sum_{i=1}^{n-1} (\lambda_n - \lambda_i)c_i \bx_i = \bzero .
$$
This leads to a contradiction since $\lambda_n \neq \lambda_i$ for all $i\in \{1,2,\ldots,n-1\}$, from which the result follows.
\end{proof}

%
%

\section{Spectral Decomposition}\label{appendix:spectraldecomp}\label{section:existence-of-spectral}
\index{Spectral decomposition}
\begin{theoremHigh}[Spectral Decomposition]\label{theorem:spectral_theorem}
A real matrix $\bA \in \real^{n\times n}$ is symmetric if and only if there exists an orthogonal matrix $\bQ$ and a diagonal matrix $\bLambda$ such that
\begin{equation*}
\bA = \bQ \bLambda \bQ^\top,
\end{equation*}
where the columns of $\bQ = [\bq_1, \bq_2, \ldots, \bq_n]$ are eigenvectors of $\bA$ and are mutually orthonormal, and the entries of $\bLambda=\diag(\lambda_1, \lambda_2, \ldots, \lambda_n)$ are the corresponding eigenvalues of $\bA$, which are real. And the rank of $\bA$ is equal to the number of nonzero eigenvalues. This is known as the \textit{spectral decomposition} or \textit{spectral theorem} for real symmetric matrix $\bA$. Specifically, we have the following properties:
\begin{enumerate}
\item A symmetric matrix has only \textbf{real eigenvalues};

\item The eigenvectors are orthogonal such that they can be chosen \textbf{orthonormal} by normalization;

\item The rank of $\bA$ is equal to the number of nonzero eigenvalues;

\item If the eigenvalues are distinct, the eigenvectors are linearly independent.
\end{enumerate}
\end{theoremHigh}

The above decomposition is called the spectral decomposition for real symmetric matrices and is often known as the \textit{spectral theorem}. We prove the theorem in several steps.

\begin{tcolorbox}[title={Symmetric Matrix Property 1 of 4},colback=\mdframecolorTheorem]
\begin{lemma}[Real Eigenvalues]\label{lemma:real-eigenvalues-spectral}
The eigenvalues of any symmetric matrix are all real. 
\end{lemma}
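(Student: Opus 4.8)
The plan is to prove the statement by temporarily allowing the eigenvalue and its eigenvector to be complex, and then showing that reality is forced by the symmetry of $\bA$. Concretely, suppose $\lambda \in \complex$ is an eigenvalue of the real symmetric matrix $\bA$ with a (possibly complex) nonzero eigenvector $\bu \in \complex^n$, so that $\bA\bu = \lambda \bu$. The key object to examine is the scalar $\bu^\ast \bA \bu$, where $\bu^\ast$ denotes the conjugate transpose. I would evaluate this quantity in two different ways and compare them.

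First I would compute it directly from the eigen-relation: $\bu^\ast \bA \bu = \bu^\ast (\lambda \bu) = \lambda\, \bu^\ast \bu = \lambda \norm{\bu}^2$, where $\norm{\bu}^2 = \bu^\ast \bu$ is a strictly positive real number because $\bu \neq \bzero$. Second, I would show that $\bu^\ast \bA \bu$ is in fact real. Since it is a $1\times 1$ scalar, it equals its own conjugate transpose; taking that conjugate transpose and using $\bA^\ast = \bA^\top$ (as $\bA$ is real) followed by $\bA^\top = \bA$ (as $\bA$ is symmetric) returns the same scalar, so $\overline{\bu^\ast \bA \bu} = \bu^\ast \bA \bu$, forcing it to be real. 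Combining the two computations gives $\lambda \norm{\bu}^2 \in \real$, and dividing by the positive real $\norm{\bu}^2$ yields $\lambda \in \real$.

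The main subtlety---and essentially the only obstacle---is conceptual rather than computational: although the paper works throughout with real matrices, the characteristic polynomial can a priori have complex roots, so to prove the eigenvalues are real I must genuinely step into $\complex^n$ and carry the argument with the Hermitian (conjugate-transpose) inner product rather than the ordinary transpose. Care is needed to distinguish $\bu^\ast$ from $\bu^\top$ throughout, since the positivity $\bu^\ast \bu > 0$ (which would fail for $\bu^\top \bu$ when $\bu$ is complex) is exactly what makes the final division legitimate. Everything else reduces to the two elementary identities $\bA^\ast = \bA^\top$ and $\bA^\top = \bA$ for a real symmetric matrix.
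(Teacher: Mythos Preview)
Your proposal is correct and essentially identical to the paper's argument: both work in $\complex^n$, form the scalar $\bu^\ast\bA\bu=\bar{\bx}^\top\bA\bx$, and exploit that $\bA$ is real symmetric together with the positivity of $\bu^\ast\bu$ to force $\lambda=\bar{\lambda}$. The only cosmetic difference is that the paper computes the scalar two ways to obtain $\lambda\,\bar{\bx}^\top\bx=\bar{\lambda}\,\bar{\bx}^\top\bx$ directly, whereas you phrase the second computation as ``$\bu^\ast\bA\bu$ equals its own conjugate transpose, hence is real''---these are the same step.
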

\end{tcolorbox}
\begin{proof}[of Lemma~\ref{lemma:real-eigenvalues-spectral}]
Suppose eigenvalue $\lambda$ of a symmetric matrix $\bA$ is a complex number $\lambda=a+ib$, where $a,b$ are real. Its complex conjugate is $\bar{\lambda}=a-ib$. Same for the corresponding complex eigenvector $\bx = \bc+i\bd$ and its complex conjugate $\bar{\bx}=\bc-i\bd$, where $\bc, \bd$ are real vectors. We then have the following property
$$
\bA \bx = \lambda \bx\qquad   \underrightarrow{\text{ leads to }}\qquad  \bA \bar{\bx} = \bar{\lambda} \bar{\bx}\qquad   \underrightarrow{\text{ transpose to }}\qquad  \bar{\bx}^\top \bA =\bar{\lambda} \bar{\bx}^\top.
$$
We take the dot product of the first equation with $\bar{\bx}$ and the last equation with $\bx$:
$$
\bar{\bx}^\top \bA \bx = \lambda \bar{\bx}^\top \bx, \qquad \text{and } \qquad \bar{\bx}^\top \bA \bx = \bar{\lambda}\bar{\bx}^\top \bx.
$$
Then we have the equality $\lambda\bar{\bx}^\top \bx = \bar{\lambda} \bar{\bx}^\top\bx$. Since $\bar{\bx}^\top\bx = (\bc-i\bd)^\top(\bc+i\bd) = \bc^\top\bc+\bd^\top\bd$ is a real number. Therefore, the imaginary part of $\lambda$ is zero and $\lambda$ is real.
\end{proof}

\begin{tcolorbox}[title={Symmetric Matrix Property 2 of 4},colback=\mdframecolorTheorem]
\begin{lemma}[Orthogonal Eigenvectors]\label{lemma:orthogonal-eigenvectors}
The eigenvectors  corresponding to distinct eigenvalues of any symmetric matrix are orthogonal. Therefore, we can normalize eigenvectors to make them orthonormal since $\bA\bx = \lambda \bx \underrightarrow{\text{ leads to } } \bA\frac{\bx}{\norm{\bx}} = \lambda \frac{\bx}{\norm{\bx}}$, which corresponds to the same eigenvalue.
\end{lemma}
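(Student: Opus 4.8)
The plan is to exploit the defining symmetry $\bA^\top=\bA$ together with the fact, already established in Lemma~\ref{lemma:real-eigenvalues-spectral}, that all eigenvalues of $\bA$ are real (so that ordinary real inner products suffice and no complex conjugation is needed). First I would fix two eigenpairs $(\lambda_1,\bx_1)$ and $(\lambda_2,\bx_2)$ with $\lambda_1\neq\lambda_2$, meaning $\bA\bx_1=\lambda_1\bx_1$ and $\bA\bx_2=\lambda_2\bx_2$, and the goal is to show $\bx_1^\top\bx_2=0$.

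The crux is to evaluate the scalar $\bx_1^\top\bA\bx_2$ in two different ways. On the one hand, applying $\bA$ to $\bx_2$ directly gives $\bx_1^\top\bA\bx_2=\lambda_2\,\bx_1^\top\bx_2$. On the other hand, using the symmetry $\bA^\top=\bA$ and then the eigen-relation for $\bx_1$, we have $\bx_1^\top\bA\bx_2=(\bA\bx_1)^\top\bx_2=(\lambda_1\bx_1)^\top\bx_2=\lambda_1\,\bx_1^\top\bx_2$. Equating the two expressions yields
$$
(\lambda_1-\lambda_2)\,\bx_1^\top\bx_2 = 0.
$$
Since by hypothesis $\lambda_1\neq\lambda_2$, the factor $(\lambda_1-\lambda_2)$ is a nonzero real scalar, and therefore $\bx_1^\top\bx_2=0$, which is exactly the claimed orthogonality.

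Finally, I would dispatch the normalization remark, which is essentially immediate: if $\bA\bx=\lambda\bx$ with $\bx\neq\bzero$, then scaling by the positive scalar $1/\norm{\bx}$ gives $\bA\big(\bx/\norm{\bx}\big)=\lambda\big(\bx/\norm{\bx}\big)$, so the normalized vector is an eigenvector for the \emph{same} eigenvalue and has unit length. Combining this with the orthogonality just proved shows that eigenvectors associated with distinct eigenvalues can be taken orthonormal.

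There is no real obstacle here; the proof is a two-line computation once the symmetry is invoked. The only point requiring care is conceptual rather than technical: one must ensure the entire argument lives over $\real$, which is guaranteed because Lemma~\ref{lemma:real-eigenvalues-spectral} has already ruled out complex eigenvalues and hence lets us take real eigenvectors and use the plain transpose $\bx_1^\top\bA\bx_2$ rather than a conjugate-transpose form. I would also flag that this lemma only handles \emph{distinct} eigenvalues; the case of repeated eigenvalues (choosing an orthonormal basis within an eigenspace) is a separate matter handled elsewhere in assembling the full spectral theorem.
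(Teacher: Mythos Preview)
Your proposal is correct and follows essentially the same approach as the paper: both compute the scalar $\bx_1^\top\bA\bx_2$ in two ways---once using $\bA\bx_2=\lambda_2\bx_2$ and once using symmetry together with $\bA\bx_1=\lambda_1\bx_1$---and conclude $\bx_1^\top\bx_2=0$ from $\lambda_1\neq\lambda_2$. Your write-up is slightly more explicit about why the argument stays over $\real$ and about the normalization step, but the underlying idea is identical.
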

\end{tcolorbox}
\begin{proof}[of Lemma~\ref{lemma:orthogonal-eigenvectors}]
Suppose eigenvalues $\lambda_1, \lambda_2$ correspond to eigenvectors $\bx_1, \bx_2$, satisfying $\bA\bx_1=\lambda \bx_1$ and $\bA\bx_2 = \lambda_2\bx_2$. We have the following equality:
$$
\bA\bx_1=\lambda_1 \bx_1 \leadto \bx_1^\top \bA =\lambda_1 \bx_1^\top \leadto \bx_1^\top \bA \bx_2 =\lambda_1 \bx_1^\top\bx_2,
$$
and 
$$
\bA\bx_2 = \lambda_2\bx_2 \leadto  \bx_1^\top\bA\bx_2 = \lambda_2\bx_1^\top\bx_2,
$$
which implies $\lambda_1 \bx_1^\top\bx_2=\lambda_2\bx_1^\top\bx_2$. Since eigenvalues $\lambda_1\neq \lambda_2$, the eigenvectors are orthogonal.
\end{proof}

In   Lemma~\ref{lemma:orthogonal-eigenvectors} above, we establish the orthogonality of eigenvectors associated with distinct eigenvalues of symmetric matrices. More generally, we prove the important theorem that eigenvectors corresponding to distinct eigenvalues of any matrix are linearly independent. 
\begin{theorem}[Independent Eigenvector Theorem]\label{theorem:independent-eigenvector-theorem}
If a matrix $\bA\in \real^{n\times n}$ has $k$ distinct eigenvalues, then any set of $k$ corresponding (nonzero) eigenvectors are linearly independent.
\end{theorem}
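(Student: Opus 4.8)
The plan is to argue by induction on the number $k$ of distinct eigenvalues, mirroring and generalizing the argument already used for Lemma~\ref{lemma:diff-eigenvec-decompo}, which is precisely the special case $k=n$. The base case $k=1$ is immediate: a single nonzero eigenvector constitutes a linearly independent set. So I would assume the statement holds for any collection of $k-1$ eigenvectors attached to $k-1$ distinct eigenvalues, and then establish it for $k$.

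For the inductive step, let $\lambda_1, \ldots, \lambda_k$ be distinct eigenvalues with nonzero eigenvectors $\bx_1, \ldots, \bx_k$, and suppose a dependence relation
$$
c_1 \bx_1 + c_2 \bx_2 + \ldots + c_k \bx_k = \bzero
$$
holds. The key manoeuvre is to produce a shorter relation among $\bx_1, \ldots, \bx_{k-1}$ to which the inductive hypothesis applies. I would do this by applying $\bA$ to the relation (using $\bA \bx_i = \lambda_i \bx_i$) to obtain $\sum_i c_i \lambda_i \bx_i = \bzero$, and separately scaling the original relation by $\lambda_k$ to get $\sum_i c_i \lambda_k \bx_i = \bzero$. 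Subtracting the two annihilates the $\bx_k$ term and leaves
$$
\sum_{i=1}^{k-1} c_i (\lambda_i - \lambda_k) \bx_i = \bzero.
$$

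By the inductive hypothesis the vectors $\bx_1, \ldots, \bx_{k-1}$ are linearly independent, so every coefficient $c_i(\lambda_i - \lambda_k)$ vanishes; since the eigenvalues are distinct, $\lambda_i - \lambda_k \neq 0$ for $i < k$, forcing $c_1 = \ldots = c_{k-1} = 0$. Feeding this back into the original relation gives $c_k \bx_k = \bzero$, and as $\bx_k \neq \bzero$ we conclude $c_k = 0$ as well. Thus the only dependence relation is trivial, completing the induction.

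There is no deep obstacle here; the proof is essentially algebraic bookkeeping. The one point demanding care, and the place where the distinctness hypothesis is genuinely used, is the elimination step: I must ensure that subtracting the two scaled relations removes exactly the $\bx_k$ term while keeping the remaining coefficients in the factored form $c_i(\lambda_i - \lambda_k)$, so that distinctness can be invoked to cancel the eigenvalue factors. An alternative, non-inductive route would mimic the contradiction argument of Lemma~\ref{lemma:diff-eigenvec-decompo}, assuming one eigenvector is a combination of the others and deriving $\sum (\lambda_k - \lambda_i) c_i \bx_i = \bzero$; but I prefer the induction because it avoids singling out a ``last'' vector and extends transparently to any $k \leq n$ rather than only to $k=n$.
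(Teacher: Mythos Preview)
Your proposal is correct and takes essentially the same approach as the paper: induction on the number of distinct eigenvalues, with the key step being to apply $\bA$ to a dependence relation and subtract a suitably scaled copy to eliminate one eigenvector, then invoke the inductive hypothesis and the distinctness of eigenvalues. The only cosmetic differences are that the paper starts its base case at $k=2$ and frames the inductive step as a contradiction (assuming $\bx_{j+1}$ is a nontrivial combination of the earlier eigenvectors), whereas you start at $k=1$ and argue directly that any dependence relation must be trivial; your version is slightly cleaner but not a genuinely different route.
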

\begin{proof}[of Theorem~\ref{theorem:independent-eigenvector-theorem}]
We will prove by induction. Firstly, we will prove that any two eigenvectors corresponding to distinct eigenvalues are linearly independent. Suppose $\bv_1$ and $\bv_2$ correspond to distinct eigenvalues $\lambda_1$ and $\lambda_2$, respectively. Suppose further there exists a nonzero vector $\bx=[x_1,x_2] \neq \bzero $ satisfying 
\begin{equation}\label{equation:independent-eigenvector-eq1}
x_1\bv_1+x_2\bv_2=\bzero.
\end{equation}
That is, $\bv_1,\bv_2$ are linearly independent.
Multiply Eq.~\eqref{equation:independent-eigenvector-eq1} on the left by $\bA$, we get
\begin{equation}\label{equation:independent-eigenvector-eq2}
x_1 \lambda_1\bv_1 + x_2\lambda_2\bv_2 = \bzero.
\end{equation}
Multiply Eq.~\eqref{equation:independent-eigenvector-eq1} on the left by $\lambda_2$, we get 
\begin{equation}\label{equation:independent-eigenvector-eq3}
x_1\lambda_2\bv_1 + x_2\lambda_2\bv_2 = \bzero.
\end{equation}
Subtract Eq.~\eqref{equation:independent-eigenvector-eq2} from Eq.~\eqref{equation:independent-eigenvector-eq3}, we find
$$
x_1(\lambda_2-\lambda_1)\bv_1 = \bzero.
$$
Since $\lambda_2\neq \lambda_1$ and $\bv_1\neq \bzero$, we must have $x_1=0$. From Eq.~\eqref{equation:independent-eigenvector-eq1} and  $\bv_2\neq \bzero$, we must also have $x_2=0$, which arrives at a contradiction. Thus, $\bv_1$ and $\bv_2$ are linearly independent.

Now, suppose any $j<k$ eigenvectors are linearly independent, if we could prove that any $j+1$ eigenvectors are also linearly independent, we finish the proof. Suppose $\bv_1, \bv_2, \ldots, \bv_j$ are linearly independent, and $\bv_{j+1}$ is dependent on the first $j$ eigenvectors. That is, there exists a nonzero vector $\bx=[x_1,x_2,\ldots, x_{j}]\neq \bzero$ such that 
\begin{equation}\label{equation:independent-eigenvector-zero}
\bv_{j+1}=	x_1\bv_1+x_2\bv_2+\ldots+x_j\bv_j .
\end{equation}
Suppose the $j+1$ eigenvectors correspond to distinct eigenvalues $\lambda_1,\lambda_2,\ldots,\lambda_j,\lambda_{j+1}$.
Multiply Eq.~\eqref{equation:independent-eigenvector-zero} on the left by $\bA$, we get
\begin{equation}\label{equation:independent-eigenvector-zero2}
\lambda_{j+1} \bv_{j+1} = x_1\lambda_1\bv_1+x_2\lambda_2\bv_2+\ldots+x_j \lambda_j\bv_j .
\end{equation}
Multiply Eq.~\eqref{equation:independent-eigenvector-zero} on the left by $\lambda_{j+1}$, we get
\begin{equation}\label{equation:independent-eigenvector-zero3}
\lambda_{j+1} \bv_{j+1} = x_1\lambda_{j+1}\bv_1+x_2\lambda_{j+1}\bv_2+\ldots+x_j \lambda_{j+1}\bv_j .
\end{equation}
Subtract Eq.~\eqref{equation:independent-eigenvector-zero3} from Eq.~\eqref{equation:independent-eigenvector-zero2}, we find
$$
x_1(\lambda_{j+1}-\lambda_1)\bv_1+x_2(\lambda_{j+1}-\lambda_2)\bv_2+\ldots+x_j (\lambda_{j+1}-\lambda_j)\bv_j = \bzero. 
$$
From assumption, $\lambda_{j+1} \neq \lambda_i$ for all $i\in \{1,2,\ldots,j\}$, and $\bv_i\neq \bzero$ for all $i\in \{1,2,\ldots,j\}$. We must have $x_1=x_2=\ldots=x_j=0$, which leads to a contradiction. Then $\bv_1,\bv_2,\ldots,\bv_j,\bv_{j+1}$ are linearly independent. This completes the proof.
\end{proof}

A direct consequence of the  above theorem is as follows:
\begin{corollary}[Independent Eigenvector Theorem, CNT.]\label{theorem:independent-eigenvector-theorem-basis}
If a matrix $\bA\in \real^{n\times n}$ has $n$ distinct eigenvalues, then any set of $n$ corresponding eigenvectors form a basis for $\real^n$.
\end{corollary}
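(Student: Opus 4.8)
The plan is to derive this corollary almost immediately from the Independent Eigenvector Theorem (Theorem~\ref{theorem:independent-eigenvector-theorem}), which has just been established. First I would invoke that theorem with $k=n$: since $\bA\in\real^{n\times n}$ is assumed to have $n$ distinct eigenvalues $\lambda_1,\lambda_2,\ldots,\lambda_n$, any set of $n$ corresponding nonzero eigenvectors $\bx_1,\bx_2,\ldots,\bx_n$ is linearly independent. This is the substantive content, and it is already available for free, so no induction or contradiction argument needs to be redone here.

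The remaining step is to upgrade ``linearly independent'' to ``basis.'' Recalling the Basis and Dimension definition from the Background chapter, a set of vectors is a basis of a subspace precisely when it is linearly independent and spans that subspace, and the dimension of $\real^n$ is $n$. I would therefore appeal to the standard fact that in an $n$-dimensional space, any linearly independent set of exactly $n$ vectors is automatically spanning, hence a basis. Concretely, I would argue that if the $n$ independent eigenvectors failed to span $\real^n$, one could adjoin a vector outside their span to produce $n+1$ linearly independent vectors in $\real^n$, contradicting the fact that every basis of $\real^n$ has exactly $n$ elements (so no independent set can exceed size $n$).

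There is essentially no hard part to this corollary; the only thing to be careful about is citing the right ingredients in the right order—linear independence from Theorem~\ref{theorem:independent-eigenvector-theorem}, then the maximality-of-independent-sets fact tied to $\dim(\real^n)=n$—rather than attempting to re-prove spanning directly. If one wanted a fully self-contained treatment, the single supporting lemma worth stating is that a linearly independent set of size $\dim(\mathcalV)$ in a finite-dimensional space $\mathcalV$ is a basis, but since the excerpt already fixes $\dim(\real^n)=n$ by convention, I would simply cite it inline and conclude that $\{\bx_1,\bx_2,\ldots,\bx_n\}$ forms a basis for $\real^n$, which completes the proof.
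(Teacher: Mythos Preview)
Your proposal is correct and matches the paper's treatment: the paper does not write out a separate proof but simply flags the corollary as ``a direct consequence of the above theorem,'' i.e., Theorem~\ref{theorem:independent-eigenvector-theorem} applied with $k=n$, together with the fact that $n$ linearly independent vectors in $\real^n$ form a basis. Your write-up just makes those two implicit steps explicit, so there is nothing to add or change.
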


\begin{tcolorbox}[title={Symmetric Matrix Property 3 of 4},colback=\mdframecolorTheorem]
\begin{lemma}[Orthonormal Eigenvectors for Duplicate Eigenvalue]\label{lemma:eigen-multiplicity}
If $\bA$ has a duplicate eigenvalue $\lambda_i$ with multiplicity $k\geq 2$, then there exist $k$ orthonormal eigenvectors corresponding to $\lambda_i$.
\end{lemma}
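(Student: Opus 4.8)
The plan is to prove the slightly more general statement valid for every multiplicity $k\geq 1$ (where $k=1$ reduces to the mere existence of a unit eigenvector), and to induct on the size $n$ of the symmetric matrix $\bA$ via a deflation argument that peels off one eigenvector at a time while preserving symmetry. The base case $n=1$ is immediate. For the inductive step I would first invoke Lemma~\ref{lemma:real-eigenvalues-spectral} to guarantee that $\lambda_i$, being an eigenvalue of the symmetric $\bA$, is real, so it admits a real unit eigenvector $\bq_1$ with $\bA\bq_1=\lambda_i\bq_1$ and $\norm{\bq_1}=1$.

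Next I would extend $\bq_1$ to an orthonormal basis of $\real^n$, assembling the vectors into an orthogonal matrix $\bQ_1=[\bq_1,\bU]$ where the columns of $\bU\in\real^{n\times(n-1)}$ complete the basis. The central computation is to examine $\bQ_1^\top\bA\bQ_1$: its first column equals $\bQ_1^\top(\lambda_i\bq_1)=\lambda_i\bQ_1^\top\bq_1$, which has $\lambda_i$ in its top entry and zeros below by orthonormality. Crucially, because $\bA$ is symmetric the matrix $\bQ_1^\top\bA\bQ_1$ is symmetric as well, so the first row to the right of the diagonal also vanishes. This forces the block-diagonal form
$$
\bQ_1^\top\bA\bQ_1 = \begin{bmatrix} \lambda_i & \bzero^\top \\ \bzero & \bB \end{bmatrix},
$$
where $\bB\in\real^{(n-1)\times(n-1)}$ is again symmetric.

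The multiplicity bookkeeping is the decisive step. Since similarity preserves the characteristic polynomial, $\det(t\bI_n-\bA)=(t-\lambda_i)\det(t\bI_{n-1}-\bB)$, and therefore $\lambda_i$ occurs as an eigenvalue of $\bB$ with multiplicity exactly $k-1$. Applying the inductive hypothesis to the smaller symmetric matrix $\bB$ then yields $k-1$ orthonormal vectors $\bw_1,\ldots,\bw_{k-1}\in\real^{n-1}$ with $\bB\bw_j=\lambda_i\bw_j$. Lifting each one back via $\bq_{j+1}:=\bQ_1[0;\bw_j]$ produces a vector satisfying $\bA\bq_{j+1}=\lambda_i\bq_{j+1}$; since $\bQ_1$ is orthogonal it preserves inner products, so $\{\bq_2,\ldots,\bq_k\}$ are orthonormal and each is orthogonal to $\bq_1$ (whose lifted representative is $[1;\bzero]$). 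Collecting $\bq_1,\bq_2,\ldots,\bq_k$ gives the required $k$ orthonormal eigenvectors for $\lambda_i$.

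The main obstacle is less the algebra than keeping the deflation logically airtight: I must confirm that the off-diagonal block vanishes purely by symmetry of $\bA$ (not merely from the single eigenvector relation), and that the algebraic multiplicity of $\lambda_i$ drops by \emph{exactly} one when passing to $\bB$. The polynomial factorization above is where all the work lives, since it is precisely what guarantees the induction can be iterated without $\lambda_i$ disappearing prematurely or persisting with too large a multiplicity.
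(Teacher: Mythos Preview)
Your proposal is correct and follows essentially the same deflation argument as the paper: extend a unit eigenvector to an orthonormal basis, observe that the conjugated matrix is block-diagonal with a symmetric $(n-1)\times(n-1)$ block $\bB$ in which $\lambda_i$ has multiplicity $k-1$, and then recurse. The only cosmetic difference is that the paper iterates the construction explicitly (forming $\bP_1,\bP_2,\ldots$ that accumulate the found eigenvectors), whereas you organize the same idea as a clean induction on $n$ applied once to $\bB$; both routes are equivalent.
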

\end{tcolorbox}
\begin{proof}[of Lemma~\ref{lemma:eigen-multiplicity}]
We note that there is at least one eigenvector $\bx_{i1}$ corresponding to $\lambda_i$. And for such eigenvector $\bx_{i1}$, we can always find additional $n-1$ orthonormal vectors $\by_2, \by_3, \ldots, \by_n$ so that $\{\bx_{i1}, \by_2, \by_3, \ldots, \by_n\}$ forms an orthonormal basis in $\real^n$. Put the $\by_2, \by_3, \ldots, \by_n$ into matrix $\bY_1$ and $\{\bx_{i1}, \by_2, \by_3, \ldots, \by_n\}$ into matrix $\bP_1$:
$$
\bY_1=[\by_2, \by_3, \ldots, \by_n] \qquad \text{and} \qquad \bP_1=[\bx_{i1}, \bY_1].
$$
We then have
$$
\bP_1^\top\bA\bP_1 = \begin{bmatrix}
\lambda_i &\bzero \\
\bzero & \bY_1^\top \bA\bY_1
\end{bmatrix}.
$$
As a result, $\bA$ and $\bP_1^\top\bA\bP_1$ are similar matrices such that they have the same eigenvalues since $\bP_1$ is nonsingular (even orthogonal here). We obtain 
$$
\det(\bP_1^\top\bA\bP_1 - \lambda\bI_n) =
\footnote{By the fact that if matrix $\bM$ has a block formulation: $\bM=\begin{bmatrix}
\bA & \bB \\
\bC & \bD 
\end{bmatrix}$, then $\det(\bM) = \det(\bA)\det(\bD-\bC\bA^{-1}\bB)$.
}
(\lambda_i - \lambda )\det(\bY_1^\top \bA\bY_1 - \lambda\bI_{n-1}).
$$
If $\lambda_i$ has multiplicity $k\geq 2$, then the term $(\lambda_i-\lambda)$ occurs $k$ times in the polynomial from the determinant $\det(\bP_1^\top\bA\bP_1 - \lambda\bI_n)$, i.e., the term occurs $k-1$ times in the polynomial from $\det(\bY_1^\top \bA\bY_1 - \lambda\bI_{n-1})$. In another word, $\det(\bY_1^\top \bA\bY_1 - \lambda_i\bI_{n-1})=0$ and $\lambda_i$ is an eigenvalue of $\bY_1^\top \bA\bY_1$. 

Let $\bB=\bY_1^\top \bA\bY_1$. Since $\det(\bB-\lambda_i\bI_{n-1})=0$, the null space of $\bB-\lambda_i\bI_{n-1}$ is not empty. Suppose $(\bB-\lambda_i\bI_{n-1})\bn = \bzero$, i.e., $\bB\bn=\lambda_i\bn$ and $\bn$ is an eigenvector of $\bB$. 

From $
\bP_1^\top\bA\bP_1 = \begin{bmatrix}
\lambda_i &\bzero \\
\bzero & \bB
\end{bmatrix},
$
we have $
\bA\bP_1 
\begin{bmatrix}
z \\
\bn 
\end{bmatrix} 
= 
\bP_1
\begin{bmatrix}
\lambda_i &\bzero \\
\bzero & \bB
\end{bmatrix}
\begin{bmatrix}
z \\
\bn 
\end{bmatrix}$, where $z$ is any scalar. From the left side of this equation, we have 
\begin{equation}\label{equation:spectral-pro4-right}
\begin{aligned}
\bA\bP_1 
\begin{bmatrix}
	z \\
	\bn 
\end{bmatrix} 
&=
\begin{bmatrix}
	\lambda_i\bx_{i1}, \bA\bY_1
\end{bmatrix}
\begin{bmatrix}
	z \\
	\bn 
\end{bmatrix} \\
&=\lambda_iz\bx_{i1} + \bA\bY_1\bn.
\end{aligned}
\end{equation}
And from the right side of the equation, we have 
\begin{equation}\label{equation:spectral-pro4-left}
\begin{aligned}
\bP_1
\begin{bmatrix}
	\lambda_i &\bzero \\
	\bzero & \bB
\end{bmatrix}
\begin{bmatrix}
	z \\
	\bn 
\end{bmatrix}
&=
\begin{bmatrix}
	\bx_{i1} & \bY_1
\end{bmatrix}
\begin{bmatrix}
	\lambda_i &\bzero \\
	\bzero & \bB
\end{bmatrix}
\begin{bmatrix}
	z \\
	\bn 
\end{bmatrix}\\
&=
\begin{bmatrix}
	\lambda_i\bx_{i1} & \bY_1\bB 
\end{bmatrix}
\begin{bmatrix}
	z \\
	\bn 
\end{bmatrix}\\
&= \lambda_i z \bx_{i1} + \bY_1\bB \bn \\
&=\lambda_i z \bx_{i1} + \lambda_i \bY_1 \bn.  \qquad (\text{Since $\bB \bn=\lambda_i\bn$})\\
\end{aligned}
\end{equation}
Combining Eq.~\eqref{equation:spectral-pro4-left} and Eq.~\eqref{equation:spectral-pro4-right}, we obtain 
$$
\bA\bY_1\bn = \lambda_i\bY_1 \bn,
$$
which means $\bY_1\bn$ is an eigenvector of $\bA$ corresponding to the eigenvalue $\lambda_i$ (same eigenvalue corresponding to $\bx_{i1}$). Since $\bY_1\bn$ is a linear combination of $\by_2, \by_3, \ldots, \by_n$, which are orthonormal to $\bx_{i1}$, the $\bY_1\bn$ can be chosen to be orthonormal to $\bx_{i1}$.

To conclude, if we have one eigenvector $\bx_{i1}$ corresponding to $\lambda_i$ whose multiplicity is $k\geq 2$, we could construct the second eigenvector by choosing one vector from the null space of $(\bB-\lambda_i\bI_{n-1})$, as constructed above. Suppose now, we have constructed the second eigenvector $\bx_{i2}$, which is orthonormal to $\bx_{i1}$.  
For such eigenvectors $\bx_{i1}$ and $\bx_{i2}$, we can always find additional $n-2$ orthonormal vectors $\by_3, \by_4, \ldots, \by_n$ so that $\{\bx_{i1},\bx_{i2}, \by_3, \by_4, \ldots, \by_n\}$ forms an orthonormal basis in $\real^n$. Put the $\by_3, \by_4, \ldots, \by_n$ into matrix $\bY_2$ and $\{\bx_{i1},\bx_{i2},  \by_3, \by_4, \ldots, \by_n\}$ into matrix $\bP_2$:
$$
\bY_2=[\by_3, \by_4, \ldots, \by_n] \qquad \text{and} \qquad \bP_2=[\bx_{i1}, \bx_{i2},\bY_1].
$$
We then have
$$
\bP_2^\top\bA\bP_2 = 
\begin{bmatrix}
\lambda_i & 0 &\bzero \\
0& \lambda_i &\bzero \\
\bzero & \bzero & \bY_2^\top \bA\bY_2
\end{bmatrix}
=
\begin{bmatrix}
\lambda_i & 0 &\bzero \\
0& \lambda_i &\bzero \\
\bzero & \bzero & \bC
\end{bmatrix},
$$
where $\bC=\bY_2^\top \bA\bY_2$ such that $\det(\bP_2^\top\bA\bP_2 - \lambda\bI_n) = (\lambda_i-\lambda)^2 \det(\bC - \lambda\bI_{n-2})$. If the multiplicity of $\lambda_i$ is $k\geq 3$, $\det(\bC - \lambda_i\bI_{n-2})=0$ and the null space of $\bC - \lambda_i\bI_{n-2}$ is not empty so that we can still find a vector from null space of $\bC - \lambda_i\bI_{n-2}$ and $\bC\bn = \lambda_i \bn$. Now we can construct a vector $\begin{bmatrix}
z_1 \\
z_2\\
\bn
\end{bmatrix}\in \real^n $, where $z_1, z_2$ are any scalar values, such that 
$$
\bA\bP_2\begin{bmatrix}
z_1 \\
z_2\\
\bn
\end{bmatrix} = \bP_2 
\begin{bmatrix}
\lambda_i & 0 &\bzero \\
0& \lambda_i &\bzero \\
\bzero & \bzero & \bC
\end{bmatrix}
\begin{bmatrix}
z_1 \\
z_2\\
\bn
\end{bmatrix}.
$$
Similarly, from the left side of the above equation, we will get $\lambda_iz_1\bx_{i1} +\lambda_iz_2\bx_{i2}+\bA\bY_2\bn$. From the right side of the above equation, we will get $\lambda_iz_1\bx_{i1} +\lambda_i z_2\bx_{i2}+\lambda_i\bY_2\bn$. As a result, 
$$
\bA\bY_2\bn = \lambda_i\bY_2\bn,
$$
where $\bY_2\bn$ is an eigenvector of $\bA$ and orthogonal to $\bx_{i1}, \bx_{i2}$. And it is easy to construct the eigenvector to be orthonormal to the first two.

The process can go on, and finally, we will find $k$ orthonormal eigenvectors corresponding to $\lambda_i$. 

In fact, the dimension of the null space of $\bP_1^\top\bA\bP_1 -\lambda_i\bI_n$ is equal to the multiplicity $k$. It also follows that if the multiplicity of $\lambda_i$ is $k$, there cannot be more than $k$ orthogonal eigenvectors corresponding to $\lambda_i$. Otherwise, it will come to the conclusion that we could find more than $n$ orthogonal eigenvectors, which would lead to a contradiction.
\end{proof}

The proof of the existence of the spectral decomposition is then trivial from the lemmas above. 

For any matrix multiplication, the rank of the multiplication result is not larger than the rank of the inputs.

\begin{lemma}[Rank of $\bA\bB$]\label{lemma:rankAB}
For any matrix $\bA\in \real^{m\times n}$ and  $\bB\in \real^{n\times k}$, then the matrix multiplication $\bA\bB\in \real^{m\times k}$ has $\rank$($\bA\bB$)$\leq$min($\rank$($\bA$), $\rank$($\bB$)).
\end{lemma}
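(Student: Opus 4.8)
The plan is to establish the two inequalities $\rank(\bA\bB)\leq\rank(\bA)$ and $\rank(\bA\bB)\leq\rank(\bB)$ independently, and then combine them, since $\rank(\bA\bB)$ being bounded by each quantity is exactly bounded by their minimum. The unifying idea is that the column space of a product is contained in the column space of the left factor, while the row space of a product is contained in the row space of the right factor; each containment forces a dimension inequality, and the dimension of the column space is exactly the rank by definition.

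For the first inequality, I would write $\bB$ in terms of its columns $\bB=[\bb_1,\bb_2,\ldots,\bb_k]$, so that $\bA\bB=[\bA\bb_1,\bA\bb_2,\ldots,\bA\bb_k]$. Each column $\bA\bb_j$ is a linear combination of the columns of $\bA$ weighted by the entries of $\bb_j$, hence lies in $\cspace(\bA)$. Therefore $\cspace(\bA\bB)\subseteq\cspace(\bA)$, and taking dimensions gives $\rank(\bA\bB)=\dim\cspace(\bA\bB)\leq\dim\cspace(\bA)=\rank(\bA)$.

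For the second inequality, I would run the symmetric argument on the rows: the $i$-th row of $\bA\bB$ equals $\ba_i^\top\bB$, where $\ba_i^\top$ denotes the $i$-th row of $\bA$, so every row of $\bA\bB$ is a linear combination of the rows of $\bB$. Equivalently, applying the column-space containment already proved to the transposed product, $\cspace\big((\bA\bB)^\top\big)=\cspace(\bB^\top\bA^\top)\subseteq\cspace(\bB^\top)$. Invoking the fact recorded in the Rank definition that a matrix and its transpose have the same rank, this yields $\rank(\bA\bB)=\rank\big((\bA\bB)^\top\big)\leq\rank(\bB^\top)=\rank(\bB)$.

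The argument is essentially routine, so there is no serious obstacle; the one point deserving care is the second inequality, where the cleanest route is to reuse the first inequality applied to transposed matrices rather than to reason about row spaces from scratch. This relies only on $\rank(\bM)=\rank(\bM^\top)$, i.e.\ the equality of column rank and row rank, which is precisely the property stated in the Rank definition. Combining the two bounds then gives $\rank(\bA\bB)\leq\min(\rank(\bA),\rank(\bB))$, completing the plan.
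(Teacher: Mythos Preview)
Your proposal is correct and follows essentially the same approach as the paper: both argue that the column space of $\bA\bB$ is contained in $\cspace(\bA)$ and the row space of $\bA\bB$ is contained in the row space of $\bB$, then take dimensions. The only cosmetic difference is that you additionally recast the row-space argument via transposes and the identity $\rank(\bM)=\rank(\bM^\top)$, whereas the paper states the row-space containment directly.
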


\begin{proof}[of Lemma~\ref{lemma:rankAB}]
For matrix multiplication $\bA\bB$, we have 
\begin{itemize}
\item All rows of $\bA\bB$ are combinations of rows of $\bB$, the row space of $\bA\bB$ is a subset of the row space of $\bB$. Thus, $\rank$($\bA\bB$)$\leq$$\rank$($\bB$).

\item All columns of $\bA\bB$ are combinations of columns of $\bA$, the column space of $\bA\bB$ is a subset of the column space of $\bA$. Thus, $\rank$($\bA\bB$)$\leq$$\rank$($\bA$).
\end{itemize}

Therefore, $\rank$($\bA\bB$)$\leq$min($\rank$($\bA$), $\rank$($\bB$)).
\end{proof}
\begin{tcolorbox}[title={Symmetric Matrix Property 4 of 4},colback=\mdframecolorTheorem]
\begin{lemma}[Rank of Symmetric Matrices]\label{lemma:rank-of-symmetric}
If $\bA$ is an $n\times n$ real symmetric matrix, then rank($\bA$) =
the total number of nonzero eigenvalues of $\bA$. 
In particular, $\bA$ has full rank if and only if $\bA$ is nonsingular. Further, $\cspace(\bA)$ is the linear space spanned by the eigenvectors of $\bA$ that correspond to nonzero eigenvalues.
\end{lemma}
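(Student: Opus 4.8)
The plan is to leverage the spectral decomposition (Theorem~\ref{theorem:spectral_theorem}), which guarantees that any real symmetric $\bA\in\real^{n\times n}$ can be written as $\bA=\bQ\bLambda\bQ^\top$, where $\bQ=[\bq_1,\ldots,\bq_n]$ is orthogonal with the $\bq_i$ being orthonormal eigenvectors and $\bLambda=\diag(\lambda_1,\ldots,\lambda_n)$ collects the (real) eigenvalues. Suppose that exactly $r$ of the eigenvalues are nonzero; after reordering, say $\lambda_1,\ldots,\lambda_r\neq 0$ and $\lambda_{r+1}=\ldots=\lambda_n=0$. The two claims to establish are: first, $\rank(\bA)=r$; and second, $\cspace(\bA)=\text{span}\{\bq_1,\ldots,\bq_r\}$. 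I expect the rank claim to follow quickly once I expand $\bA$ as a sum of rank-one pieces, and the column-space identification to require slightly more care in pinning down both inclusions.

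First I would write $\bA$ in its outer-product form $\bA=\sum_{i=1}^{n}\lambda_i\bq_i\bq_i^\top=\sum_{i=1}^{r}\lambda_i\bq_i\bq_i^\top$, since the zero-eigenvalue terms vanish. For the rank, one clean route is to invoke orthogonal invariance: because $\bQ$ is orthogonal (hence nonsingular), multiplying by $\bQ$ or $\bQ^\top$ does not change rank, so $\rank(\bA)=\rank(\bQ\bLambda\bQ^\top)=\rank(\bLambda)$. A diagonal matrix has rank equal to its number of nonzero diagonal entries, which is exactly $r$, giving $\rank(\bA)=r$. The fact that rank is preserved under multiplication by a nonsingular matrix can be defended via Lemma~\ref{lemma:rankAB}: one direction gives $\rank(\bQ\bLambda\bQ^\top)\leq\rank(\bLambda)$, and applying the same lemma to $\bLambda=\bQ^\top\bA\bQ$ gives the reverse inequality, so equality holds. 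This settles the first claim and simultaneously yields that $\bA$ has full rank iff $r=n$ iff all eigenvalues are nonzero iff $\bA$ is nonsingular (by the equivalence list for nonsingularity).

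For the column space, I would argue both inclusions. For $\cspace(\bA)\subseteq\text{span}\{\bq_1,\ldots,\bq_r\}$: for any $\bx$, the product $\bA\bx=\sum_{i=1}^{r}\lambda_i(\bq_i^\top\bx)\bq_i$ is manifestly a linear combination of $\bq_1,\ldots,\bq_r$, so every vector in the range lies in that span. For the reverse inclusion $\text{span}\{\bq_1,\ldots,\bq_r\}\subseteq\cspace(\bA)$: since $\lambda_i\neq 0$ for $i\leq r$, each eigenvector satisfies $\bq_i=\bA\big(\tfrac{1}{\lambda_i}\bq_i\big)$, exhibiting $\bq_i$ explicitly as $\bA$ applied to a vector, hence $\bq_i\in\cspace(\bA)$; taking spans gives the inclusion. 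Combining the two inclusions yields $\cspace(\bA)=\text{span}\{\bq_1,\ldots,\bq_r\}$, i.e., the column space is spanned precisely by the eigenvectors associated with nonzero eigenvalues. As a consistency check, this span has dimension $r$ (the $\bq_i$ are orthonormal, hence linearly independent), matching the rank computed above.

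The main obstacle, such as it is, lies in being careful about the reordering and about justifying rank preservation under orthogonal conjugation rigorously rather than waving at it; everything else is a direct consequence of the outer-product expansion. One subtlety worth flagging is that the decomposition already handles repeated eigenvalues (via Lemma~\ref{lemma:eigen-multiplicity}, which supplies orthonormal eigenvectors even for duplicated eigenvalues), so I do not need a distinct-eigenvalue hypothesis; the argument goes through uniformly because the orthonormality of the full set $\{\bq_1,\ldots,\bq_n\}$ is guaranteed by the spectral theorem. I would close by noting the three stated conclusions (rank equals number of nonzero eigenvalues, full rank iff nonsingular, and the column-space characterization) all fall out of these two short computations.
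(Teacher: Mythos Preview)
Your proposal is correct and follows essentially the same approach as the paper: both invoke the spectral decomposition $\bA=\bQ\bLambda\bQ^\top$ and then sandwich $\rank(\bA)$ and $\rank(\bLambda)$ using Lemma~\ref{lemma:rankAB} applied in each direction to conclude equality. In fact you go further than the paper, which stops after the rank equality and the nonsingularity remark without proving the $\cspace(\bA)$ claim; your two-inclusion argument via the outer-product expansion and $\bq_i=\bA(\lambda_i^{-1}\bq_i)$ fills that gap cleanly.
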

\end{tcolorbox}
\begin{proof}[of Lemma~\ref{lemma:rank-of-symmetric}]
For any symmetric matrix $\bA$, we have $\bA$, in spectral form, as $\bA = \bQ \bLambda\bQ^\top$ and also $\bLambda = \bQ^\top\bA\bQ$. Since we have shown in Lemma~\ref{lemma:rankAB} that the rank of the multiplication $\rank$($\bA\bB$)$\leq$min($\rank$($\bA$), $\rank$($\bB$)).

$\bullet$ From $\bA = \bQ \bLambda\bQ^\top$, we have $\rank(\bA) \leq \rank(\bQ \bLambda) \leq \rank(\bLambda)$;

$\bullet$ From $\bLambda = \bQ^\top\bA\bQ$, we have $\rank(\bLambda) \leq \rank(\bQ^\top\bA) \leq \rank(\bA)$, 

The inequalities above give us a contradiction. And thus $\rank(\bA) = \rank(\bLambda)$, which is the total number of nonzero eigenvalues.

Since $\bA$ is nonsingular if and only if all of its eigenvalues are nonzero, $\bA$ has full rank if and only if $\bA$ is nonsingular.
\end{proof}

\index{$m$-th Power}
Similar to the eigenvalue decomposition, we can compute the $m$-th power of matrix $\bA$ via the spectral decomposition more efficiently.
\begin{remark}[$m$-th Power]\label{remark:power-spectral}
The $m$-th power of $\bA$ is $\bA^m = \bQ\bLambda^m\bQ^\top$ if the matrix $\bA$ can be factored as the spectral decomposition $\bA=\bQ\bLambda\bQ^\top$.
\end{remark}


\newpage
\vskip 0.2in
\bibliography{bib}

\clearpage
\printindex

\end{document}